\documentclass{article}
\usepackage{iclr2027_conference,times}
\usepackage[utf8]{inputenc}
\usepackage[T1]{fontenc}
\usepackage{amsmath,amssymb,mathtools,amsthm}
\usepackage{booktabs,multirow,array,tabularx}
\usepackage{graphicx,xcolor,microtype}
\usepackage{colortbl,tabularx}
\definecolor{qkrow}{HTML}{E5F2E9}
\NewDocumentEnvironment{qktabular}{m m +b}{%
  \sbox0{\begin{tabular}{#2}#3\end{tabular}}%
  \setlength{\tabcolsep}{\dimexpr\tabcolsep+(\textwidth-\wd0)/(#1*2)\relax}%
  \begin{tabular}{#2}#3\end{tabular}%
}{}
\usepackage[section]{placeins}
\usepackage{flafter,float}
\usepackage{algorithm,algpseudocode}
\usepackage{xurl}
\usepackage[hidelinks]{hyperref}
\usepackage[capitalize,noabbrev]{cleveref}
\definecolor{codeurl}{HTML}{8B2439}

\usepackage[textsize=tiny]{todonotes}

\DeclareRobustCommand{\algname}[1]{\textsc{#1}}
\newcommand{\ivan}[1]{\todo[inline]{\textbf{Ivan: }#1}}
\newcommand{\norm}[1]{\left\lVert #1\right\rVert}
\newcommand{\R}{\mathbb R}
\newcommand{\best}[1]{\ensuremath{\mathbf{#1}}}
\newcommand{\resultcell}{\textemdash}
\newtheorem{proposition}{Proposition}[section]
\newtheorem{corollary}[proposition]{Corollary}
\crefname{proposition}{Proposition}{Propositions}
\crefname{corollary}{Corollary}{Corollaries}

\newcolumntype{Y}{>{\raggedright\arraybackslash}X}

\title{\algname{QK-Wanda}: Coupling Queries and Keys\\for Unstructured Pruning}
\author{Ivan Ilin\\
KAUST\\
\texttt{ivan.ilin@kaust.edu.sa}
\And
Peter~Richt\'{a}rik\\
KAUST\\
\texttt{peter.richtarik@kaust.edu.sa}}
\iclrfinalcopy % Show authors and disable review line numbering.
\hypersetup{
 pdftitle={QK-Wanda: Coupling Queries and Keys for Unstructured Pruning},
 pdfauthor={Ivan Ilin and Peter Richt\'{a}rik},
 pdfsubject={Query-key reconstruction for unstructured language model pruning},
 pdfkeywords={language model pruning, causal attention, QK reconstruction, Wanda}
}
\usepackage{needspace}
\begin{document}

\maketitle
\lhead{Preprint} % Override the conference-publication header in the template.
\raggedbottom % Keep paragraph spacing natural beside fixed-position tables and figures.

\begin{abstract}
\algname{Wanda} \citep{sun2024wanda} prunes large language models by scoring weights independently within each linear projection, although queries and keys interact through dot products. We introduce \algname{QK-Wanda}, which scores query and key weights by their individual deletion costs under an unmasked pre-RoPE reconstruction objective. It augments Wanda scores with information from the opposite projection---keys for query weights, and queries for key weights---allowing both projections to share a pruning budget. Its closed-form scores require no gradients or weight updates; full pruning takes 1.3\% longer than Wanda on A100 and 3.1\% longer on H200 with the calibration used in our main experiments. We evaluate QK-only pruning across 15 models from TinyLlama, Llama~2, Llama~3, and Qwen2.5, spanning 0.5B--72B parameters. Relative to Wanda, \algname{QK-Wanda} reduces QK reconstruction error by an average of 60\% at 50\% sparsity and 45\% at 80\%. Downstream gains depend on the model. At 80\% sparsity on Llama~2 70B, WikiText-2 and C4 perplexity decrease by 20.3\% and 13.5\%, while mean zero-shot accuracy rises by 5.94 percentage points. Qwen2.5-72B also improves, but Llama-3.1-70B has substantially higher perplexity despite lower reconstruction error. These results show both the promise of coupled pruning criteria and the limits of local reconstruction as a predictor of model quality.

\end{abstract}

\section{Introduction}
\label{sec:intro}

Large language models (LLMs) perform well across language understanding, generation, and reasoning tasks, but storing and running their billions of parameters remains expensive \citep{touvron2023llama2,grattafiori2024llama3,qwen2024qwen25}. Model compression addresses this cost. Quantization reduces the bits per weight; pruning sets selected weights to zero. Both seek to preserve predictive quality. We focus on unstructured pruning \citep{hoefler2021sparsity}.

\algname{Wanda} selects weights using their magnitudes and calibration activations, without updating retained weights \citep{sun2024wanda}. We focus on this mask-only setting, rather than methods that also compensate for pruning errors, such as \algname{SparseGPT} and \algname{Thanos} \citep{frantar2023sparsegpt,ilin2025thanos}. Our question is whether accounting for query--key interactions can improve these scalar pruning decisions.

Wanda measures each deletion through its effect on an individual projection. In attention, however, a query perturbation changes dot products according to the keys it interacts with, and vice versa \citep{vaswani2017attention}. \algname{QK-Wanda} therefore scores deletions using a common reconstruction objective for all query--key products (\cref{eq:main_unmasked_loss}). The resulting query and key scores measure changes to the same objective, allowing a fixed pruning budget to be allocated across rows and between the two projections.

QK-Wanda changes only how query and key weights are scored, so this paper focuses on pruning query and key weights. Unless stated otherwise, all other weights stay dense and sparsity refers to the query and key weights alone. Across our checkpoints, these weights make up 6--17\% of the linear-layer weights in a transformer block and 4--16\% of all parameters (Appendix~\ref{app:parameter_scope}). The remaining projections can still be pruned with Wanda; \cref{sec:full_block} evaluates this broader pruning scope.

\Needspace{6\baselineskip}
\paragraph{Contributions.}
\begin{enumerate}
\item We derive closed-form scalar deletion costs from the QK reconstruction objective, including grouped-query attention and retained biases (\cref{eq:gqa_single_scores}). These exact individual costs define an additive mask-selection surrogate and allow query and key weights to share a pruning budget without gradients or updates to retained weights.
\item Across 15 checkpoints, we evaluate full QK reconstruction, perplexity, and zero-shot accuracy. The results show gains on models from the Llama~2 and Qwen2.5 families, alongside model- and metric-dependent outcomes on Llama~3.
\item We investigate variations of the QK reconstruction score incorporating causal masking, rotary position embeddings, and row centering of QK products. Combining these three choices gives eight variants, which we evaluate on six checkpoints at 50\% and 80\% QK sparsity. The extensions offer improvements in selected configurations, but no consistent downstream advantage over the base score (Appendix~\ref{app:factorial_selection}).
\end{enumerate}

{\raggedright The implementation and code for reproducing our experiments are available at \mbox{\textcolor{codeurl}{\url{https://github.com/vectozavr/qk-wanda}}}.\par}

\subsection{Notation}
All key notation used in this paper is summarized in a tabular form in Section~\ref{sec:notation}; see Table~\ref{tab:notation}.

\section{Background and Related Work}
\label{sec:background}

\paragraph{Weight importance and reconstruction.}
\algname{Optimal Brain Damage} uses diagonal second-order loss sensitivity; \algname{Optimal Brain Surgeon} includes compensating weight changes \citep{lecun1989obd,hassibi1992obs}. \algname{Optimal Brain Compression} develops layer-reconstruction methods for pruning and quantization \citep{frantar2022obc}. \algname{SparseGPT} scales compensated pruning to LLMs, while \algname{Thanos} solves for several deletions within a weight block together and adapts their allocation \citep{frantar2023sparsegpt,ilin2025thanos}. We study mask selection with surviving weights fixed.

\paragraph{Activation-aware scalar scores.}
\algname{Wanda} ranks weights by their magnitudes and input activation norms \citep{sun2024wanda}. We recall its interpretation as the exact reconstruction cost of deleting a single weight while keeping all others fixed.

Let \(W\in\R^{d_{\mathrm{out}}\times d}\) be a weight matrix and \(X\in\R^{d\times T}\) its calibration inputs, with one token per column. The entry \(W_{ij}\) is the element in row \(i\) and column \(j\) of \(W\). The row vector \(X_{j:}\in\R^{1\times T}\) contains input feature \(j\) across all \(T\) tokens. The linear projection produces \(O=WX\in\R^{d_{\mathrm{out}}\times T}\). After pruning, its output is \(\widehat O=\widehat W X\), and we measure the reconstruction loss
\[
 \mathcal L_W=\norm{\widehat O-O}_F^2
 =\norm{(\widehat W-W)X}_F^2
 =\norm{\Delta X}_F^2,
\]
where \(\Delta=\widehat W-W\in\R^{d_{\mathrm{out}}\times d}\) is the change in weights and \(\norm{\cdot}_F\) is the Frobenius norm. Any retained bias cancels from the output difference.

Suppose we delete only \(W_{ij}\), without changing any other weight. Then \(\Delta_{ij}=-W_{ij}\), and every other entry of \(\Delta\) is zero. Only output row \(i\) changes, by \((\widehat O-O)_{i:}=-W_{ij}X_{j:}\). Its squared reconstruction error gives Wanda's score in squared form:
\begin{equation}
 S^{W}_{ij}=\norm{-W_{ij}X_{j:}}_2^2=W_{ij}^2\norm{X_{j:}}_2^2,
 \label{eq:wanda}
\end{equation}
where \(\norm{\cdot}_2\) is the Euclidean norm and the superscript \(W\) labels Wanda. Squaring its original nonnegative score preserves the ranking. Selecting the smallest score solves the single-deletion reconstruction problem exactly. For multiple deletions, their effects can interact, so individual deletion costs do not generally add up to the joint reconstruction loss. Wanda uses these costs greedily, removing the lowest-scoring weights within each output row until its pruning budget is met. With fixed inputs, deleting one weight leaves the Wanda scores of the remaining weights unchanged.

\algname{RIA} combines relative weight importance with activations and uses channel permutation for N:M sparsity \citep{zhang2024ria}. \algname{SymWanda} adds input- and output-side reconstruction terms for a single matrix \citep{yi2025symmetric}. Our objective instead couples QK projections through their output dot products over all token pairs, making their deletion costs comparable.

\paragraph{Attention-aware quantization.}
\algname{AESPA} uses QK reconstruction quadratics for quantization; \algname{BoA} also accounts for RoPE \citep{kim2024aespa,kim2025boa}. With one projection fixed, the unmasked loss has the form \(\delta^\top G\delta\), where \(\delta\) vectorizes the other projection's perturbation. Deleting scalar weight \(W_p\) gives \(\delta=-W_p e_p\) and cost \(W_p^2G_{pp}\). Our base score is this diagonal specialization of the QK reconstruction geometry used by AESPA. We retain sequence-local products of activation energies, rather than AESPA's product-of-expectations approximation. The contribution is its compensation-free elementwise pruning implementation, GQA-aware statistics, shared QK allocation, and controlled evaluation. Appendix~\ref{app:prior_geometry} details the connection.

\paragraph{Structured QK and attention compression.}
\algname{ThinK} prunes key-cache channels using query-dependent contribution norms \citep{xu2025think}. Its full-sequence channel criterion is the product of the norms of the corresponding query and key channels; its implementation uses a recent query window. This anticipates the opposite-projection energy principle, but removes cache channels rather than scalar projection weights. \algname{CORP} ranks matched QK dimensions by expected contribution energy and applies closed-form compensation \citep{zhang2026corp}. \algname{SNP} selects paired QK neurons using singular components of attention scores \citep{shim2024snp}, while \algname{SparseLoRA} uses a QK norm criterion for channel selection during fine-tuning \citep{khaki2025sparselora}.

Other approaches change the representation or reconstruction target. \citet{liang2025beyond} optimize masks through nonlinear attention reconstruction, including separate QK masks in their LLM experiment. We evaluate closed-form deletions under an unmasked pre-RoPE objective. \algname{CLOVER} prunes orthogonal directions, \algname{\(A^3\)} derives low-rank attention approximations, and \algname{RAP} removes RoPE-aligned channel pairs for KV-cache compression \citep{meng2025clover,wong2025a3,xin2026rap}. \algname{LLM-Pruner}, \algname{SliceGPT}, and \algname{FASP} provide broader structural alternatives; \algname{DIMAP} develops a data-independent module-aware criterion for vision transformers \citep{ma2023llmpruner,ashkboos2024slicegpt,hu2025fasp,he2024dimap}.

\paragraph{Optimizing beyond fixed-weight mask selection.}
\algname{Wanda++} uses regional gradients and decoder-block optimization \citep{yang2025wandaplus}; \algname{SparseLLM} coordinates global pruning subproblems \citep{bai2024sparsellm}. \algname{SNOWS} optimizes deeper nonlinear representations and readjusts retained weights in vision networks \citep{lucas2025snows}. Our experiments isolate forward-only QK mask selection with surviving weights fixed; they do not establish superiority over these broader optimization procedures.

\paragraph{Allocation, calibration, and attention semantics.}
\algname{OWL} studies nonuniform layerwise sparsity \citep{yin2024owl}, while \citet{ji2025calibration} examine sensitivity to calibration data. We study budget allocation between Q and K within a block and vary calibration size and sequence length. Causal masking, shared key heads, and rotary positions affect which QK interactions a score measures \citep{vaswani2017attention,ainslie2023gqa,su2024roformer}. 

Other quantization approaches reconstruct individual layers or optimize compressed representations, including \algname{GPTQ}, \algname{AQLM}, \algname{PV-Tuning}, \algname{HIGGS}, and \algname{QTIP} \citep{frantar2023gptq,egiazarian2024aqlm,malinovskii2024pv,malinovskii2024higgs,tseng2024qtip}.

\section{\algname{QK-Wanda}}
\label{sec:method}

QK-Wanda follows the same single-deletion principle as Wanda, but reconstructs query--key products instead of each projection separately. We first derive the exact cost of deleting one weight while leaving all others fixed, then use these costs greedily to select weights under separate or shared QK budgets. The derivation starts with an unmasked single-head example before extending the same objective to shared key heads.

\begin{figure}[!t]
 \centering
 \includegraphics[width=\linewidth]{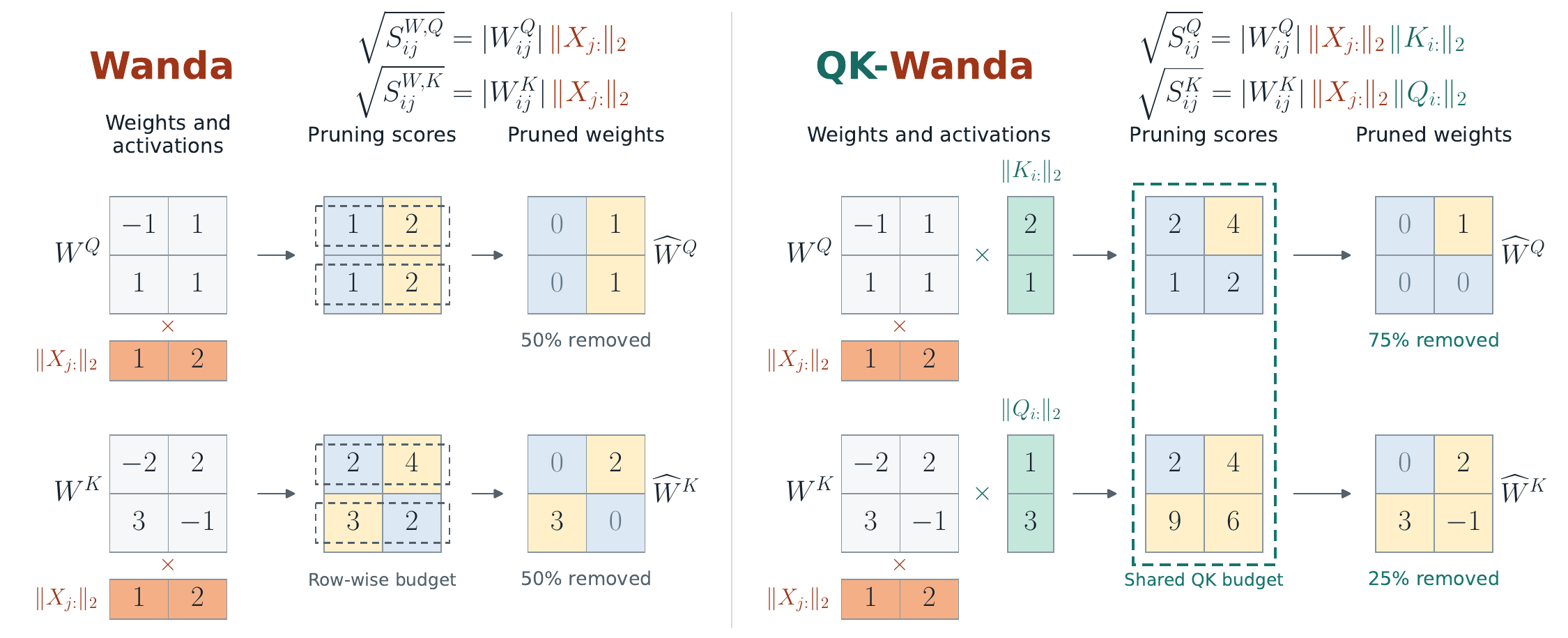}
 \caption{Wanda versus QK-Wanda for one token, \(X=(1,2)^\top\) (\(d=2,T=1\)). Orange/green factors are input/opposite-projection norms; \(\times\) denotes elementwise multiplication with row/column broadcasting. Dashed boxes mark budget pools. Both prune 50\% overall: Wanda removes 50\% per row; QK-Wanda removes 75\% of Q and 25\% of K. Hats denote pruned weights.}
 \label{fig:wanda_qk_illustration}
\end{figure}

\subsection{Single-head derivation: unmasked QK reconstruction}
\label{sec:unmasked_core}

We work within one fixed transformer block. First consider one query head and its own key head, as in ordinary multi-head attention (MHA), using one calibration sequence and omitting the causal mask. This unmasked, pre-RoPE case, \algname{QK-Wanda}, makes the coupling especially simple: a query weight matters through the keys, and a key weight through the queries. \Cref{sec:gqa_method} extends the same objective to several query heads sharing a key head, as in grouped-query attention (GQA).

Let \(X\in\R^{d\times T}\) contain the inputs to the QK linear projections, after any preceding normalization, with one token per column. Let \(W^Q,W^K\in\R^{m\times d}\) be the query/key projection weights, with head width \(m\). Biases \(b^Q,b^K\in\R^{m\times1}\) remain fixed and are zero for bias-free models. With the all-ones column vector \(e\in\{1\}^{T\times1}\), the pre-RoPE activations are

\begin{equation}
 Q=W^{Q}X+b^{Q}e^\top,
 \qquad K=W^{K}X+b^{K}e^\top.
 \label{eq:simple_projections}
\end{equation}
Here \(Q,K\in\mathbb{R}^{m\times T}\); the row \(Q_{i:}\) or \(K_{i:}\) contains coordinate \(i\) across all tokens.

Write \(\widehat Q,\widehat K\) for the outputs of the query/key projections after pruning \(W^Q,W^K\), evaluated on the same input \(X\) with unchanged biases. For the unmasked, pre-RoPE case, define
\begin{equation}
 \mathcal L=\norm{\widehat Q^\top\widehat K-Q^\top K}_F^2.
 \label{eq:simple_unmasked_loss}
\end{equation}
Every token pair contributes, and inputs are fixed during a pruning decision. Attention divides its dot products by \(\sqrt{m}\) before softmax. We omit this scale from the pruning losses and scores derived below: including it would divide every deletion cost by \(m\), leaving the ranking unchanged.

\paragraph{Deleting one query weight.}
Set \(W^{Q}_{ij}\) to zero, keeping keys, biases, and every other weight fixed. The query bias cancels in the output difference; only query row \(i\) changes, with \((\widehat Q-Q)_{i:}\overset{\eqref{eq:simple_projections}}{=}-W^{Q}_{ij}X_{j:}\). Therefore
\begin{equation}
 \widehat Q^\top K-Q^\top K
 =(\widehat Q-Q)^\top K\overset{\eqref{eq:simple_projections}}{=}-W^{Q}_{ij}X_{j:}^\top K_{i:}.
 \label{eq:simple_query_delta}
\end{equation}
This is an outer product, so \(\norm{xy^\top}_F^2=\norm{x}_2^2\norm{y}_2^2\) gives its squared reconstruction cost. By symmetry, deleting \(W^K_{ij}\) with queries fixed yields \(Q^\top(\widehat K-K)\overset{\eqref{eq:simple_projections}}{=}-W^K_{ij}Q_{i:}^\top X_{j:}\). \begin{samepage}
The two exact single-deletion scores are
\begin{equation}
 \boxed{S^Q_{ij}=\underbrace{(W^{Q}_{ij})^2\norm{X_{j:}}_2^2}_{\text{\algname{Wanda}}}\norm{K_{i:}}_2^2,
 \qquad S^K_{ij}=\underbrace{(W^{K}_{ij})^2\norm{X_{j:}}_2^2}_{\text{\algname{Wanda}}}\norm{Q_{i:}}_2^2.}
 \label{eq:simple_wanda_connection}
\end{equation}
\end{samepage}
Each score multiplies the squared Wanda score by the opposite projection's coordinate energy; smaller values indicate less reconstruction damage. These are finite-deletion identities, without a small-change approximation. A bias cancels from its own perturbation but remains in the opposite activation: the key bias affects the query score through \(K\), and the query bias affects the key score through \(Q\).

\Cref{fig:wanda_qk_illustration} illustrates the additional opposite-projection factor and contrasts Wanda's fixed row quotas with a shared QK budget. It shows how the scoring and budget pools change the selected mask; individual-deletion rankings do not guarantee optimal joint reconstruction.

QK-Wanda rescales Wanda's scores within each row by the same factor. For a nonzero opposite-projection activation row, this leaves the ranking unchanged, so identical row budgets and tie-breaking rules produce identical pruning masks. The additional factor therefore affects comparisons across rows and between Q and K. Causal weighting and averaging over independent sequences can also change within-row rankings; Appendix~\ref{app:row_equivalence} gives examples and the zero-energy exception.

\subsection{Extension to grouped-query attention}
\label{sec:gqa_method}

We now extend the same unmasked reconstruction objective \eqref{eq:simple_unmasked_loss} to grouped-query attention (GQA), where several query heads share one key head. GQA reduces KV-cache memory and bandwidth during decoding \citep{ainslie2023gqa}. For example, Llama~3.1-8B uses 32 query heads and 8 key/value heads, so each group of four query heads shares one key head and one value head \citep{grattafiori2024llama3}.

Consider one transformer block and one calibration sequence. Query heads have indices \(h=1,\ldots,H_Q\), and key heads \(g=1,\ldots,H_K\). The set \(\mathcal H_g\) contains the query heads sharing key head \(g\); these sets partition the query heads. For example, \(\mathcal H_1=\{1,2\}\) means that \(Q^{(1)}\) and \(Q^{(2)}\) both interact with \(K^{(1)}\). Each head has width \(m\). The head weights \(W^{Q^{(h)}},W^{K^{(g)}}\in\R^{m\times d}\) produce activations \(Q^{(h)},K^{(g)}\in\R^{m\times T}\) using the affine maps and retained biases in \cref{eq:simple_projections}. Hatted activations use the pruned weights on the same input \(X\).

The unmasked reconstruction objective in \cref{eq:simple_unmasked_loss}, summed over all interacting query--key head pairs, becomes
\begin{equation}
 \mathcal L=\sum_{g=1}^{H_K}\sum_{h\in\mathcal H_g}
 \norm{\widehat Q^{(h)\top}\widehat K^{(g)}-Q^{(h)\top}K^{(g)}}_F^2.
 \label{eq:main_unmasked_loss}
\end{equation}
Every token pair contributes within each head pair. Deleting a query weight changes one query head; deleting a key weight changes every product involving that key head. For \(h\in\mathcal H_g\), the outer-product argument in \cref{sec:unmasked_core} therefore gives
\begin{equation}
 \boxed{S^{Q^{(h)}}_{ij}=(W^{Q^{(h)}}_{ij})^2\norm{X_{j:}}_2^2\norm{K^{(g)}_{i:}}_2^2,
 \quad S^{K^{(g)}}_{ij}=(W^{K^{(g)}}_{ij})^2\norm{X_{j:}}_2^2
 \textstyle\sum_{h\in\mathcal H_g}\norm{Q^{(h)}_{i:}}_2^2.}
 \label{eq:gqa_single_scores}
\end{equation}
The query score uses the energy of its associated key head. The key score sums query energies over the entire group, because removing one key weight perturbs its products with all those query heads. These are the same exact single-deletion costs as in \cref{sec:unmasked_core}, now with explicit head indices; GQA does not introduce a different scoring rule.

The scores \eqref{eq:simple_wanda_connection} for ordinary multi-head attention are the special case of \eqref{eq:gqa_single_scores} with \(H_Q=H_K\) and \(\mathcal H_g=\{g\}\). The key-head sum then has one term, and suppressing head labels recovers \eqref{eq:simple_wanda_connection} exactly.

For \(N\) independent calibration sequences, compute each deletion cost separately and average with equal weight \(1/N\). In particular, average the \emph{products} of input and opposite-projection energies within sequences, rather than multiplying separately averaged energies. No product connects tokens from different sequences. Appendix~\ref{app:unmasked} gives the full formulas and proof. Causal masking and RoPE are optional extensions, derived in Appendices~\ref{app:causal_details} and~\ref{app:rope}; the base scores above use neither.

\subsection{Mask construction and computational cost}
\label{sec:mask_construction}

In the single-sequence setting of \cref{sec:unmasked_core}, Wanda and QK-Wanda produce identical pruning masks under the same row budgets and tie-breaking rules, provided the opposite-projection factors are nonzero. The additional QK factor only rescales scores within each row, so we relax the row quotas to let it affect which weights are removed. We therefore select weights across each projection matrix independently (\emph{separate QK budgets}) or across both projections together (a \emph{shared QK budget}), as illustrated in \cref{fig:main_llama2_70b}. The shared allocation is meaningful because Q and K scores measure changes to the same reconstruction loss \eqref{eq:main_unmasked_loss}.

To distinguish the effect of QK-aware scoring from that of relaxing row quotas, \cref{fig:main_llama2_70b} also includes Wanda with a matrix-wide budget for each projection, matching the separate QK-Wanda allocation. Wanda scores measure reconstruction errors of the Q and K projections separately; their relative scale is not calibrated to a common QK reconstruction objective. We therefore use separate matrix budgets for this control rather than pooling Wanda scores into a shared QK budget.

QK-Wanda scores (\ref{eq:simple_wanda_connection}, \ref{eq:gqa_single_scores}) share a common scale because they measure deletion costs under the same QK reconstruction loss. Multiplying either entire projection, including its weights and bias, by a nonzero constant \(a\), while holding the other fixed, multiplies both query and key scores by \(a^2\). Their relative scale and the ranking in a shared QK pool therefore stay unchanged. Wanda's squared scores scale by \(a^2\) only for the rescaled projection; the other projection's scores stay unchanged. Pooling Wanda scores would thus require a choice of relative scaling between projections, whereas QK-Wanda needs no such calibration.

\paragraph{What thresholding optimizes.}
Let \(\mathcal P\) index the weights competing under one budget, \(S_p\) denote their single-deletion scores, and \(z_p\in\{0,1\}\) indicate whether weight \(p\) is removed (\(z_p=1\)) or retained (\(z_p=0\)). For an integer pruning budget \(M\in\{0,\ldots,|\mathcal P|\}\), selecting the \(M\) lowest scores solves
\begin{equation}
 \min_{z\in\{0,1\}^{|\mathcal P|}}
 \underbrace{\sum_{p\in\mathcal P} S_p z_p}_{\mathcal L_{\mathrm{add}}(z)}
 \qquad\text{subject to}\qquad
 \sum_{p\in\mathcal P}z_p=M.
 \label{eq:additive_mask_selection}
\end{equation}
Thus thresholding minimizes an additive surrogate constructed from exact individual-deletion costs. It does not generally minimize the coupled reconstruction loss: multiple deletions introduce interactions within each projection and between Q and K. Appendix~\ref{app:causal_quadratic} gives the full expansion; retaining all token pairs.

Let \(\rho\in[0,1]\) be the requested fraction to remove. A shared budget uses one pool \(\mathcal P\) containing all query and key weights of the current block. Separate budgets solve \cref{eq:additive_mask_selection} independently for each projection, including all its heads. Row-wise budgets select weights independently within each output row. Neither separate nor shared budgets prescribe how many weights must be removed from each head (Appendix~\ref{app:proof_c}). Further implementation details are given in Appendix~\ref{app:implementation}.

\Cref{alg:qk_wanda} summarizes sequential pruning with these sequence-averaged scores. Each block's QK masks are selected jointly before either projection is changed.

\begin{algorithm}[H]
\caption{\algname{QK-Wanda} with a shared QK budget}
\label{alg:qk_wanda}
\begin{algorithmic}[1]
\State Input: model with \(D\) transformer blocks, \(N\) calibration sequences, sparsity \(\rho\in[0,1]\).
\For{\(\ell=1,\ldots,D\)}
  \State Collect projection inputs: \(X^{(s)}\), \(s=1,\ldots,N\).
  \State Compute activations: \(Q^{(h)},K^{(g)}\) by \eqref{eq:simple_projections} for each calibration sequence.
  \State Compute scores: \(S^{Q^{(h)}},S^{K^{(g)}}\) by \eqref{eq:gqa_single_scores}, averaged over sequences.
  \State Let \(\mathcal P\) index all QK weights \(W_p\) of block \(\ell\).
  \State \(M\gets\lceil\rho|\mathcal P|\rceil\); select \((z_p)_{p\in\mathcal P}\) by \eqref{eq:additive_mask_selection}.
  \State Prune: \(W_p\gets(1-z_p)W_p\) for every QK weight \(W_p\).
  \State Propagate calibration sequences through the pruned block.
\EndFor
\end{algorithmic}
\end{algorithm}

\paragraph{Computational cost.}
Let \(D_Q=H_Qm\) and \(D_K=H_Km\) be the total query and key widths. For \(N\) calibration sequences of length \(T\), a forward pass through the QK projections costs \(O(NTd(D_Q+D_K))\). The QK scores require only per-sequence input and head-coordinate energies, costing \(O(NT(d+D_Q+D_K))\) to accumulate. Combining these energies across sequences costs \(O(NdD_K)\), with query-side factors reused by heads sharing a key; applying the factors to all weights costs \(O(d(D_Q+D_K))\). These additional terms do not change the asymptotic order of QK projection and scoring. Stable mask selection additionally sorts \(P=d(D_Q+D_K)\) scores, with \(O(P\log P)\) comparison-sort complexity for a shared pool. No \(T\times T\) product matrix, gradients, or updates to retained weights are needed.

To measure the practical overhead, we benchmark Wanda and QK-Wanda across four Llama~3 model sizes on A100 and H200 GPUs using the same 524,288 calibration tokens, sequence length 64, and batch size 32 as the main experiments. We include calibration forwards, scoring, GPU mask selection, weight zeroing and propagation through the pruned blocks. QK-Wanda takes 1.3\% longer than Wanda on average on A100 and 3.1\% longer on H200. Table~\ref{tab:method_cost} reports the H200 measurements; Appendix~\ref{app:score_cost} gives results for both GPUs and the timing protocol.

\begin{table}[H]
\centering\small
\setlength{\tabcolsep}{2pt}
\caption{Full QK pruning time on one H200: \(8192\times64\) calibration tokens, batch 32. Includes calibration, scoring, mask selection, zeroing and pruned propagation; excludes loading and setup transfers. Medians of five runs; complexity covers QK projections and scoring per block.}
\label{tab:method_cost}
\renewcommand{\arraystretch}{1.15}
\begin{tabular*}{\linewidth}{@{\extracolsep{\fill}}lcccccc@{}}
\toprule
Method&Complexity&Llama-3.2~1B&Llama-3.2~3B&Llama-3.1~8B&Llama-3.1~70B&Avg.\ ratio\\
\midrule
\algname{Wanda}&\(O(NTd(D_Q+D_K))\)&6.59~s&17.02~s&34.57~s&284.37~s&1.00\\
\rowcolor{qkrow}
\algname{QK-Wanda}&\(O(NTd(D_Q+D_K))\)&7.12~s&17.45~s&34.98~s&285.98~s&1.03\\
\bottomrule
\end{tabular*}
\end{table}

\section{Experiments}
\label{sec:setup}
\label{sec:results}

Our experiments test whether preserving local QK products also preserves the quality of the complete pruned model. We compare reconstruction error, perplexity, and zero-shot accuracy across checkpoints, and use controlled ablations to distinguish the effects of scoring and budget allocation.

\begin{figure}[!t]
\centering
\includegraphics[width=\textwidth]{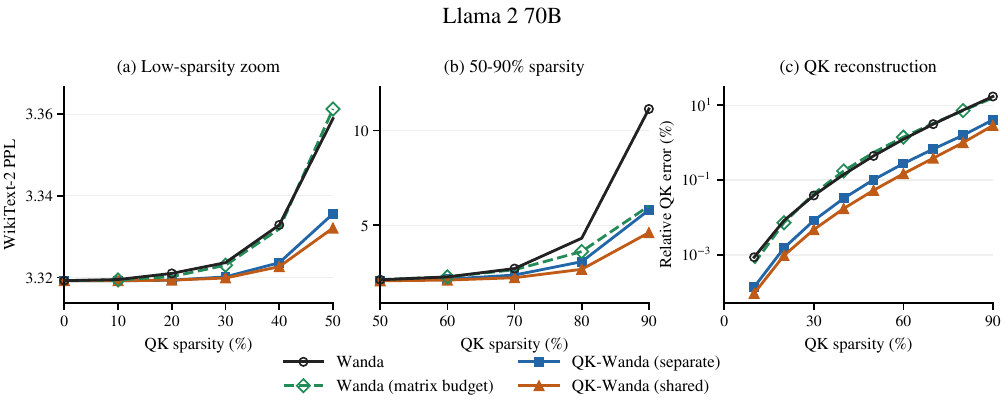}
\caption{QK-only pruning of Llama-2-70B. Left and center: WikiText-2 PPL at low and high sparsity. Right: mean relative QK reconstruction error (\cref{eq:relative_qk_error}). Matrix-budget Wanda and separate-budget QK-Wanda remove the same number of weights from each projection. C4 results: \cref{fig:qk_budget_llama2_70b_c4}; additional model curves: Appendices~\ref{app:budget_curves} and~\ref{app:qk_error_curves}.}
\label{fig:main_llama2_70b}
\end{figure}

\subsection{Experimental setup}

\paragraph{Models and pruning.}
We evaluated 15 base checkpoints: TinyLlama-1.1B; Llama-2-7B/13B/70B; Llama-3.2-1B/3B and Llama-3.1-8B/70B; and Qwen2.5 0.5B/1.5B/3B/7B/14B/32B/72B \citep{zhang2024tinyllama,touvron2023llama2,grattafiori2024llama3,meta2024llama32,qwen2024qwen25}. Llama-2-7B/13B use multi-head attention (MHA); the other checkpoints use grouped-query attention (GQA). Unless stated otherwise, sparsity refers only to \(W^Q,W^K\); all other parameters remain dense. QK weights comprise 4.46--16.11\% of total model parameters across these checkpoints, so 50\% QK sparsity removes 2.23--8.06\% of the total (Appendix~\ref{app:parameter_scope}). Our main tables compare row-wise \algname{Wanda} with \algname{QK-Wanda}, which shares a pruning budget between query and key weights in each block. The curves additionally show \algname{QK-Wanda} with a separate budget for each projection. On the three largest checkpoints, we also tested Wanda with a separate matrix-wide budget for each projection. All methods used sequential, one-shot unstructured pruning without retraining or survivor updates; biases and the original attention computation remained fixed. Integer budgets and reproducibility details are in Appendix~\ref{app:protocol}.

\paragraph{Calibration.}
The main comparisons used calibration seed 0 and 524,288 C4 training tokens \citep{raffel2020c4}: 256 windows of length 2048, segmented into 8192 independent sequences of length 64. We chose this configuration based on the calibration-sensitivity studies in Appendix~\ref{app:calibration}. Increasing the number of windows beyond 256 brought negligible PPL improvements for both Wanda and QK-Wanda. With the total token count fixed, we then tested different sequence lengths on Llama-3.1-8B and Llama-2-7B. Length 64 gave the lowest PPL on Llama-3.1-8B and nearly matched the minima at length 128 on Llama-2-7B, with differences no larger than 0.001 PPL. These small differences across lengths and consistent results across the two model families supported using 256 base windows segmented into length-64 sequences as our common calibration setting. Each checkpoint used its native tokenizer and identical sampled tokens across methods and sparsities. We used FP16 weights except for Qwen2.5-72B, whose dense and pruned models used BF16 to avoid nonfinite FP16 activations. Hardware and software details are in Appendix~\ref{app:implementation}.

\paragraph{Metrics and comparisons.}
We evaluate pruning at two levels. Held-out QK reconstruction error measures how well each block preserves QK products on the same dense-model inputs. Perplexity on WikiText-2 \citep{merity2017wikitext} and C4 and mean accuracy across seven zero-shot tasks measure the quality of the complete pruned model. Comparing these metrics tests whether better local reconstruction translates into downstream gains.

We measure QK reconstruction error on 16 held-out C4 sequences of length 512 and average it across blocks (\cref{eq:relative_qk_error}). This diagnostic differs from pruning calibration, where later blocks receive method-specific sparse-network inputs. A block's 1\% relative squared error corresponds to 10\% relative Frobenius-norm error; the reported average over blocks is taken before any square root. We measured WikiText-2 test and C4 validation PPL at context 2048 over 0--90\% sparsity in 10-point steps. At 30\%, 50\%, and 80\%, we also evaluated saved masks on seven zero-shot tasks, reporting their unweighted mean accuracy with LM Evaluation Harness \citep{evalharness}; tasks, splits, and metrics are specified in Appendix~\ref{app:zero_shot_protocol}. Dense references are shared within each checkpoint. All differences, relative reductions, counts, and bold marks use unrounded measurements; values that tie at the displayed precision can differ in the source data. The 15-model counts compare shared QK-Wanda against original row-wise Wanda, excluding dense points: 135 checkpoint--sparsity pairs per PPL/error metric and 45 for mean accuracy. They describe one calibration seed, not independent statistical trials or statistically significant wins.

% Generated by export_unmasked_main_results.py from audited exact values.
\begin{table}[!t]
\centering
\fontsize{8}{9}\selectfont
\setlength{\tabcolsep}{1.0pt}
\caption{QK-only pruning of the largest checkpoints: WikiText-2/C4 PPL ($\downarrow$), seven-task mean zero-shot accuracy ($\uparrow$), and QK reconstruction error ($\downarrow$). Accuracy and error are percentages. Wanda uses row budgets, Wanda (matrix) separate projection budgets, and both QK-Wanda variants shared budgets. Bold marks the best result at each sparsity, using unrounded values. Full model and task results appear in Appendices~\ref{app:model_results}, \ref{app:zero_shot_tables}, and~\ref{app:matrix_control}.}
\label{tab:large_models}
\begin{tabular*}{\textwidth}{@{\extracolsep{\fill}}lc*{12}{c}@{}}
\toprule
 & & \multicolumn{4}{c}{\textbf{Llama-2-70B}} & \multicolumn{4}{c}{\textbf{Llama-3.1-70B}} & \multicolumn{4}{c}{\textbf{Qwen2.5-72B}}\\
\cmidrule(lr){3-6}\cmidrule(lr){7-10}\cmidrule(lr){11-14}
Method & Sparsity & \shortstack{Wiki2\\PPL} & \shortstack{C4\\PPL} & \shortstack{Zero-shot\\acc.} & \shortstack{Rel. QK\\error} & \shortstack{Wiki2\\PPL} & \shortstack{C4\\PPL} & \shortstack{Zero-shot\\acc.} & \shortstack{Rel. QK\\error} & \shortstack{Wiki2\\PPL} & \shortstack{C4\\PPL} & \shortstack{Zero-shot\\acc.} & \shortstack{Rel. QK\\error}\\
\midrule
Dense & 0\% & 3.32 & 5.71 & 71.50 & 0.00 & 2.81 & 7.11 & 74.21 & 0.00 & 3.87 & 9.26 & 74.75 & 0.00\\
\midrule
\algname{Wanda} & 30\% & 3.32 & 5.71 & 71.50 & 0.04 & \textbf{2.82} & 7.12 & 74.16 & 0.04 & 3.88 & 9.26 & 74.68 & 0.04\\
\algname{Wanda} (matrix) & 30\% & 3.32 & 5.71 & 71.49 & 0.04 & 2.86 & 7.15 & 73.95 & 0.18 & 3.89 & 9.26 & \textbf{75.02} & 0.05\\
\rowcolor{qkrow}
\algname{QK-Wanda} & 30\% & \textbf{3.32} & \textbf{5.71} & 71.54 & \textbf{0.00} & 2.82 & \textbf{7.12} & \textbf{74.23} & \textbf{0.02} & \textbf{3.88} & \textbf{9.26} & 74.66 & \textbf{0.01}\\
\algname{QK-Wanda-MRC} & 30\% & 3.32 & 5.71 & \textbf{71.65} & 0.01 & 2.83 & 7.14 & 74.07 & 0.08 & 3.88 & 9.26 & 74.64 & 0.02\\
\addlinespace[2pt]
\algname{Wanda} & 50\% & 3.36 & 5.74 & 71.37 & 0.44 & \textbf{2.90} & \textbf{7.22} & 73.60 & 0.39 & 3.92 & 9.28 & \textbf{74.72} & 0.19\\
\algname{Wanda} (matrix) & 50\% & 3.36 & 5.74 & 71.80 & 0.54 & 3.35 & 7.61 & 72.54 & 1.65 & 3.92 & 9.29 & 74.59 & 0.29\\
\rowcolor{qkrow}
\algname{QK-Wanda} & 50\% & \textbf{3.33} & \textbf{5.72} & 71.60 & \textbf{0.05} & 2.97 & 7.27 & \textbf{73.77} & \textbf{0.18} & \textbf{3.89} & \textbf{9.27} & 74.44 & \textbf{0.08}\\
\algname{QK-Wanda-MRC} & 50\% & 3.34 & 5.73 & \textbf{71.99} & 0.19 & 3.14 & 7.47 & 73.66 & 0.81 & 3.90 & 9.28 & 74.53 & 0.13\\
\addlinespace[2pt]
\algname{Wanda} & 80\% & 4.55 & 6.91 & 65.85 & 7.40 & \textbf{4.98} & \textbf{9.89} & \textbf{66.64} & 5.27 & 4.55 & 9.87 & 73.44 & 1.30\\
\algname{Wanda} (matrix) & 80\% & 4.13 & 6.53 & 70.10 & 7.19 & 17.69 & 24.21 & 57.74 & 8.85 & 4.31 & 9.64 & 73.22 & 1.73\\
\rowcolor{qkrow}
\algname{QK-Wanda} & 80\% & \textbf{3.63} & \textbf{5.97} & \textbf{71.79} & \textbf{1.00} & 18.89 & 32.59 & 65.93 & \textbf{3.27} & 4.23 & \textbf{9.52} & 74.00 & \textbf{0.62}\\
\algname{QK-Wanda-MRC} & 80\% & 3.85 & 6.27 & 70.84 & 3.36 & 13.67 & 22.61 & 62.61 & 5.34 & \textbf{4.19} & 9.53 & \textbf{74.08} & 0.96\\
\bottomrule
\end{tabular*}
\end{table}

\subsection{Largest checkpoints across model families}
\label{sec:largest_models}

At 80\% QK sparsity, \algname{QK-Wanda} substantially narrows the quality gap to the dense Llama-2-70B model (\cref{tab:large_models}). WikiText-2/C4 PPL improves from 4.55/6.91 with Wanda to 3.63/5.97, compared with 3.32/5.71 for the dense checkpoint. Mean zero-shot accuracy rises from 65.85\% to 71.79\%, slightly above the dense reference of 71.50\%; we do not establish a statistically significant improvement over dense. Qwen2.5-72B also improves on both PPL metrics and mean accuracy. Llama-3.1-70B shows the limitation: QK error decreases from 5.27\% to 3.27\%, but PPL increases from 4.98/9.89 to 18.89/32.59. Mean zero-shot accuracy, however, changes from 66.64\% with Wanda to 65.93\% with QK-Wanda, a decrease of 0.71 percentage points. Thus, the large PPL increase is not accompanied by a comparably large drop in this seven-task mean; this does not imply equivalent performance on individual tasks or other downstream uses.

\paragraph{A more elaborate scoring variant.}
\Cref{tab:large_models} also includes shared-budget \algname{QK-Wanda-MRC}, which combines causal masking, RoPE, and centering (explained in Appendices~\ref{sec:causal_method}, \ref{app:rope}, and~\ref{app:centered}, respectively). At 80\% on Llama-3.1-70B, it reduces WikiText-2/C4 PPL from 18.89/32.59 to 13.67/22.61 relative to the base score, but remains worse than Wanda and decreases mean accuracy from 65.93\% to 62.61\%. Also at 80\%, the base score has lower PPL and higher accuracy on Llama-2-70B, while Qwen2.5-72B has mixed PPL ordering. MRC therefore offers a checkpoint-specific improvement rather than a uniform replacement. Appendix~\ref{app:factorial_selection} compares all eight combinations.

\paragraph{Scoring versus allocation.}
Relaxing Wanda's row quotas has a checkpoint-dependent effect. At 80\%, matrix budgets lower both PPL metrics on Llama-2-70B and Qwen2.5-72B, but worsen them on Llama-3.1-70B (\cref{tab:large_models}). Comparing matrix-budget Wanda with separate-budget QK-Wanda holds per-projection deletion counts fixed. Across the three largest checkpoints and nine sparsities, the QK score lowers QK error in all 27 configurations, WikiText-2 PPL in 22, and C4 PPL in 20. At 80\% on Llama-2-70B, it reduces WikiText-2/C4 PPL from 4.13/6.53 to 3.83/6.14 and QK error from 7.19\% to 1.58\%; a shared budget further reduces PPL to 3.63/5.97. These results show that scoring and budget allocation both contribute. Full curves and family tables in Appendices~\ref{app:budget_curves} and~\ref{app:model_results}.

\Cref{fig:main_llama2_70b} gives the Llama-2-70B sweep. At 50\%, absolute PPL gains are small (WikiText-2/C4 3.36/5.74 to 3.33/5.72). At 90\%, PPL decreases from 11.71/13.92 to 4.74/7.00 and QK error from 17.12\% to 2.87\%; The four main curves isolate row-wise Wanda, matrix-budget Wanda, and the two QK budgets for QK-Wanda; additional scoring variants are shown in the appendix.

\subsection{Results across all model sizes}
\label{sec:other_families}

Across 135 checkpoint--sparsity configurations, shared-budget QK-Wanda produces lower WikiText-2 PPL in 103, lower C4 PPL in 101, and lower full QK error in 133. Mean zero-shot accuracy increases in 30 of 45 evaluated configurations. These descriptive sign counts include arbitrarily small differences; Appendix~\ref{app:unmasked_aggregate_counts} gives detailed accounting.

The two exceptions to lower full QK error are Llama-3.2-3B at 80\% and 90\% sparsity: QK-Wanda gives 10.31\% and 24.87\%, compared with Wanda\textquotesingle s 9.87\% and 21.27\%. Exact single-deletion costs and an additive mask-selection surrogate do not guarantee a smaller coupled error after many deletions. The mean of per-checkpoint relative QK-error reductions is 60.35\% at 50\% sparsity and 44.70\% at 80\%; the corresponding medians are 60.70\% and 50.09\%, and the minimum at 80\% is $-4.42\%$. The diagnostic measures pre-RoPE products on common dense block inputs; it does not measure how RoPE and softmax transform perturbations or how pruning errors accumulate across blocks.

\paragraph{Family-dependent downstream effects.}
At 80\%, mean-accuracy gains are 0.76, 5.28, 7.77, and 5.94 percentage points on TinyLlama and Llama-2-7B/13B/70B. Qwen2.5-72B gains 0.56 points. Llama-3.1-8B illustrates a different tradeoff: WikiText-2/C4 PPL worsens from 8.89/13.54 to 13.47/21.97, yet mean accuracy rises from 59.64\% to 60.61\%. All four Llama~3 sizes have worse PPL on both datasets at 80\%. Neither local reconstruction nor PPL alone predicts every task outcome; Appendix~\ref{app:zero_shot_tables} retains the full task scores.

\subsection{Separate and shared QK budgets}
\label{sec:budget_allocation}

\begin{figure}[!ht]
\centering
\includegraphics[width=\textwidth]{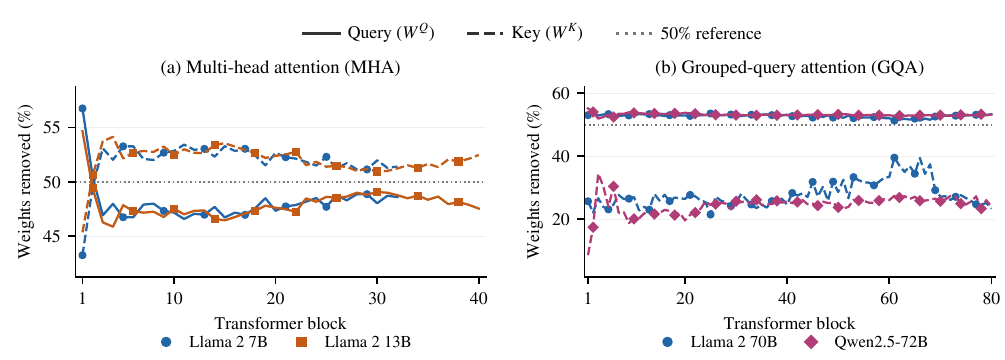}
\caption{QK-Wanda redistributes a shared 50\% pruning budget between query and key weights across transformer blocks. The left panel shows Llama-2-7B/13B (MHA); the right shows Llama-2-70B and Qwen2.5-72B (GQA). Solid and dashed lines show the percentages of query and key weights removed, respectively. Profiles for all 15 checkpoints appear in Appendix~\ref{app:shared_allocation}.}
\label{fig:main_qk_allocation}
\end{figure}

Across the 135 checkpoint--sparsity configurations, shared QK budgets lowered WikiText-2 PPL in 112, C4 PPL in 119, and QK error in 122 relative to separate budgets (Appendix~\ref{app:unmasked_aggregate_counts}). Sharing was not uniformly better: Llama-2-7B has slightly lower QK error under separate budgets at 50\% and 80\%, and at 90\% separate budgets give lower WikiText-2/C4 PPL (29.04/40.18 versus 30.13/41.87).

\Cref{fig:main_qk_allocation} contrasts the resulting allocations in MHA and GQA at a shared 50\% budget. The horizontal axis follows transformer blocks from the input side to the final block; solid and dashed curves show the percentages removed from \(W^Q\) and \(W^K\), respectively. The left panel contains the two MHA checkpoints in our study, Llama-2-7B and 13B. Their allocation is nearly balanced, but consistently favors retaining \(W^Q\) weights after the first few blocks: mean removal rates for \(W^Q\) and \(W^K\) are 48.09\%/51.91\% and 47.98\%/52.02\%, respectively.

Llama-2-70B and Qwen2.5-72B in the right panel show the opposite preference: both preserve a larger fraction of the weights in \(W^K\) throughout their 80 blocks. \(W^Q\) removal stays slightly above 50\%, while \(W^K\) removal is substantially lower. For example, Llama-2-70B removes 52.76\% of \(W^Q\) weights and 27.91\% of \(W^K\) weights on average. This preference persists across depth, although \(W^K\) removal fluctuates more between blocks.

In MHA, \(W^Q\) and \(W^K\) have equal parameter counts, and the observed removal fractions generally stay close to 50\%. In GQA, several query heads share each key head, making \(W^Q\) larger than \(W^K\). A small increase in the fraction removed from \(W^Q\) therefore allows a much larger decrease in the fraction removed from \(W^K\), while exactly half of the combined weights are removed.

The QK scores thus provide an inexpensive way to decide how much of the pruning budget each projection should receive. The resulting sparsities vary from block to block, reflecting differences in the reconstruction costs of removing \(W^Q\) and \(W^K\) weights. This allocation emerges directly from the scores, without specifying a fixed sparsity for each projection or hand-tuning a schedule across blocks. Beyond pruning, these results motivate exploring similar reconstruction-based budget allocation for other compression methods, such as mixed-precision quantization.

\subsection{Pruning all block linear projections}
\label{sec:full_block}

% Generated by scripts/export_unmasked_full_block.py from independent audit.
\begin{table}[htbp]
\centering
\fontsize{8}{9}\selectfont
\setlength{\tabcolsep}{1pt}
\caption{Pruning all seven transformer-block projections at 30\% and 50\% sparsity. Wanda + QK-Wanda combines shared-budget QK pruning with row-wise Wanda on the remaining projections. Accuracy is the seven-task zero-shot mean (\%). Bold marks better unrounded values within each checkpoint and sparsity.}
\label{tab:full_block}
\begin{qktabular}{11}{lcccccccccc}
\toprule
 & & \multicolumn{3}{c}{\textbf{TinyLlama-1.1B}} & \multicolumn{3}{c}{\textbf{Llama-2-7B}} & \multicolumn{3}{c}{\textbf{Llama-2-13B}}\\
\cmidrule(lr){3-5}\cmidrule(lr){6-8}\cmidrule(lr){9-11}
Method & Sparsity & \shortstack{Wiki2\\PPL} & \shortstack{C4\\PPL} & \shortstack{Zero-shot\\acc. (\%)} & \shortstack{Wiki2\\PPL} & \shortstack{C4\\PPL} & \shortstack{Zero-shot\\acc. (\%)} & \shortstack{Wiki2\\PPL} & \shortstack{C4\\PPL} & \shortstack{Zero-shot\\acc. (\%)}\\
\midrule
Dense & 0\% & 7.77 & 9.94 & 49.93 & 5.47 & 7.26 & 64.38 & 4.88 & 6.73 & 67.05\\
\midrule
\algname{Wanda} & 30\% & 8.33 & 10.46 & \textbf{50.22} & 5.66 & 7.50 & 63.88 & 5.06 & 6.90 & 66.14\\
\rowcolor{qkrow}
\algname{Wanda} + \algname{QK-Wanda} & 30\% & \textbf{8.31} & \textbf{10.44} & 49.96 & \textbf{5.66} & \textbf{7.49} & \textbf{64.02} & \textbf{5.05} & \textbf{6.90} & \textbf{66.46}\\
\addlinespace[2pt]
\algname{Wanda} & 50\% & 11.52 & 13.85 & 48.22 & 6.89 & 9.06 & 60.23 & 5.90 & 8.00 & \textbf{64.61}\\
\rowcolor{qkrow}
\algname{Wanda} + \algname{QK-Wanda} & 50\% & \textbf{11.22} & \textbf{13.62} & \textbf{48.56} & \textbf{6.84} & \textbf{9.05} & \textbf{61.02} & \textbf{5.86} & \textbf{7.98} & 64.55\\
\addlinespace[2pt]
\bottomrule
\end{qktabular}
\end{table}

We use QK-Wanda with a shared QK budget while retaining row-wise Wanda on the value, output, and MLP projections. \Cref{tab:full_block} compares this hybrid with row-wise Wanda throughout at 30\% and 50\% full-block sparsity. All seven block projections are pruned; embeddings, biases, normalization parameters and the output head remain dense. We evaluate the same pruned model on all seven zero-shot tasks, without additional pruning or training.

The hybrid lowers PPL in 12 of 12 checkpoint--sparsity--dataset comparisons. At 50\% on TinyLlama, WikiText-2/C4 PPL changes from 11.52/13.85 to 11.22/13.62. At 50\% on Llama-2-7B, WikiText-2/C4 PPL changes from 6.89/9.06 to 6.84/9.05. At 50\% on Llama-2-13B, WikiText-2/C4 PPL changes from 5.90/8.00 to 5.86/7.98. Mean accuracy is higher in 4 of 6 completed comparisons. These results show that QK-Wanda can be combined with pruning the remaining block projections, extending its use beyond query and key pruning alone.

\subsection{QK-Wanda variants}
\label{sec:calibration_studies}

\paragraph{Choice of reconstruction target.}
We tested three extensions to the base unmasked loss in \cref{eq:main_unmasked_loss}. Rotary position embeddings (RoPE, R) rotate queries and keys before their dot products, so reconstruction can instead target post-RoPE products. Centering (C) removes the mean reconstruction error across eligible keys for each query: adding the same constant to every allowed logit in a query row leaves its softmax unchanged, motivating an objective that ignores such shifts. A causal mask (M) restricts reconstruction to keys at or before the query position, matching the pairs accessible to decoder attention. These choices modify the pruning objective, not the model's attention computation.

Appendix~\ref{app:factorial_selection} compares all eight combinations on six checkpoints at 50\% and 80\% shared QK sparsity, with PPL and seven-task accuracy. We found no consistent downstream advantage from the added switches: their effects depend on checkpoint, sparsity, and metric, and some combinations worsen results. Post-RoPE centering with a causal mask (MRC) narrows the PPL gap on Llama-3.1-70B relative to base QK-Wanda, but still trails Wanda and lowers mean accuracy (\cref{sec:largest_models}). We therefore retain the simple unmasked objective as the base method and report the extended objectives, derivations, and comparisons in the appendices.

\paragraph{Progressive re-scoring.}
QK-Wanda's scores for \(W^Q\) depend on key activations, and its scores for \(W^K\) depend on query activations. Removing weights from one projection can therefore change the scores of the remaining weights in the other. Wanda's scores, in contrast, stay unchanged when the projection inputs are fixed. This dependence suggests pruning in small steps and refreshing the scores as weights are removed. We tested this approach by pruning part of \(W^Q\) and \(W^K\) simultaneously, recomputing both sets of scores from the partially pruned projections, and repeating until reaching the target sparsity. The tested schedules did not improve PPL: changes were negligible under row budgets, while progressive pruning worsened PPL under separate matrix budgets. Appendix~\ref{app:iterative} gives the schedules and results. This study and the three-seed ablation used distinct calibration protocols and are excluded from the main aggregate.

\section{Limitations and Scope}
\label{sec:limitations}

Lower full QK error does not guarantee lower PPL or higher task accuracy. At 80\% QK sparsity, all four Llama~3 checkpoints have higher PPL than Wanda, and three have lower QK error. Llama-3.1-70B's mean accuracy gap is only 0.71 percentage points despite much higher PPL; this does not establish statistical equivalence or preserved generative quality. Individual task results and PPL remain complementary.

The scores are exact for a single weight deletion, but do not guarantee an optimal mask when many weights are removed. Minimizing QK reconstruction error also does not guarantee minimizing reconstruction error at the output of the attention layer or transformer block. The held-out QK diagnostic measures the same type of local objective used for scoring; it is not independent evidence of downstream quality. Models with additional nonlinear QK transformations require separate treatment.

Our main results concern QK-only unstructured pruning. Setting weights to zero does not by itself reduce dense-kernel runtime or storage; we do not benchmark inference speedups or hardware-supported N:M sparsity. Full-block evidence covers only TinyLlama-1.1B and Llama-2-7B/13B at 30\% and 50\%.

Compensating updates to surviving weights remain outside this study.

\section{Conclusion}

\algname{QK-Wanda} couples query and key pruning through exact individual weight-deletion costs for QK reconstruction, without gradients or updates to retained weights. With the main calibration protocol, measured pruning takes 1.3\% longer than Wanda on A100 and 3.1\% longer on H200, averaged across the four benchmarked model sizes. Across 15 checkpoints, its shared budget lowers QK reconstruction error in 133 of 135 configurations and often improves perplexity and mean zero-shot accuracy. Llama~3 highlights the limits of this local objective: lower QK error can accompany much higher perplexity, even when mean zero-shot accuracy changes relatively little. These results show the promise of coupled scoring as a practical, inexpensive pruning criterion, while emphasizing that reconstruction quality alone does not determine downstream performance. Accounting for propagated errors and combining the scores with compensating weight updates are directions for future work.

\clearpage
\bibliographystyle{iclr2027_conference}
\bibliography{iclr2027_conference}
\clearpage
\flushbottom
\appendix
% Appendix-only presentation: no row highlighting or stretched vertical gaps.
\raggedbottom
\definecolor{qkrow}{RGB}{255,255,255}
\setlength{\intextsep}{10pt plus 2pt minus 2pt}
\section{Most frequently used notation}\label{sec:notation}

\begin{table}[H]
\caption{Notation used in the main text and appendices. Derivations fix one block; \(h,g\) index its heads and \(s\) its input sequences. The block index \(\ell\) appears in cross-block metrics. Indices are suppressed only when their objects are fixed.}
\label{tab:notation}
\centering
\small
\setlength{\tabcolsep}{3pt}
\renewcommand{\arraystretch}{0.98}
\begin{tabularx}{\linewidth}{@{}l c Y@{}}
\toprule
\multicolumn{3}{@{}l}{\textbf{Dimensions and indices}}\\
\(d,\ d_{\mathrm{out}},\ m\) & -- & Projection input width, output width, and attention-head width, respectively.\\
\(N,\ s,\ T,\ T_s\) & -- & Calibration sequence count/index; common length and length of sequence \(s\).\\
\(i,j;\ t,u\) & -- & Output/input weight coordinates; query/key token positions.\\
\(D,\ell;\ f\) & -- & Number/index of transformer blocks; MLP intermediate width.\\
\(H_Q,H_K;\ h,g\) & -- & Numbers of query/key heads and their respective indices.\\
\(\mathcal H_g\) & -- & Query heads sharing key head \(g\); these groups partition all query heads.\\
\(D_Q,\ D_K\) & -- & Total projection widths: \(H_Qm\) and \(H_Km\).\\
\midrule
\multicolumn{3}{@{}l}{\textbf{Weights, activations, and token-pair products}}\\
\(W,\widehat W,\Delta\) & -- & Projection weights, pruned weights, and change \(\Delta=\widehat W-W\).\\
\(O,\widehat O\) & -- & Linear outputs \(O=WX\) and \(\widehat O=\widehat W X\), in \(\R^{d_{\mathrm{out}}\times T}\).\\
\(W^Q,W^K\) & -- & Full weights in \(\R^{D_Q\times d},\R^{D_K\times d}\); each head matrix is \(m\times d\).\\
\(b^Q,b^K\) & -- & Retained QK biases; parenthesized head labels give \(b^{Q^{(h)}},b^{K^{(g)}}\).\\
\(X,\ Q,\ K\) & -- & Input and pre-RoPE activations: \(X\in\R^{d\times T}\), \(Q,K\in\R^{m\times T}\) for one head.\\
\(Q^{(h)},K^{(g)}\) & -- & Head activations; \(Q^{(h,s)},K^{(g,s)}\in\R^{m\times T_s}\). Hats denote pruned outputs.\\
\(x_t,q_t,k_u\) & -- & Token columns; full labels: \(x_t^{(s)},q_t^{(h,s)},k_u^{(g,s)}\).\\
\(e,\ e^{(s)};\ e_i\) & -- & All-ones columns in \(\{1\}^{T\times1},\{1\}^{T_s\times1}\); coordinate basis vector \(e_i\).\\
\(B=Q^\top K,\ Z\) & -- & Unscaled pre-/post-RoPE token-pair products; \(B^{(h,s)},Z^{(h,s)}\in\R^{T_s\times T_s}\).\\
\(\widehat M,\ \Delta M\) & -- & For a matrix \(M\), its pruned value and change \(\Delta M=\widehat M-M\).\\
\(R_t^{(s)},\ \widetilde q,\widetilde k\) & -- & Fixed rotary map; rotated query/key vectors.\\
\midrule
\multicolumn{3}{@{}l}{\textbf{Losses, scores, and sparsity allocation}}\\
\(C_{tu}=\mathbf1\{u\le t\}\) & -- & Binary causal mask; \(\mathbf1\{\cdot\}\) is the indicator.\\
\(\mathcal L_W\) & -- & Linear reconstruction loss \(\norm{\widehat O-O}_F^2=\norm{\Delta X}_F^2\).\\
\(\mathcal L\) & -- & QK loss over all token pairs; sums head pairs and averages independent sequences.\\
\(\mathcal L_C\) & -- & Causal pre-RoPE reconstruction loss.\\
\(\mathcal L_R\) & -- & Causal post-RoPE reconstruction loss.\\
\shortstack{\(\mathcal L_{\mathrm{cent}}\)\\\(\mathcal L_{R,\mathrm{cent}}\)} & -- & Causal reconstruction losses with row centering, before and after RoPE.\\
\(\mathcal L_{\mathrm{add}}(z)\) & -- & Additive surrogate: sum of the individual costs of selected deletions.\\
\(\mathcal P,\ z_p,\ M\) & -- & Candidate weight indices, binary removal indicators, and integer deletion count.\\
\(S^W_{ij}\) & -- & Squared Wanda score \(W_{ij}^2\norm{X_{j:}}_2^2\).\\
\(S^Q_{ij},\ S^K_{ij};\ S_p\) & -- & Unmasked single-weight deletion costs in \(\mathcal L\); explicit head labels apply to MHA and GQA alike.\\
\(\hat S^Q_{ij},\ \hat S^K_{ij};\ \hat S_p\) & -- & Causal pre-RoPE costs; \(p\) labels a candidate. Hats on \(S\) mark causality.\\
\(\hat S^Q_{ij,R},\ \hat S^K_{ij,R}\) & -- & RoPE-aware costs in \(\mathcal L_R\); head labels use \(Q^{(h)},K^{(g)}\) for all score variants.\\
\(\rho;\ \rho_Q,\rho_K\) & -- & Requested removal fraction; realized fractions within \(W^Q\) and \(W^K\).\\
\(n_Q,n_K;\ P\) & -- & Weight counts \(n_Q=H_Qmd\), \(n_K=H_Kmd\); \(P=n_Q+n_K\) scores per shared pool.\\
\(P_K,\ U_Q,\ X_2\) & -- & Key-prefix/grouped-query-suffix squared energies; \(X_2=X\odot X\).\\
\(\mathcal P_a,\ V_{K,a},\ V_{Q,a}\) & -- & RoPE pair \(a\); key-prefix/grouped-query-suffix \(2\times2\) second-moment sums.\\
\(\nu_t(v),\mathcal V_t(v)\) & -- & Centered prefix energy and unnormalized prefix covariance (Appendix~\ref{app:centered}).\\
\(\delta,\ p,\ W_p\) & -- & Change \(\operatorname{vec}(\Delta W)\); within one head, \(p=i+(j-1)m\) and \(W_p=W_{ij}\).\\
\(G^Q,\ G^K,\ G_U\) & -- & Causal QK and unmasked quadratic coefficients; loss \(\delta^\top G\delta\), Hessian \(2G\).\\
\midrule
\multicolumn{3}{@{}l}{\textbf{Operations and evaluation}}\\
\(M^\top,\ M_{i:},\ M_{:t}\) & -- & Transpose, row, column; \(M_{i,1:t},M_{i,u:T}\) select a row prefix/suffix.\\
\(\norm{x}_2,\ \norm{M}_F\) & -- & Euclidean and Frobenius norms: \((\sum_i x_i^2)^{1/2}\) and \((\sum_{i,j}M_{ij}^2)^{1/2}\).\\
\(\odot,\ \otimes,\ \operatorname{vec}\) & -- & Entrywise product, Kronecker product, and column-wise vectorization.\\
\(\lceil a\rceil,\ \lfloor a\rfloor\) & -- & Smallest integer at least \(a\), and largest integer at most \(a\).\\
\(\langle A,B\rangle_C,\ \norm{A}_C^2\) & -- & Masked product sum and squared seminorm (mean over sequences, sum over heads).\\
\(E_{\mathrm{QK}}\) & -- & Block-mean relative squared QK error, in percent (\cref{eq:relative_qk_error}).\\
\(E_Y\) & -- & Pooled relative squared stage error, reported as \(100E_Y\) (\cref{eq:normalized_error}).\\
\(\overline\Delta_\rho\) & -- & Mean relative QK-error reduction across checkpoints (\%, \cref{eq:cross_checkpoint_reduction}).\\
\(J,r,M_r;\ E,\Delta B\) & -- & Step count/index/cumulative removals; accumulated error/next change (Appendix~\ref{app:iterative}).\\
\bottomrule
\end{tabularx}
\end{table}

\FloatBarrier
\section{Complete Proofs for the Causal Score}
\paragraph{Appendix roadmap.}
Appendices~\ref{app:causal_details}--\ref{app:variants} contain the score derivations; \ref{app:protocol} documents data and implementation, and \ref{app:calibration} covers default selection. Full family tables, task scores, and curves are in Appendices~\ref{app:model_results}--\ref{app:qk_error_curves}. Controlled variants and row budgets are in Appendices~\ref{app:variant_ablation} and~\ref{app:budget_llama2_7b}; progressive re-scoring, timing, and matched matrix budgets are in Appendices~\ref{app:iterative},~\ref{app:score_cost}, and~\ref{app:matrix_control}.
\label{app:causal_details}

\subsection{Causal masking: \algname{QK-Wanda-M}}
\label{sec:causal_method}

\Cref{sec:gqa_method} treats shared key heads with an unmasked reconstruction objective. Here we additionally restrict reconstruction to causally allowed pairs, while retaining the same head notation. The following explanation and scores define \algname{QK-Wanda-M}. A causal mask prevents each query from attending to future tokens \citep{vaswani2017attention}. The reconstruction objective should therefore retain only token pairs with key position \(u\le t\) for query position \(t\). Grouped-query attention (GQA) lets several query heads share one key/value head, reducing KV-cache memory and bandwidth during decoding \citep{ainslie2023gqa}. For example, Llama~3.1-8B uses 32 query heads and 8 key/value heads, so each group of four query heads shares one key head and one value head \citep{grattafiori2024llama3}. We account for both causal masking and shared keys in the reconstruction loss and its single-deletion scores below. This causal, pre-RoPE formulation defines the masked variant, \algname{QK-Wanda-M}.

Consider a single transformer block. Query heads have indices \(h=1,\ldots,H_Q\), and key heads \(g=1,\ldots,H_K\) (e.g., \(H_Q=32\), \(H_K=8\) for Llama~3.1-8B). The set \(\mathcal H_g\) contains the query heads sharing key head \(g\); these sets partition the query heads. For example, \(\mathcal H_1=\{1,2\}\) means that queries \(Q^{(1)},Q^{(2)}\) both interact with \(K^{(1)}\). The projection weights \(W^{Q^{(h)}}\) and \(W^{K^{(g)}}\) produce activations \(Q^{(h)}\) and \(K^{(g)}\), respectively, using the same affine maps and retained biases as in \cref{eq:simple_projections}. The hatted activations \(\widehat Q^{(h)},\widehat K^{(g)}\) are the corresponding outputs after pruning, evaluated on the same inputs. Define the binary causal mask \(C_{tu}=\mathbf1\{u\le t\}\), \(C\in\{0,1\}^{T\times T}\), and the loss
\begin{equation}
 \mathcal L_C=\sum_{g=1}^{H_K}\sum_{h\in\mathcal H_g}
 \norm{C\odot(\widehat Q^{(h)\top}\widehat K^{(g)}-Q^{(h)\top}K^{(g)})}_F^2,
 \label{eq:main_causal_loss}
\end{equation}
where \(\odot\) is entrywise multiplication. Deleting a query weight changes one query head; deleting a key weight changes every product involving that key head. The query score below fixes \(h\in\mathcal H_g\); the key score sums over this group. As before, \(i,j\) index output coordinates and input features. The row slices \(K^{(g)}_{i,1:t}\) and \(Q^{(h)}_{i,u:T}\) select coordinate \(i\) over the key prefix and query suffix. We use \(\hat S\) for causal scores and \(S\) for unmasked scores. With token sums ranging from \(1\) to \(T\), the exact individual-deletion costs are
\begin{equation}
 \boxed{
 \hat S^{Q^{(h)}}_{ij}=(W^{Q^{(h)}}_{ij})^2\sum_t X_{jt}^2\norm{K^{(g)}_{i,1:t}}_2^2,
 \quad
 \hat S^{K^{(g)}}_{ij}=(W^{K^{(g)}}_{ij})^2\sum_u X_{ju}^2\sum_{h\in\mathcal H_g}\norm{Q^{(h)}_{i,u:T}}_2^2.
 }
 \label{eq:causal_single_scores}
\end{equation}
The token ranges in \cref{eq:causal_single_scores} follow the causal mask: a query at position \(t\) interacts only with keys at positions \(1,\ldots,t\), while a key at position \(u\) interacts with queries at positions \(u,\ldots,T\). If several query heads share that key head, deleting one of its weights affects all of them, so the key score adds their contributions.

\Cref{eq:causal_single_scores} recovers the simpler scores in \cref{eq:simple_wanda_connection} when we use ordinary MHA and remove the causal mask. In MHA, each key head serves exactly one query head, so \(\mathcal H_g=\{g\}\) and the head sum reduces to a single contribution. Without the mask, every query token interacts with every key token, so the restricted token ranges become full rows. For a fixed query/key pair, we can then omit the head indices.

For several independent calibration sequences, we average the deletion costs over sequences. \Cref{app:proof_c,app:worked_example} give the full formulas, proofs, and a two-token example. Appendix~\ref{app:unmasked} gives the corresponding formulas without a causal mask, including multiple calibration sequences and shared key heads.

\algname{QK-Wanda-MR} applies the same scalar-deletion construction to a causal target that includes rotary position embeddings (RoPE). \Cref{sec:calibration_studies} summarizes the scoring variants; Appendix~\ref{app:rope} gives the derivation.

\subsection{Full objective, general scores, and their proof}
\label{app:proof_c}

We extend the single-sequence MHA/GQA scores in \cref{sec:causal_method} to \(N\) independent calibration sequences. All identities hold for fixed inputs and real-valued arithmetic in one block; its index \(\ell\) is suppressed throughout the derivation. Sequence \(s\) has \(T_s\) tokens and input \(X^{(s)}\in\R^{d\times T_s}\). On head activations, we append the sequence to the parenthesized head label: \(Q^{(h,s)}\), \(K^{(g,s)}\). These labels retain the main-text convention: \(h,g\) identify heads, \(s\) identifies a sequence, and subscripts select coordinates and token positions. We average sequence losses with equal weight \(1/N\), as in the experiments.

\paragraph{Heads and coordinates.}
All heads have common width \(m\). Query indices are \(h=1,\ldots,H_Q\), and key indices \(g=1,\ldots,H_K\). The disjoint groups \(\mathcal H_g\) partition the query heads by their shared key. In every query/key product or query score below, \(h\in\mathcal H_g\). MHA is the case \(\mathcal H_g=\{g\}\). The common attention scale \(1/\sqrt m\) divides all squared deletion costs by \(m\), preserving rankings.

The matrices \(W^{Q^{(h)}},W^{K^{(g)}}\in\R^{m\times d}\) map input features to head coordinates. In each projection label, the parenthesized superscript identifies the head, not a power. We use the same labels for biases, scores, and quadratic coefficients. Scalar entries are \(W^{Q^{(h)}}_{ij}\) and \(W^{K^{(g)}}_{ij}\), where \(i=1,\ldots,m\) and \(j=1,\ldots,d\). The same coordinate \(i\) is paired in each dot product: changing \(K^{(g,s)}_{iu}\) changes its products with \(Q^{(h,s)}_{it}\) for every \(h\in\mathcal H_g\). The optional bias vectors \(b^{Q^{(h)}},b^{K^{(g)}}\) have \(m\) entries. Let \(e^{(s)}\in\{1\}^{T_s\times1}\) be the all-ones column vector for sequence \(s\). Before rotary position embedding (RoPE), the activations are
\begin{equation}
 Q^{(h,s)}=W^{Q^{(h)}}X^{(s)}+b^{Q^{(h)}}(e^{(s)})^\top,
 \qquad
 K^{(g,s)}=W^{K^{(g)}}X^{(s)}+b^{K^{(g)}}(e^{(s)})^\top.
 \label{eq:head_projections}
\end{equation}
Both activation matrices have \(m\) rows and \(T_s\) columns. In entry notation, \(Q^{(h,s)}_{it}\) is query coordinate \(i\) at token \(t\), and \(K^{(g,s)}_{iu}\) is key coordinate \(i\) at token \(u\). Biases remain fixed and are zero for bias-free models. They cancel from their own projection changes but remain in opposite-side activations.

\paragraph{From heads to complete projections.}
The full block projections \(W^Q\in\R^{(H_Qm)\times d}\) and \(W^K\in\R^{(H_Km)\times d}\) stack their head matrices vertically: each head occupies \(m\) consecutive output rows. Their weight counts are \(n_Q=H_Qmd\) and \(n_K=H_Kmd\). Separate budgets apply to each complete projection, and a shared budget covers both; neither imposes head quotas. For the single-head derivation in \cref{sec:unmasked_core}, these full projections coincide with their sole head matrices.

\paragraph{Which token pairs contribute?}
Use \(t\) for a query position and \(u\) for a key position, both between 1 and \(T_s\). For \(h\in\mathcal H_g\), define the \(T_s\times T_s\) matrices
\begin{equation}
 B^{(h,s)}=Q^{(h,s)\top}K^{(g,s)},
 \qquad C^{(s)}_{tu}=\begin{cases}1,&u\le t,\\0,&u>t.\end{cases}
\end{equation}
Each query head belongs to exactly one group, so \(h\) determines its key head; \(B^{(h,s)}\) needs no additional \(g\) label. The entry \(B^{(h,s)}_{tu}\) sums the products of matching coordinates of query token \(t\) and key token \(u\), following the unscaled convention in \cref{sec:unmasked_core}. The binary matrix \(C^{(s)}\) is the causal mask: a query can use its current and preceding tokens, so only \(u\le t\) contributes. For three tokens it is \(C=\bigl(\begin{smallmatrix}1&0&0\\1&1&0\\1&1&1\end{smallmatrix}\bigr)\).

\paragraph{The reconstruction loss.}
The activations \(\widehat Q^{(h,s)},\widehat K^{(g,s)}\) are computed after pruning on the same \(X^{(s)}\), with biases retained. Their product \(\widehat B^{(h,s)}=\widehat Q^{(h,s)\top}\widehat K^{(g,s)}\), for \(h\in\mathcal H_g\), is compared with \(B^{(h,s)}\). The mask retains allowed token pairs. Using the squared Frobenius norm from \cref{sec:unmasked_core}, the causal loss is
\begin{align}
 \mathcal L_C
 &=\frac1N\sum_{s=1}^{N}\sum_{h=1}^{H_Q}
 \norm{C^{(s)}\odot\bigl(\widehat B^{(h,s)}-B^{(h,s)}\bigr)}_F^2\notag\\
 &=\frac1N\sum_{s=1}^{N}\sum_{h=1}^{H_Q}\sum_{t=1}^{T_s}\sum_{u=1}^{t}
 \bigl(\widehat B^{(h,s)}_{tu}-B^{(h,s)}_{tu}\bigr)^2.
 \label{eq:causal_objective}
\end{align}
As the groups partition the query heads, summing over \(h=1,\ldots,H_Q\) is the same as the group-then-head sum in \cref{eq:main_causal_loss}. We sum squared changes over allowed token pairs and heads, then average sequences. Independent sequences are evaluated separately; no dot product connects tokens from different sequences. The mask selects entries rather than adding the negative infinities used inside softmax. Using a sum instead of the average would multiply all scores by the same positive number \(N\), leaving their ranking unchanged.

Let \(\hat S^{Q^{(h)}}_{ij}\) denote the value of \(\mathcal L_C\) after setting the query weight \(W^{Q^{(h)}}_{ij}\) to zero while every other weight stays fixed. Define \(\hat S^{K^{(g)}}_{ij}\) in the same way for the key weight \(W^{K^{(g)}}_{ij}\). In this causal derivation, a hat on \(S\) labels the causal score. These scores measure the damage from each individual deletion; smaller is better to prune.

\begin{proposition}[Exact causal single-weight deletion]
\label{prop:causal_scores}
Fix the calibration inputs, projections, and retained biases defined above. For \(h\in\mathcal H_g\), the query score is
\begin{equation}
 \boxed{\hat S^{Q^{(h)}}_{ij}=\frac{(W^{Q^{(h)}}_{ij})^2}{N}
 \sum_{s=1}^{N}\sum_{t=1}^{T_s} (X^{(s)}_{jt})^2
 \sum_{u=1}^{t}(K^{(g,s)}_{iu})^2.}
 \label{eq:causal_q}
\end{equation}
For a key weight shared by the query heads in \(\mathcal H_g\), it is
\begin{equation}
 \boxed{\hat S^{K^{(g)}}_{ij}=\frac{(W^{K^{(g)}}_{ij})^2}{N}
 \sum_{s=1}^{N}\sum_{u=1}^{T_s} (X^{(s)}_{ju})^2
 \sum_{h\in\mathcal H_g}\sum_{t=u}^{T_s}(Q^{(h,s)}_{it})^2.}
 \label{eq:causal_k}
\end{equation}
\end{proposition}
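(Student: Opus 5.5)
We compute the loss \(\mathcal L_C\) in \cref{eq:causal_objective} directly after a single deletion. With no pruning the loss is zero, so the score equals the loss of the perturbed model.

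\textbf{Query deletion.} Set \(W^{Q^{(h)}}_{ij}\) to zero with everything else fixed. By \cref{eq:head_projections}, the bias cancels and only query head \(h\) changes, and only in row \(i\):
\[
\widehat Q^{(h,s)}-Q^{(h,s)}=-W^{Q^{(h)}}_{ij}\,e_i X^{(s)}_{j:}.
\]
No key changes, so for every sequence \(s\) the entrywise product change is
\[
\widehat B^{(h,s)}_{tu}-B^{(h,s)}_{tu}=-W^{Q^{(h)}}_{ij}X^{(s)}_{jt}K^{(g,s)}_{iu}.
\]
Every other query head \(h'\ne h\) contributes zero change. Substituting into the double sum over \(t\) and \(u\le t\) in \cref{eq:causal_objective}, squaring, and factoring out \((W^{Q^{(h)}}_{ij})^2 (X^{(s)}_{jt})^2\) from the inner sum over \(u\) gives \cref{eq:causal_q}.

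\textbf{Key deletion.} Set \(W^{K^{(g)}}_{ij}\) to zero. Key head \(g\) changes only in row \(i\), by \(-W^{K^{(g)}}_{ij}e_iX^{(s)}_{j:}\). Every query head \(h\in\mathcal H_g\), and only those since the groups partition the query heads, then sees
\[
\Delta B^{(h,s)}_{tu}=-W^{K^{(g)}}_{ij}Q^{(h,s)}_{it}X^{(s)}_{ju}.
\]
Squaring and summing over the causal region \(1\le u\le t\le T_s\), we exchange the order of summation to \(\sum_{u=1}^{T_s}\sum_{t=u}^{T_s}\). Pulling \((X^{(s)}_{ju})^2\) out of the inner sum and summing over \(h\in\mathcal H_g\) gives \cref{eq:causal_k}.

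\textbf{Where care is needed.} The main obstacle is bookkeeping rather than any analytic estimate. The key step is the swap of summation order over the triangular index set \(\{1\le u\le t\le T_s\}\), which yields the suffix ranges in the key score. We must also justify that only heads in \(\mathcal H_g\) are affected.

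Exactness follows because a single deletion perturbs only one factor of each bilinear product. The change is therefore linear in the deleted weight, with no cross terms, and no small-perturbation approximation is needed. The averaging over sequences with weight \(1/N\) carries through unchanged, since no product couples different sequences.
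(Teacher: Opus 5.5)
Your proposal is correct and follows essentially the same route as the paper. Both arguments compute the entrywise product change $-wX^{(s)}_{jt}K^{(g,s)}_{iu}$ (respectively $-wQ^{(h,s)}_{it}X^{(s)}_{ju}$ for each $h\in\mathcal H_g$), square it, and sum. The only difference is bookkeeping: the paper first works with an arbitrary binary mask, using $C_{tu}^2=C_{tu}$, and then specializes to $u\le t$, whereas you restrict to the triangular region directly and reorder the sum.
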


Setting \(N=1\) recovers \cref{eq:causal_single_scores}: the inner token sums are precisely the squared prefix/suffix norms. In MHA, each grouped-query sum has one head. The factor \(1/N\) averages independent sequence losses; it does not introduce token pairs between different sequences.

The reference pair has zero reconstruction loss. These scores measure the loss after one finite deletion, not language-model cross-entropy. The proof first allows any fixed binary mask \(C^{(s)}\), then specializes to the causal mask. Sums over \(t,u\) range from 1 to \(T_s\) unless a causal restriction is stated.

\begin{proof}[Proof of \cref{prop:causal_scores}]
For a weight, activation, or product matrix \(M\), write \(\Delta M=\widehat M-M\) for its pruned value minus its reference value. Let \(e_i\in\R^{m}\) be the vector that is one at coordinate \(i\) and zero elsewhere, and define \(e_j\in\R^d\) similarly. These are standard basis vectors, distinct from the all-ones vector \(e^{(s)}\). The outer product \(e_i e_j^\top\) selects exactly one weight coordinate.

\paragraph{Query deletion.}
For \(w=W^{Q^{(h)}}_{ij}\), setting this weight to zero means
\begin{equation}
 \Delta W^{Q^{(h)}}=-w e_i e_j^\top,
 \qquad \Delta Q^{(h,s)}=-w e_i X^{(s)}_{j:}.
 \label{eq:query_weight_delta}
\end{equation}
All keys and other query heads remain fixed. The bias cancels when taking the difference of the two affine projections. Hence
\begin{align}
 \widehat B^{(h,s)}-B^{(h,s)}
 &=(Q^{(h,s)}+\Delta Q^{(h,s)})^\top K^{(g,s)}
             -Q^{(h,s)\top}K^{(g,s)}\notag\\
 &=-wX^{(s)\top}_{j:}K^{(g,s)}_{i:}.
 \label{eq:query_product_delta}
\end{align}
In particular, the \((t,u)\) entry is exactly
\begin{equation}
 \Delta B^{(h,s)}_{tu}
 =-wX^{(s)}_{jt}K^{(g,s)}_{iu}.
 \label{eq:causal_delta}
\end{equation}
Expanding the Frobenius norm entry by entry gives
\begin{align}
 \mathcal L_C
 &=\frac1N\sum_{s=1}^{N}\sum_{t,u}
       \bigl(C^{(s)}_{tu}\Delta B^{(h,s)}_{tu}\bigr)^2\notag\\
 &=\frac{w^2}{N}\sum_{s=1}^{N}\sum_{t,u}
       C^{(s)}_{tu}(X^{(s)}_{jt})^2(K^{(g,s)}_{iu})^2.
 \label{eq:general_mask_query}
\end{align}
We used \(C_{tu}^2=C_{tu}\), which holds because every mask entry is zero or one; all other heads contribute zero. For the causal mask, only \(u\le t\) remains in the inner sum, yielding \cref{eq:causal_q}.

\paragraph{Key deletion and GQA.}
For \(w=W^{K^{(g)}}_{ij}\), the corresponding finite change is
\begin{equation}
 \Delta W^{K^{(g)}}=-w e_i e_j^\top,
 \qquad \Delta K^{(g,s)}=-w e_i X^{(s)}_{j:}.
\end{equation}
For each \(h\in\mathcal H_g\), queries remain fixed, so
\begin{equation}
 \widehat B^{(h,s)}-B^{(h,s)}
 =Q^{(h,s)\top}\Delta K^{(g,s)}
 =-wQ^{(h,s)\top}_{i:}X^{(s)}_{j:}.
\end{equation}
The entry change is \(-wQ^{(h,s)}_{it}X^{(s)}_{ju}\). Heads outside \(\mathcal H_g\) do not change. Therefore
\begin{align}
 \mathcal L_C
 &=\frac1N\sum_{s=1}^{N}\sum_{h\in\mathcal H_g}\sum_{t,u}
       \bigl(C^{(s)}_{tu}\Delta B^{(h,s)}_{tu}\bigr)^2\notag\\
 &=\frac{w^2}{N}\sum_{s=1}^{N}\sum_u (X^{(s)}_{ju})^2
       \sum_{h\in\mathcal H_g}\sum_t C^{(s)}_{tu}(Q^{(h,s)}_{it})^2.
 \label{eq:general_mask_key}
\end{align}
The loss sums errors in distinct head matrices, so there are no cross-products between query heads. For the causal mask, valid query positions satisfy \(t\ge u\), giving \cref{eq:causal_k}. Both derivations are finite algebraic identities, with no Taylor truncation. Biases contribute to the fixed opposite-side activations throughout.
\end{proof}

A general binary mask can also express padding, sliding windows, or document boundaries. The evaluated runner uses fixed-length unpadded windows with ordinary causal attention; the mathematical statement does not imply implementation support for every mask layout.

\subsection{A two-token example}
\label{app:worked_example}

Take one sequence and one head, with \(N=1\), \(T=2\), \(d=2\), \(m=1\), and zero biases. Suppress the sequence/head labels. Let
\begin{equation*}
 X=\begin{pmatrix}1&2\\3&4\end{pmatrix},\qquad
 W^{Q}=\begin{pmatrix}2&1\end{pmatrix},\qquad
 W^{K}=\begin{pmatrix}1&0\end{pmatrix}.
\end{equation*}
Then \(Q=(5,8)\) and \(K=(1,2)\). Delete \(W^{Q}_{11}=2\), the weight from input feature 1 to query coordinate 1. The pruned query is \(\widehat Q=(3,4)\). The product change and its causal restriction are
\begin{equation*}
 \widehat B-B=\begin{pmatrix}-2&-4\\-4&-8\end{pmatrix},\qquad
 C\odot(\widehat B-B)=\begin{pmatrix}-2&0\\-4&-8\end{pmatrix}.
\end{equation*}
The top-right entry is ignored because query token 1 cannot attend to key token 2. The loss is \(\mathcal L_C=(-2)^2+(-4)^2+(-8)^2=84\). The closed-form score gives the same answer:
\begin{equation*}
 \hat S^Q_{11}=2^2\bigl[1^2(1^2)+2^2(1^2+2^2)\bigr]=84.
\end{equation*}
The two terms in brackets use the key prefixes available to query tokens 1 and 2. This example makes explicit what the mask, squared norm, and prefix sum each contribute.

\subsection{When do row-wise masks coincide with Wanda?}
\label{app:row_equivalence}

For one unmasked sequence, \cref{eq:simple_wanda_connection} multiplies every Wanda score in query row \(i\) by the same factor \(\norm{K_{i:}}_2^2\), and every score in key row \(i\) by \(\norm{Q_{i:}}_2^2\). A positive factor preserves rankings and ties, so identical per-row deletion counts and tie handling give identical masks. If the factor is zero, all QK scores in that row tie and equivalence is not guaranteed. The same argument holds for the grouped opposite-head energy in the single-sequence GQA formula (\cref{app:unmasked}). For multiple independent sequences, the exact score averages products of input and opposite-side energies; these generally cannot be factored into a Wanda score times a row constant.

Causal masking can break the equivalence even for one sequence and one head. Set
\begin{equation*}
 X=\begin{pmatrix}1&0\\0&1\end{pmatrix},\qquad
 W^Q=\begin{pmatrix}2&1\end{pmatrix},\qquad
 W^K=\begin{pmatrix}1&3\end{pmatrix},
\end{equation*}
with zero biases. Then \(Q=(2,1)\) and \(K=(1,3)\). For the two query weights, Wanda gives scores \((4,1)\), and the unmasked score gives \((40,10)\): both delete the second weight at 50\% row sparsity. The causal scores are instead \((4,10)\), because the first query sees only the first key whereas the second sees both. They delete the first weight. Thus a row-wise causal control is informative; the unmasked equivalence is not a reason to omit it.

\subsection{Derivation of the prefix/suffix accumulators}

For sequence \(s\), define two \(m\times T_s\) arrays. The notation \([i,t]\) selects the entry at row \(i\), column \(t\):
\begin{align}
 P_K^{(g,s)}[i,t]&=\sum_{u\le t}(K^{(g,s)}_{iu})^2,\\
 U_Q^{(g,s)}[i,u]&=\sum_{h\in\mathcal H_g}\sum_{t\ge u}(Q^{(h,s)}_{it})^2.
\end{align}
The array \(P_K\) stores key-prefix sums; \(U_Q\) stores grouped-query-suffix sums. They are computed by forward and reverse cumulative sums. Let \(X_2^{(s)}=X^{(s)}\odot X^{(s)}\) denote the input matrix with each entry squared. For any matrix \(M\), \([M]_{ij}\) also denotes its row-\(i\), column-\(j\) entry. Thus
\begin{align}
 [P_K^{(g,s)}X_2^{(s)\top}]_{ij}
 &=\sum_t\sum_{u\le t}(K^{(g,s)}_{iu})^2(X^{(s)}_{jt})^2,\\
 [U_Q^{(g,s)}X_2^{(s)\top}]_{ij}
 &=\sum_u\sum_{h\in\mathcal H_g}\sum_{t\ge u}
       (Q^{(h,s)}_{it})^2(X^{(s)}_{ju})^2.
\end{align}
Multiplying by the corresponding squared weight, then averaging over the \(N\) sequences, gives exactly \cref{eq:causal_q,eq:causal_k}. Each resulting interaction matrix has shape \(m\times d\). Query factors are reused within a GQA group; their weight multipliers differ by head. These are exact rearrangements of the pairwise sums.

The implementation accumulates in FP32 and retains the common attention-scale factor \(1/m\) in both score arrays. The formulas report unscaled deletion costs; this common factor preserves their ranking under separate or shared budgets. Within a block, both masks are formed before either is applied, then the sparse block output is propagated. Thus the inputs are fixed during a local scoring decision, even though later blocks receive method-specific sparse inputs.

\subsection{Quadratic form and its diagonal}
\label{app:causal_quadratic}

The notation \(X_{:t}\) selects column \(t\), so write the input, key, and query column vectors as \(x_t^{(s)}=X^{(s)}_{:t}\), \(k^{(g,s)}_{u}=K^{(g,s)}_{:u}\), and \(q^{(h,s)}_{t}=Q^{(h,s)}_{:t}\). The operation \(\operatorname{vec}(M)\) stacks columns in order; \(\otimes\) denotes the Kronecker product.

Fix \(h\in\mathcal H_g\). For a change to that query-head matrix, let \(\delta=\operatorname{vec}(\Delta W^{Q^{(h)}})\in\R^{dm}\). With keys fixed, the change of an unscaled dot product can be written as
\begin{equation}
 k^{(g,s)\top}_{u}\Delta W^{Q^{(h)}}x_t^{(s)}
 =\bigl(x_t^{(s)}\otimes k^{(g,s)}_{u}\bigr)^\top
       \operatorname{vec}(\Delta W^{Q^{(h)}}).
\end{equation}
Squaring this scalar and summing valid entries proves that the exact fixed-key reconstruction loss is \(\delta^\top G^{Q^{(h)}}\delta\), where
\begin{equation}
 G^{Q^{(h)}}=\frac1N\sum_{s=1}^{N}\sum_{t,u} C^{(s)}_{tu}
 (x_t^{(s)}x_t^{(s)\top})\otimes
 (k^{(g,s)}_{u}k^{(g,s)\top}_{u}).
 \label{eq:causal_curvature}
\end{equation}
The matrix \(G^{Q^{(h)}}\in\R^{dm\times dm}\) therefore collects the coefficients of this quadratic loss. For a change to key head \(g\), use \(\delta=\operatorname{vec}(\Delta W^{K^{(g)}})\) instead. Keeping queries fixed and including all affected heads gives the coefficient
\begin{equation}
 G^{K^{(g)}}=\frac1N\sum_{s=1}^{N}\sum_u\sum_{h\in\mathcal H_g}\sum_t
 C^{(s)}_{tu}
 (x_u^{(s)}x_u^{(s)\top})\otimes
 (q^{(h,s)}_{t}q^{(h,s)\top}_{t}).
\end{equation}
Write \(G\) for either coefficient matrix when the side/head is clear, and \(W\) for the corresponding reference head matrix. Each coefficient matrix is positive semidefinite: \(v^\top Gv\ge0\) for every vector \(v\), because it represents a sum of squared dot-product changes. The Hessian with respect to \(\delta\) is \(2G\). Matrix entry \((i,j)\) has vectorized index \(p=i+(j-1)m\), and \(W_p=[\operatorname{vec}(W)]_p=W_{ij}\) is the reference weight, not a component of \(\delta\). Let \(e_p\in\R^{dm}\) select that coordinate. Deleting it means \(\delta=-W_p e_p\), yielding \(\hat S_p=W_p^2[G]_{pp}\). The diagonal coefficient \([G]_{pp}\) determines the cost of perturbing coordinate \(p\); off-diagonal entries describe interactions between weights. The causal restriction generally prevents factorization into separately aggregated input and opposite-projection Gram matrices (products such as \(XX^\top\)).

\subsection{Scaling and the limits of scalar exactness}

Multiply all query weights and biases in one GQA group by a nonzero scalar \(\lambda\), and the shared key weights and bias by its reciprocal \(\lambda^{-1}\). In each Q score, the squared query weight gains \(\lambda^2\) and key energy gains \(\lambda^{-2}\); the factors cancel. The key score has the reciprocal cancellation. Thus products and \algname{QK-Wanda-M} scores are invariant. Including biases is required for complete activations to scale this way; groupwise scalar rescaling also commutes with RoPE.

For several query deletions, \(\delta^\top G\delta\) contains off-diagonal interactions. Summing scalar scores retains only its diagonal terms. For one sequence and one head, suppress their labels. If both projections change, exact expansion gives
\begin{equation}
 \widehat Q^\top\widehat K-Q^\top K
 =\Delta Q^\top K+Q^\top\Delta K+\Delta Q^\top\Delta K.
 \label{eq:joint_change}
\end{equation}
Apply the causal mask to this complete change. Its squared norm also contains cross terms between the two linear paths and the bilinear path. Independent scalar thresholding therefore does not give the exact joint loss or an optimal full mask.

% BEGIN UNMASKED SIX
\subsection{Does causal masking improve perplexity?}
\label{app:unmasked_six}

The causal objective excludes products between a query and future keys, which cannot contribute to decoder attention. This structural match does not by itself establish a downstream advantage over the unmasked objective. We therefore compared the two pre-RoPE scores on six checkpoints spanning Llama~2, Llama~3, and Qwen2.5, including both MHA and GQA. We held the checkpoint revision, calibration samples, numerical precision, pruning procedure, and shared QK budget fixed, and measured WikiText-2 and C4 perplexity at 30\%, 50\%, and 80\% sparsity. \Cref{tab:unmasked_six} reports the comparison.

\begin{table}[htbp]
\centering\small
\renewcommand{\thetable}{B.1}
\renewcommand{\theHtable}{unmasked-six}
\setlength{\tabcolsep}{6pt}
\caption{Causal QK-Wanda-M versus unmasked QK-Wanda with shared QK budgets. Both variants use the same checkpoint and calibration protocol. Lower perplexity is better; bold marks the better value in each pair, including ties at the displayed precision.}
\label{tab:unmasked_six}
\addtocounter{table}{-1}
\begin{qktabular}{6}{llrrrr}
\toprule
& & \multicolumn{2}{c}{WikiText-2 PPL} & \multicolumn{2}{c}{C4 PPL}\\
\cmidrule(lr){3-4}\cmidrule(lr){5-6}
Model & Sparsity & QK-Wanda-M & QK-Wanda & QK-Wanda-M & QK-Wanda\\
\midrule
Llama-2-7B & 30\% & \textbf{5.478} & \textbf{5.478} & \textbf{7.272} & \textbf{7.272}\\
 & 50\% & 5.530 & \textbf{5.529} & 7.358 & \textbf{7.357}\\
 & 80\% & 8.328 & \textbf{8.313} & 11.213 & \textbf{11.197}\\
\addlinespace
Llama-2-13B & 30\% & \textbf{4.888} & \textbf{4.888} & \textbf{6.733} & \textbf{6.733}\\
 & 50\% & \textbf{4.933} & \textbf{4.933} & \textbf{6.792} & \textbf{6.792}\\
 & 80\% & 6.272 & \textbf{6.246} & 8.583 & \textbf{8.546}\\
\addlinespace
Llama-3.1-8B & 30\% & \textbf{6.254} & \textbf{6.254} & \textbf{9.570} & \textbf{9.570}\\
 & 50\% & \textbf{6.405} & \textbf{6.405} & \textbf{9.816} & \textbf{9.816}\\
 & 80\% & 13.814 & \textbf{13.470} & 22.361 & \textbf{21.967}\\
\addlinespace
Llama-3.1-70B & 30\% & \textbf{2.821} & \textbf{2.821} & \textbf{7.124} & \textbf{7.124}\\
 & 50\% & 2.972 & \textbf{2.968} & 7.272 & \textbf{7.268}\\
 & 80\% & 18.984 & \textbf{18.890} & \textbf{32.372} & 32.589\\
\addlinespace
Qwen2.5-7B & 30\% & \textbf{6.854} & 6.855 & \textbf{11.886} & \textbf{11.886}\\
 & 50\% & \textbf{6.886} & \textbf{6.886} & \textbf{11.922} & \textbf{11.922}\\
 & 80\% & \textbf{7.320} & 7.323 & 12.619 & \textbf{12.617}\\
\addlinespace
Qwen2.5-72B & 30\% & \textbf{3.878} & \textbf{3.878} & \textbf{9.258} & \textbf{9.258}\\
 & 50\% & \textbf{3.895} & \textbf{3.895} & \textbf{9.270} & 9.271\\
 & 80\% & 4.244 & \textbf{4.234} & 9.523 & \textbf{9.520}\\
\bottomrule
\end{qktabular}
\end{table}

Neither variant uniformly dominated. At 30\% and 50\% sparsity, all absolute PPL differences were below 0.005. At 80\%, unmasked scoring improved both datasets for Llama-2-7B/13B, Llama-3.1-8B, and Qwen2.5-72B; the other two checkpoints had mixed outcomes. The largest improvement was on Llama-3.1-8B, from 13.814 to 13.470 on WikiText-2 and from 22.361 to 21.967 on C4. Across all 18 model--sparsity pairs, unmasked scoring yielded lower PPL in 11 comparisons on WikiText-2 and 12 on C4, counting unrounded values. These single-calibration-seed measurements do not establish general empirical superiority of the causal variant: its motivation is the restriction to attention-eligible products, while the observed PPL advantage depends on the checkpoint and sparsity. We make no inference about zero-shot accuracy from this PPL-only comparison.

% END UNMASKED SIX

\section{Proofs for the Unmasked and RoPE-Aware Variants}
\label{app:variants}

We give the general unmasked formulas for multiple calibration sequences and shared key heads, then derive the causal RoPE-aware extension. Both use the same single-weight deletion principle as the main text.

\subsection{\algname{QK-Wanda}: multiple sequences and shared keys}
\label{app:unmasked}

\Cref{sec:unmasked_core} derives the unmasked scores for one sequence and one head. Here we extend \cref{eq:gqa_single_scores} to independent calibration sequences using the head notation of \cref{sec:gqa_method}.

\begin{corollary}[Exact unmasked scalar scores]
\label{prop:unmasked}
For fixed inputs, affine QK projections with retained biases, and the head groups in \cref{sec:gqa_method}, every token pair contributes to the reconstruction loss. The loss \(\mathcal L\) sums unmasked head losses and averages over sequences. For \(h\in\mathcal H_g\), its exact single-weight costs, written without a hat, are
\begin{align}
 S^{Q^{(h)}}_{ij}&=\frac{(W^{Q^{(h)}}_{ij})^2}{N}
 \sum_{s=1}^{N}\norm{X^{(s)}_{j:}}_2^2\norm{K^{(g,s)}_{i:}}_2^2,
 \label{eq:unmasked_q}\\
 S^{K^{(g)}}_{ij}&=\frac{(W^{K^{(g)}}_{ij})^2}{N}
 \sum_{s=1}^{N}\norm{X^{(s)}_{j:}}_2^2
 \sum_{h\in\mathcal H_g}\norm{Q^{(h,s)}_{i:}}_2^2.
 \label{eq:unmasked_k}
\end{align}
\end{corollary}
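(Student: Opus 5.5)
The plan is to obtain the corollary as a direct specialization of \cref{prop:causal_scores}. The proof of that proposition never uses the causal structure until its last line: \cref{eq:general_mask_query,eq:general_mask_key} hold for an arbitrary fixed binary mask \(C^{(s)}\). The unmasked loss \(\mathcal L\) is exactly \(\mathcal L_C\) with \(C^{(s)}\) replaced by the all-ones matrix \(e^{(s)}e^{(s)\top}\). That loss is defined by summing the head losses of \cref{eq:main_unmasked_loss} over \(h\in\mathcal H_g\) and all groups, then averaging over sequences with weight \(1/N\). So I will first state this identification explicitly, noting that the group-then-head sum equals the plain sum over \(h=1,\ldots,H_Q\) because the \(\mathcal H_g\) partition the query heads.

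For the query score, I substitute \(C^{(s)}_{tu}=1\) into \cref{eq:general_mask_query}. The remaining double sum \(\sum_{t,u}(X^{(s)}_{jt})^2(K^{(g,s)}_{iu})^2\) factorizes as \(\norm{X^{(s)}_{j:}}_2^2\norm{K^{(g,s)}_{i:}}_2^2\), which gives \cref{eq:unmasked_q}. This is also the outer-product identity \(\norm{xy^\top}_F^2=\norm{x}_2^2\norm{y}_2^2\) applied to \cref{eq:query_product_delta}. For the key score, I substitute into \cref{eq:general_mask_key} in the same way. The \(u\)-sum gives \(\norm{X^{(s)}_{j:}}_2^2\), and for each \(h\in\mathcal H_g\) the \(t\)-sum gives \(\norm{Q^{(h,s)}_{i:}}_2^2\). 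The absence of cross-products between query heads is inherited from the proposition, because the loss sums separate head matrices.

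The only point that needs care is bookkeeping rather than analysis. I need to confirm three things. First, a single deletion leaves every other head's product unchanged: a query deletion touches only head \(h\), and a key deletion touches exactly the heads in \(\mathcal H_g\). Second, the biases cancel from the deleted side's perturbation but remain inside the opposite activations \(K^{(g,s)}\) and \(Q^{(h,s)}\). Third, the \(1/N\) average is applied to per-sequence products, so the energies are not averaged separately. All three are already established in the proof of \cref{prop:causal_scores}. Finally, I would remark that setting \(N=1\) and \(\mathcal H_g=\{g\}\) recovers \cref{eq:gqa_single_scores,eq:simple_wanda_connection}. As before, the identities are exact finite-deletion statements with no Taylor approximation.
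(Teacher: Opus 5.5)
Your proposal is correct and follows the paper's proof essentially line for line. The paper also sets $C^{(s)}_{tu}=1$ in \cref{eq:general_mask_query,eq:general_mask_key}, observes that the input-position and opposite-projection token sums separate within each sequence, and notes the equivalent outer-product identity $\norm{xy^\top}_F^2=\norm{x}_2^2\norm{y}_2^2$; your bookkeeping points (which heads a deletion touches, bias cancellation on the deleted side only, and averaging per-sequence products) are already secured by the proof of \cref{prop:causal_scores}.
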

\begin{proof}
Set \(C_{tu}^{(s)}=1\) in \cref{eq:general_mask_query,eq:general_mask_key}. The sum over input positions separates from the sum over opposite-projection positions within each sequence. Equivalently, the query perturbation in \cref{eq:query_product_delta} is an outer product. For any vectors \(x,y\),
\begin{equation}
 \norm{xy^\top}_F^2=\sum_{t,u}x_t^2y_u^2
 =\left(\sum_tx_t^2\right)\left(\sum_uy_u^2\right).
\end{equation}
Applying this identity to the query perturbation gives \cref{eq:unmasked_q}. For a key deletion, apply it to \(Q^{(h,s)\top}_{i:}X^{(s)}_{j:}\) in each sequence and affected head and sum the squared losses over \(\mathcal H_g\), giving \cref{eq:unmasked_k}.
\end{proof}

For one sequence (\(N=1\)) and one query head per key head, suppress sequence/head labels. The scores reduce to
\begin{equation}
 S^Q_{ij}=(W^{Q}_{ij})^2\norm{X_{j:}}_2^2\norm{K_{i:}}_2^2,
 \qquad
 S^K_{ij}=(W^{K}_{ij})^2\norm{X_{j:}}_2^2\norm{Q_{i:}}_2^2.
\end{equation}
Appendix~\ref{app:row_equivalence} explains when these row-wise rankings coincide with Wanda and why multiple sequences or causal masking can break that equivalence.

For \algname{QK-Wanda}, summing all pairs in \cref{eq:causal_curvature} yields
\begin{equation}
 G_U^{Q^{(h)}}=\frac1N\sum_{s=1}^{N}
 (X^{(s)}X^{(s)\top})\otimes
 (K^{(g,s)}K^{(g,s)\top}).
 \label{eq:u_curvature}
\end{equation}
The factorization is sequence-local; a sum of products need not equal a product of sums.

\subsection{\algname{QK-Wanda-MR}: causal reconstruction including rotary embeddings}
\label{app:rope}

The column vectors \(q^{(h,s)}_{t}\) and \(k^{(g,s)}_{u}\), defined in \cref{app:causal_quadratic}, contain all \(m\) coordinates of one query/key token. Let \(R_t^{(s)}\in\R^{m\times m}\) be the model's fixed rotary position map for token \(t\) in sequence \(s\). A tilde marks an activation after this map:
\(\widetilde q^{(h,s)}_{t}=R_t^{(s)}q^{(h,s)}_{t}\) and
\(\widetilde k^{(g,s)}_{u}=R_u^{(s)}k^{(g,s)}_{u}\).
The same maps are used for the reference and pruned weights. Let \(Z^{(h,s)}\) be the \(T_s\times T_s\) matrix of unscaled dot products \emph{after} RoPE; this distinguishes it from the pre-RoPE matrix \(B^{(h,s)}\). For \(h\in\mathcal H_g\), the entries are \(Z^{(h,s)}_{tu}=\widetilde q^{(h,s)\top}_{t}\widetilde k^{(g,s)}_{u}\). Its pruned counterpart \(\widehat Z^{(h,s)}\) uses the same inputs and rotary maps after weight deletion. Summing over all query heads gives
\begin{equation}
 \mathcal L_R=\frac1N\sum_{s=1}^{N}\sum_{h=1}^{H_Q}
 \norm{C^{(s)}\odot(\widehat Z^{(h,s)}-Z^{(h,s)})}_F^2.
 \label{eq:rope_objective}
\end{equation}
The loss \(\mathcal L_R\) uses the same unscaled convention as \(\mathcal L_C\). Token indices range from \(1\) to \(T_s\). The mask \(C^{(s)}\) retains causal validity; \algname{QK-Wanda-MR} adds the rotary transform rather than replacing the causal mask.

\begin{proposition}[Exact RoPE-aware single-weight deletion]
\label{prop:rope_scores}
Under the linear QK assumptions of \cref{prop:causal_scores}, and fixed rotary maps as above, add the suffix \(R\) to the causal scores \(\hat S\) to indicate that RoPE is included. For a query head \(h\in\mathcal H_g\), with \(e_i\) selecting raw coordinate \(i\), the exact query deletion cost and the grouped key deletion cost are
\begin{align}
 \hat S^{Q^{(h)}}_{ij,R}
 &=\frac{(W^{Q^{(h)}}_{ij})^2}{N}\sum_{s=1}^{N}\sum_{t,u} C^{(s)}_{tu}
 (X^{(s)}_{jt})^2
 \left[(R_t^{(s)}e_i)^\top\widetilde k^{(g,s)}_{u}\right]^2,
 \label{eq:rope_q}\\
 \hat S^{K^{(g)}}_{ij,R}
 &=\frac{(W^{K^{(g)}}_{ij})^2}{N}\sum_{s=1}^{N}\sum_u(X^{(s)}_{ju})^2
 \sum_{h\in\mathcal H_g}\sum_t C^{(s)}_{tu}
 \left[\widetilde q^{(h,s)\top}_{t}R_u^{(s)}e_i\right]^2.
 \label{eq:rope_k}
\end{align}
\end{proposition}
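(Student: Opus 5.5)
I would mirror the proof of \cref{prop:causal_scores} almost line by line. The only change is that each product entry now involves the fixed rotary maps. Throughout, keep the general binary mask \(C^{(s)}\), so that the statement specializes to the causal mask at the end.

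\textbf{Query deletion.} First consider deleting \(w=W^{Q^{(h)}}_{ij}\). As in \cref{eq:query_weight_delta}, the pre-RoPE change is \(\Delta Q^{(h,s)}=-w e_i X^{(s)}_{j:}\); the bias cancels because it appears in both the reference and pruned affine maps. In column form, \(\Delta q^{(h,s)}_t=-wX^{(s)}_{jt}e_i\). The rotary map \(R^{(s)}_t\) is fixed and linear, so the rotated query changes by \(-wX^{(s)}_{jt}R^{(s)}_te_i\). The rotated keys \(\widetilde k^{(g,s)}_u\) do not change, since the keys, key biases and \(R^{(s)}_u\) are all untouched. Hence the entry change of \(Z^{(h,s)}\) is exactly
\[
\Delta Z^{(h,s)}_{tu}=-wX^{(s)}_{jt}\,(R^{(s)}_te_i)^\top\widetilde k^{(g,s)}_u .
\]
This is a finite identity, because the dot product is linear in the query when the key is held fixed. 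Only head \(h\) changes; every other query head, including the others in \(\mathcal H_g\), has \(\Delta Z=0\). Squaring, using \((C^{(s)}_{tu})^2=C^{(s)}_{tu}\), and averaging over \(s\) gives \cref{eq:rope_q}.

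\textbf{Key deletion.} Next consider deleting \(w=W^{K^{(g)}}_{ij}\). Symmetrically, \(\Delta\widetilde k^{(g,s)}_u=-wX^{(s)}_{ju}R^{(s)}_ue_i\). For every \(h\in\mathcal H_g\), this yields \(\Delta Z^{(h,s)}_{tu}=-wX^{(s)}_{ju}\,\widetilde q^{(h,s)\top}_tR^{(s)}_ue_i\). Heads outside \(\mathcal H_g\) are unaffected. The loss \(\mathcal L_R\) sums squared Frobenius norms of separate head matrices, so no cross terms between query heads arise. Expanding the masked sum and pulling the factor \((X^{(s)}_{ju})^2\) out of the \(t\)-sum gives \cref{eq:rope_k}.

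\textbf{Main obstacle.} The step that needs care is the claim that the entrywise squares do not factor further. In the pre-RoPE case, \((e_i^\top k_u)^2=(K_{iu})^2\), which produced the clean prefix energies. Here \(R^{(s)}_te_i\) mixes coordinate \(i\) with its RoPE partner, so the bracket genuinely depends on both \(t\) and \(u\). I would therefore state the result in this unfactored form and not attempt a prefix-sum simplification.

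\textbf{Consistency check.} As a sanity check, setting \(R^{(s)}_t=I\) should recover \cref{eq:causal_q,eq:causal_k}, and I would verify that it does.
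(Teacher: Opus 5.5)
Your proposal is correct and follows the paper's proof essentially step for step. Both arguments rest on the finite changes \(\Delta\widetilde q^{(h,s)}_t=-wX^{(s)}_{jt}R^{(s)}_te_i\) and \(\Delta\widetilde k^{(g,s)}_u=-wX^{(s)}_{ju}R^{(s)}_ue_i\), the resulting entrywise changes of \(Z^{(h,s)}\), the identity \(C_{tu}^2=C_{tu}\), and the absence of cross-head terms in \(\mathcal L_R\). Your ``main obstacle'' is a remark rather than a proof step, and it overstates the difficulty: the paper later accumulates the bracketed sums over prefixes and suffixes using the \(2\times2\) rotated second-moment matrices \(V_{K,a}^{(g,s)}(t)\) and \(V_{Q,a}^{(g,s)}(u)\), so only scalar-energy factorization is lost.
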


\begin{proof}
Deleting a raw query weight \(w=W^{Q^{(h)}}_{ij}\) gives
\(\Delta q^{(h,s)}_{t}=-wX^{(s)}_{jt}e_i\). Since \(R_t^{(s)}\) is fixed and linear,
\begin{equation}
 \Delta\widetilde q^{(h,s)}_{t}
 =R_t^{(s)}\Delta q^{(h,s)}_{t}
 =-wX^{(s)}_{jt}R_t^{(s)}e_i.
\end{equation}
Keys remain fixed. Subtracting the reference dot product therefore yields
\begin{equation}
 \Delta Z^{(h,s)}_{tu}
 =-wX^{(s)}_{jt}
       (R_t^{(s)}e_i)^\top\widetilde k^{(g,s)}_{u}.
\end{equation}
Expand \(\mathcal L_R\) as \(\frac1N\sum_{s=1}^{N}\sum_{t,u}(C_{tu}^{(s)}\Delta Z^{(h,s)}_{tu})^2\). Use \(C_{tu}^2=C_{tu}\), and substitute the preceding identity to obtain \cref{eq:rope_q} exactly.

For a raw key deletion \(w=W^{K^{(g)}}_{ij}\),
\begin{equation}
 \Delta\widetilde k^{(g,s)}_{u}=-wX^{(s)}_{ju}R_u^{(s)}e_i,
 \qquad
 \Delta Z^{(h,s)}_{tu}
 =-wX^{(s)}_{ju}
       \widetilde q^{(h,s)\top}_{t}R_u^{(s)}e_i
\end{equation}
for \(h\in\mathcal H_g\), and zero for other heads. Squaring each valid entry and summing over those heads gives \cref{eq:rope_k}. The proof requires no small-perturbation or softmax linearization. Fixed biases remain in the opposite-side activations.
\end{proof}

For the evaluated Llama architecture, the actual pre-softmax logits on valid positions are \(Z/\sqrt{m}\). Their squared reconstruction loss is therefore \(\mathcal L_R/m\); this common factor leaves the single-deletion ranking unchanged. It does not establish a full-mask optimum. It also makes no claim for additional weight-dependent nonlinear QK transformations.

\subsection{Pairwise second-moment accumulation for \algname{QK-Wanda-MR}}

RoPE acts within disjoint pairs of coordinates. For an even head width \(m\), let \(a=1,\ldots,m/2\) index those pairs. In the evaluated Hugging Face Llama convention, pair \(a\) contains coordinates \(\mathcal P_a=(a,a+m/2)\) in that order. Indexing a vector by \([\mathcal P_a]\) selects those two entries. The \(2\times2\) second-moment matrices of the rotated key/query subvectors are
\begin{align}
 V_{K,a}^{(g,s)}(t)&=\sum_{u\le t}
       \widetilde k^{(g,s)}_{u}[\mathcal P_a]
       \bigl(\widetilde k^{(g,s)}_{u}[\mathcal P_a]\bigr)^\top,\\
 V_{Q,a}^{(g,s)}(u)&=\sum_{h\in\mathcal H_g}\sum_{t\ge u}
       \widetilde q^{(h,s)}_{t}[\mathcal P_a]
       \bigl(\widetilde q^{(h,s)}_{t}[\mathcal P_a]\bigr)^\top.
\end{align}
These are sums of outer products, without subtracting a mean. Their off-diagonal entries retain products between the two coordinates. Let \(R_{t,a}^{(s)}\) be the \(2\times2\) part of the rotary map acting on pair \(a\). If raw coordinate \(i\) is element \(b\in\{1,2\}\) of that pair, let \(e_b\in\R^2\) select it. Its rotated direction is \(R_{t,a}^{(s)}e_b\), so the query interaction is
\begin{equation}
 \sum_{u\le t}\left[(R_{t,a}^{(s)}e_b)^\top
       \widetilde k^{(g,s)}_{u}[\mathcal P_a]\right]^2
 =[R_{t,a}^{(s)\top}V_{K,a}^{(g,s)}(t)R_{t,a}^{(s)}]_{bb}.
\end{equation}
For keys, the same argument gives
\begin{equation}
 \sum_{h\in\mathcal H_g}\sum_{t\ge u}
 \left[\bigl(\widetilde q^{(h,s)}_{t}[\mathcal P_a]\bigr)^\top R_{u,a}^{(s)}e_b\right]^2
 =[R_{u,a}^{(s)\top}V_{Q,a}^{(g,s)}(u)R_{u,a}^{(s)}]_{bb}.
\end{equation}
Multiplying by the input square at \(t\) or \(u\), then averaging over the \(N\) sequences and multiplying by the deleted weight's \(w^2\), recovers \cref{eq:rope_q,eq:rope_k}.

To make the two-coordinate calculation explicit, consider one pair at one position. Write its rotation angle as \(\theta\), with \(c_\theta=\cos\theta\) and \(s_\theta=\sin\theta\). Its rotation matrix is
\begin{equation}
 R=\begin{pmatrix}c_\theta&-s_\theta\\s_\theta&c_\theta\end{pmatrix}.
\end{equation}
Let \(V\) be either symmetric \(2\times2\) second-moment matrix above, with entries \(V_{11},V_{12}=V_{21},V_{22}\). Its two rotated diagonal entries are
\begin{align}
 [R^\top VR]_{11}&=c_\theta^2V_{11}+2c_\theta s_\theta V_{12}+s_\theta^2V_{22},\\
 [R^\top VR]_{22}&=s_\theta^2V_{11}-2c_\theta s_\theta V_{12}+c_\theta^2V_{22}.
\end{align}
The \(V_{12}\) terms show why cross-coordinate products must be retained. Setting every rotary map to the identity recovers \algname{QK-Wanda-M}.

\FloatBarrier
\section{Causal and RoPE-Aware Ablation at 8192 Calibration Tokens}
\label{app:variant_ablation}

We compared causal pre-RoPE QK-Wanda-M with causal-plus-RoPE QK-Wanda-MR on Llama-3.2-1B/3B and Llama-3.1-8B at 50\% and 80\% QK sparsity. Both variants used separate whole-matrix budgets, one-shot masks, and identical calibration inputs for each seed. This protocol used 256 C4 sequences of length 32 and calibration seeds 0, 1, and 2; PPL evaluation used length 2048. The calibration budget and context differ from the main study, so these comparisons are not included in its win/loss counts.

\begin{table}[H]
\centering\small
\setlength{\tabcolsep}{4pt}
\caption{Causal pre-RoPE (M) and causal-plus-RoPE (MR) with separate matrix budgets and \(256\times32\) calibration tokens. PPL is the mean and sample standard deviation over three calibration seeds. Post-RoPE QK error is the pooled relative squared error (\%), averaged over seeds. Bold marks lower displayed means within each checkpoint and sparsity, including ties.}
\label{tab:variant_ablation}
\begin{tabular}{lccccc}
\toprule
Model & Sparsity & Score & WikiText-2 PPL & C4 PPL & \shortstack{Post-RoPE\\QK error (\%)}\\
\midrule
Llama-3.2-1B & 50\% & M & \(\mathbf{11.1933\,\pm\,0.0196}\) & \(\mathbf{17.8518\,\pm\,0.0687}\) & \(0.20\)\\
 &  & MR & \(12.2233\,\pm\,0.0195\) & \(21.0389\,\pm\,0.0728\) & \(\mathbf{0.18}\)\\
\addlinespace[2pt]
Llama-3.2-1B & 80\% & M & \(\mathbf{192.6108\,\pm\,0.4938}\) & \(\mathbf{261.1174\,\pm\,2.7507}\) & \(\mathbf{2.88}\)\\
 &  & MR & \(232.3745\,\pm\,0.8535\) & \(269.6802\,\pm\,1.5493\) & \(3.04\)\\
\addlinespace[2pt]
Llama-3.2-3B & 50\% & M & \(\mathbf{8.0829\,\pm\,0.0012}\) & \(\mathbf{11.8228\,\pm\,0.0010}\) & \(0.12\)\\
 &  & MR & \(8.1256\,\pm\,0.0014\) & \(11.8965\,\pm\,0.0009\) & \(\mathbf{0.11}\)\\
\addlinespace[2pt]
Llama-3.2-3B & 80\% & M & \(\mathbf{40.9509\,\pm\,0.4876}\) & \(\mathbf{51.7862\,\pm\,0.6025}\) & \(\mathbf{1.23}\)\\
 &  & MR & \(44.5742\,\pm\,0.4372\) & \(55.7155\,\pm\,0.1331\) & \(1.25\)\\
\addlinespace[2pt]
Llama-3.1-8B & 50\% & M & \(\mathbf{6.4453\,\pm\,0.0002}\) & \(\mathbf{9.8939\,\pm\,0.0010}\) & \(0.12\)\\
 &  & MR & \(6.4663\,\pm\,0.0008\) & \(9.9308\,\pm\,0.0016\) & \(\mathbf{0.11}\)\\
\addlinespace[2pt]
Llama-3.1-8B & 80\% & M & \(\mathbf{14.0276\,\pm\,0.0957}\) & \(\mathbf{23.4658\,\pm\,0.0695}\) & \(1.18\)\\
 &  & MR & \(15.3611\,\pm\,0.0492\) & \(25.8150\,\pm\,0.0377\) & \(\mathbf{1.17}\)\\
\addlinespace[2pt]
\bottomrule
\end{tabular}
\end{table}

The causal pre-RoPE variant had lower mean PPL on both datasets in all six model--sparsity pairs. The RoPE-aware variant had lower post-RoPE QK error in four of six pairs; its advantage was consistent at 50\% but mixed at 80\%. Thus, including RoPE in the reconstruction target did not improve PPL in this ablation. These results do not establish the same ordering for shared budgets or the main calibration protocol.

The held-out diagnostics used 16 C4 sequences of length 512, seed 20260810, starting from validation document row 1100. Post-RoPE QK and block-output errors use the pooled squared-error ratio in \cref{eq:normalized_error}. Causal pre-RoPE scoring also produced lower mean same-input block-output error in all six pairs and lower rollout block-output error in four of six. As with the main QK-error diagnostic, these stages need not rank masks in the same order as PPL. The unmasked variant is derived in Appendix~\ref{app:unmasked}; this ablation compares only the two causal variants.

\paragraph{Variation across calibration seeds.}
For each of the two QK scoring variants, three checkpoints, and two sparsities, we computed the sample standard deviation of PPL over calibration seeds 0, 1, and 2. Across the six model--variant configurations at 50\%, the standard deviations ranged from 0.00015 to 0.01964 on WikiText-2 and from 0.00089 to 0.07280 on C4. At 80\%, the respective ranges were 0.04917--0.85345 and 0.03766--2.75072. These are within-configuration seed variations, not variation across models. The different calibration and allocation protocol prevents treating these ranges as uncertainty bounds for the main single-seed comparisons.

\FloatBarrier
\section{Experimental Details}
\label{app:protocol}

This appendix specifies the data, evaluation procedures, and implementation conventions used in the experiments. We distinguish pruning calibration from the reconstruction diagnostic and downstream evaluation so that their results can be interpreted under the appropriate input and aggregation rules.

\subsection{Calibration and perplexity data}

We sample calibration text from the English C4 training shard \nolinkurl{en/c4-train.00000-of-01024.json.gz}, using each checkpoint's native tokenizer. Each base sample is a contiguous 2048-token window from one document. The main experiments use seed 0 and 256 windows, giving 524,288 tokens. Splitting these windows into length-64 segments gives 8192 independent calibration sequences. Positions restart in each segment; there is no padding, packing, or attention across segments. Every method and sparsity uses the same sampled token manifest within a checkpoint. Token IDs are not shared between models with different tokenizers.

Appendix~\ref{app:calibration} varies sample count through nested subsets of the base windows and context length through different segmentations of the same 524,288-token pool. Every tested length \(T\) divides 2048, so all segmentations retain exactly the same tokens and use \(524{,}288/T\) independent sequences. Calibration batches contain \(2048/T\) segments, keeping 2048 tokens per forward pass; PPL evaluation stays at length 2048. Causal sums are computed within each sequence before batch aggregation, while QK-Wanda uses all pairs within each segment. The original Llama-3.1-8B \(T=2048\) comparison reuses the identical \(N=256\) sample-count configuration. Appendix~\ref{app:variant_ablation} uses a separate three-seed calibration protocol.

WikiText-2 evaluation used the \texttt{wikitext-2-raw-v1} test split, joining text entries with two newline characters before tokenization. C4 evaluation used the first 1100 documents of \nolinkurl{en/c4-validation.00000-of-00008.json.gz}, joined by spaces and truncated to at most \(256\times2048\) tokens. We evaluated complete, non-overlapping 2048-token windows and discarded the final incomplete window. These same evaluation sets informed the calibration-sensitivity comparisons on Llama-3.1-8B and Llama-2-7B; no document range was reserved exclusively for development. The resulting token sequences were identical across pruning configurations for each checkpoint and tokenizer. We used batch size one, disabled the KV cache, and computed full-vocabulary next-token cross-entropy. To reduce memory use, we evaluated the vocabulary projection in 128-token chunks, preserving FP32 logits and the same cross-entropy calculation.

\subsection{Zero-shot evaluation}
\label{app:zero_shot_protocol}

We evaluated the dense base checkpoints of TinyLlama-1.1B, Llama~2 7B/13B/70B, Llama-3.2-1B/3B, Llama-3.1-8B/70B, and Qwen2.5 0.5B/1.5B/3B/7B/14B/32B/72B, together with QK-only pruning at 30\%, 50\%, and 80\% sparsity. The comparisons include row-wise \algname{Wanda}, \algname{QK-Wanda} with separate and shared budgets, and the labeled scoring variants in the result tables. For TinyLlama and Llama~2 7B/13B, we also evaluated full-block pruning with Wanda and the QK-Wanda-M hybrid at 30\%, 50\%, and 80\% (\cref{tab:full_block_masked_history}). The six QK-Wanda hybrid configurations at 30\% and 50\% appear in \cref{tab:full_block}. We applied the exact saved masks from the corresponding PPL experiments to the same pinned checkpoints, without repeating pruning or updating retained weights. In the QK-only setting, all non-QK parameters remained unchanged. Full-block masks additionally pruned value, output, and MLP projections; embeddings, normalization parameters, biases, and the language-model head remained unchanged. We reused each checkpoint's dense reference across pruning scopes. Appendix~\ref{app:matrix_control} additionally evaluates nine matrix-budget Wanda configurations on the three largest checkpoints under this same task protocol.

We used LM Evaluation Harness v0.4.12 \citep{evalharness}, with no demonstrations or task subsampling, maximum context length 2048, batch size 4, and evaluation seed 1234. We used scaled dot-product attention throughout. The original 0.5B--32B comparisons used FP16 weights on A100 GPUs: Qwen2.5-32B used two GPUs and the smaller models used one. The additional QK-Wanda and scoring-variant evaluations ran on H200 GPUs with the same task protocol. The Llama 70B checkpoints used FP16 weights on H200 GPUs, and Qwen2.5-72B used BF16 weights on two H200 GPUs under its validated numerical protocol. We evaluated the base models directly, without chat templates or an additional beginning-of-sequence token. The suite comprised BoolQ \citep{clark2019boolq}, RTE \citep{wang2018glue}, HellaSwag \citep{zellers2019hellaswag}, WinoGrande \citep{sakaguchi2020winogrande}, ARC-Easy and ARC-Challenge \citep{clark2018arc}, and OpenBookQA \citep{mihaylov2018openbookqa}. We used the validation splits for BoolQ, RTE, HellaSwag, and WinoGrande (3270, 277, 10042, and 1267 examples), and the test splits for ARC-Easy, ARC-Challenge, and OpenBookQA (2376, 1172, and 500 examples). Every configuration used identical evaluation examples and prompts. We used accuracy for BoolQ, RTE, and WinoGrande and length-normalized accuracy for HellaSwag, both ARC tasks, and OpenBookQA. The reported zero-shot score was their unweighted arithmetic mean, expressed as a percentage, computed before rounding individual task scores.

\subsection{Reconstruction metrics}
\label{app:reconstruction_results}

We sampled 16 held-out C4 validation sequences of length 512 with seed 20260810, restricting sampling to document rows at or after 1100. These documents were disjoint from the C4 perplexity subset and the calibration training split. Each checkpoint used its native tokenizer, with identical held-out tokens across methods and sparsities. We evaluated the saved masks from the PPL experiments, using calibration seed 0. A dense forward pass supplied the same reference inputs to the dense and pruned QK projections in every block.

\begin{samepage}
\paragraph{Block-mean relative squared QK error.}
Let \(D\) be the number of transformer blocks, indexed by \(\ell\). We add the subscript \(\ell\) to the head/sequence notation from Appendix~\ref{app:proof_c}: the unscaled pre-RoPE product is
\(B^{(h,s)}_{\ell}=Q^{(h,s)\top}_{\ell}K^{(g,s)}_{\ell}\), where \(h\in\mathcal H_g\) pairs heads within block \(\ell\).
The hatted product \(\widehat B^{(h,s)}_{\ell}\) applies both saved QK masks on the same dense block input, retaining biases. The diagnostic includes all query--key token pairs, without a causal mask or centering. The table column \emph{Rel. QK error (\%)} reports
\begin{equation}
 E_{\mathrm{QK}}=\frac{100}{D}\sum_{\ell=1}^{D}
 \frac{\sum_{s,h}\norm{\widehat B^{(h,s)}_{\ell}-B^{(h,s)}_{\ell}}_F^2}
 {\sum_{s,h}\norm{B^{(h,s)}_{\ell}}_F^2}.
 \label{eq:relative_qk_error}
\end{equation}
\end{samepage}
Here \(\sum_{s,h}\) covers all 16 evaluation sequences and all \(H_Q\) query heads. Within each block, we summed squared errors and dense reference energies over these indices, including all query heads that share a key head under GQA, then divided. We averaged these relative errors equally across blocks, so blocks with larger QK magnitudes did not dominate the aggregate. All block denominators were positive. We took no square root: a value of 1 denotes mean relative \emph{squared} error of 1\%, rather than relative norm error of 1\%. The common attention scale cancels within each block's ratio.

This diagnostic evaluates complete masks, including interactions among deletions, rather than summing individual deletion scores. It isolates each block's QK reconstruction on dense inputs; it does not include error propagated from previously pruned blocks. We retained FP16 projection outputs (BF16 for Qwen2.5-72B), computed QK dot products in FP32 with TF32 disabled, and accumulated squared-error sums in FP64. RoPE remained enabled in the dense forward pass that supplied block inputs; the measured products were taken before RoPE.

\paragraph{Cross-checkpoint aggregation.}
For QK-Wanda, the abstract reports the mean of per-checkpoint relative reductions, not the relative reduction of mean errors. For checkpoint \(c\) at sparsity \(\rho\), let \(E^{\mathrm{QK}}_{c,\rho}\) and \(E^{\mathrm{Wanda}}_{c,\rho}\) be the block-mean errors in \cref{eq:relative_qk_error}. We computed
\begin{equation}
 \overline\Delta_{\rho}=\frac{100}{15}\sum_{c=1}^{15}
 \left(1-\frac{E^{\mathrm{QK}}_{c,\rho}}{E^{\mathrm{Wanda}}_{c,\rho}}\right),
 \label{eq:cross_checkpoint_reduction}
\end{equation}
which gives 60.35\% at 50\% and 44.70\% at 80\% for shared-budget QK-Wanda. All reference errors are positive at these sparsities. \Cref{sec:other_families} discusses the relation to downstream metrics, and Appendix~\ref{app:unmasked_aggregate_counts} gives the base-method descriptive counts; Appendix~\ref{app:aggregate_counts} preserves the separately labeled masked-variant counts.

\paragraph{Additional reconstruction stages.}
For the ablation in Appendix~\ref{app:variant_ablation}, let \(Y_\ell^{(s)}\) be the reference output of a reconstruction stage in block \(\ell\), and \(\widehat Y_\ell^{(s)}\) its pruned counterpart. Pooling over all blocks and evaluation sequences gives
\begin{equation}
 E_Y=\frac{\sum_{s,\ell}\norm{\widehat Y_\ell^{(s)}-Y_\ell^{(s)}}_F^2}
 {\sum_{s,\ell}\norm{Y_\ell^{(s)}}_F^2},
 \label{eq:normalized_error}
\end{equation}
For post-RoPE QK logits, \(Y_\ell^{(s)}\) vertically stacks the matrices \(C^{(s)}\odot Z^{(h,s)}_{\ell}/\sqrt m\) over query heads, and \(\widehat Y_\ell^{(s)}\) uses their pruned counterparts. Here \(Z^{(h,s)}_{\ell}\) adds the block index to the post-RoPE products in \cref{eq:rope_objective}. This definition includes exactly the causally allowed entries of every head. For full block outputs, the \emph{same-input} setting supplies the dense reference input to each pruned block; the \emph{rollout} setting instead supplies outputs from preceding pruned blocks. These stage metrics use \(100E_Y\), without a square root, and are distinct from the equal-block average in \cref{eq:relative_qk_error}.

\subsection{Parameter scope of QK-only pruning}
\label{app:parameter_scope}

We counted unique model parameters from the pinned checkpoint configurations, instantiating model shapes without loading weight values and counting tied embeddings once. QK counts include only projection weights; biases remain dense. Let \(d\) be the hidden width, \(f\) the MLP intermediate width, \(m\) the head width, and \(H_Q,H_K\) the query/key head counts. For one block, the QK weights and all seven projection weights number
\begin{equation*}
 n_{QK}=d\,m(H_Q+H_K),\qquad
 n_{\mathrm{linear}}=2d\,m(H_Q+H_K)+3df.
\end{equation*}
The model-level QK count is \(n_{QK}\) times the block count. The total-parameter denominator additionally includes embeddings, any untied language-model head, norms, and biases. At QK-only sparsity \(\rho\), the removed fraction of total parameters is \(\rho\,n_{QK,\mathrm{total}}/n_{\mathrm{total}}\), up to integer rounding. This measures zeroed weights, not storage or inference speedup.

\begin{table}[H]
\centering\small
\caption{Scope of the QK-only experiments. B and M denote billions and millions of parameters. The last two columns use different denominators: the complete model and its seven block projection weight matrices.}
\label{tab:parameter_scope}
\begin{tabular}{@{}lrrrr@{}}
\toprule
Checkpoint & Total (B) & QK (M) & QK / total (\%) & QK / block linear (\%)\\
\midrule
TinyLlama-1.1B & 1.100 & 103.8 & 9.44 & 10.71\\
Llama-2-7B & 6.738 & 1073.7 & 15.93 & 16.58\\
Llama-2-13B & 13.016 & 2097.2 & 16.11 & 16.53\\
Llama-2-70B & 68.977 & 6039.8 & 8.76 & 8.82\\
Llama-3.2-1B & 1.236 & 83.9 & 6.79 & 8.62\\
Llama-3.2-3B & 3.213 & 352.3 & 10.97 & 12.50\\
Llama-3.1-8B & 8.030 & 671.1 & 8.36 & 9.62\\
Llama-3.1-70B & 70.554 & 6039.8 & 8.56 & 8.82\\
Qwen2.5-0.5B & 0.494 & 22.0 & 4.46 & 6.15\\
Qwen2.5-1.5B & 1.544 & 77.1 & 4.99 & 5.88\\
Qwen2.5-3B & 3.086 & 169.9 & 5.50 & 6.12\\
Qwen2.5-7B & 7.616 & 411.0 & 5.40 & 6.30\\
Qwen2.5-14B & 14.770 & 1509.9 & 10.22 & 11.43\\
Qwen2.5-32B & 32.764 & 2013.3 & 6.14 & 6.45\\
Qwen2.5-72B & 72.706 & 6039.8 & 8.31 & 8.60\\
 
\bottomrule
\end{tabular}
\end{table}

\subsection{Implementation and reproducibility}
\label{app:implementation}

The original A100 experiments used Python 3.10.14, PyTorch 2.4.1 with CUDA 12.1, Transformers 4.45.2, Accelerate 0.34.2, datasets 2.21.0, and LM Evaluation Harness 0.4.12. Subsequent H200 jobs used Python 3.12.3. Smaller original checkpoints ran on one A100, Qwen2.5-32B on two, and retained Llama~70B configurations on four. The additional QK-Wanda and scoring-variant runs used H200 GPUs. All common QK reconstruction measurements used H200, with two GPUs for 32B/70B/72B. Each measurement retains its hardware and frozen-source provenance.

Model weights are FP16 except on Qwen2.5-72B, which uses BF16 for every method and its dense reference to avoid nonfinite FP16 activations. Pruning statistics are accumulated in FP32. \Cref{tab:checkpoint_revisions} lists the pinned model and tokenizer revisions.
\begin{table}[H]
\centering\small
\caption{Hugging Face repository identifiers and pinned revisions used for both model and tokenizer.}
\label{tab:checkpoint_revisions}
\begin{tabular}{@{}ll@{}}
\toprule
\multicolumn{2}{l}{\nolinkurl{meta-llama/Llama-3.1-8B}}\\
Revision & \nolinkurl{d04e592bb4f6aa9cfee91e2e20afa771667e1d4b}\\
\multicolumn{2}{l}{\nolinkurl{meta-llama/Llama-3.2-1B}}\\
Revision & \nolinkurl{4e20de362430cd3b72f300e6b0f18e50e7166e08}\\
\multicolumn{2}{l}{\nolinkurl{meta-llama/Llama-3.2-3B}}\\
Revision & \nolinkurl{13afe5124825b4f3751f836b40dafda64c1ed062}\\
\multicolumn{2}{l}{\nolinkurl{TinyLlama/TinyLlama-1.1B-intermediate-step-1431k-3T}}\\
Revision & \nolinkurl{59f6f375b26bde864a6ca194a9a3044570490064}\\
\multicolumn{2}{l}{\nolinkurl{meta-llama/Llama-2-7b-hf}}\\
Revision & \nolinkurl{01c7f73d771dfac7d292323805ebc428287df4f9}\\
\multicolumn{2}{l}{\nolinkurl{meta-llama/Llama-2-13b-hf}}\\
Revision & \nolinkurl{5c31dfb671ce7cfe2d7bb7c04375e44c55e815b1}\\
\multicolumn{2}{l}{\nolinkurl{meta-llama/Llama-3.1-70B}}\\
Revision & \nolinkurl{349b2ddb53ce8f2849a6c168a81980ab25258dac}\\
\multicolumn{2}{l}{\nolinkurl{meta-llama/Llama-2-70b-hf}}\\
Revision & \nolinkurl{3aba440b59558f995867ba6e1f58f21d0336b5bb}\\
\multicolumn{2}{l}{\nolinkurl{Qwen/Qwen2.5-0.5B}}\\
Revision & \nolinkurl{060db6499f32faf8b98477b0a26969ef7d8b9987}\\
\multicolumn{2}{l}{\nolinkurl{Qwen/Qwen2.5-1.5B}}\\
Revision & \nolinkurl{8faed761d45a263340a0528343f099c05c9a4323}\\
\multicolumn{2}{l}{\nolinkurl{Qwen/Qwen2.5-3B}}\\
Revision & \nolinkurl{3aab1f1954e9cc14eb9509a215f9e5ca08227a9b}\\
\multicolumn{2}{l}{\nolinkurl{Qwen/Qwen2.5-7B}}\\
Revision & \nolinkurl{d149729398750b98c0af14eb82c78cfe92750796}\\
\multicolumn{2}{l}{\nolinkurl{Qwen/Qwen2.5-14B}}\\
Revision & \nolinkurl{97e1e76335b7017d8f67c08a19d103c0504298c9}\\
\multicolumn{2}{l}{\nolinkurl{Qwen/Qwen2.5-32B}}\\
Revision & \nolinkurl{1818d35814b8319459f4bd55ed1ac8709630f003}\\
\multicolumn{2}{l}{\nolinkurl{Qwen/Qwen2.5-72B}}\\
Revision & \nolinkurl{efba10c8e54e91e0d9570ab5f7b51a958474d4cb}\\
\bottomrule
\end{tabular}
\end{table}

\paragraph{Integer counts and ties.}
Our row-wise Wanda implementation removed \(\lceil\rho d\rceil\) weights per output row of width \(d\); the matched row-wise QK-Wanda-M control used the same convention. Separate matrix budgets removed \(\lceil\rho n_Q\rceil\) and \(\lceil\rho n_K\rceil\) weights, and shared budgets removed \(\lceil\rho(n_Q+n_K)\rceil\). This upward rounding is our implementation convention, not a property attributed to the original Wanda algorithm. The retained Llama-3.1-8B QK sweep used floor at the matrix/pair scope; its shared counts coincide with ceiling at the tested sparsities, while separate counts differ by at most one per projection. The 50\% calibration studies also used floor, which equals ceiling for their even row widths and matrix sizes. The 70B/72B matrix-Wanda controls matched separate QK-Wanda-M counts exactly.

Scores were sorted in ascending order with a stable sort. Ties within a row followed increasing column index; matrix ties followed row-major order; shared ties used all Q entries followed by all K entries, each in row-major order. No independent per-projection or per-head score normalization was applied. Frozen source-manifest hashes, rather than the mutable working tree, identify executed code in the archived study plans (\cref{tab:implementation_revisions}). Token manifests, exact deletion counts, masks, and metric files were retained.
\begin{table}[H]
\centering\small
\caption{Frozen source-manifest SHA256 identifiers recorded in study plans; each hash continues on the following line. Retained runs also record mask and result hashes.}
\label{tab:implementation_revisions}
\begin{tabular}{@{}ll@{}}
\toprule
Study bundle & Source-manifest SHA256\\
\midrule
\nolinkurl{qk_budget} & \nolinkurl{91360064ecb2be26de3ffe3450262f6f}\\
 & \nolinkurl{3f5961c4569c460920b14e3c1b6b788f}\\
\nolinkurl{qk_budget_1b} & \nolinkurl{10987a5ad0b19d3fbdea2dc1f83c8d6a}\\
 & \nolinkurl{339c0460ab8e001d4e152e20d62fdc5b}\\
\nolinkurl{qk_budget_3b} & \nolinkurl{291a3cdcb13802ca1a7f9a8e5770eaf8}\\
 & \nolinkurl{90d80f8de1d1b03a54e352da645d43a0}\\
\nolinkurl{qk_budget_llama2} & \nolinkurl{a0119c61deca3691885b1b8e1ca90556}\\
 & \nolinkurl{6e762376e506277279de1a280a078a82}\\
\nolinkurl{qk_budget_qwen25} & \nolinkurl{400df4ddec5a87dbe4bd99c1c6511e03}\\
 & \nolinkurl{957f361f4e07f3f6ececf1e5a3328d13}\\
\nolinkurl{qk_budget_llama70b} & \nolinkurl{5289268b6f2d339bd69bcf257e90a014}\\
 & \nolinkurl{54a4c5ba65966e3f019d5cb3a06e2044}\\
\nolinkurl{qk_budget_qwen72b} & \nolinkurl{10079ee05d763a95fd651cda09416c57}\\
 & \nolinkurl{08d945df5317c1d37f2b00abe6d2bcee}\\
\bottomrule
\end{tabular}
\end{table}

\paragraph{Artifact availability.}
The pruning and evaluation implementation, tests, pinned checkpoint configurations, full-precision results, and scripts for reproducing experiments and regenerating tables and figures are available at \textcolor{codeurl}{\url{https://github.com/vectozavr/qk-wanda}}. The repository README gives environment setup and reproduction commands. Model weights, dataset text, and the large collection of original binary masks are not redistributed; masks can be regenerated, and their recorded hashes and allocation statistics are included. File-integrity manifests document the archived results and their provenance.

\FloatBarrier
\subsection{Shared QK allocation across transformer blocks}
\label{app:shared_allocation}

We examined the shared-budget masks at 50\% combined QK sparsity for all 15 checkpoints. Each run used QK-Wanda with calibration seed 0 and 8192 C4 training sequences of length 64 tokenized by the checkpoint's native tokenizer. Each block had its own shared QK budget: the fractions removed from \(W^Q\) and \(W^K\) could differ, while exactly 50\% of their combined weights were removed. The profiles below use the same masks as the PPL and QK-error comparisons. Lines connect every measured block; markers are shown every fourth block, with different starting blocks for each model to reduce overlap.

\begin{figure}[htbp]
\centering
\includegraphics[width=\textwidth]{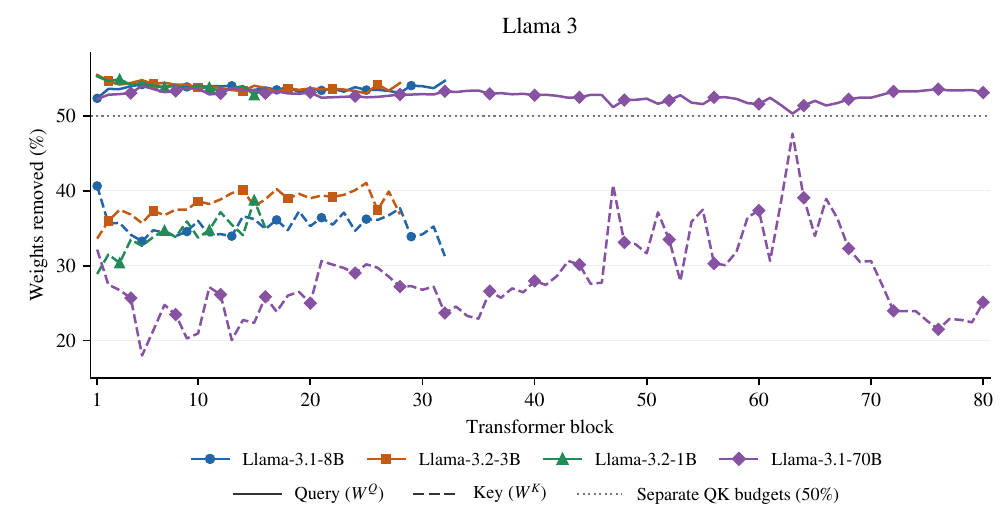}
\caption{QK-Wanda allocation at a shared 50\% budget on Llama~3. Solid lines show query-weight removal and dashed lines key-weight removal; colors and markers identify checkpoints. The dotted line marks 50\% removal in each projection.}
\label{fig:qk_shared_allocation}
\end{figure}

\paragraph{Llama 3: a consistent preference for retaining key weights.}
All four checkpoints removed a larger fraction of query than key weights in every block. Mean query/key removal rates were 53.66\%/35.35\% on Llama-3.1-8B, 53.91\%/38.28\% on Llama-3.2-3B, 54.00\%/34.01\% on Llama-3.2-1B, 52.70\%/28.44\% on Llama-3.1-70B. On 70B, key removal ranged from 18.01\% to 47.61\% across blocks. The allocation varies with depth rather than following a fixed schedule.

\paragraph{Accounting for unequal projection sizes.}
The Q:K parameter-count ratios \(n_Q/n_K\) are 4:1 on 8B and 1B, 3:1 on 3B, and 8:1 on 70B. Let \(n_Q,n_K\) denote the projection weight counts and \(\rho_Q,\rho_K\) their removal fractions. With target sparsity \(\rho\), the ceiling convention in Appendix~\ref{app:implementation} gives
\begin{equation}
 \frac{n_Q\rho_Q+n_K\rho_K}{n_Q+n_K}=\frac{\lceil\rho(n_Q+n_K)\rceil}{n_Q+n_K}.
 \label{eq:shared_budget_fraction}
\end{equation}
At the 50\% budgets used here, the removal count is an integer and this reduces to
\[
 \frac{n_Q\rho_Q+n_K\rho_K}{n_Q+n_K}=50\%.
\]
Consequently, increasing Q sparsity by one percentage point requires decreasing K sparsity by \(n_Q/n_K\) points. This constraint explains the larger amplitude of the K curves; it must be considered when comparing checkpoints. 

\FloatBarrier
\paragraph{Llama 2 and TinyLlama.}
\Cref{fig:qk_shared_allocation_llama2} extends the same analysis to Llama~2 7B/13B/70B and TinyLlama 1.1B. The combined QK removal fraction was exactly 50\% in every block, under the same calibration protocol described above.

\begin{figure}[htbp]
\centering
\includegraphics[width=\textwidth]{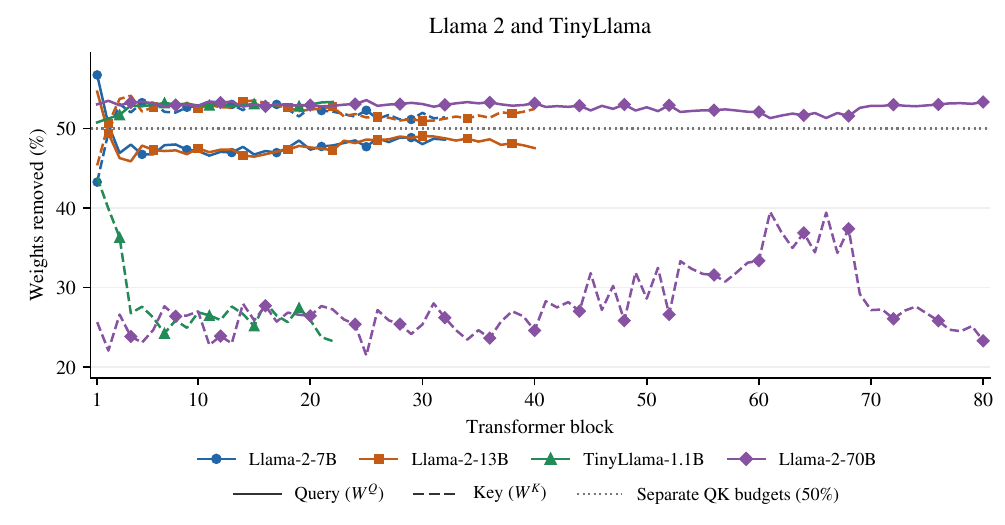}
\caption{QK-Wanda allocation at a shared 50\% budget on TinyLlama and Llama~2. Solid lines show query-weight removal and dashed lines key-weight removal; colors and markers identify checkpoints. The dotted line marks 50\% removal in each projection.}
\label{fig:qk_shared_allocation_llama2}
\end{figure}

\paragraph{Similar allocations on Llama 2 7B and 13B.}
Mean query/key removal rates were 48.09\%/51.91\% on 7B and 47.98\%/52.02\% on 13B. Both retained a larger fraction of query weights on average. Key sparsity exceeded query sparsity in 30 of 32 blocks on 7B and 39 of 40 blocks on 13B. These models have equally sized query and key projections, so their removal fractions average to 50\% within each block.

\paragraph{Stronger K retention on TinyLlama and Llama 2 70B.}
Both checkpoints removed a larger fraction of query than key weights in every block. Mean query/key removal rates were 52.75\%/27.96\% on TinyLlama and 52.76\%/27.91\% on Llama~2 70B. On 70B, key removal varied from 21.45\% to 39.52\% across 80 blocks. Both have an 8:1 query-to-key parameter-count ratio, so a modest increase in query sparsity above 50\% permits a much larger decrease in key sparsity.

\FloatBarrier
\paragraph{Qwen2.5.}
\label{app:shared_allocation_qwen25}
We examined the shared-budget masks from all seven Qwen2.5 checkpoints in \cref{sec:qwen25_comparison}, including 72B, at 50\% combined QK sparsity. The curves report the fraction removed within each projection at the actual transformer block index. Because Q and K have unequal sizes under GQA, their combined sparsity is weighted by their parameter counts; it remained exactly 50\% in every block.

\begin{figure}[htbp]
\centering
\includegraphics[width=\textwidth]{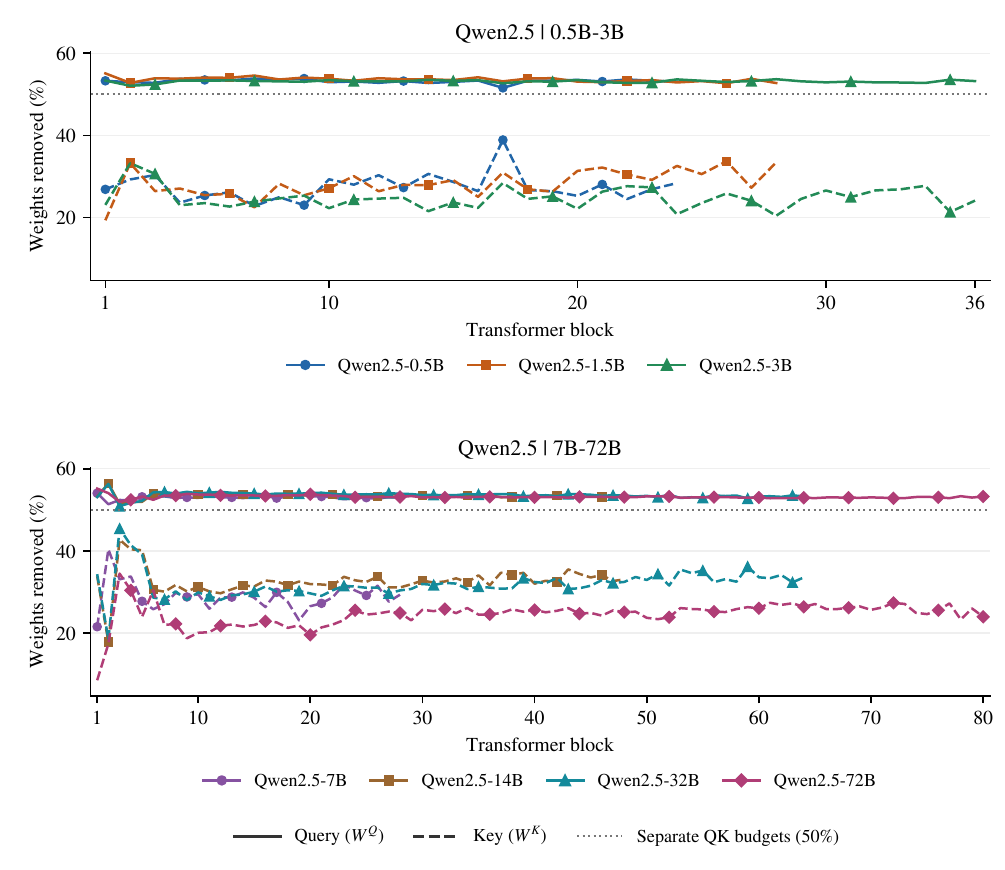}
\caption{QK-Wanda allocation with a shared 50\% budget on Qwen2.5, with 0.5B--3B models in the upper panel and 7B--72B models in the lower panel. Color and marker shape identify the checkpoint within each panel. Solid lines denote \(W^Q\), dashed lines denote \(W^K\), and the dotted line marks separate 50\% budgets. Each curve ends at its checkpoint's final block; 72B extends to block 80.}
\label{fig:qk_shared_allocation_qwen25}
\end{figure}
\paragraph{Observed allocation.} Mean query/key removal fractions were 53.22\%/27.44\% on Qwen2.5-0.5B, 53.62\%/28.30\% on Qwen2.5-1.5B, 53.15\%/24.81\% on Qwen2.5-3B, 53.02\%/28.88\% on Qwen2.5-7B, 53.47\%/32.65\% on Qwen2.5-14B, 53.62\%/31.88\% on Qwen2.5-32B, 53.18\%/24.53\% on Qwen2.5-72B. Every block in all seven checkpoints removed a larger fraction of query than key weights. On 72B, key removal ranged from 8.52\% to 34.49\% across 80 blocks.

\FloatBarrier
\subsection{Historical masked full-block comparison}
\label{app:full_block_masked_history}

These are the original causal QK-Wanda-M measurements, including the 80\% failure regime; they are retained separately from the QK-Wanda comparison in \cref{tab:full_block}.

\begin{table}[htbp]
\centering
\fontsize{8}{9}\selectfont
\setlength{\tabcolsep}{1pt}
\caption{Pruning all seven transformer-block projections (QK/V/O and MLP gate/up/down), with embeddings and the output head excluded from the sparsity denominator. Wanda uses row budgets throughout; Wanda + QK-Wanda-M uses shared QK-Wanda-M only for QK and row-wise Wanda elsewhere. Accuracy is the seven-task mean (\%); seed 0. Bold marks lower displayed PPL or higher accuracy at each sparsity, including ties.}
\label{tab:full_block_masked_history}
% models: tinyllama_1b llama2_7b llama2_13b
\begin{qktabular}{11}{lcccccccccc}
\toprule
 & & \multicolumn{3}{c}{\textbf{TinyLlama-1.1B}} & \multicolumn{3}{c}{\textbf{Llama-2-7B}} & \multicolumn{3}{c}{\textbf{Llama-2-13B}}\\
\cmidrule(lr){3-5}\cmidrule(lr){6-8}\cmidrule(lr){9-11}
Method & Sparsity & \shortstack{Wiki2\\PPL} & \shortstack{C4\\PPL} & \shortstack{Zero-shot\\acc. (\%)} & \shortstack{Wiki2\\PPL} & \shortstack{C4\\PPL} & \shortstack{Zero-shot\\acc. (\%)} & \shortstack{Wiki2\\PPL} & \shortstack{C4\\PPL} & \shortstack{Zero-shot\\acc. (\%)}\\
\midrule
Dense & 0\% & 7.77 & 9.94 & 49.93 & 5.47 & 7.26 & 64.38 & 4.88 & 6.73 & 67.05\\
\midrule
\algname{Wanda} & 30\% & 8.33 & 10.46 & \textbf{50.22} & \textbf{5.66} & 7.50 & 63.88 & 5.06 & 6.90 & 66.14\\
\algname{Wanda} + \algname{QK-Wanda-M} & 30\% & \textbf{8.31} & \textbf{10.44} & 49.94 & \textbf{5.66} & \textbf{7.49} & \textbf{64.03} & \textbf{5.05} & \textbf{6.89} & \textbf{66.39}\\
\addlinespace[2pt]
\algname{Wanda} & 50\% & 11.52 & 13.85 & 48.22 & 6.89 & \textbf{9.06} & 60.23 & 5.90 & 8.00 & 64.61\\
\algname{Wanda} + \algname{QK-Wanda-M} & 50\% & \textbf{11.22} & \textbf{13.62} & \textbf{48.42} & \textbf{6.85} & \textbf{9.06} & \textbf{61.09} & \textbf{5.86} & \textbf{7.98} & \textbf{64.76}\\
\addlinespace[2pt]
\algname{Wanda} & 80\% & 512.49 & 413.72 & \textbf{34.75} & \textbf{1782.14} & \textbf{1214.75} & 33.76 & \textbf{1643.05} & \textbf{1387.63} & 34.65\\
\algname{Wanda} + \algname{QK-Wanda-M} & 80\% & \textbf{410.73} & \textbf{374.82} & 34.71 & 1946.81 & 2011.70 & \textbf{34.63} & 2037.42 & 2360.28 & \textbf{35.24}\\
\bottomrule
\end{qktabular}
\end{table}

To test whether QK-Wanda-M remains useful as a component of broader pruning, we use it for QK while retaining Wanda on the value, output, and MLP projections. \Cref{tab:full_block_masked_history} compares this hybrid with Wanda throughout, using the same calibration and evaluation protocol. Here the requested sparsity applies to all seven block projections; embeddings, biases, normalization parameters, and the language-model head remain dense.

At 30\% and 50\%, the hybrid improved both PPL metrics on all three models (\cref{tab:full_block_masked_history}). At 50\%, TinyLlama WikiText-2/C4 PPL decreased from 11.52/13.85 to 11.22/13.62, with smaller relative gains on 7B/13B; mean accuracy increased by 0.19/0.86/0.15 points on TinyLlama/7B/13B. Thus, the QK-aware hybrid also helped at moderate full-block sparsity.

At 80\% full-block sparsity, both pipelines degraded severely: PPL exceeded 1000 on 7B/13B, and mean accuracy remained below 36\% on all three models. Relative differences in this failure regime do not indicate preservation of model quality.

\FloatBarrier

\section{Calibration Sensitivity}
\label{app:calibration}

The sample-count sensitivity experiment uses Llama-3.1-8B at 50\% QK sparsity and seed 0, comparing row-wise Wanda with causal QK-Wanda-M. The context-length study extends the comparison to unmasked QK-Wanda and Llama-2-7B while holding the 524,288-token calibration pool, 50\% QK sparsity, and seed 0 fixed. WikiText-2 test and C4 validation perplexities are evaluated at context length 2048.

\paragraph{Sample count.}
\Cref{tab:calibration_samples} varies the number of length-2048 windows. Both methods were weakly sensitive to sample count. We selected a common \(N_*=256\): all four PPL values were within 0.047\% of their observed minima at \(N=2048\), which required eight times as many tokens. The WikiText-2/C4 improvements from 256 to 2048 windows were only 0.0020/0.0046 for Wanda and 0.0016/0.0003 for QK-Wanda-M.

\paragraph{Context length.} \Cref{tab:calibration_context,tab:calibration_context_llama2} hold the 524,288-token pool fixed and vary its segmentation. On Llama-3.1-8B, both causal QK-Wanda-M (M) and unmasked QK-Wanda attain their observed minima at \(T=64\) on both datasets. On Llama-2-7B, both variants attain their minima at \(T=128\); at \(T=64\), each is within 0.0010 PPL of those minima. Llama-3.1-8B at \(T=64\): unmasked minus causal PPL is -0.0011 on WikiText-2 and +0.0002 on C4; for Llama-2-7B at \(T=64\): unmasked minus causal PPL is +0.0007 on WikiText-2 and -0.0007 on C4. The direction of the small masked--unmasked difference changes across model and dataset, so neither variant wins uniformly in this sensitivity test. The results support \(T=64\) as a stable operating point across these two checkpoints, but do not establish a universal optimum.

\begin{table}[H]
\centering\small
\setlength{\tabcolsep}{6pt}
\caption{Sample-count sensitivity on Llama-3.1-8B at 50\% QK sparsity, length 2048, and seed 0. Wanda uses row budgets and QK-Wanda-M separate matrix budgets. Evaluation context is 2048. Bold marks within-method, within-dataset minima at the displayed precision; $\dagger$ marks the selected common default, independently of boldface.}
\label{tab:calibration_samples}
\begin{tabular}{rrcc>{\columncolor{qkrow}}c>{\columncolor{qkrow}}c}
\toprule
 & & \multicolumn{2}{c}{Wanda PPL} & \multicolumn{2}{>{\columncolor{qkrow}}c}{QK-Wanda-M PPL}\\
\cmidrule(lr){3-4}\cmidrule(lr){5-6}
Samples & Tokens & WikiText-2 & C4 & WikiText-2 & C4\\
\midrule
32 & 65,536 & 6.4165 & 9.8760 & 6.4585 & 9.9249\\
64 & 131,072 & 6.4163 & 9.8757 & 6.4573 & 9.9199\\
128 & 262,144 & 6.4161 & 9.8753 & 6.4583 & 9.9210\\
256$\dagger$ & 524,288 & 6.4172 & 9.8781 & 6.4580 & 9.9186\\
512 & 1,048,576 & 6.4162 & 9.8759 & 6.4569 & 9.9194\\
1024 & 2,097,152 & 6.4162 & 9.8741 & 6.4565 & 9.9186\\
2048 & 4,194,304 & \textbf{6.4152} & \textbf{9.8734} & \textbf{6.4564} & \textbf{9.9183}\\
\bottomrule
\end{tabular}
\end{table}

\begin{table}[H]
\centering\small
\setlength{\tabcolsep}{4pt}
\caption{Context-length sensitivity on Llama-3.1-8B at 50\% QK sparsity and seed 0, with exactly 524,288 calibration tokens. QK-Wanda-M and QK-Wanda use separate matrix-wide budgets for Q and K; M uses causal pre-RoPE scores, while the unsuffixed method uses unmasked scores. Evaluation context is 2048. Bold marks within-method, within-dataset minima at displayed precision; $\dagger$ marks the selected common default independently of boldface.}
\label{tab:calibration_context}
\begin{tabular}{rrcc>{\columncolor{qkrow}}c>{\columncolor{qkrow}}c>{\columncolor{qkrow}}c>{\columncolor{qkrow}}c}
\toprule
 & & \multicolumn{2}{c}{Wanda} & \multicolumn{2}{c}{QK-Wanda-M} & \multicolumn{2}{c}{QK-Wanda}\\
\cmidrule(lr){3-4}\cmidrule(lr){5-6}\cmidrule(lr){7-8}
Length & Sequences & WikiText-2 & C4 & WikiText-2 & C4 & WikiText-2 & C4\\
\midrule
4 & 131,072 & 6.4076 & 9.8574 & 6.4722 & 9.9447 & 6.4728 & 9.9415\\
8 & 65,536 & 6.4036 & 9.8523 & 6.4599 & 9.9185 & 6.4596 & 9.9169\\
16 & 32,768 & \textbf{6.4029} & 9.8506 & 6.4522 & 9.9028 & 6.4515 & 9.9016\\
32 & 16,384 & 6.4043 & \textbf{9.8494} & 6.4458 & 9.8970 & 6.4464 & 9.8966\\
64$\dagger$ & 8,192 & 6.4038 & 9.8527 & \textbf{6.4437} & \textbf{9.8923} & \textbf{6.4425} & \textbf{9.8925}\\
128 & 4,096 & 6.4044 & 9.8563 & 6.4442 & 9.8925 & 6.4438 & 9.8930\\
256 & 2,048 & 6.4083 & 9.8608 & 6.4470 & 9.8979 & 6.4472 & 9.8989\\
512 & 1,024 & 6.4116 & 9.8669 & 6.4505 & 9.9048 & 6.4502 & 9.9046\\
1024 & 512 & 6.4157 & 9.8744 & 6.4535 & 9.9103 & 6.4555 & 9.9140\\
2048 & 256 & 6.4172 & 9.8781 & 6.4580 & 9.9186 & 6.4568 & 9.9201\\
\bottomrule
\end{tabular}
\end{table}

\begin{table}[H]
\centering\small
\setlength{\tabcolsep}{5pt}
\caption{Context-length sensitivity on Llama-2-7B at 50\% QK sparsity and seed 0, with exactly 524,288 calibration tokens. Both QK-Wanda variants use separate matrix-wide budgets for Q and K; M uses causal pre-RoPE scores and the unsuffixed method uses unmasked scores. Evaluation context is 2048. Bold marks within-method, within-dataset minima at displayed precision; $\dagger$ marks the common default chosen from the Llama-3.1-8B sweep.}
\label{tab:calibration_context_llama2}
\begin{tabular}{rr>{\columncolor{qkrow}}c>{\columncolor{qkrow}}c>{\columncolor{qkrow}}c>{\columncolor{qkrow}}c}
\toprule
 & & \multicolumn{2}{c}{QK-Wanda-M} & \multicolumn{2}{c}{QK-Wanda}\\
\cmidrule(lr){3-4}\cmidrule(lr){5-6}
Length & Sequences & WikiText-2 & C4 & WikiText-2 & C4\\
\midrule
4 & 131,072 & 5.5459 & 7.3785 & 5.5469 & 7.3743\\
8 & 65,536 & 5.5391 & 7.3700 & 5.5388 & 7.3672\\
16 & 32,768 & 5.5356 & 7.3641 & 5.5355 & 7.3621\\
32 & 16,384 & 5.5345 & 7.3616 & 5.5334 & 7.3606\\
64$\dagger$ & 8,192 & 5.5319 & 7.3596 & 5.5326 & 7.3588\\
128 & 4,096 & \textbf{5.5318} & \textbf{7.3586} & \textbf{5.5321} & \textbf{7.3579}\\
256 & 2,048 & 5.5325 & 7.3588 & 5.5328 & 7.3584\\
512 & 1,024 & 5.5337 & 7.3603 & 5.5343 & 7.3587\\
1024 & 512 & 5.5352 & 7.3614 & 5.5351 & 7.3594\\
2048 & 256 & 5.5374 & 7.3619 & 5.5361 & 7.3607\\
\bottomrule
\end{tabular}
\end{table}

\FloatBarrier
\section{Additional Model Comparisons}
\label{app:model_results}

The tables report QK-only pruning under the common protocol in \cref{sec:setup,app:protocol}. They include \algname{QK-Wanda} with separate and shared budgets, row-wise \algname{Wanda}, and the explicitly labeled scoring variants. QK error follows \cref{eq:relative_qk_error}; mean zero-shot accuracy uses the seven tasks in Appendix~\ref{app:zero_shot_protocol}. Full sparsity sweeps appear in Appendices~\ref{app:budget_curves} and~\ref{app:qk_error_curves}.

Across all 15 checkpoints and nine nonzero sparsities, shared-budget QK-Wanda lowers QK error in 133/135 comparisons with Wanda, WikiText-2 PPL in 103/135, and C4 PPL in 101/135. Its mean zero-shot accuracy is higher in 30/45 comparisons. The aggregate tables below separate these results from those of QK-Wanda-M.

\subsection{Llama 3}
\label{sec:model_comparison}
\label{app:llama3_results}

\Cref{tab:main_results} covers Llama-3.2-1B/3B and Llama-3.1-8B/70B. At 80\% sparsity, QK-Wanda has higher PPL than Wanda on all four checkpoints, despite lower QK error on 1B, 8B, and 70B. The 3B checkpoint also has higher QK error at 80\% and 90\%. This family illustrates the limits of local reconstruction as a predictor of downstream quality. QK-Wanda-MC and QK-Wanda-MRC test centering before and after RoPE; their objectives and results are discussed in Appendix~\ref{app:centered}.

\begin{table}[H]
\centering
\fontsize{9}{10}\selectfont
\setlength{\tabcolsep}{1.2pt}
\caption{QK-only pruning on Llama~3. Each checkpoint has WikiText-2 PPL, C4 PPL, seven-task mean accuracy (\%), and relative QK error (\%; \cref{eq:relative_qk_error}). QK-Wanda is shown with shared and separate (sep.) budgets; the labeled variants use shared budgets. Centered variants are defined in Appendix~\ref{app:centered}. Bold marks the best displayed result within each sparsity, including ties; dashes denote variants not evaluated.}
\label{tab:main_results}
\begin{qktabular}{10}{lccccccccc}
\toprule
 & & \multicolumn{4}{c}{\textbf{Llama-3.2-1B}} & \multicolumn{4}{c}{\textbf{Llama-3.2-3B}}\\
\cmidrule(lr){3-6}\cmidrule(lr){7-10}
Method & Sparsity & \shortstack{Wiki2\\PPL} & \shortstack{C4\\PPL} & \shortstack{Zero\\shot\\acc.} & \shortstack{Rel. QK\\error\\(\%)} & \shortstack{Wiki2\\PPL} & \shortstack{C4\\PPL} & \shortstack{Zero\\shot\\acc.} & \shortstack{Rel. QK\\error\\(\%)}\\
\midrule
Dense & 0\% & 9.75 & 14.01 & 54.14 & 0.00 & 7.81 & 11.33 & 61.74 & 0.00\\
\midrule
\algname{Wanda} & 30\% & 9.83 & 14.15 & \textbf{54.78} & 0.23 & 7.85 & 11.41 & 61.35 & 0.15\\
\algname{QK-Wanda-M} & 30\% & \textbf{9.81} & \textbf{14.14} & 54.28 & 0.09 & \textbf{7.84} & \textbf{11.38} & 61.55 & \textbf{0.07}\\
\algname{QK-Wanda (shared)} & 30\% & \textbf{9.81} & \textbf{14.14} & 54.18 & \textbf{0.08} & \textbf{7.84} & \textbf{11.38} & 61.57 & \textbf{0.07}\\
\algname{QK-Wanda (sep.)} & 30\% & 9.82 & 14.17 & 54.17 & 0.11 & \textbf{7.84} & \textbf{11.38} & \textbf{61.64} & 0.08\\
\algname{QK-Wanda-MRC} & 30\% & 9.90 & 14.35 & 54.19 & 0.22 & 7.86 & 11.40 & 61.48 & 0.17\\
\algname{QK-Wanda-MC} & 30\% & 9.91 & 14.36 & 54.43 & 0.24 & 7.90 & 11.47 & 61.35 & 0.31\\
\addlinespace[2pt]
\algname{Wanda} & 50\% & \textbf{10.31} & \textbf{15.03} & \textbf{54.22} & 1.58 & \textbf{8.06} & \textbf{11.79} & 61.11 & 1.07\\
\algname{QK-Wanda-M} & 50\% & 10.65 & 16.24 & 53.51 & 0.76 & 8.07 & 11.80 & 60.99 & 0.64\\
\algname{QK-Wanda (shared)} & 50\% & 10.60 & 16.09 & 53.74 & \textbf{0.74} & 8.07 & \textbf{11.79} & 60.98 & \textbf{0.62}\\
\algname{QK-Wanda (sep.)} & 50\% & 11.05 & 17.34 & 53.09 & 1.16 & 8.08 & 11.82 & 61.04 & 0.77\\
\algname{QK-Wanda-MRC} & 50\% & 22.63 & 46.81 & 51.07 & 2.03 & 8.25 & 12.11 & \textbf{61.72} & 1.49\\
\algname{QK-Wanda-MC} & 50\% & 16.46 & 28.93 & 52.92 & 2.36 & 8.63 & 12.75 & 60.31 & 2.56\\
\addlinespace[2pt]
\algname{Wanda} & 80\% & \textbf{34.74} & \textbf{57.75} & \textbf{43.48} & 15.77 & \textbf{14.22} & \textbf{21.80} & \textbf{53.51} & 9.87\\
\algname{QK-Wanda-M} & 80\% & 202.16 & 260.13 & 42.52 & 15.71 & 42.70 & 44.30 & 50.16 & 10.66\\
\algname{QK-Wanda (shared)} & 80\% & 199.07 & 262.31 & 42.48 & \textbf{15.24} & 42.82 & 44.07 & 50.26 & 10.31\\
\algname{QK-Wanda (sep.)} & 80\% & 217.12 & 314.24 & 40.46 & 21.21 & 38.12 & 47.97 & 49.24 & 12.18\\
\algname{QK-Wanda-MRC} & 80\% & 169.23 & 262.66 & 39.33 & 17.65 & 23.18 & 33.50 & 52.88 & \textbf{9.41}\\
\algname{QK-Wanda-MC} & 80\% & 401.27 & 406.04 & 39.01 & 25.89 & 86.84 & 82.28 & 47.16 & 15.61\\
\bottomrule
\end{qktabular}
\end{table}

\begin{table}[H]
\centering\fontsize{9}{10}\selectfont
\setlength{\tabcolsep}{1.2pt}
{\small Table~\ref{tab:main_results} (continued).}\par\smallskip
\begin{qktabular}{10}{lccccccccc}
\toprule
 & & \multicolumn{4}{c}{\textbf{Llama-3.1-8B}} & \multicolumn{4}{c}{\textbf{Llama-3.1-70B}}\\
\cmidrule(lr){3-6}\cmidrule(lr){7-10}
Method & Sparsity & \shortstack{Wiki2\\PPL} & \shortstack{C4\\PPL} & \shortstack{Zero\\shot\\acc.} & \shortstack{Rel. QK\\error\\(\%)} & \shortstack{Wiki2\\PPL} & \shortstack{C4\\PPL} & \shortstack{Zero\\shot\\acc.} & \shortstack{Rel. QK\\error\\(\%)}\\
\midrule
Dense & 0\% & 6.24 & 9.54 & 69.10 & 0.00 & 2.81 & 7.11 & 74.21 & 0.00\\
\midrule
\algname{Wanda} & 30\% & 6.26 & 9.58 & 68.80 & 0.12 & \textbf{2.82} & \textbf{7.12} & 74.16 & 0.04\\
\algname{QK-Wanda-M} & 30\% & \textbf{6.25} & \textbf{9.57} & 69.40 & \textbf{0.05} & \textbf{2.82} & \textbf{7.12} & 74.18 & \textbf{0.02}\\
\algname{QK-Wanda (shared)} & 30\% & \textbf{6.25} & \textbf{9.57} & 69.29 & \textbf{0.05} & \textbf{2.82} & \textbf{7.12} & \textbf{74.23} & \textbf{0.02}\\
\algname{QK-Wanda (sep.)} & 30\% & 6.26 & 9.58 & 69.31 & 0.06 & \textbf{2.82} & 7.13 & 74.08 & 0.03\\
\algname{QK-Wanda-MRC} & 30\% & 6.27 & 9.59 & 69.18 & 0.11 & 2.83 & 7.14 & 74.07 & 0.08\\
\algname{QK-Wanda-MC} & 30\% & 6.29 & 9.62 & \textbf{69.60} & 0.18 & \resultcell & \resultcell & \resultcell & \resultcell\\
\addlinespace[2pt]
\algname{Wanda} & 50\% & \textbf{6.40} & 9.85 & 68.43 & 0.82 & \textbf{2.90} & \textbf{7.22} & 73.60 & 0.39\\
\algname{QK-Wanda-M} & 50\% & 6.41 & \textbf{9.82} & 69.39 & 0.40 & 2.97 & 7.27 & \textbf{73.79} & \textbf{0.18}\\
\algname{QK-Wanda (shared)} & 50\% & 6.41 & \textbf{9.82} & 69.42 & \textbf{0.39} & 2.97 & 7.27 & 73.77 & \textbf{0.18}\\
\algname{QK-Wanda (sep.)} & 50\% & 6.44 & 9.89 & 69.16 & 0.53 & 3.05 & 7.39 & 73.64 & 0.41\\
\algname{QK-Wanda-MRC} & 50\% & 6.49 & 9.97 & \textbf{69.61} & 0.89 & 3.14 & 7.47 & 73.66 & 0.81\\
\algname{QK-Wanda-MC} & 50\% & 6.66 & 10.21 & 69.01 & 1.58 & \resultcell & \resultcell & \resultcell & \resultcell\\
\addlinespace[2pt]
\algname{Wanda} & 80\% & \textbf{8.89} & \textbf{13.54} & 59.64 & 7.46 & \textbf{4.98} & \textbf{9.89} & \textbf{66.64} & 5.27\\
\algname{QK-Wanda-M} & 80\% & 13.81 & 22.36 & 60.40 & 5.66 & 18.98 & 32.37 & 66.45 & 3.36\\
\algname{QK-Wanda (shared)} & 80\% & 13.47 & 21.97 & 60.61 & \textbf{5.58} & 18.89 & 32.59 & 65.93 & \textbf{3.27}\\
\algname{QK-Wanda (sep.)} & 80\% & 14.01 & 23.06 & 58.95 & 7.34 & 25.96 & 43.27 & 61.06 & 5.84\\
\algname{QK-Wanda-MRC} & 80\% & 12.51 & 19.95 & \textbf{61.81} & 6.99 & 13.67 & 22.61 & 62.61 & 5.34\\
\algname{QK-Wanda-MC} & 80\% & 23.44 & 36.95 & 59.13 & 10.84 & \resultcell & \resultcell & \resultcell & \resultcell\\
\bottomrule
\end{qktabular}
\end{table}

\subsubsection{Zero-shot task performance}
\label{app:zero_shot_llama3}

\Cref{tab:zero_shot_llama_all,tab:zero_shot_llama70b} report the individual task scores underlying \cref{tab:main_results}. At 80\% sparsity, shared-budget QK-Wanda has lower mean accuracy than Wanda on 1B and 3B, but higher mean accuracy on 8B (60.61\% versus 59.64\%). On 70B, the means are close: 65.93\% versus 66.64\%, even though WikiText-2/C4 PPL rises from 4.98/9.89 to 18.89/32.59. Thus, the increase in language-model loss does not translate into an equally large change in task accuracy.

QK-Wanda-MRC improves the high-sparsity PPL of 3B and 8B relative to QK-Wanda, but its effect on task accuracy depends on the checkpoint. Appendix~\ref{app:rope_centered} discusses these comparisons. All displayed variants use the same evaluation examples; differences between PPL and accuracy describe distinct measured outcomes.

\subsection{TinyLlama and Llama 2}
\label{app:tinyllama_results}

TinyLlama-1.1B is an independently trained 3-trillion-token checkpoint \citep{zhang2024tinyllama}, with 32 query heads and 4 key/value heads. Llama~2 7B/13B use MHA, while 70B uses GQA. Each checkpoint uses its native tokenizer; calibration tokens are identical across methods within that checkpoint.

At 80\% sparsity, shared-budget QK-Wanda reduces WikiText-2/C4 PPL from 11.01/13.16 to 8.91/11.36 on TinyLlama, from 12.54/13.94 to 8.31/11.20 on Llama-2-7B, and from 4.55/6.91 to 3.63/5.97 on Llama-2-70B. The 13B checkpoint also improves on both datasets. These gains span MHA and GQA; they do not imply that attention architecture alone determines the benefit.

% models: tinyllama_1b llama2_7b llama2_13b llama2_70b
\begin{table}[H]
\centering
\fontsize{9}{10}\selectfont
\setlength{\tabcolsep}{1.4pt}
\caption{QK-only pruning on TinyLlama and Llama~2. Columns report WikiText-2 PPL, C4 PPL, seven-task mean accuracy (\%), and relative QK error (\%; \cref{eq:relative_qk_error}). QK-Wanda uses shared or separate (sep.) budgets; the labeled variants use shared budgets. Bold marks the best displayed result at each sparsity, including ties.}
\label{tab:llama2_results}
\begin{qktabular}{10}{lccccccccc}
\toprule
 & & \multicolumn{4}{c}{\textbf{TinyLlama-1.1B (GQA)}} & \multicolumn{4}{c}{\textbf{Llama-2-7B (MHA)}}\\
\cmidrule(lr){3-6}\cmidrule(lr){7-10}
Method & Sparsity & \shortstack{Wiki2\\PPL} & \shortstack{C4\\PPL} & \shortstack{Zero\\shot\\acc.} & \shortstack{Rel. QK\\error\\(\%)} & \shortstack{Wiki2\\PPL} & \shortstack{C4\\PPL} & \shortstack{Zero\\shot\\acc.} & \shortstack{Rel. QK\\error\\(\%)}\\
\midrule
Dense & 0\% & 7.77 & 9.94 & 49.93 & 0.00 & 5.47 & 7.26 & 64.38 & 0.00\\
\midrule
\algname{Wanda} & 30\% & 7.79 & 9.97 & 49.89 & 0.08 & 5.49 & 7.28 & 64.00 & 0.09\\
\algname{QK-Wanda-M} & 30\% & \textbf{7.78} & \textbf{9.96} & \textbf{50.02} & \textbf{0.03} & \textbf{5.48} & \textbf{7.27} & 64.24 & \textbf{0.03}\\
\algname{QK-Wanda (shared)} & 30\% & \textbf{7.78} & \textbf{9.96} & 49.95 & \textbf{0.03} & \textbf{5.48} & \textbf{7.27} & 64.15 & \textbf{0.03}\\
\algname{QK-Wanda (sep.)} & 30\% & 7.79 & \textbf{9.96} & 49.96 & \textbf{0.03} & \textbf{5.48} & \textbf{7.27} & \textbf{64.32} & \textbf{0.03}\\
\algname{QK-Wanda-MRC} & 30\% & \textbf{7.78} & 9.97 & 49.83 & 0.05 & 5.49 & \textbf{7.27} & 64.18 & 0.04\\
\algname{QK-Wanda-MC} & 30\% & 7.79 & \textbf{9.96} & 49.87 & 0.05 & 5.49 & \textbf{7.27} & 64.23 & 0.05\\
\addlinespace[2pt]
\algname{Wanda} & 50\% & 7.93 & 10.12 & \textbf{49.71} & 0.41 & 5.62 & 7.42 & 63.43 & 0.68\\
\algname{QK-Wanda-M} & 50\% & \textbf{7.85} & 10.04 & 49.64 & \textbf{0.16} & \textbf{5.53} & \textbf{7.36} & 63.92 & 0.27\\
\algname{QK-Wanda (shared)} & 50\% & \textbf{7.85} & \textbf{10.03} & 49.48 & \textbf{0.16} & \textbf{5.53} & \textbf{7.36} & 63.98 & 0.27\\
\algname{QK-Wanda (sep.)} & 50\% & 7.89 & 10.08 & 49.27 & 0.22 & \textbf{5.53} & \textbf{7.36} & 63.94 & \textbf{0.26}\\
\algname{QK-Wanda-MRC} & 50\% & 7.86 & 10.10 & 49.66 & 0.33 & 5.58 & 7.39 & \textbf{64.00} & 0.36\\
\algname{QK-Wanda-MC} & 50\% & 7.88 & 10.08 & 49.54 & 0.29 & 5.64 & 7.44 & 63.98 & 0.51\\
\addlinespace[2pt]
\algname{Wanda} & 80\% & 11.01 & 13.16 & 46.76 & 2.88 & 12.54 & 13.94 & 52.56 & 7.40\\
\algname{QK-Wanda-M} & 80\% & \textbf{8.90} & \textbf{11.35} & \textbf{47.60} & \textbf{1.31} & 8.33 & 11.21 & 57.81 & 3.16\\
\algname{QK-Wanda (shared)} & 80\% & 8.91 & 11.36 & 47.52 & \textbf{1.31} & 8.31 & 11.20 & 57.84 & 3.15\\
\algname{QK-Wanda (sep.)} & 80\% & 9.74 & 12.39 & 47.23 & 1.77 & 8.28 & 11.13 & 58.04 & \textbf{3.11}\\
\algname{QK-Wanda-MRC} & 80\% & 8.96 & 11.55 & 47.44 & 2.18 & \textbf{8.25} & \textbf{11.11} & \textbf{58.81} & 3.57\\
\algname{QK-Wanda-MC} & 80\% & 9.17 & 11.83 & 46.97 & 2.01 & 9.72 & 13.01 & 57.88 & 4.52\\
\bottomrule
\end{qktabular}
\end{table}

\begin{table}[H]
\centering\fontsize{9}{10}\selectfont
\setlength{\tabcolsep}{1.2pt}
{\small Table~\ref{tab:llama2_results} (continued).}\par\smallskip
\begin{qktabular}{10}{lccccccccc}
\toprule
 & & \multicolumn{4}{c}{\textbf{Llama-2-13B (MHA)}} & \multicolumn{4}{c}{\textbf{Llama-2-70B (GQA)}}\\
\cmidrule(lr){3-6}\cmidrule(lr){7-10}
Method & Sparsity & \shortstack{Wiki2\\PPL} & \shortstack{C4\\PPL} & \shortstack{Zero\\shot\\acc.} & \shortstack{Rel. QK\\error\\(\%)} & \shortstack{Wiki2\\PPL} & \shortstack{C4\\PPL} & \shortstack{Zero\\shot\\acc.} & \shortstack{Rel. QK\\error\\(\%)}\\
\midrule
Dense & 0\% & 4.88 & 6.73 & 67.05 & 0.00 & 3.32 & 5.71 & 71.50 & 0.00\\
\midrule
\algname{Wanda} & 30\% & 4.90 & 6.74 & 66.77 & 0.06 & \textbf{3.32} & \textbf{5.71} & 71.50 & 0.04\\
\algname{QK-Wanda-M} & 30\% & \textbf{4.89} & \textbf{6.73} & 66.92 & \textbf{0.02} & \textbf{3.32} & \textbf{5.71} & 71.62 & \textbf{0.00}\\
\algname{QK-Wanda (shared)} & 30\% & \textbf{4.89} & \textbf{6.73} & 66.84 & \textbf{0.02} & \textbf{3.32} & \textbf{5.71} & 71.54 & \textbf{0.00}\\
\algname{QK-Wanda (sep.)} & 30\% & \textbf{4.89} & \textbf{6.73} & 66.86 & \textbf{0.02} & \textbf{3.32} & \textbf{5.71} & 71.59 & 0.01\\
\algname{QK-Wanda-MRC} & 30\% & \textbf{4.89} & \textbf{6.73} & 66.98 & \textbf{0.02} & \textbf{3.32} & \textbf{5.71} & \textbf{71.65} & 0.01\\
\algname{QK-Wanda-MC} & 30\% & \textbf{4.89} & \textbf{6.73} & \textbf{66.99} & 0.03 & \resultcell & \resultcell & \resultcell & \resultcell\\
\addlinespace[2pt]
\algname{Wanda} & 50\% & 4.97 & 6.83 & 66.33 & 0.45 & 3.36 & 5.74 & 71.37 & 0.44\\
\algname{QK-Wanda-M} & 50\% & \textbf{4.93} & \textbf{6.79} & 66.74 & 0.16 & \textbf{3.33} & \textbf{5.72} & 71.62 & 0.06\\
\algname{QK-Wanda (shared)} & 50\% & \textbf{4.93} & \textbf{6.79} & 66.66 & 0.16 & \textbf{3.33} & \textbf{5.72} & 71.60 & \textbf{0.05}\\
\algname{QK-Wanda (sep.)} & 50\% & \textbf{4.93} & \textbf{6.79} & 66.77 & \textbf{0.15} & 3.34 & \textbf{5.72} & 71.62 & 0.10\\
\algname{QK-Wanda-MRC} & 50\% & \textbf{4.93} & 6.80 & \textbf{67.19} & 0.21 & 3.34 & 5.73 & \textbf{71.99} & 0.19\\
\algname{QK-Wanda-MC} & 50\% & 4.96 & 6.81 & 67.00 & 0.33 & \resultcell & \resultcell & \resultcell & \resultcell\\
\addlinespace[2pt]
\algname{Wanda} & 80\% & 8.06 & 10.20 & 56.35 & 5.86 & 4.55 & 6.91 & 65.85 & 7.40\\
\algname{QK-Wanda-M} & 80\% & 6.27 & 8.58 & 64.09 & 2.04 & \textbf{3.63} & \textbf{5.97} & 71.76 & 1.03\\
\algname{QK-Wanda (shared)} & 80\% & 6.25 & 8.55 & 64.12 & 2.03 & \textbf{3.63} & \textbf{5.97} & 71.79 & \textbf{1.00}\\
\algname{QK-Wanda (sep.)} & 80\% & 6.25 & 8.54 & 64.33 & \textbf{2.00} & 3.83 & 6.14 & \textbf{71.80} & 1.58\\
\algname{QK-Wanda-MRC} & 80\% & \textbf{6.19} & \textbf{8.43} & \textbf{64.94} & 2.12 & 3.85 & 6.27 & 70.84 & 3.36\\
\algname{QK-Wanda-MC} & 80\% & 6.64 & 8.99 & 64.28 & 2.76 & \resultcell & \resultcell & \resultcell & \resultcell\\
\bottomrule
\end{qktabular}
\end{table}

\paragraph{Task-level results.}
\label{app:zero_shot_small}
\Cref{tab:zero_shot_llama_all,tab:zero_shot_llama70b} retain all seven task scores. At 80\%, QK-Wanda raises TinyLlama's mean accuracy from 46.76\% to 47.52\%, and Llama-2-7B's from 52.56\% to 57.84\%. Llama-2-70B improves from 65.85\% to 71.79\%. The individual tasks can move in different directions; the mean does not imply improvement on every task.

\subsection{Qwen2.5}
\label{sec:qwen25_comparison}

We evaluate seven Qwen2.5 base checkpoints from 0.5B to 72B \citep{qwen2024qwen25}. All use GQA. Their QK projections include biases, which enter the score calculations and remain unchanged during pruning. The experiments use native tokenizers and the protocol in \cref{sec:setup,app:protocol}.

\Cref{tab:qwen25_results} reports the 0.5B--32B checkpoints. The PPL differences are generally small at moderate sparsity. At 80\%, outcomes depend on the checkpoint and dataset: for example, QK-Wanda improves WikiText-2 PPL on 7B while slightly increasing C4 PPL, and increases both on 32B. Lower QK error therefore need not coincide with lower PPL. The full sweeps and allocation profiles appear in Appendices~\ref{app:budget_qwen25} and~\ref{app:shared_allocation_qwen25}.

\begin{table}[H]
\centering
\fontsize{8}{9}\selectfont
\setlength{\tabcolsep}{1.5pt}
\caption{QK-only pruning on Qwen2.5 0.5B--32B. Columns report WikiText-2 PPL, C4 PPL, seven-task mean accuracy (\%), and relative QK error (\%; \cref{eq:relative_qk_error}). QK-Wanda uses shared or separate (sep.) budgets; the labeled variants use shared budgets. Biases remain dense. Bold marks the best displayed result at each sparsity, including ties; dashes denote variants not evaluated.}
\label{tab:qwen25_results}
% models: qwen25_0p5b qwen25_1p5b qwen25_3b
\begin{qktabular}{14}{lccccccccccccc}
\toprule
 & & \multicolumn{4}{c}{\textbf{Qwen2.5-0.5B}} & \multicolumn{4}{c}{\textbf{Qwen2.5-1.5B}} & \multicolumn{4}{c}{\textbf{Qwen2.5-3B}}\\
\cmidrule(lr){3-6}\cmidrule(lr){7-10}\cmidrule(lr){11-14}
Method & Sparsity & \shortstack{Wiki2\\PPL} & \shortstack{C4\\PPL} & \shortstack{Zero\\shot\\acc.} & \shortstack{Rel. QK\\error\\(\%)} & \shortstack{Wiki2\\PPL} & \shortstack{C4\\PPL} & \shortstack{Zero\\shot\\acc.} & \shortstack{Rel. QK\\error\\(\%)} & \shortstack{Wiki2\\PPL} & \shortstack{C4\\PPL} & \shortstack{Zero\\shot\\acc.} & \shortstack{Rel. QK\\error\\(\%)}\\
\midrule
Dense & 0\% & 13.07 & 20.38 & 50.86 & 0.00 & 9.26 & 15.12 & 61.64 & 0.00 & 8.03 & 13.35 & 65.29 & 0.00\\
\midrule
\algname{Wanda} & 30\% & 13.12 & 20.43 & 50.48 & 0.07 & 9.30 & 15.15 & 61.45 & 0.08 & 8.06 & \textbf{13.37} & 65.22 & 0.06\\
\algname{QK-Wanda-M} & 30\% & \textbf{13.09} & \textbf{20.40} & 50.97 & \textbf{0.02} & \textbf{9.27} & \textbf{15.13} & 61.69 & \textbf{0.02} & \textbf{8.05} & \textbf{13.37} & 65.31 & \textbf{0.02}\\
\algname{QK-Wanda (shared)} & 30\% & \textbf{13.09} & \textbf{20.40} & \textbf{50.98} & \textbf{0.02} & \textbf{9.27} & \textbf{15.13} & \textbf{61.82} & \textbf{0.02} & \textbf{8.05} & \textbf{13.37} & 65.34 & \textbf{0.02}\\
\algname{QK-Wanda (sep.)} & 30\% & 13.10 & 20.41 & 50.73 & \textbf{0.02} & 9.28 & 15.14 & 61.60 & 0.03 & \textbf{8.05} & \textbf{13.37} & 65.20 & 0.03\\
\algname{QK-Wanda-MRC} & 30\% & \textbf{13.09} & 20.41 & 50.82 & 0.04 & 9.28 & 15.14 & 61.48 & 0.04 & \textbf{8.05} & 13.38 & \textbf{65.45} & 0.04\\
\algname{QK-Wanda-MC} & 30\% & \textbf{13.09} & 20.42 & 50.42 & 0.04 & 9.28 & 15.15 & 61.37 & 0.04 & \textbf{8.05} & 13.38 & 65.30 & 0.05\\
\addlinespace[2pt]
\algname{Wanda} & 50\% & 13.31 & 20.71 & \textbf{50.69} & 0.38 & 9.48 & 15.32 & \textbf{61.57} & 0.40 & 8.20 & 13.50 & 64.72 & 0.31\\
\algname{QK-Wanda-M} & 50\% & \textbf{13.17} & \textbf{20.54} & 50.57 & 0.13 & 9.35 & \textbf{15.26} & 60.58 & \textbf{0.14} & \textbf{8.12} & \textbf{13.45} & 65.13 & \textbf{0.12}\\
\algname{QK-Wanda (shared)} & 50\% & 13.18 & \textbf{20.54} & 50.56 & \textbf{0.12} & 9.35 & \textbf{15.26} & 60.55 & \textbf{0.14} & \textbf{8.12} & \textbf{13.45} & 65.11 & \textbf{0.12}\\
\algname{QK-Wanda (sep.)} & 50\% & 13.21 & 20.64 & 50.61 & 0.16 & 9.41 & 15.31 & 60.76 & 0.18 & 8.15 & 13.49 & \textbf{65.34} & 0.16\\
\algname{QK-Wanda-MRC} & 50\% & 13.20 & 20.59 & 50.39 & 0.23 & \textbf{9.34} & 15.27 & 61.08 & 0.25 & \textbf{8.12} & 13.47 & 65.10 & 0.24\\
\algname{QK-Wanda-MC} & 50\% & 13.19 & 20.60 & 50.18 & 0.23 & 9.36 & 15.32 & 60.61 & 0.29 & \textbf{8.12} & 13.48 & 64.80 & 0.31\\
\addlinespace[2pt]
\algname{Wanda} & 80\% & 16.24 & 25.07 & 47.40 & 2.19 & 12.38 & 18.88 & 55.71 & 2.57 & 10.44 & 15.95 & 62.68 & 1.98\\
\algname{QK-Wanda-M} & 80\% & 15.07 & 23.07 & 47.01 & 1.14 & 11.50 & 18.59 & 57.06 & 1.24 & 10.06 & 16.08 & 61.50 & 1.00\\
\algname{QK-Wanda (shared)} & 80\% & 15.08 & 23.09 & 47.35 & \textbf{1.13} & 11.51 & 18.58 & \textbf{57.31} & \textbf{1.23} & 10.03 & 16.04 & 61.50 & \textbf{0.99}\\
\algname{QK-Wanda (sep.)} & 80\% & 15.89 & 24.35 & 45.32 & 1.39 & 12.00 & 19.02 & 56.40 & 1.50 & 10.59 & 16.76 & 60.27 & 1.34\\
\algname{QK-Wanda-MRC} & 80\% & \textbf{14.59} & \textbf{22.45} & \textbf{48.91} & 1.64 & \textbf{10.92} & \textbf{17.78} & 56.97 & 1.79 & \textbf{9.61} & \textbf{15.62} & 62.50 & 1.67\\
\algname{QK-Wanda-MC} & 80\% & 14.73 & 22.76 & 47.37 & 1.72 & 11.29 & 18.76 & 56.78 & 2.00 & 10.02 & 16.51 & \textbf{62.76} & 1.99\\
\bottomrule
\end{qktabular}
\end{table}

\begin{table}[H]
\centering\fontsize{8}{9}\selectfont
\setlength{\tabcolsep}{1.5pt}
{\small Table~\ref{tab:qwen25_results} (continued).}\par\smallskip
% models: qwen25_7b qwen25_14b qwen25_32b
\begin{qktabular}{14}{lccccccccccccc}
\toprule
 & & \multicolumn{4}{c}{\textbf{Qwen2.5-7B}} & \multicolumn{4}{c}{\textbf{Qwen2.5-14B}} & \multicolumn{4}{c}{\textbf{Qwen2.5-32B}}\\
\cmidrule(lr){3-6}\cmidrule(lr){7-10}\cmidrule(lr){11-14}
Method & Sparsity & \shortstack{Wiki2\\PPL} & \shortstack{C4\\PPL} & \shortstack{Zero\\shot\\acc.} & \shortstack{Rel. QK\\error\\(\%)} & \shortstack{Wiki2\\PPL} & \shortstack{C4\\PPL} & \shortstack{Zero\\shot\\acc.} & \shortstack{Rel. QK\\error\\(\%)} & \shortstack{Wiki2\\PPL} & \shortstack{C4\\PPL} & \shortstack{Zero\\shot\\acc.} & \shortstack{Rel. QK\\error\\(\%)}\\
\midrule
Dense & 0\% & 6.85 & 11.88 & 70.56 & 0.00 & 5.29 & 10.35 & 72.35 & 0.00 & 5.02 & 10.17 & 72.36 & 0.00\\
\midrule
\algname{Wanda} & 30\% & 6.86 & \textbf{11.89} & 70.45 & 0.03 & 5.31 & 10.36 & 72.62 & 0.06 & 5.03 & 10.18 & \textbf{72.48} & 0.05\\
\algname{QK-Wanda-M} & 30\% & \textbf{6.85} & \textbf{11.89} & 70.44 & \textbf{0.01} & \textbf{5.30} & \textbf{10.35} & 72.59 & \textbf{0.02} & \textbf{5.02} & \textbf{10.17} & 72.32 & \textbf{0.02}\\
\algname{QK-Wanda (shared)} & 30\% & \textbf{6.85} & \textbf{11.89} & 70.54 & \textbf{0.01} & \textbf{5.30} & \textbf{10.35} & \textbf{72.66} & \textbf{0.02} & \textbf{5.02} & \textbf{10.17} & 72.33 & \textbf{0.02}\\
\algname{QK-Wanda (sep.)} & 30\% & 6.86 & \textbf{11.89} & 70.48 & \textbf{0.01} & \textbf{5.30} & \textbf{10.35} & 72.58 & 0.03 & \textbf{5.02} & \textbf{10.17} & 72.33 & \textbf{0.02}\\
\algname{QK-Wanda-MRC} & 30\% & \textbf{6.85} & \textbf{11.89} & \textbf{70.55} & 0.02 & \textbf{5.30} & 10.36 & 72.45 & 0.04 & \textbf{5.02} & \textbf{10.17} & 72.37 & 0.03\\
\algname{QK-Wanda-MC} & 30\% & \textbf{6.85} & \textbf{11.89} & 70.45 & 0.02 & \resultcell & \resultcell & \resultcell & \resultcell & \resultcell & \resultcell & \resultcell & \resultcell\\
\addlinespace[2pt]
\algname{Wanda} & 50\% & 6.90 & 11.93 & 70.03 & 0.14 & 5.41 & 10.43 & 72.84 & 0.29 & 5.07 & 10.21 & \textbf{72.69} & 0.24\\
\algname{QK-Wanda-M} & 50\% & 6.89 & \textbf{11.92} & 70.02 & 0.06 & \textbf{5.35} & \textbf{10.40} & 72.95 & \textbf{0.13} & \textbf{5.04} & \textbf{10.20} & 72.40 & 0.11\\
\algname{QK-Wanda (shared)} & 50\% & 6.89 & \textbf{11.92} & 70.07 & \textbf{0.05} & \textbf{5.35} & \textbf{10.40} & 72.93 & \textbf{0.13} & \textbf{5.04} & \textbf{10.20} & 72.33 & \textbf{0.10}\\
\algname{QK-Wanda (sep.)} & 50\% & 6.90 & 11.94 & \textbf{70.08} & 0.07 & \textbf{5.35} & 10.41 & 72.91 & 0.15 & \textbf{5.04} & \textbf{10.20} & 72.36 & 0.12\\
\algname{QK-Wanda-MRC} & 50\% & \textbf{6.88} & \textbf{11.92} & 70.00 & 0.11 & 5.36 & \textbf{10.40} & \textbf{73.05} & 0.21 & 5.05 & \textbf{10.20} & 72.42 & 0.19\\
\algname{QK-Wanda-MC} & 50\% & \textbf{6.88} & 11.93 & 69.78 & 0.12 & \resultcell & \resultcell & \resultcell & \resultcell & \resultcell & \resultcell & \resultcell & \resultcell\\
\addlinespace[2pt]
\algname{Wanda} & 80\% & 7.39 & 12.53 & 67.36 & 0.97 & 6.50 & \textbf{11.53} & 70.64 & 1.69 & \textbf{5.61} & \textbf{10.82} & 70.80 & 1.64\\
\algname{QK-Wanda-M} & 80\% & 7.32 & 12.62 & 68.03 & \textbf{0.45} & 6.33 & 11.79 & 72.43 & \textbf{0.96} & 5.81 & 11.67 & 70.97 & 0.91\\
\algname{QK-Wanda (shared)} & 80\% & 7.32 & 12.62 & 67.98 & \textbf{0.45} & \textbf{6.32} & 11.77 & \textbf{72.56} & \textbf{0.96} & 5.80 & 11.66 & 70.86 & \textbf{0.90}\\
\algname{QK-Wanda (sep.)} & 80\% & 7.39 & 12.75 & 66.98 & 0.52 & 6.50 & 11.88 & 71.73 & 1.04 & 5.89 & 11.70 & 71.02 & 1.01\\
\algname{QK-Wanda-MRC} & 80\% & \textbf{7.26} & \textbf{12.48} & \textbf{68.46} & 0.84 & 6.36 & 11.97 & 71.88 & 1.42 & 5.69 & 11.45 & \textbf{71.39} & 1.40\\
\algname{QK-Wanda-MC} & 80\% & 7.30 & 12.58 & 67.96 & 0.92 & \resultcell & \resultcell & \resultcell & \resultcell & \resultcell & \resultcell & \resultcell & \resultcell\\
\bottomrule
\end{qktabular}
\end{table}

\paragraph{Scaling to 72B.}
All methods and the dense reference use BF16 on Qwen2.5-72B because FP16 produces nonfinite activations even without pruning. At 80\%, shared-budget QK-Wanda lowers WikiText-2/C4 PPL from 4.55/9.87 to 4.23/9.52. Its mean zero-shot accuracy increases from 73.44\% to 74.00\%. \Cref{tab:qwen72b_results} includes the matrix-budget Wanda control and the QK scoring variants.

\begin{table}[htbp]
\centering
\small
\setlength{\tabcolsep}{8pt}
\caption{QK-only pruning on Qwen2.5-72B in BF16. QK-Wanda uses shared or separate (sep.) budgets; the labeled variants use shared budgets. Columns report WikiText-2 PPL, C4 PPL, seven-task mean accuracy (\%), and relative QK error (\%; \cref{eq:relative_qk_error}). Bold marks the best displayed result at each sparsity, including ties.}
\label{tab:qwen72b_results}
\begin{tabular}{lc*{4}{c}}
\toprule
Method & Sparsity & \shortstack{Wiki2\\PPL} & \shortstack{C4\\PPL} & \shortstack{Zero-shot\\acc. (\%)} & \shortstack{Rel. QK\\error (\%)}\\
\midrule
Dense & 0\% & 3.87 & 9.26 & 74.75 & 0.00\\
\midrule
\algname{Wanda} & 30\% & \textbf{3.88} & \textbf{9.26} & \textbf{74.68} & 0.04\\
\algname{QK-Wanda-M} & 30\% & \textbf{3.88} & \textbf{9.26} & 74.61 & \textbf{0.01}\\
\algname{QK-Wanda (shared)} & 30\% & \textbf{3.88} & \textbf{9.26} & 74.66 & \textbf{0.01}\\
\algname{QK-Wanda (sep.)} & 30\% & \textbf{3.88} & \textbf{9.26} & 74.63 & 0.02\\
\algname{QK-Wanda-MRC} & 30\% & \textbf{3.88} & \textbf{9.26} & 74.64 & 0.02\\
\addlinespace[2pt]
\algname{Wanda} & 50\% & 3.92 & 9.28 & 74.72 & 0.19\\
\algname{QK-Wanda-M} & 50\% & 3.90 & \textbf{9.27} & 74.54 & \textbf{0.08}\\
\algname{QK-Wanda (shared)} & 50\% & \textbf{3.89} & \textbf{9.27} & 74.44 & \textbf{0.08}\\
\algname{QK-Wanda (sep.)} & 50\% & 3.90 & 9.28 & \textbf{74.74} & 0.10\\
\algname{QK-Wanda-MRC} & 50\% & 3.90 & 9.28 & 74.53 & 0.13\\
\addlinespace[2pt]
\algname{Wanda} & 80\% & 4.55 & 9.87 & 73.44 & 1.30\\
\algname{QK-Wanda-M} & 80\% & 4.24 & \textbf{9.52} & 73.97 & 0.63\\
\algname{QK-Wanda (shared)} & 80\% & 4.23 & \textbf{9.52} & 74.00 & \textbf{0.62}\\
\algname{QK-Wanda (sep.)} & 80\% & 4.44 & 9.64 & 73.80 & 0.72\\
\algname{QK-Wanda-MRC} & 80\% & \textbf{4.19} & 9.53 & \textbf{74.08} & 0.96\\
\bottomrule
\end{tabular}
\end{table}

\subsubsection{Zero-shot task performance}
\label{app:zero_shot_qwen25}

\Cref{tab:zero_shot_qwen25_all,tab:zero_shot_qwen25_72b} report all seven task scores for the dense checkpoints and the QK-only pruning configurations at 30\%, 50\%, and 80\% sparsity. The weights evaluated for these tasks are the same pruned weights used for PPL; evaluation makes no further changes to them.

At 80\%, shared-budget QK-Wanda raises mean accuracy on 1.5B, 7B, 14B, 32B, and 72B, but lowers it on 0.5B and 3B. The PPL ordering differs in several cases: the 32B mean improves slightly despite higher PPL on both datasets. These results complement the checkpoint summaries rather than providing a single downstream ranking for the family.

\FloatBarrier
\subsection{Descriptive aggregate comparisons}
\label{app:aggregate_counts}
All counts compare unrounded values, exclude dense points, and are descriptive rather than independent statistical trials. Lower PPL/error or higher accuracy determines the favorable direction. The 135 sweep configurations comprise 15 checkpoints and nine sparsities; task evaluation uses the 45 configurations at 30\%, 50\%, and 80\%.
\begin{table}[H]
\centering\small
\caption{Favorable comparisons / evaluated configurations. Shared QK-Wanda-M is compared with row-wise Wanda unless indicated. Accuracy uses the unweighted seven-task mean.}
\label{tab:aggregate_counts}
\begin{tabular}{@{}lrrrr@{}}
\toprule
Subset / comparison & WikiText-2 & C4 & QK error & Accuracy\\
\midrule
All sparsities & 102/135 & 98/135 & 133/135 & 27/45\\
Tabulated sparsities & 35/45 & 33/45 & 44/45 & 27/45\\
TinyLlama / Llama 2 & 35/36 & 35/36 & 36/36 & 11/12\\
Llama 3 & 13/36 & 13/36 & 34/36 & 6/12\\
Qwen2.5 & 54/63 & 50/63 & 63/63 & 10/21\\
30\% sparsity & 14/15 & 14/15 & 15/15 & 10/15\\
50\% sparsity & 11/15 & 12/15 & 15/15 & 7/15\\
80\% sparsity & 10/15 & 7/15 & 14/15 & 10/15\\
Shared vs. separate QK & 111/135 & 117/135 & 122/135 & \textemdash\\
Separate QK vs. row Wanda & 94/135 & 85/135 & 124/135 & \textemdash\\
\bottomrule
\end{tabular}
\end{table}
Of the 135 shared-QK/row-Wanda comparisons, both PPL values decreased in 91, both increased in 26, and 18 had mixed signs. In 24 configurations, both PPL values increased despite lower QK error. Across the 315 individual task scores, 190 increased, 109 decreased, and 16 tied; tasks within a configuration are not independent replications.

\subsection{Aggregate comparisons for QK-Wanda}
\label{app:unmasked_aggregate_counts}
The following counts use the QK-Wanda results and the common QK reconstruction metric. They compare unrounded values, exclude dense references, and are descriptive rather than independent statistical trials.
\begin{table}[H]
\centering\small
\caption{Favorable comparisons for QK-Wanda. Shared QK budgets are compared with row-wise Wanda unless indicated; accuracy is the unweighted seven-task mean. Historical masked counts above remain separately labeled.}
\label{tab:unmasked_aggregate_counts}
\begin{tabular}{@{}lrrrr@{}}
\toprule
Comparison & WikiText-2 & C4 & Full QK error & Accuracy\\
\midrule
All sparsities & 103/135 & 101/135 & 133/135 & 30/45\\
Tabulated sparsities & 35/45 & 35/45 & 44/45 & 30/45\\
Shared vs. separate QK & 112/135 & 119/135 & 122/135 & \textemdash\\
Separate QK vs. row Wanda & 95/135 & 85/135 & 124/135 & \textemdash\\
\bottomrule
\end{tabular}
\end{table}

\FloatBarrier
\section{Zero-Shot Task Results}
\label{app:zero_shot_tables}

% Checkpoint-sized continuations; data retained from the original exporter.
\begin{table}[H]
\centering\fontsize{8}{9}\selectfont\setlength{\tabcolsep}{3pt}\renewcommand{\arraystretch}{1}
\caption{Seven-task zero-shot accuracy (\%) for TinyLlama, Llama~2, and Llama~3 with QK-only pruning. Each model includes its dense reference. \algname{QK-Wanda} uses shared or separate (sep.) budgets; other QK variants use shared budgets. Centered variants are described in Appendix~\ref{app:centered}. Metrics follow Appendix~\ref{app:zero_shot_protocol}; Avg. is their unweighted mean. Bold marks the best accuracy within each model and sparsity, including ties. Dashes mark variants not evaluated in this comparison.}
\label{tab:zero_shot_llama_all}
\begin{qktabular}{10}{lccccccccc}
\toprule
Method & Sparsity & BoolQ & RTE & \shortstack{Hella\\Swag} & \shortstack{Wino\\Grande} & \shortstack{ARC-\\Easy} & \shortstack{ARC-\\Challenge} & \shortstack{OpenBook\\QA} & Avg.\\
\midrule
% model: tinyllama_1b
\multicolumn{10}{l}{\textbf{TinyLlama-1.1B}}\\
\addlinespace[1pt]
Dense & 0\% & 57.77 & 51.99 & 59.16 & 59.04 & 55.35 & 30.20 & 36.00 & 49.93\\
\cmidrule(lr){1-10}
\algname{Wanda} & 30\% & 57.89 & 51.99 & 58.96 & 58.56 & 55.26 & 30.38 & \textbf{36.20} & 49.89\\
\algname{QK-Wanda-M} & 30\% & 58.50 & 52.35 & 59.11 & 58.41 & 55.30 & 30.46 & 36.00 & \textbf{50.02}\\
\algname{QK-Wanda (shared)} & 30\% & \textbf{58.53} & 51.62 & \textbf{59.12} & 58.48 & \textbf{55.43} & 30.46 & 36.00 & 49.95\\
\algname{QK-Wanda (sep.)} & 30\% & 58.01 & 51.99 & 59.06 & 58.64 & 55.30 & \textbf{30.72} & 36.00 & 49.96\\
\algname{QK-Wanda-MRC} & 30\% & 57.98 & 52.35 & 59.07 & \textbf{58.72} & 55.26 & 30.20 & 35.20 & 49.83\\
\algname{QK-Wanda-MC} & 30\% & 57.92 & \textbf{52.71} & 58.95 & 58.25 & 54.97 & 30.12 & \textbf{36.20} & 49.87\\
\addlinespace[1pt]
\algname{Wanda} & 50\% & 56.94 & \textbf{53.79} & 58.24 & 57.70 & 55.47 & \textbf{30.80} & 35.00 & \textbf{49.71}\\
\algname{QK-Wanda-M} & 50\% & 57.68 & 50.90 & \textbf{58.76} & \textbf{58.41} & 55.56 & 30.38 & 35.80 & 49.64\\
\algname{QK-Wanda (shared)} & 50\% & 57.34 & 50.54 & 58.73 & 58.17 & 55.47 & 30.12 & 36.00 & 49.48\\
\algname{QK-Wanda (sep.)} & 50\% & 57.09 & 49.46 & 58.40 & 58.25 & \textbf{55.85} & 30.63 & 35.20 & 49.27\\
\algname{QK-Wanda-MRC} & 50\% & \textbf{58.10} & 51.62 & 58.28 & 57.85 & 54.59 & 30.55 & \textbf{36.60} & 49.66\\
\algname{QK-Wanda-MC} & 50\% & 57.77 & 50.90 & 58.46 & 57.77 & 55.43 & 30.46 & 36.00 & 49.54\\
\addlinespace[1pt]
\algname{Wanda} & 80\% & 54.68 & \textbf{55.23} & 51.16 & 53.28 & 51.14 & 28.07 & 33.80 & 46.76\\
\algname{QK-Wanda-M} & 80\% & 55.17 & 52.35 & \textbf{54.73} & 55.64 & 52.65 & 29.27 & 33.40 & \textbf{47.60}\\
\algname{QK-Wanda (shared)} & 80\% & 54.92 & 51.99 & 54.70 & \textbf{55.88} & 52.78 & 29.18 & 33.20 & 47.52\\
\algname{QK-Wanda (sep.)} & 80\% & 55.75 & 53.07 & 52.31 & 54.93 & \textbf{53.03} & 28.50 & 33.00 & 47.23\\
\algname{QK-Wanda-MRC} & 80\% & 57.13 & 51.62 & 53.93 & 55.33 & 50.42 & 29.44 & \textbf{34.20} & 47.44\\
\algname{QK-Wanda-MC} & 80\% & \textbf{57.98} & 49.10 & 53.71 & 54.14 & 51.35 & \textbf{29.52} & 33.00 & 46.97\\
\bottomrule\end{qktabular}
\end{table}

\begin{table}[H]
\centering\fontsize{8}{9}\selectfont\setlength{\tabcolsep}{3pt}\renewcommand{\arraystretch}{1}
{\small Table~\ref{tab:zero_shot_llama_all} (continued).}\par\smallskip
\begin{qktabular}{10}{lccccccccc}
\toprule
Method & Sparsity & BoolQ & RTE & \shortstack{Hella\\Swag} & \shortstack{Wino\\Grande} & \shortstack{ARC-\\Easy} & \shortstack{ARC-\\Challenge} & \shortstack{OpenBook\\QA} & Avg.\\
\midrule
% model: llama2_7b
\multicolumn{10}{l}{\textbf{Llama-2-7B}}\\
\addlinespace[1pt]
Dense & 0\% & 77.74 & 62.82 & 75.98 & 69.06 & 74.54 & 46.33 & 44.20 & 64.38\\
\cmidrule(lr){1-10}
\algname{Wanda} & 30\% & 76.73 & 61.73 & 75.76 & 68.75 & \textbf{74.79} & 45.82 & 44.40 & 64.00\\
\algname{QK-Wanda-M} & 30\% & 77.46 & 62.45 & 75.99 & 68.67 & 74.75 & 46.16 & 44.20 & 64.24\\
\algname{QK-Wanda (shared)} & 30\% & 77.43 & 61.73 & 76.01 & \textbf{68.82} & \textbf{74.79} & 46.25 & 44.00 & 64.15\\
\algname{QK-Wanda (sep.)} & 30\% & 77.49 & \textbf{62.82} & 75.92 & 68.75 & 74.75 & 46.33 & 44.20 & \textbf{64.32}\\
\algname{QK-Wanda-MRC} & 30\% & 77.58 & 61.37 & \textbf{76.05} & 68.59 & 74.75 & 46.33 & \textbf{44.60} & 64.18\\
\algname{QK-Wanda-MC} & 30\% & \textbf{77.68} & 61.73 & 76.00 & 68.51 & 74.71 & \textbf{46.59} & 44.40 & 64.23\\
\addlinespace[1pt]
\algname{Wanda} & 50\% & 75.96 & \textbf{61.37} & 74.80 & 68.43 & 74.45 & 44.80 & 44.20 & 63.43\\
\algname{QK-Wanda-M} & 50\% & 76.57 & 59.57 & \textbf{75.99} & \textbf{68.90} & 75.17 & \textbf{46.42} & 44.80 & 63.92\\
\algname{QK-Wanda (shared)} & 50\% & 76.51 & 59.93 & 75.97 & 68.75 & 75.34 & 46.16 & \textbf{45.20} & 63.98\\
\algname{QK-Wanda (sep.)} & 50\% & 76.73 & 59.57 & 75.89 & 68.67 & 75.38 & 46.16 & \textbf{45.20} & 63.94\\
\algname{QK-Wanda-MRC} & 50\% & 76.73 & 60.65 & 75.87 & 68.82 & 75.17 & 45.56 & \textbf{45.20} & \textbf{64.00}\\
\algname{QK-Wanda-MC} & 50\% & \textbf{76.82} & 59.93 & 75.72 & \textbf{68.90} & \textbf{75.42} & 46.25 & 44.80 & 63.98\\
\addlinespace[1pt]
\algname{Wanda} & 80\% & 64.13 & 54.87 & 60.34 & 56.12 & 61.41 & 33.28 & 37.80 & 52.56\\
\algname{QK-Wanda-M} & 80\% & 72.48 & 53.79 & 69.23 & 63.61 & 66.33 & 37.80 & 41.40 & 57.81\\
\algname{QK-Wanda (shared)} & 80\% & 72.42 & 53.79 & 69.29 & 63.85 & 66.54 & 37.80 & 41.20 & 57.84\\
\algname{QK-Wanda (sep.)} & 80\% & \textbf{72.66} & 54.51 & \textbf{69.55} & 63.38 & 66.58 & 38.40 & 41.20 & 58.04\\
\algname{QK-Wanda-MRC} & 80\% & 72.11 & \textbf{55.96} & 69.04 & \textbf{64.17} & \textbf{67.97} & \textbf{39.85} & \textbf{42.60} & \textbf{58.81}\\
\algname{QK-Wanda-MC} & 80\% & 71.80 & 55.23 & 66.67 & 63.38 & 67.47 & 38.82 & 41.80 & 57.88\\
\bottomrule\end{qktabular}
\end{table}

\begin{table}[H]
\centering\fontsize{8}{9}\selectfont\setlength{\tabcolsep}{3pt}\renewcommand{\arraystretch}{1}
{\small Table~\ref{tab:zero_shot_llama_all} (continued).}\par\smallskip
\begin{qktabular}{10}{lccccccccc}
\toprule
Method & Sparsity & BoolQ & RTE & \shortstack{Hella\\Swag} & \shortstack{Wino\\Grande} & \shortstack{ARC-\\Easy} & \shortstack{ARC-\\Challenge} & \shortstack{OpenBook\\QA} & Avg.\\
\midrule
% model: llama2_13b
\multicolumn{10}{l}{\textbf{Llama-2-13B}}\\
\addlinespace[1pt]
Dense & 0\% & 80.61 & 65.34 & 79.38 & 72.14 & 77.53 & 49.15 & 45.20 & 67.05\\
\cmidrule(lr){1-10}
\algname{Wanda} & 30\% & 80.37 & 63.18 & 79.33 & 71.98 & 77.48 & \textbf{49.49} & 45.60 & 66.77\\
\algname{QK-Wanda-M} & 30\% & 80.73 & 63.90 & 79.42 & 72.22 & \textbf{77.57} & 48.98 & 45.60 & 66.92\\
\algname{QK-Wanda (shared)} & 30\% & 80.64 & 63.90 & 79.41 & 71.90 & 77.48 & 48.98 & 45.60 & 66.84\\
\algname{QK-Wanda (sep.)} & 30\% & 80.73 & 63.54 & \textbf{79.43} & 72.14 & \textbf{77.57} & 49.23 & 45.40 & 66.86\\
\algname{QK-Wanda-MRC} & 30\% & 80.89 & \textbf{64.26} & 79.33 & \textbf{72.38} & 77.53 & 49.06 & 45.40 & 66.98\\
\algname{QK-Wanda-MC} & 30\% & \textbf{80.92} & 63.90 & 79.29 & \textbf{72.38} & 77.48 & 49.15 & \textbf{45.80} & \textbf{66.99}\\
\addlinespace[1pt]
\algname{Wanda} & 50\% & 80.12 & 63.18 & 78.69 & 71.43 & 77.40 & 48.46 & 45.00 & 66.33\\
\algname{QK-Wanda-M} & 50\% & 80.98 & 63.90 & 79.39 & 71.03 & 76.98 & 48.89 & 46.00 & 66.74\\
\algname{QK-Wanda (shared)} & 50\% & 80.98 & 63.90 & 79.37 & 71.19 & 77.06 & 48.72 & 45.40 & 66.66\\
\algname{QK-Wanda (sep.)} & 50\% & 81.07 & 64.26 & \textbf{79.44} & 71.19 & 77.23 & 48.63 & 45.60 & 66.77\\
\algname{QK-Wanda-MRC} & 50\% & \textbf{81.22} & \textbf{65.34} & 79.34 & \textbf{72.22} & \textbf{77.44} & \textbf{48.98} & 45.80 & \textbf{67.19}\\
\algname{QK-Wanda-MC} & 50\% & \textbf{81.22} & 64.26 & 79.19 & \textbf{72.22} & 77.23 & 48.29 & \textbf{46.60} & 67.00\\
\addlinespace[1pt]
\algname{Wanda} & 80\% & 67.00 & 53.43 & 67.60 & 57.93 & 67.55 & 39.76 & 41.20 & 56.35\\
\algname{QK-Wanda-M} & 80\% & 79.17 & 62.82 & 75.35 & 71.35 & 71.59 & 43.52 & 44.80 & 64.09\\
\algname{QK-Wanda (shared)} & 80\% & 79.20 & 62.82 & 75.43 & 71.19 & 71.55 & 43.86 & 44.80 & 64.12\\
\algname{QK-Wanda (sep.)} & 80\% & 79.36 & \textbf{63.90} & \textbf{75.65} & 71.59 & 71.89 & 43.34 & 44.60 & 64.33\\
\algname{QK-Wanda-MRC} & 80\% & \textbf{79.88} & 63.18 & 75.20 & \textbf{71.74} & \textbf{72.98} & \textbf{44.80} & \textbf{46.80} & \textbf{64.94}\\
\algname{QK-Wanda-MC} & 80\% & 79.48 & 62.45 & 73.89 & 70.48 & 72.81 & 44.03 & \textbf{46.80} & 64.28\\
\bottomrule\end{qktabular}
\end{table}

\begin{table}[H]
\centering\fontsize{8}{9}\selectfont\setlength{\tabcolsep}{3pt}\renewcommand{\arraystretch}{1}
{\small Table~\ref{tab:zero_shot_llama_all} (continued).}\par\smallskip
\begin{qktabular}{10}{lccccccccc}
\toprule
Method & Sparsity & BoolQ & RTE & \shortstack{Hella\\Swag} & \shortstack{Wino\\Grande} & \shortstack{ARC-\\Easy} & \shortstack{ARC-\\Challenge} & \shortstack{OpenBook\\QA} & Avg.\\
\midrule
% model: llama3_1b
\multicolumn{10}{l}{\textbf{Llama-3.2-1B}}\\
\addlinespace[1pt]
Dense & 0\% & 64.01 & 56.68 & 63.67 & 60.62 & 60.44 & 36.35 & 37.20 & 54.14\\
\cmidrule(lr){1-10}
\algname{Wanda} & 30\% & 64.40 & \textbf{60.29} & 63.49 & \textbf{61.09} & 60.31 & 36.86 & 37.00 & \textbf{54.78}\\
\algname{QK-Wanda-M} & 30\% & 64.37 & 57.40 & 63.65 & 60.30 & 60.40 & 36.43 & \textbf{37.40} & 54.28\\
\algname{QK-Wanda (shared)} & 30\% & 64.22 & 56.68 & \textbf{63.66} & 60.14 & \textbf{60.65} & 36.52 & \textbf{37.40} & 54.18\\
\algname{QK-Wanda (sep.)} & 30\% & 64.10 & 57.40 & 63.49 & 59.83 & 60.61 & 36.77 & 37.00 & 54.17\\
\algname{QK-Wanda-MRC} & 30\% & 64.01 & 57.40 & 63.60 & 60.38 & 59.93 & 36.60 & \textbf{37.40} & 54.19\\
\algname{QK-Wanda-MC} & 30\% & \textbf{64.92} & 57.76 & 63.47 & 60.38 & 60.06 & \textbf{37.03} & \textbf{37.40} & 54.43\\
\addlinespace[1pt]
\algname{Wanda} & 50\% & 63.09 & \textbf{63.54} & 61.78 & \textbf{59.91} & \textbf{59.22} & 35.84 & 36.20 & \textbf{54.22}\\
\algname{QK-Wanda-M} & 50\% & 63.85 & 57.76 & 62.78 & 59.35 & 58.67 & 35.58 & 36.60 & 53.51\\
\algname{QK-Wanda (shared)} & 50\% & 63.82 & 58.84 & \textbf{62.79} & 59.19 & 58.67 & 35.84 & \textbf{37.00} & 53.74\\
\algname{QK-Wanda (sep.)} & 50\% & \textbf{64.13} & 55.60 & 61.53 & 58.96 & 58.71 & \textbf{35.92} & 36.80 & 53.09\\
\algname{QK-Wanda-MRC} & 50\% & 59.82 & 54.51 & 58.94 & 57.70 & 56.90 & 33.02 & 36.60 & 51.07\\
\algname{QK-Wanda-MC} & 50\% & 62.84 & 59.57 & 60.65 & 58.48 & 57.53 & 34.98 & 36.40 & 52.92\\
\addlinespace[1pt]
\algname{Wanda} & 80\% & 59.60 & 51.99 & \textbf{41.29} & \textbf{53.12} & \textbf{45.37} & 25.43 & 27.60 & \textbf{43.48}\\
\algname{QK-Wanda-M} & 80\% & 61.07 & 51.62 & 37.29 & 51.22 & 41.71 & 25.34 & \textbf{29.40} & 42.52\\
\algname{QK-Wanda (shared)} & 80\% & 61.22 & 51.62 & 37.08 & 51.46 & 41.41 & \textbf{25.94} & 28.60 & 42.48\\
\algname{QK-Wanda (sep.)} & 80\% & \textbf{61.83} & 51.26 & 33.40 & 49.96 & 38.55 & 23.04 & 25.20 & 40.46\\
\algname{QK-Wanda-MRC} & 80\% & 60.28 & \textbf{53.07} & 29.84 & 51.78 & 33.88 & 21.25 & 25.20 & 39.33\\
\algname{QK-Wanda-MC} & 80\% & 59.79 & 46.21 & 30.06 & 50.59 & 37.50 & 21.50 & 27.40 & 39.01\\
\bottomrule\end{qktabular}
\end{table}

\begin{table}[H]
\centering\fontsize{8}{9}\selectfont\setlength{\tabcolsep}{3pt}\renewcommand{\arraystretch}{1}
{\small Table~\ref{tab:zero_shot_llama_all} (continued).}\par\smallskip
\begin{qktabular}{10}{lccccccccc}
\toprule
Method & Sparsity & BoolQ & RTE & \shortstack{Hella\\Swag} & \shortstack{Wino\\Grande} & \shortstack{ARC-\\Easy} & \shortstack{ARC-\\Challenge} & \shortstack{OpenBook\\QA} & Avg.\\
\midrule
% model: llama3_3b
\multicolumn{10}{l}{\textbf{Llama-3.2-3B}}\\
\addlinespace[1pt]
Dense & 0\% & 73.24 & 54.51 & 73.59 & 70.01 & 71.72 & 46.08 & 43.00 & 61.74\\
\cmidrule(lr){1-10}
\algname{Wanda} & 30\% & \textbf{73.49} & 53.43 & 73.40 & 69.22 & 71.63 & 45.65 & 42.60 & 61.35\\
\algname{QK-Wanda-M} & 30\% & 73.21 & 54.15 & 73.56 & 69.30 & 71.63 & 46.42 & 42.60 & 61.55\\
\algname{QK-Wanda (shared)} & 30\% & 73.30 & 54.15 & \textbf{73.59} & \textbf{69.38} & 71.63 & 46.16 & \textbf{42.80} & 61.57\\
\algname{QK-Wanda (sep.)} & 30\% & 73.21 & \textbf{54.51} & 73.49 & 69.14 & \textbf{71.80} & \textbf{46.50} & \textbf{42.80} & \textbf{61.64}\\
\algname{QK-Wanda-MRC} & 30\% & 73.09 & \textbf{54.51} & 73.57 & 69.22 & 71.51 & 46.08 & 42.40 & 61.48\\
\algname{QK-Wanda-MC} & 30\% & 73.30 & 53.43 & 73.28 & \textbf{69.38} & 71.76 & 46.08 & 42.20 & 61.35\\
\addlinespace[1pt]
\algname{Wanda} & 50\% & 72.72 & 57.04 & 72.25 & 69.06 & 69.53 & 44.80 & \textbf{42.40} & 61.11\\
\algname{QK-Wanda-M} & 50\% & 72.78 & 54.87 & \textbf{73.31} & 68.59 & 70.16 & 45.82 & 41.40 & 60.99\\
\algname{QK-Wanda (shared)} & 50\% & 72.54 & 54.51 & 73.28 & 68.82 & 70.12 & 45.82 & 41.80 & 60.98\\
\algname{QK-Wanda (sep.)} & 50\% & \textbf{73.06} & 53.79 & 72.77 & 69.06 & \textbf{70.50} & \textbf{45.90} & 42.20 & 61.04\\
\algname{QK-Wanda-MRC} & 50\% & 71.59 & \textbf{62.09} & 72.82 & \textbf{69.22} & 70.03 & 44.88 & 41.40 & \textbf{61.72}\\
\algname{QK-Wanda-MC} & 50\% & 72.20 & 54.87 & 72.21 & 68.27 & 69.36 & 43.86 & 41.40 & 60.31\\
\addlinespace[1pt]
\algname{Wanda} & 80\% & 63.00 & \textbf{60.29} & 60.12 & \textbf{58.88} & 57.83 & \textbf{37.29} & \textbf{37.20} & \textbf{53.51}\\
\algname{QK-Wanda-M} & 80\% & 65.66 & 48.01 & 58.25 & 57.30 & 53.41 & 33.11 & 35.40 & 50.16\\
\algname{QK-Wanda (shared)} & 80\% & \textbf{66.09} & 48.01 & 58.79 & 57.30 & 53.24 & 33.19 & 35.20 & 50.26\\
\algname{QK-Wanda (sep.)} & 80\% & 64.01 & 48.01 & 56.78 & 55.49 & 54.67 & 33.11 & 32.60 & 49.24\\
\algname{QK-Wanda-MRC} & 80\% & 63.55 & 57.40 & \textbf{60.61} & 58.64 & \textbf{59.47} & 35.49 & 35.00 & 52.88\\
\algname{QK-Wanda-MC} & 80\% & 60.64 & 49.10 & 47.81 & 54.46 & 53.45 & 31.23 & 33.40 & 47.16\\
\bottomrule\end{qktabular}
\end{table}

\begin{table}[H]
\centering\fontsize{8}{9}\selectfont\setlength{\tabcolsep}{3pt}\renewcommand{\arraystretch}{1}
{\small Table~\ref{tab:zero_shot_llama_all} (continued).}\par\smallskip
\begin{qktabular}{10}{lccccccccc}
\toprule
Method & Sparsity & BoolQ & RTE & \shortstack{Hella\\Swag} & \shortstack{Wino\\Grande} & \shortstack{ARC-\\Easy} & \shortstack{ARC-\\Challenge} & \shortstack{OpenBook\\QA} & Avg.\\
\midrule
% model: llama3_8b
\multicolumn{10}{l}{\textbf{Llama-3.1-8B}}\\
\addlinespace[1pt]
Dense & 0\% & 82.11 & 69.68 & 78.88 & 73.64 & 81.10 & 53.50 & 44.80 & 69.10\\
\cmidrule(lr){1-10}
\algname{Wanda} & 30\% & 81.80 & 68.23 & 78.60 & 73.09 & 81.10 & 53.58 & \textbf{45.20} & 68.80\\
\algname{QK-Wanda-M} & 30\% & 82.32 & 70.76 & 79.05 & 73.64 & \textbf{81.19} & \textbf{53.67} & \textbf{45.20} & 69.40\\
\algname{QK-Wanda (shared)} & 30\% & 82.29 & 70.04 & 79.02 & 73.80 & 81.10 & 53.58 & \textbf{45.20} & 69.29\\
\algname{QK-Wanda (sep.)} & 30\% & 82.48 & 70.40 & 79.03 & 73.64 & 80.93 & 53.50 & \textbf{45.20} & 69.31\\
\algname{QK-Wanda-MRC} & 30\% & 82.39 & 70.40 & 79.01 & 73.48 & 80.77 & 53.41 & 44.80 & 69.18\\
\algname{QK-Wanda-MC} & 30\% & \textbf{82.75} & \textbf{71.48} & \textbf{79.08} & \textbf{74.35} & 81.06 & 53.50 & 45.00 & \textbf{69.60}\\
\addlinespace[1pt]
\algname{Wanda} & 50\% & 81.19 & 69.68 & 77.76 & 72.06 & 80.09 & 53.41 & 44.80 & 68.43\\
\algname{QK-Wanda-M} & 50\% & 82.54 & 71.84 & \textbf{79.13} & \textbf{73.48} & 80.56 & 53.58 & 44.60 & 69.39\\
\algname{QK-Wanda (shared)} & 50\% & 82.54 & 72.20 & 79.05 & 73.32 & \textbf{80.89} & 53.33 & 44.60 & 69.42\\
\algname{QK-Wanda (sep.)} & 50\% & 82.42 & 71.12 & 78.98 & 73.01 & 80.68 & 53.33 & 44.60 & 69.16\\
\algname{QK-Wanda-MRC} & 50\% & \textbf{82.66} & \textbf{72.92} & 78.91 & \textbf{73.48} & 79.80 & \textbf{54.52} & 45.00 & \textbf{69.61}\\
\algname{QK-Wanda-MC} & 50\% & 82.26 & 70.04 & 78.68 & 72.85 & 80.60 & 53.24 & \textbf{45.40} & 69.01\\
\addlinespace[1pt]
\algname{Wanda} & 80\% & 68.26 & 61.37 & 68.65 & 66.14 & 70.24 & 41.81 & 41.00 & 59.64\\
\algname{QK-Wanda-M} & 80\% & 66.64 & 59.93 & \textbf{73.47} & 66.22 & 70.37 & 43.94 & 42.20 & 60.40\\
\algname{QK-Wanda (shared)} & 80\% & 67.49 & 59.93 & 73.31 & 66.30 & \textbf{70.83} & 44.03 & \textbf{42.40} & 60.61\\
\algname{QK-Wanda (sep.)} & 80\% & 67.00 & 58.48 & 71.15 & 64.25 & 69.11 & 41.64 & 41.00 & 58.95\\
\algname{QK-Wanda-MRC} & 80\% & \textbf{70.86} & \textbf{64.98} & 72.89 & \textbf{66.85} & 70.62 & \textbf{44.28} & 42.20 & \textbf{61.81}\\
\algname{QK-Wanda-MC} & 80\% & 66.33 & 59.21 & 69.15 & 65.04 & 70.71 & 43.09 & 40.40 & 59.13\\
\bottomrule\end{qktabular}
\end{table}

\FloatBarrier
% Checkpoint-sized continuations; data retained from the original exporter.
\begin{table}[H]
\centering\fontsize{8}{9}\selectfont\setlength{\tabcolsep}{3pt}\renewcommand{\arraystretch}{1}
\caption{\algname{QK-Wanda} is shown with shared and separate (sep.) budgets; dashes mark variants not evaluated in this comparison. Zero-shot accuracy (\%) on Qwen2.5 base checkpoints with QK-only unstructured pruning. Each model section contains its dense reference and original row-wise \algname{Wanda} versus shared-budget \algname{QK-Wanda-M}. Centered \algname{QK-Wanda-MC} is additionally reported for 0.5B/1.5B/3B/7B (Appendix~\ref{app:centered}). Task metrics follow Appendix~\ref{app:zero_shot_protocol}; Avg. is their unweighted mean. Bold marks the highest displayed accuracy within each model and sparsity, including ties.}
\label{tab:zero_shot_qwen25_all}
\begin{qktabular}{10}{lccccccccc}
\toprule
Method & Sparsity & BoolQ & RTE & \shortstack{Hella\\Swag} & \shortstack{Wino\\Grande} & \shortstack{ARC-\\Easy} & \shortstack{ARC-\\Challenge} & \shortstack{OpenBook\\QA} & Avg.\\
\midrule
% model: qwen25_0p5b
\multicolumn{10}{l}{\textbf{Qwen2.5-0.5B}}\\
\addlinespace[1pt]
Dense & 0\% & 62.42 & 58.84 & 52.15 & 56.27 & 58.71 & 32.42 & 35.20 & 50.86\\
\cmidrule(lr){1-10}
\algname{Wanda} & 30\% & 61.10 & 58.48 & 51.95 & 56.04 & 58.29 & \textbf{32.51} & 35.00 & 50.48\\
\algname{QK-Wanda-M} & 30\% & 61.71 & 59.21 & 52.16 & 57.46 & 58.50 & 32.34 & 35.40 & 50.97\\
\algname{QK-Wanda (shared)} & 30\% & \textbf{62.11} & \textbf{59.57} & 52.13 & 56.99 & 58.59 & 32.25 & 35.20 & \textbf{50.98}\\
\algname{QK-Wanda (sep.)} & 30\% & 61.65 & 58.12 & 52.10 & \textbf{57.54} & 58.42 & 31.91 & 35.40 & 50.73\\
\algname{QK-Wanda-MRC} & 30\% & 61.93 & 58.84 & \textbf{52.21} & 56.67 & 58.46 & 32.00 & \textbf{35.60} & 50.82\\
\algname{QK-Wanda-MC} & 30\% & 61.07 & 56.32 & 52.09 & \textbf{57.54} & \textbf{58.80} & 31.74 & 35.40 & 50.42\\
\addlinespace[1pt]
\algname{Wanda} & 50\% & 60.86 & 56.32 & 51.63 & \textbf{56.99} & \textbf{61.36} & \textbf{33.11} & 34.60 & \textbf{50.69}\\
\algname{QK-Wanda-M} & 50\% & \textbf{62.51} & 57.76 & 52.00 & 55.56 & 59.13 & 32.00 & 35.00 & 50.57\\
\algname{QK-Wanda (shared)} & 50\% & 62.48 & 57.40 & 51.90 & 55.64 & 59.47 & 32.25 & 34.80 & 50.56\\
\algname{QK-Wanda (sep.)} & 50\% & 61.35 & \textbf{58.12} & 51.81 & 55.33 & 59.89 & 32.34 & \textbf{35.40} & 50.61\\
\algname{QK-Wanda-MRC} & 50\% & 60.86 & 57.76 & \textbf{52.02} & 55.49 & 58.80 & 32.59 & 35.20 & 50.39\\
\algname{QK-Wanda-MC} & 50\% & 61.44 & 56.68 & 51.95 & 55.80 & 58.29 & 32.08 & 35.00 & 50.18\\
\addlinespace[1pt]
\algname{Wanda} & 80\% & 56.54 & \textbf{60.29} & 47.49 & 53.67 & 54.17 & 28.24 & 31.40 & 47.40\\
\algname{QK-Wanda-M} & 80\% & 60.24 & 49.10 & 49.10 & 53.59 & 54.67 & 29.95 & 32.40 & 47.01\\
\algname{QK-Wanda (shared)} & 80\% & 60.61 & 50.90 & 49.11 & 53.83 & 54.42 & 29.95 & 32.60 & 47.35\\
\algname{QK-Wanda (sep.)} & 80\% & 49.51 & 49.46 & 47.55 & \textbf{54.62} & 54.00 & 28.33 & \textbf{33.80} & 45.32\\
\algname{QK-Wanda-MRC} & 80\% & \textbf{62.32} & 53.43 & \textbf{50.02} & 52.64 & \textbf{58.80} & \textbf{31.57} & 33.60 & \textbf{48.91}\\
\algname{QK-Wanda-MC} & 80\% & 57.43 & 51.62 & 49.27 & 54.06 & 55.09 & 31.48 & 32.60 & 47.37\\
\bottomrule\end{qktabular}
\end{table}

\begin{table}[H]
\centering\fontsize{8}{9}\selectfont\setlength{\tabcolsep}{3pt}\renewcommand{\arraystretch}{1}
{\small Table~\ref{tab:zero_shot_qwen25_all} (continued).}\par\smallskip
\begin{qktabular}{10}{lccccccccc}
\toprule
Method & Sparsity & BoolQ & RTE & \shortstack{Hella\\Swag} & \shortstack{Wino\\Grande} & \shortstack{ARC-\\Easy} & \shortstack{ARC-\\Challenge} & \shortstack{OpenBook\\QA} & Avg.\\
\midrule
% model: qwen25_1p5b
\multicolumn{10}{l}{\textbf{Qwen2.5-1.5B}}\\
\addlinespace[1pt]
Dense & 0\% & 72.94 & 70.04 & 67.77 & 63.30 & 71.51 & 45.14 & 40.80 & 61.64\\
\cmidrule(lr){1-10}
\algname{Wanda} & 30\% & 73.09 & 68.95 & 67.82 & 63.54 & \textbf{71.13} & 44.62 & \textbf{41.00} & 61.45\\
\algname{QK-Wanda-M} & 30\% & 73.09 & 70.76 & 67.75 & 63.85 & 70.62 & \textbf{44.97} & 40.80 & 61.69\\
\algname{QK-Wanda (shared)} & 30\% & 72.94 & \textbf{71.48} & 67.72 & \textbf{64.17} & 70.88 & 44.80 & 40.80 & \textbf{61.82}\\
\algname{QK-Wanda (sep.)} & 30\% & 72.48 & 70.40 & \textbf{67.86} & 63.85 & 70.79 & 44.80 & \textbf{41.00} & 61.60\\
\algname{QK-Wanda-MRC} & 30\% & \textbf{73.55} & 68.59 & \textbf{67.86} & 63.61 & 70.88 & 44.88 & \textbf{41.00} & 61.48\\
\algname{QK-Wanda-MC} & 30\% & 73.06 & 70.04 & 67.69 & 63.85 & 70.24 & 44.28 & 40.40 & 61.37\\
\addlinespace[1pt]
\algname{Wanda} & 50\% & \textbf{73.27} & \textbf{69.68} & 67.07 & 62.27 & \textbf{73.32} & 44.80 & 40.60 & \textbf{61.57}\\
\algname{QK-Wanda-M} & 50\% & 71.50 & 67.15 & 67.71 & 63.54 & 69.44 & 43.94 & \textbf{40.80} & 60.58\\
\algname{QK-Wanda (shared)} & 50\% & 71.38 & 67.15 & 67.68 & 63.85 & 69.32 & 43.69 & \textbf{40.80} & 60.55\\
\algname{QK-Wanda (sep.)} & 50\% & 71.10 & 68.95 & 67.50 & 63.14 & 70.20 & 44.03 & 40.40 & 60.76\\
\algname{QK-Wanda-MRC} & 50\% & 71.71 & 68.95 & \textbf{67.73} & 64.01 & 70.24 & \textbf{44.88} & 40.00 & 61.08\\
\algname{QK-Wanda-MC} & 50\% & 71.10 & 68.59 & 67.46 & \textbf{64.33} & 69.23 & 43.34 & 40.20 & 60.61\\
\addlinespace[1pt]
\algname{Wanda} & 80\% & 58.81 & 60.29 & 61.47 & 61.25 & \textbf{70.54} & 39.42 & 38.20 & 55.71\\
\algname{QK-Wanda-M} & 80\% & \textbf{68.59} & 61.01 & 63.19 & 60.93 & 67.26 & 39.85 & 38.60 & 57.06\\
\algname{QK-Wanda (shared)} & 80\% & 67.71 & 62.45 & 63.08 & 60.93 & 68.06 & \textbf{40.36} & 38.60 & \textbf{57.31}\\
\algname{QK-Wanda (sep.)} & 80\% & 64.40 & 62.09 & 61.04 & 59.67 & 68.64 & 39.76 & \textbf{39.20} & 56.40\\
\algname{QK-Wanda-MRC} & 80\% & 67.09 & 63.90 & \textbf{64.51} & \textbf{61.33} & 64.73 & 39.25 & 38.00 & 56.97\\
\algname{QK-Wanda-MC} & 80\% & 67.55 & \textbf{65.34} & 63.90 & 60.22 & 63.30 & 38.74 & 38.40 & 56.78\\
\bottomrule\end{qktabular}
\end{table}

\begin{table}[H]
\centering\fontsize{8}{9}\selectfont\setlength{\tabcolsep}{3pt}\renewcommand{\arraystretch}{1}
{\small Table~\ref{tab:zero_shot_qwen25_all} (continued).}\par\smallskip
\begin{qktabular}{10}{lccccccccc}
\toprule
Method & Sparsity & BoolQ & RTE & \shortstack{Hella\\Swag} & \shortstack{Wino\\Grande} & \shortstack{ARC-\\Easy} & \shortstack{ARC-\\Challenge} & \shortstack{OpenBook\\QA} & Avg.\\
\midrule
% model: qwen25_3b
\multicolumn{10}{l}{\textbf{Qwen2.5-3B}}\\
\addlinespace[1pt]
Dense & 0\% & 77.22 & 75.45 & 73.53 & 68.35 & 73.11 & 47.35 & 42.00 & 65.29\\
\cmidrule(lr){1-10}
\algname{Wanda} & 30\% & 77.13 & 73.65 & 73.46 & \textbf{69.38} & 72.90 & \textbf{47.61} & 42.40 & 65.22\\
\algname{QK-Wanda-M} & 30\% & \textbf{77.58} & 74.01 & 73.60 & 68.59 & 73.82 & 47.35 & 42.20 & 65.31\\
\algname{QK-Wanda (shared)} & 30\% & 77.37 & \textbf{74.73} & 73.57 & 68.27 & 73.78 & 47.44 & 42.20 & 65.34\\
\algname{QK-Wanda (sep.)} & 30\% & 77.49 & 74.01 & 73.57 & 68.67 & 73.06 & 47.01 & \textbf{42.60} & 65.20\\
\algname{QK-Wanda-MRC} & 30\% & 76.85 & 74.37 & \textbf{73.67} & 69.14 & \textbf{73.91} & \textbf{47.61} & \textbf{42.60} & \textbf{65.45}\\
\algname{QK-Wanda-MC} & 30\% & 76.97 & \textbf{74.73} & 73.58 & 68.35 & 73.40 & 47.44 & \textbf{42.60} & 65.30\\
\addlinespace[1pt]
\algname{Wanda} & 50\% & 77.06 & 72.20 & 72.88 & 68.59 & 73.15 & 46.93 & 42.20 & 64.72\\
\algname{QK-Wanda-M} & 50\% & 77.95 & 71.48 & 73.28 & 68.59 & \textbf{74.71} & 47.53 & 42.40 & 65.13\\
\algname{QK-Wanda (shared)} & 50\% & 77.80 & 71.12 & 73.34 & 68.75 & 74.62 & \textbf{47.95} & 42.20 & 65.11\\
\algname{QK-Wanda (sep.)} & 50\% & \textbf{78.10} & \textbf{72.92} & 73.24 & \textbf{68.82} & 74.20 & 47.70 & 42.40 & \textbf{65.34}\\
\algname{QK-Wanda-MRC} & 50\% & 77.40 & \textbf{72.92} & \textbf{73.60} & 68.67 & 73.02 & 47.27 & \textbf{42.80} & 65.10\\
\algname{QK-Wanda-MC} & 50\% & 77.28 & 72.20 & 73.25 & 68.51 & 72.69 & 47.27 & 42.40 & 64.80\\
\addlinespace[1pt]
\algname{Wanda} & 80\% & 71.04 & \textbf{77.26} & 68.01 & \textbf{65.43} & \textbf{73.06} & 43.77 & 40.20 & 62.68\\
\algname{QK-Wanda-M} & 80\% & 73.73 & 68.59 & 70.24 & 64.01 & 69.53 & 44.20 & 40.20 & 61.50\\
\algname{QK-Wanda (shared)} & 80\% & 74.65 & 67.87 & 70.25 & 63.61 & 69.23 & 43.86 & \textbf{41.00} & 61.50\\
\algname{QK-Wanda (sep.)} & 80\% & 72.32 & 67.15 & 67.53 & 61.33 & 70.45 & 43.09 & 40.00 & 60.27\\
\algname{QK-Wanda-MRC} & 80\% & 75.32 & 68.95 & \textbf{70.76} & 65.04 & 71.30 & \textbf{45.14} & \textbf{41.00} & 62.50\\
\algname{QK-Wanda-MC} & 80\% & \textbf{76.15} & 72.92 & 70.09 & 64.48 & 71.21 & 43.86 & 40.60 & \textbf{62.76}\\
\bottomrule\end{qktabular}
\end{table}

\begin{table}[H]
\centering\fontsize{8}{9}\selectfont\setlength{\tabcolsep}{3pt}\renewcommand{\arraystretch}{1}
{\small Table~\ref{tab:zero_shot_qwen25_all} (continued).}\par\smallskip
\begin{qktabular}{10}{lccccccccc}
\toprule
Method & Sparsity & BoolQ & RTE & \shortstack{Hella\\Swag} & \shortstack{Wino\\Grande} & \shortstack{ARC-\\Easy} & \shortstack{ARC-\\Challenge} & \shortstack{OpenBook\\QA} & Avg.\\
\midrule
% model: qwen25_7b
\multicolumn{10}{l}{\textbf{Qwen2.5-7B}}\\
\addlinespace[1pt]
Dense & 0\% & 84.65 & 81.59 & 78.94 & 73.09 & 77.36 & 51.11 & 47.20 & 70.56\\
\cmidrule(lr){1-10}
\algname{Wanda} & 30\% & \textbf{84.53} & \textbf{81.95} & 78.90 & 73.16 & 77.27 & 50.77 & 46.60 & 70.45\\
\algname{QK-Wanda-M} & 30\% & 84.46 & 81.59 & 78.99 & 72.93 & 77.31 & 50.77 & 47.00 & 70.44\\
\algname{QK-Wanda (shared)} & 30\% & \textbf{84.53} & 81.59 & 79.00 & 72.93 & \textbf{77.40} & \textbf{51.11} & 47.20 & 70.54\\
\algname{QK-Wanda (sep.)} & 30\% & 84.37 & 81.59 & 78.96 & 72.85 & 77.36 & 51.02 & 47.20 & 70.48\\
\algname{QK-Wanda-MRC} & 30\% & 84.28 & 81.59 & \textbf{79.01} & \textbf{73.48} & 77.36 & \textbf{51.11} & 47.00 & \textbf{70.55}\\
\algname{QK-Wanda-MC} & 30\% & 84.37 & 81.59 & 78.93 & 73.01 & 77.06 & 50.77 & \textbf{47.40} & 70.45\\
\addlinespace[1pt]
\algname{Wanda} & 50\% & 84.74 & \textbf{80.87} & 78.85 & 71.59 & 76.68 & 50.51 & \textbf{47.00} & 70.03\\
\algname{QK-Wanda-M} & 50\% & 84.62 & 80.51 & 79.07 & 71.51 & 76.98 & \textbf{51.28} & 46.20 & 70.02\\
\algname{QK-Wanda (shared)} & 50\% & 84.46 & 80.51 & 79.06 & 71.74 & \textbf{77.53} & 51.19 & 46.00 & 70.07\\
\algname{QK-Wanda (sep.)} & 50\% & \textbf{84.80} & \textbf{80.87} & \textbf{79.10} & 71.67 & 76.98 & 50.77 & 46.40 & \textbf{70.08}\\
\algname{QK-Wanda-MRC} & 50\% & 84.19 & 80.14 & 78.91 & 72.38 & 77.19 & 51.19 & 46.00 & 70.00\\
\algname{QK-Wanda-MC} & 50\% & 84.37 & 80.51 & 78.94 & \textbf{72.61} & 75.80 & 50.26 & 46.00 & 69.78\\
\addlinespace[1pt]
\algname{Wanda} & 80\% & 82.94 & 76.17 & 77.66 & 69.46 & 72.47 & 47.18 & 45.60 & 67.36\\
\algname{QK-Wanda-M} & 80\% & 82.91 & \textbf{77.98} & \textbf{78.67} & 68.90 & 73.27 & 49.06 & 45.40 & 68.03\\
\algname{QK-Wanda (shared)} & 80\% & 83.00 & \textbf{77.98} & 78.52 & 69.22 & 72.98 & 48.98 & 45.20 & 67.98\\
\algname{QK-Wanda (sep.)} & 80\% & 82.20 & 75.81 & 78.18 & 66.54 & 71.51 & 48.63 & \textbf{46.00} & 66.98\\
\algname{QK-Wanda-MRC} & 80\% & 83.00 & 77.26 & 78.62 & \textbf{70.48} & \textbf{74.96} & \textbf{49.32} & 45.60 & \textbf{68.46}\\
\algname{QK-Wanda-MC} & 80\% & \textbf{83.03} & 77.62 & 78.45 & 68.43 & 74.12 & 49.06 & 45.00 & 67.96\\
\bottomrule\end{qktabular}
\end{table}

\begin{table}[H]
\centering\fontsize{8}{9}\selectfont\setlength{\tabcolsep}{3pt}\renewcommand{\arraystretch}{1}
{\small Table~\ref{tab:zero_shot_qwen25_all} (continued).}\par\smallskip
\begin{qktabular}{10}{lccccccccc}
\toprule
Method & Sparsity & BoolQ & RTE & \shortstack{Hella\\Swag} & \shortstack{Wino\\Grande} & \shortstack{ARC-\\Easy} & \shortstack{ARC-\\Challenge} & \shortstack{OpenBook\\QA} & Avg.\\
\midrule
% model: qwen25_14b
\multicolumn{10}{l}{\textbf{Qwen2.5-14B}}\\
\addlinespace[1pt]
Dense & 0\% & 85.26 & 80.14 & 82.88 & 74.98 & 79.12 & 58.87 & 45.20 & 72.35\\
\cmidrule(lr){1-10}
\algname{Wanda} & 30\% & 85.41 & 80.14 & 82.88 & \textbf{76.40} & 79.21 & 58.70 & \textbf{45.60} & 72.62\\
\algname{QK-Wanda-M} & 30\% & 85.54 & 80.14 & 82.96 & 75.06 & 79.59 & 59.47 & 45.40 & 72.59\\
\algname{QK-Wanda (shared)} & 30\% & 85.44 & 80.14 & 82.93 & 75.61 & 79.50 & \textbf{59.56} & 45.40 & \textbf{72.66}\\
\algname{QK-Wanda (sep.)} & 30\% & \textbf{85.72} & \textbf{80.51} & 82.90 & 75.61 & 79.25 & 58.70 & 45.40 & 72.58\\
\algname{QK-Wanda-MRC} & 30\% & 85.66 & 80.14 & \textbf{83.00} & 74.59 & \textbf{79.80} & 58.79 & 45.20 & 72.45\\
\addlinespace[1pt]
\algname{Wanda} & 50\% & 84.19 & \textbf{81.59} & 82.82 & 75.69 & 80.68 & 58.70 & 46.20 & 72.84\\
\algname{QK-Wanda-M} & 50\% & \textbf{85.93} & 79.42 & 82.92 & 76.09 & 80.89 & \textbf{59.22} & 46.20 & 72.95\\
\algname{QK-Wanda (shared)} & 50\% & 85.66 & 80.14 & 82.99 & 76.09 & 80.64 & 58.96 & 46.00 & 72.93\\
\algname{QK-Wanda (sep.)} & 50\% & 85.38 & 80.87 & 82.95 & 76.09 & 79.67 & 58.79 & \textbf{46.60} & 72.91\\
\algname{QK-Wanda-MRC} & 50\% & 85.41 & 79.78 & \textbf{83.04} & \textbf{76.56} & \textbf{81.14} & \textbf{59.22} & 46.20 & \textbf{73.05}\\
\addlinespace[1pt]
\algname{Wanda} & 80\% & 77.49 & \textbf{80.87} & 79.94 & 72.93 & 82.20 & 55.03 & 46.00 & 70.64\\
\algname{QK-Wanda-M} & 80\% & \textbf{85.84} & 78.34 & 81.71 & 74.19 & 81.48 & 57.42 & 48.00 & 72.43\\
\algname{QK-Wanda (shared)} & 80\% & 85.63 & 78.34 & \textbf{81.73} & 73.95 & 82.37 & \textbf{57.51} & \textbf{48.40} & \textbf{72.56}\\
\algname{QK-Wanda (sep.)} & 80\% & 83.85 & 77.26 & 81.16 & 73.09 & \textbf{82.83} & 56.74 & 47.20 & 71.73\\
\algname{QK-Wanda-MRC} & 80\% & 84.46 & 78.70 & 81.63 & \textbf{75.06} & 80.56 & 57.17 & 45.60 & 71.88\\
\bottomrule\end{qktabular}
\end{table}

\begin{table}[H]
\centering\fontsize{8}{9}\selectfont\setlength{\tabcolsep}{3pt}\renewcommand{\arraystretch}{1}
{\small Table~\ref{tab:zero_shot_qwen25_all} (continued).}\par\smallskip
\begin{qktabular}{10}{lccccccccc}
\toprule
Method & Sparsity & BoolQ & RTE & \shortstack{Hella\\Swag} & \shortstack{Wino\\Grande} & \shortstack{ARC-\\Easy} & \shortstack{ARC-\\Challenge} & \shortstack{OpenBook\\QA} & Avg.\\
\midrule
% model: qwen25_32b
\multicolumn{10}{l}{\textbf{Qwen2.5-32B}}\\
\addlinespace[1pt]
Dense & 0\% & 87.16 & 81.59 & 84.07 & 75.30 & 78.28 & 55.72 & 44.40 & 72.36\\
\cmidrule(lr){1-10}
\algname{Wanda} & 30\% & 87.31 & \textbf{81.23} & 84.06 & \textbf{75.61} & \textbf{78.28} & \textbf{56.48} & 44.40 & \textbf{72.48}\\
\algname{QK-Wanda-M} & 30\% & 87.40 & 80.87 & 84.02 & 75.30 & 78.20 & 56.23 & 44.20 & 72.32\\
\algname{QK-Wanda (shared)} & 30\% & 87.43 & 80.87 & 84.03 & 75.22 & 78.16 & 56.23 & 44.40 & 72.33\\
\algname{QK-Wanda (sep.)} & 30\% & 87.34 & 80.87 & 84.05 & 75.45 & 78.03 & 56.14 & 44.40 & 72.33\\
\algname{QK-Wanda-MRC} & 30\% & \textbf{87.65} & 80.87 & \textbf{84.11} & 74.82 & 78.20 & 56.14 & \textbf{44.80} & 72.37\\
\addlinespace[1pt]
\algname{Wanda} & 50\% & 87.13 & \textbf{82.31} & 83.80 & 75.85 & \textbf{78.07} & \textbf{56.48} & 45.20 & \textbf{72.69}\\
\algname{QK-Wanda-M} & 50\% & 87.19 & 80.14 & 83.89 & 75.85 & 77.69 & 56.23 & \textbf{45.80} & 72.40\\
\algname{QK-Wanda (shared)} & 50\% & 86.97 & 80.51 & 83.93 & 75.69 & 77.61 & 55.97 & 45.60 & 72.33\\
\algname{QK-Wanda (sep.)} & 50\% & 86.54 & 80.51 & \textbf{83.95} & \textbf{76.40} & 77.78 & 56.14 & 45.20 & 72.36\\
\algname{QK-Wanda-MRC} & 50\% & \textbf{87.49} & 80.87 & \textbf{83.95} & 75.77 & 78.03 & 56.06 & 44.80 & 72.42\\
\addlinespace[1pt]
\algname{Wanda} & 80\% & 84.19 & 80.51 & 81.75 & 73.95 & 75.55 & \textbf{54.27} & 45.40 & 70.80\\
\algname{QK-Wanda-M} & 80\% & 84.68 & \textbf{80.87} & 82.75 & 74.66 & 74.33 & 53.92 & 45.60 & 70.97\\
\algname{QK-Wanda (shared)} & 80\% & 84.50 & 80.51 & 82.75 & 74.66 & 74.16 & 53.67 & 45.80 & 70.86\\
\algname{QK-Wanda (sep.)} & 80\% & 84.40 & 77.98 & 82.54 & \textbf{76.09} & \textbf{75.88} & \textbf{54.27} & 46.00 & 71.02\\
\algname{QK-Wanda-MRC} & 80\% & \textbf{84.74} & 80.14 & \textbf{83.11} & 75.14 & 75.42 & 54.01 & \textbf{47.20} & \textbf{71.39}\\
\bottomrule\end{qktabular}
\end{table}

\FloatBarrier
% Generated by scripts/export_zero_shot_grouped.py.
\begin{table}[H]
\centering
\fontsize{8}{9}\selectfont
\setlength{\tabcolsep}{3pt}
\renewcommand{\arraystretch}{1}
\caption{Unmasked \algname{QK-Wanda} is shown with shared and separate (sep.) budgets; dashes mark variants not evaluated in this comparison. Zero-shot accuracy (\%) on Llama~2 and Llama~3.1 70B with QK-only unstructured pruning. Each model section contains its dense reference and original row-wise \algname{Wanda} versus shared-budget \algname{QK-Wanda-M}. Task metrics follow Appendix~\ref{app:zero_shot_protocol}; Avg. is their unweighted mean. Bold marks the highest displayed accuracy within each model and sparsity, including ties.}
\label{tab:zero_shot_llama70b}
\begin{qktabular}{10}{lccccccccc}
\toprule
Method & Sparsity & BoolQ & RTE & \shortstack{Hella\\Swag} & \shortstack{Wino\\Grande} & \shortstack{ARC-\\Easy} & \shortstack{ARC-\\Challenge} & \shortstack{OpenBook\\QA} & Avg.\\
\midrule
% model: llama2_70b
\multicolumn{10}{l}{\textbf{Llama-2-70B}}\\
\addlinespace[1pt]
Dense & 0\% & 83.76 & 67.87 & 83.81 & 77.98 & 80.98 & 57.34 & 48.80 & 71.50\\
\cmidrule(lr){1-10}
\algname{Wanda} & 30\% & 83.43 & 68.95 & 83.72 & 77.82 & 80.72 & \textbf{57.42} & 48.40 & 71.50\\
\algname{QK-Wanda-M} & 30\% & 83.64 & 68.95 & 83.79 & 77.90 & \textbf{81.02} & \textbf{57.42} & \textbf{48.60} & 71.62\\
\algname{QK-Wanda (shared)} & 30\% & 83.64 & 68.59 & 83.81 & 77.90 & \textbf{81.02} & \textbf{57.42} & 48.40 & 71.54\\
\algname{QK-Wanda (sep.)} & 30\% & \textbf{83.73} & 68.95 & 83.78 & 77.82 & \textbf{81.02} & \textbf{57.42} & 48.40 & 71.59\\
\algname{QK-Wanda-MRC} & 30\% & 83.64 & \textbf{69.31} & \textbf{83.84} & \textbf{77.98} & 80.98 & \textbf{57.42} & 48.40 & \textbf{71.65}\\
\addlinespace[1pt]
\algname{Wanda} & 50\% & 82.63 & 69.31 & 83.38 & \textbf{77.98} & 80.89 & 57.00 & 48.40 & 71.37\\
\algname{QK-Wanda-M} & 50\% & 83.94 & 68.95 & \textbf{83.79} & 77.90 & 81.02 & 57.17 & \textbf{48.60} & 71.62\\
\algname{QK-Wanda (shared)} & 50\% & 83.94 & 68.59 & 83.74 & 77.82 & 81.06 & 57.42 & \textbf{48.60} & 71.60\\
\algname{QK-Wanda (sep.)} & 50\% & 83.73 & 69.31 & 83.73 & 77.58 & 81.14 & 57.25 & \textbf{48.60} & 71.62\\
\algname{QK-Wanda-MRC} & 50\% & \textbf{84.10} & \textbf{71.48} & 83.42 & \textbf{77.98} & \textbf{81.31} & \textbf{57.85} & 47.80 & \textbf{71.99}\\
\addlinespace[1pt]
\algname{Wanda} & 80\% & 76.51 & 65.70 & 76.66 & 68.03 & 78.11 & 50.34 & 45.60 & 65.85\\
\algname{QK-Wanda-M} & 80\% & \textbf{84.04} & 70.40 & \textbf{82.54} & 77.27 & 81.44 & \textbf{56.66} & \textbf{50.00} & 71.76\\
\algname{QK-Wanda (shared)} & 80\% & 83.91 & 71.12 & \textbf{82.54} & 77.19 & 81.48 & 56.48 & 49.80 & 71.79\\
\algname{QK-Wanda (sep.)} & 80\% & 83.70 & 71.48 & 81.96 & 77.19 & \textbf{81.65} & \textbf{56.66} & \textbf{50.00} & \textbf{71.80}\\
\algname{QK-Wanda-MRC} & 80\% & 83.18 & \textbf{72.56} & 79.67 & \textbf{78.85} & 79.59 & 55.03 & 47.00 & 70.84\\
\midrule
% model: llama3_70b
\multicolumn{10}{l}{\textbf{Llama-3.1-70B}}\\
\addlinespace[1pt]
Dense & 0\% & 85.44 & 69.68 & 85.03 & 79.95 & 86.66 & 64.93 & 47.80 & 74.21\\
\cmidrule(lr){1-10}
\algname{Wanda} & 30\% & 85.35 & 70.04 & 84.86 & 79.56 & 86.36 & \textbf{64.76} & \textbf{48.20} & 74.16\\
\algname{QK-Wanda-M} & 30\% & 85.26 & \textbf{70.76} & 84.85 & 79.64 & \textbf{86.62} & 64.33 & 47.80 & 74.18\\
\algname{QK-Wanda (shared)} & 30\% & 85.38 & \textbf{70.76} & 84.82 & \textbf{79.87} & 86.53 & 64.42 & 47.80 & \textbf{74.23}\\
\algname{QK-Wanda (sep.)} & 30\% & 85.41 & 70.04 & \textbf{84.92} & 79.72 & 86.53 & 64.16 & 47.80 & 74.08\\
\algname{QK-Wanda-MRC} & 30\% & \textbf{85.72} & 70.04 & 84.75 & 79.48 & 86.28 & 64.42 & 47.80 & 74.07\\
\addlinespace[1pt]
\algname{Wanda} & 50\% & 84.74 & 70.04 & 84.32 & 79.01 & 85.94 & 62.97 & \textbf{48.20} & 73.60\\
\algname{QK-Wanda-M} & 50\% & 85.11 & 70.04 & 84.54 & 79.24 & \textbf{86.20} & 63.82 & 47.60 & \textbf{73.79}\\
\algname{QK-Wanda (shared)} & 50\% & \textbf{85.20} & 69.68 & \textbf{84.67} & 79.24 & 86.15 & 63.82 & 47.60 & 73.77\\
\algname{QK-Wanda (sep.)} & 50\% & 84.65 & \textbf{70.76} & 84.41 & 78.85 & 85.94 & \textbf{64.08} & 46.80 & 73.64\\
\algname{QK-Wanda-MRC} & 50\% & 84.80 & 70.40 & 83.82 & \textbf{79.40} & 85.77 & 63.40 & 48.00 & 73.66\\
\addlinespace[1pt]
\algname{Wanda} & 80\% & 76.06 & 68.59 & \textbf{76.62} & \textbf{69.14} & 80.89 & 54.01 & \textbf{41.20} & \textbf{66.64}\\
\algname{QK-Wanda-M} & 80\% & 77.80 & \textbf{71.12} & 73.85 & 64.88 & \textbf{81.61} & \textbf{55.72} & 40.20 & 66.45\\
\algname{QK-Wanda (shared)} & 80\% & \textbf{78.07} & 68.59 & 73.83 & 64.25 & 81.27 & 55.12 & 40.40 & 65.93\\
\algname{QK-Wanda (sep.)} & 80\% & 73.73 & 67.15 & 69.12 & 58.25 & 76.43 & 49.91 & 32.80 & 61.06\\
\algname{QK-Wanda-MRC} & 80\% & 74.59 & 70.40 & 70.17 & 64.40 & 74.58 & 46.93 & 37.20 & 62.61\\
\bottomrule
\end{qktabular}
\end{table}

% Generated by scripts/export_zero_shot_llama2_13b.py.
\begin{table}[htbp]
\centering
\fontsize{8}{9}\selectfont
\setlength{\tabcolsep}{2pt}
\caption{Unmasked \algname{QK-Wanda} is shown with shared and separate (sep.) budgets; dashes mark variants not evaluated in this comparison. Zero-shot accuracy (\%) on Qwen2.5-72B with QK-only unstructured pruning. Dense is the unpruned reference; \algname{QK-Wanda-M} uses shared QK budgets. We report accuracy for BoolQ, RTE, and WinoGrande, and length-normalized accuracy for HellaSwag, ARC-Easy/Challenge, and OpenBookQA. Avg. is the unweighted mean of the seven unrounded scores. Higher is better; bold marks the best displayed result at each sparsity, including ties.}
\label{tab:zero_shot_qwen25_72b}
\begin{qktabular}{10}{lccccccccc}
\toprule
Method & Sparsity & BoolQ & RTE & \shortstack{Hella\\Swag} & \shortstack{Wino\\Grande} & \shortstack{ARC-\\Easy} & \shortstack{ARC-\\Challenge} & \shortstack{OpenBook\\QA} & Avg.\\
\midrule
Dense & 0\% & 89.17 & 77.26 & 86.06 & 78.14 & 83.38 & 62.63 & 46.60 & 74.75\\
\midrule
\algname{Wanda} & 30\% & \textbf{89.27} & 77.26 & \textbf{86.13} & 77.51 & \textbf{83.33} & 62.46 & 46.80 & \textbf{74.68}\\
\algname{QK-Wanda-M} & 30\% & 89.17 & 77.26 & 86.04 & 77.58 & 83.16 & 62.46 & 46.60 & 74.61\\
\algname{QK-Wanda (shared)} & 30\% & 89.05 & \textbf{77.62} & 86.05 & 77.58 & 83.04 & 62.46 & 46.80 & 74.66\\
\algname{QK-Wanda (sep.)} & 30\% & 89.14 & \textbf{77.62} & 86.04 & 77.19 & 83.21 & \textbf{62.63} & 46.60 & 74.63\\
\algname{QK-Wanda-MRC} & 30\% & 89.05 & 77.26 & 86.08 & \textbf{77.66} & 83.29 & 62.12 & \textbf{47.00} & 74.64\\
\addlinespace[2pt]
\algname{Wanda} & 50\% & 89.11 & \textbf{76.90} & 85.90 & 77.82 & 83.42 & 62.12 & \textbf{47.80} & 74.72\\
\algname{QK-Wanda-M} & 50\% & \textbf{89.42} & 76.53 & 85.96 & 77.58 & 83.84 & 61.86 & 46.60 & 74.54\\
\algname{QK-Wanda (shared)} & 50\% & 89.33 & 76.17 & 85.85 & 77.90 & 83.71 & 61.69 & 46.40 & 74.44\\
\algname{QK-Wanda (sep.)} & 50\% & \textbf{89.42} & 76.17 & 85.97 & \textbf{78.06} & \textbf{84.05} & \textbf{62.29} & 47.20 & \textbf{74.74}\\
\algname{QK-Wanda-MRC} & 50\% & 89.14 & 76.53 & \textbf{85.99} & 77.66 & 83.33 & 62.03 & 47.00 & 74.53\\
\addlinespace[2pt]
\algname{Wanda} & 80\% & 87.55 & \textbf{77.26} & 83.90 & 76.40 & 83.25 & 60.15 & \textbf{45.60} & 73.44\\
\algname{QK-Wanda-M} & 80\% & 89.27 & 75.09 & 85.06 & 76.24 & \textbf{85.23} & 62.29 & 44.60 & 73.97\\
\algname{QK-Wanda (shared)} & 80\% & 89.17 & 75.45 & 85.09 & 76.09 & 84.89 & 62.12 & 45.20 & 74.00\\
\algname{QK-Wanda (sep.)} & 80\% & \textbf{89.42} & 74.37 & 84.65 & 75.93 & 84.39 & \textbf{62.46} & 45.40 & 73.80\\
\algname{QK-Wanda-MRC} & 80\% & 88.84 & 76.17 & \textbf{85.26} & \textbf{76.80} & 84.30 & 61.77 & 45.40 & \textbf{74.08}\\
\bottomrule
\end{qktabular}
\end{table}

\FloatBarrier

\FloatBarrier
\section{Perplexity Across Sparsity Levels}
\label{app:budget_curves}

Each checkpoint has WikiText-2 and C4 curves under the protocol in \cref{sec:setup}. The left panel covers 0--50\% sparsity and the right 50--90\%, with independent logarithmic PPL ranges. Dense points are shared references. Every curve contains all nine measured nonzero sparsities.

The figures include row-wise Wanda, QK-Wanda and QK-Wanda-M with separate and shared budgets, and shared-budget QK-Wanda-MRC. QK-Wanda-MC is included where evaluated (Appendix~\ref{app:centered}); the largest checkpoints also include matrix-budget Wanda (Appendix~\ref{app:matrix_control}). Llama-2-7B additionally includes row-wise QK-Wanda-M and separate/shared QK-Wanda-MR. The legends identify the methods in each figure.

\subsection{Llama 2 13B}
\label{app:budget_llama2_13b}

\begin{figure}[H]
\centering
\includegraphics[width=\textwidth]{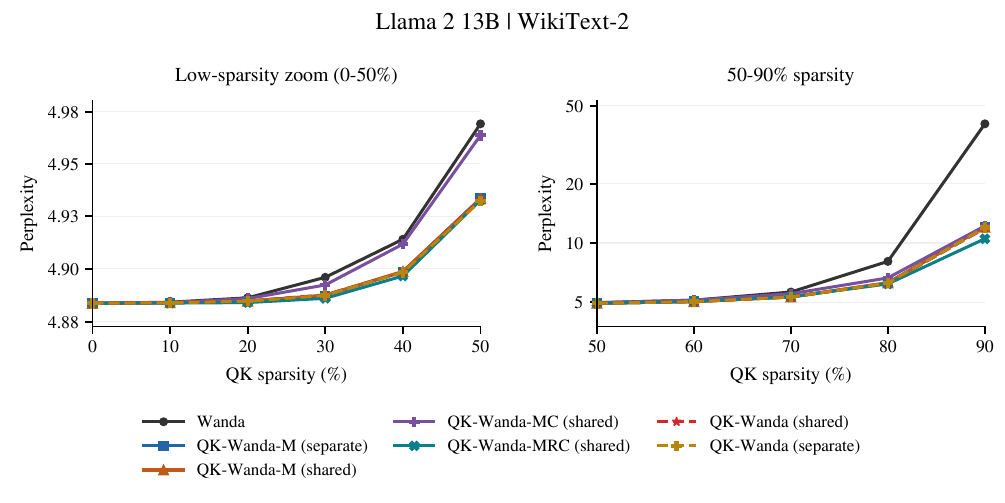}
\caption{Llama-2-13B: WikiText-2 PPL across QK sparsity.}
\label{fig:qk_budget_llama2_13b}
\end{figure}

\begin{figure}[H]
\centering
\includegraphics[width=\textwidth]{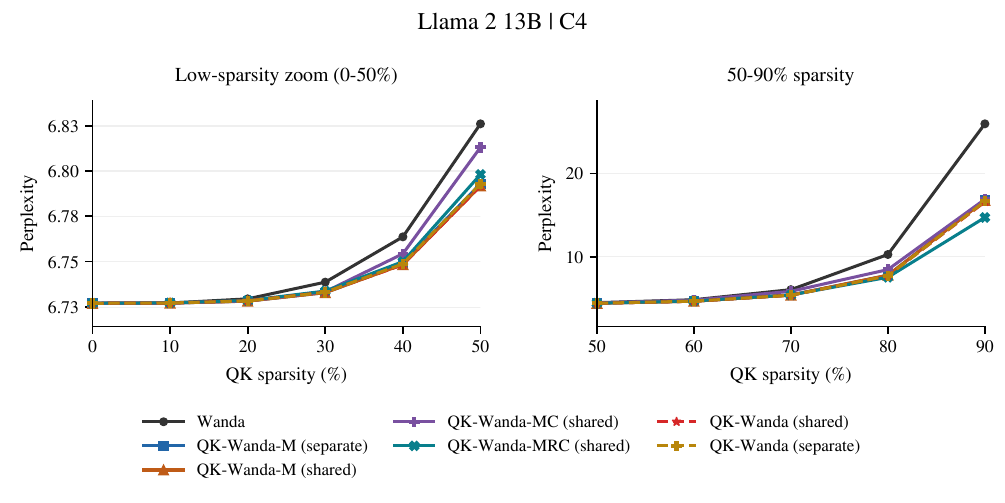}
\caption{Llama-2-13B: C4 PPL across QK sparsity.}
\label{fig:qk_budget_llama2_13b_c4}
\end{figure}

\FloatBarrier
\subsection{Llama 2 7B}
\label{app:budget_llama2_7b}

\begin{figure}[H]
\centering
\includegraphics[width=\textwidth]{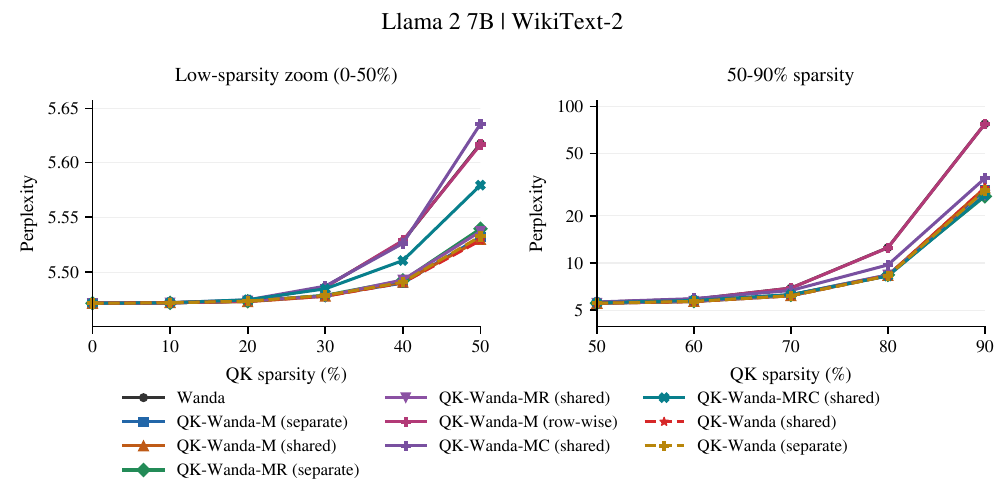}
\caption{Llama-2-7B: WikiText-2 PPL across QK sparsity.}
\label{fig:qk_budget_llama2_7b}
\end{figure}

\begin{figure}[H]
\centering
\includegraphics[width=\textwidth]{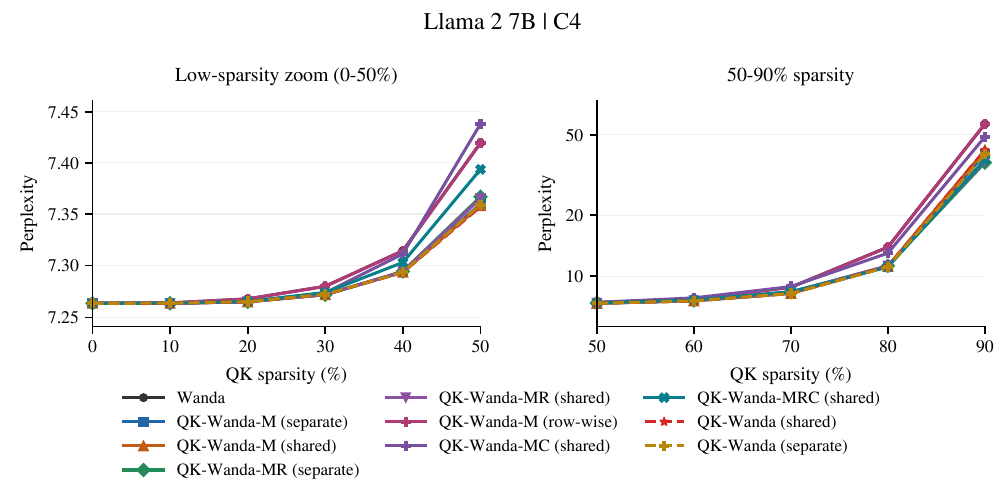}
\caption{Llama-2-7B: C4 PPL across QK sparsity.}
\label{fig:qk_budget_llama2_7b_c4}
\end{figure}

\FloatBarrier
\paragraph{Scoring and allocation ablation on Llama-2-7B.}
We evaluated causal pre-RoPE scores with row-wise, separate matrix, and shared QK budgets, and the RoPE-aware scores from Appendix~\ref{app:variants} with separate and shared budgets. All runs used the same checkpoint, frozen calibration tokens, context lengths, seed, FP16 precision, and ceiling rule for deletion counts. At sparsity $\rho$, row-wise QK-Wanda-M removed exactly $\lceil4096\rho\rceil$ weights from every Q and K row, matching Wanda's integer quotas. This control fixes allocation granularity while changing the score; causal scores need not preserve within-row rankings (Appendix~\ref{app:row_equivalence}). Row-wise pruning and its PPL and zero-shot evaluations ran on NVIDIA H200 GPUs. The existing Wanda, separate/shared causal, and RoPE pruning sweeps ran on A100 GPUs.

At 30\%, 50\%, and 80\%, we replayed the saved masks for the seven zero-shot tasks in Appendix~\ref{app:zero_shot_protocol}, including the original separate-budget causal masks. This table retains the causal QK diagnostic on common dense inputs: each block pools squared errors and reference energies over the allowed pairs in \cref{eq:causal_objective}, and the block ratios are averaged as in \cref{eq:relative_qk_error}; it does not measure the RoPE-aware objective itself. These errors were evaluated on the original A100 hardware, with a repeated shared-budget control reproducing the published block-level errors. \Cref{tab:llama2_rope_ablation} includes all six methods and the dense reference.

\begin{table}[htbp]
\centering\small
\setlength{\tabcolsep}{4pt}
\caption{Scoring and budget ablation on Llama-2-7B with QK-only unstructured pruning. QK-Wanda-M uses causal pre-RoPE scores with row-wise, separate matrix, or shared QK budgets. QK-Wanda-MR additionally includes RoPE, with separate or shared budgets. Wanda uses row-wise budgets. This specialized ablation retains the historical causal pre-RoPE error: the block-relative squared norm uses only causally allowed pairs in both numerator and denominator. It is distinct from the full-product diagnostic in \cref{eq:relative_qk_error}. Bold selects the best displayed result among six methods at each sparsity, including ties.}
\label{tab:llama2_rope_ablation}
\begin{qktabular}{6}{lccccc}
\toprule Method & Sparsity & WikiText-2 PPL & C4 PPL & Avg. acc. (\%) & Causal QK error (\%)\\
\midrule
Dense & 0\% & 5.47 & 7.26 & 64.38 & 0.00\\
\midrule
\algname{Wanda} & 30\% & 5.49 & 7.28 & 64.00 & 0.10\\
\algname{QK-Wanda-M} (row-wise) & 30\% & 5.49 & 7.28 & 63.95 & 0.10\\
\algname{QK-Wanda-M} (separate) & 30\% & \textbf{5.48} & \textbf{7.27} & \textbf{64.32} & \textbf{0.03}\\
\algname{QK-Wanda-M} (shared) & 30\% & \textbf{5.48} & \textbf{7.27} & 64.24 & \textbf{0.03}\\
\algname{QK-Wanda-MR} (separate) & 30\% & \textbf{5.48} & \textbf{7.27} & 64.15 & 0.04\\
\algname{QK-Wanda-MR} (shared) & 30\% & \textbf{5.48} & \textbf{7.27} & 64.21 & 0.04\\
\addlinespace
\algname{Wanda} & 50\% & 5.62 & 7.42 & 63.43 & 0.72\\
\algname{QK-Wanda-M} (row-wise) & 50\% & 5.62 & 7.42 & 63.42 & 0.72\\
\algname{QK-Wanda-M} (separate) & 50\% & \textbf{5.53} & \textbf{7.36} & 63.91 & \textbf{0.28}\\
\algname{QK-Wanda-M} (shared) & 50\% & \textbf{5.53} & \textbf{7.36} & 63.92 & 0.29\\
\algname{QK-Wanda-MR} (separate) & 50\% & 5.54 & 7.37 & 63.98 & 0.34\\
\algname{QK-Wanda-MR} (shared) & 50\% & 5.54 & 7.37 & \textbf{64.02} & 0.35\\
\addlinespace
\algname{Wanda} & 80\% & 12.54 & 13.94 & 52.56 & 7.76\\
\algname{QK-Wanda-M} (row-wise) & 80\% & 12.51 & 13.90 & 52.51 & 7.78\\
\algname{QK-Wanda-M} (separate) & 80\% & \textbf{8.32} & \textbf{11.17} & 58.03 & \textbf{3.40}\\
\algname{QK-Wanda-M} (shared) & 80\% & 8.33 & 11.21 & 57.81 & 3.44\\
\algname{QK-Wanda-MR} (separate) & 80\% & 8.38 & 11.25 & \textbf{58.13} & 4.29\\
\algname{QK-Wanda-MR} (shared) & 80\% & 8.39 & 11.27 & 57.82 & 4.33\\
\bottomrule
\end{qktabular}
\end{table}

\FloatBarrier
\paragraph{Effect of allocation.}
Row-wise causal QK-Wanda-M closely tracked Wanda: absolute PPL differences were below 0.001 on both datasets through 60\% sparsity. At 80\%, its WikiText-2/C4 PPL was 12.51/13.90, compared with Wanda's 12.54/13.94; at 90\%, the differences remained below 0.10 and had opposite signs across datasets. Its mean zero-shot accuracy was lower by 0.05, 0.01, and 0.05 percentage points at 30\%, 50\%, and 80\%, and its QK error was slightly higher at all three levels before rounding. Given the single seed and different GPU platforms for PPL and task evaluation, these small changes do not establish a reliable advantage from changing the score under fixed row quotas.

Allowing the causal QK scores to redistribute deletions across rows had a much larger effect on this checkpoint. At 80\%, separate and shared budgets achieved WikiText-2/C4 PPL of 8.32/11.17 and 8.33/11.21, respectively, and mean accuracy of 58.03\% and 57.81\%, versus 52.51\% for row-wise QK-Wanda-M. Separate budgets slightly outperformed shared budgets at this sparsity and had lower QK error at all three tabulated sparsities; shared budgets achieved lower PPL at 30\% and 50\%. Thus, freeing the per-row quota mattered substantially here, while sharing the budget between Q and K was not uniformly better than separate matrix budgets.

\paragraph{Effect of RoPE.}
Adding RoPE did not consistently improve perplexity. At matched budgets, it lowered WikiText-2 PPL at 3 of 9 nonzero sparsities for separate budgets and 2 of 9 for shared budgets; on C4, it improved 3 of 9 for each budget. Below 40\% sparsity, all absolute changes were smaller than 0.001 PPL. From 40\% through 80\%, RoPE increased PPL on both datasets for both budgets. At 90\%, however, it reduced WikiText-2/C4 PPL from 29.23/40.29 to 26.66/36.40 with separate budgets and from 30.14/41.90 to 27.19/37.14 with a shared budget. Thus, RoPE helped at the most aggressive pruning level, despite being slightly worse over the intermediate range.

Zero-shot changes were small and mixed. At a matched shared budget, adding RoPE changed mean accuracy by $-0.03$, $+0.10$, and $+0.01$ percentage points at 30\%, 50\%, and 80\% sparsity, respectively. At 80\%, RoPE with separate budgets achieved 58.13\%, compared with 57.82\% for RoPE with a shared budget. Both RoPE variants had higher pre-RoPE QK error than the shared causal method at all three tabulated sparsities, while retaining lower error than Wanda. This comparison concerns a common pre-RoPE diagnostic, not whether the RoPE-aware objective was better preserved. Within this causal ablation, RoPE helps at 90\% sparsity but does not provide a consistent improvement across the sweep.

\FloatBarrier
\subsection{TinyLlama 1.1B}
\label{app:budget_tinyllama_1b}

\begin{figure}[H]
\centering
\includegraphics[width=\textwidth]{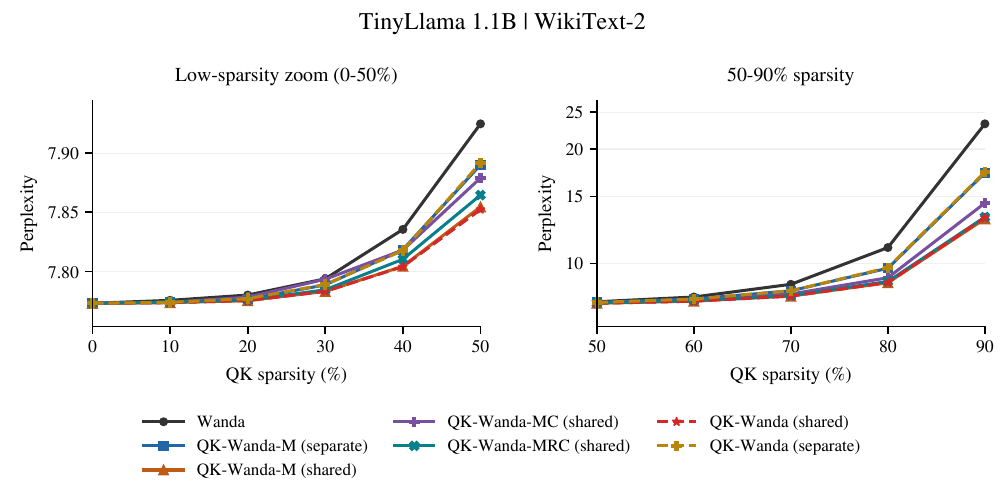}
\caption{TinyLlama-1.1B: WikiText-2 PPL across QK sparsity.}
\label{fig:qk_budget_tinyllama_1b}
\end{figure}

\begin{figure}[H]
\centering
\includegraphics[width=\textwidth]{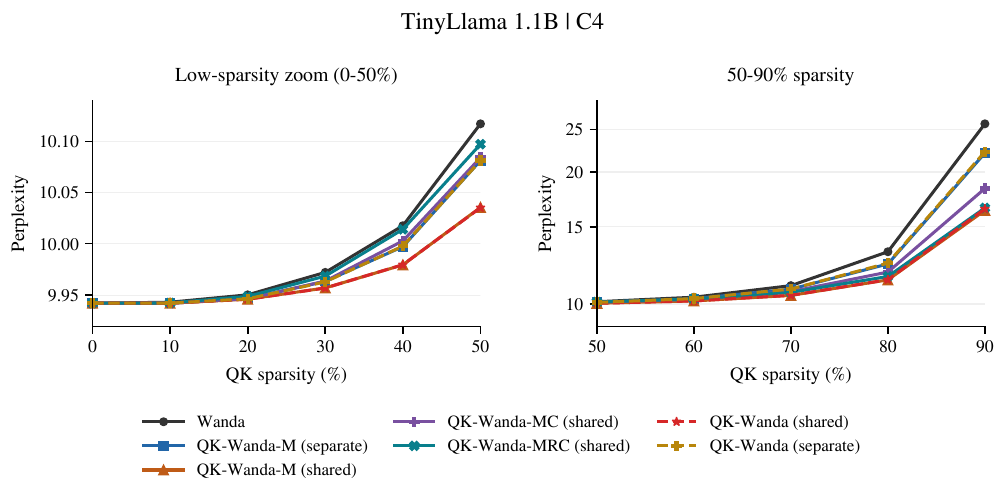}
\caption{TinyLlama-1.1B: C4 PPL across QK sparsity.}
\label{fig:qk_budget_tinyllama_1b_c4}
\end{figure}

\FloatBarrier
\subsection{Llama 3.2 1B}
\label{app:budget_llama3_1b}

\begin{figure}[H]
\centering
\includegraphics[width=\textwidth]{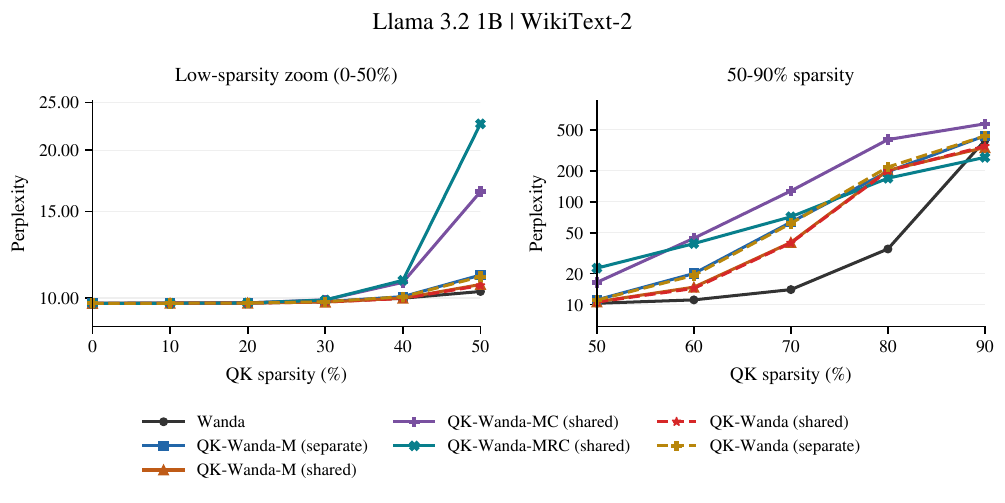}
\caption{Llama-3.2-1B: WikiText-2 PPL across QK sparsity.}
\label{fig:qk_budget_1b}
\end{figure}

\begin{figure}[H]
\centering
\includegraphics[width=\textwidth]{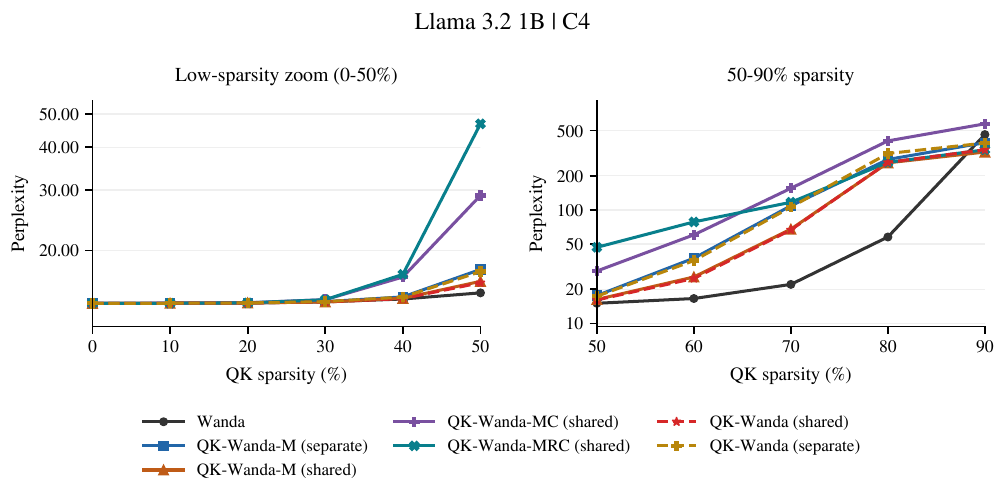}
\caption{Llama-3.2-1B: C4 PPL across QK sparsity.}
\label{fig:qk_budget_1b_c4}
\end{figure}

\FloatBarrier
\subsection{Llama 3.2 3B}
\label{app:budget_llama3_3b}

\begin{figure}[H]
\centering
\includegraphics[width=\textwidth]{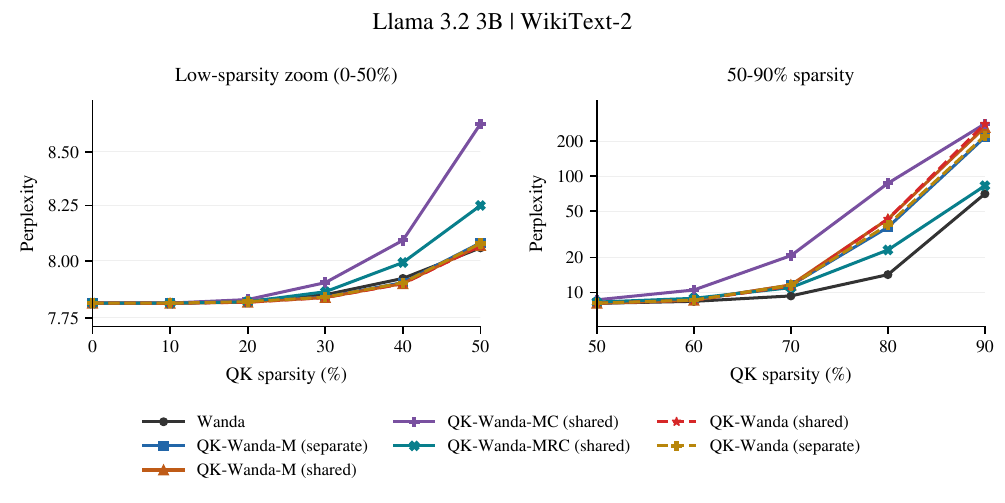}
\caption{Llama-3.2-3B: WikiText-2 PPL across QK sparsity.}
\label{fig:qk_budget_3b}
\end{figure}

\begin{figure}[H]
\centering
\includegraphics[width=\textwidth]{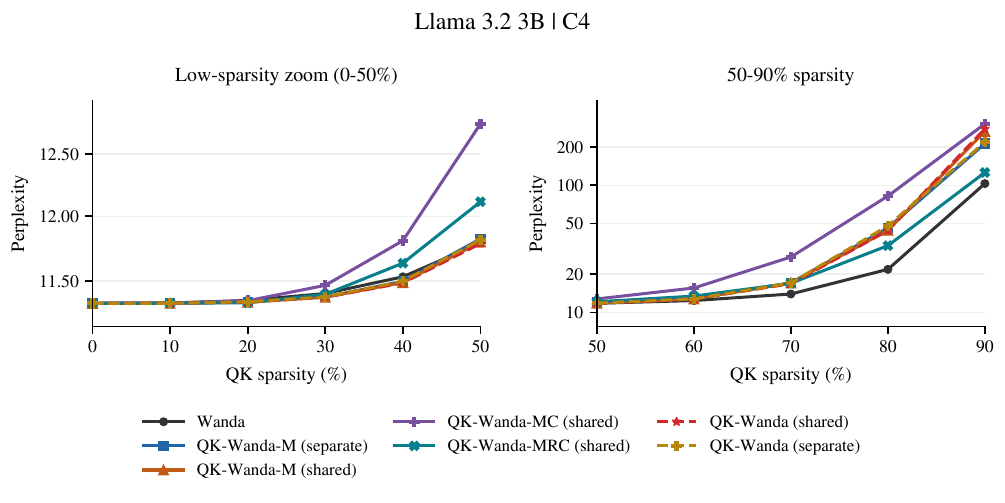}
\caption{Llama-3.2-3B: C4 PPL across QK sparsity.}
\label{fig:qk_budget_3b_c4}
\end{figure}

\FloatBarrier
\subsection{Llama 3.1 8B}
\label{app:budget_llama3_8b}

\begin{figure}[H]
\centering
\includegraphics[width=\textwidth]{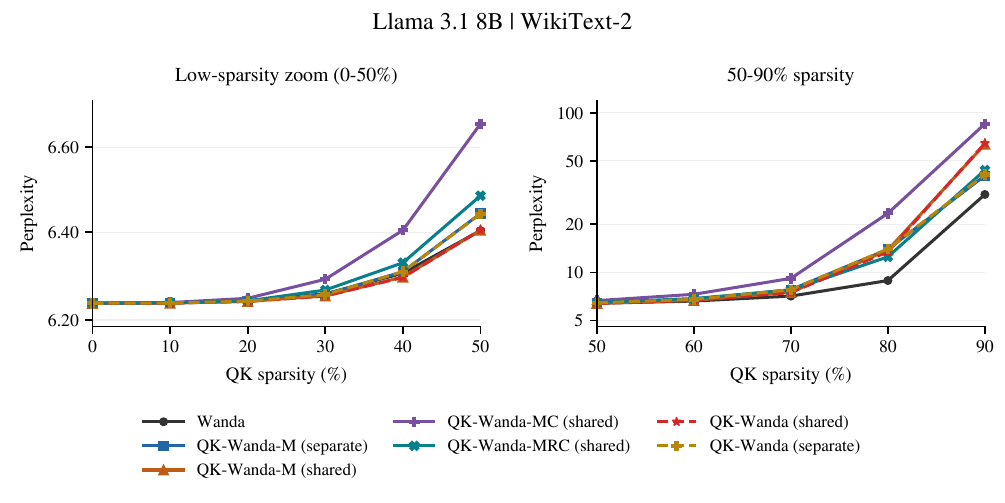}
\caption{Llama-3.1-8B: WikiText-2 PPL across QK sparsity.}
\label{fig:qk_budget}
\end{figure}

\begin{figure}[H]
\centering
\includegraphics[width=\textwidth]{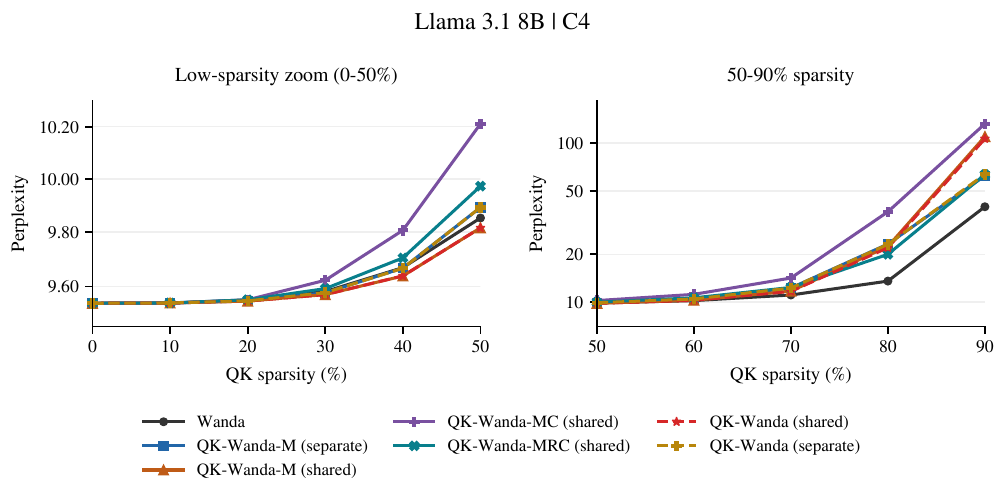}
\caption{Llama-3.1-8B: C4 PPL across QK sparsity.}
\label{fig:qk_budget_c4}
\end{figure}

\FloatBarrier
\subsection{Qwen2.5}
\label{app:budget_qwen25}

The Qwen2.5 figures use the same methods and axes described above. All Qwen2.5-72B configurations, including the dense reference, use BF16.

\IfFileExists{figures/qk_budget_qwen25_0p5b_wikitext2.pdf}{
\begin{figure}[H]
\centering
\includegraphics[width=\textwidth]{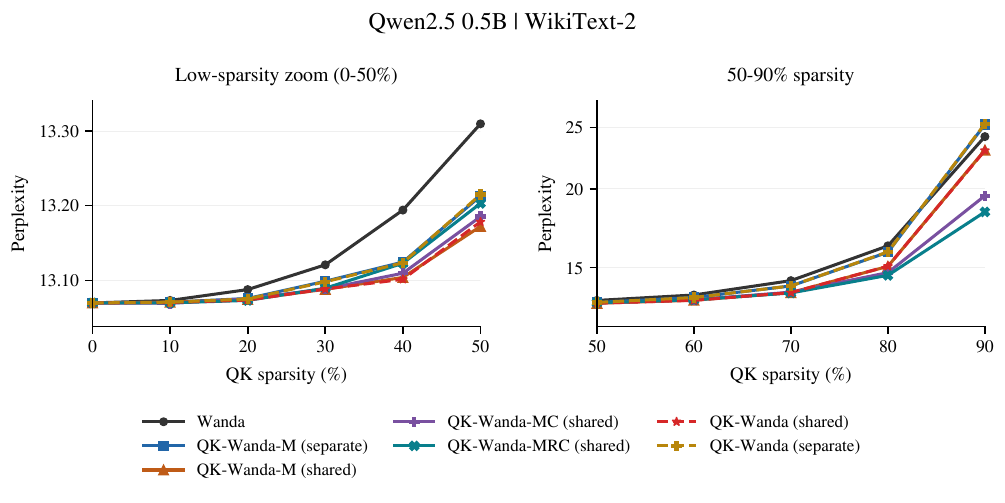}
\caption{Qwen2.5-0.5B: WikiText-2 PPL across QK sparsity.}
\label{fig:qk_budget_qwen25_0p5b_wikitext2}
\end{figure}

\begin{figure}[H]
\centering
\includegraphics[width=\textwidth]{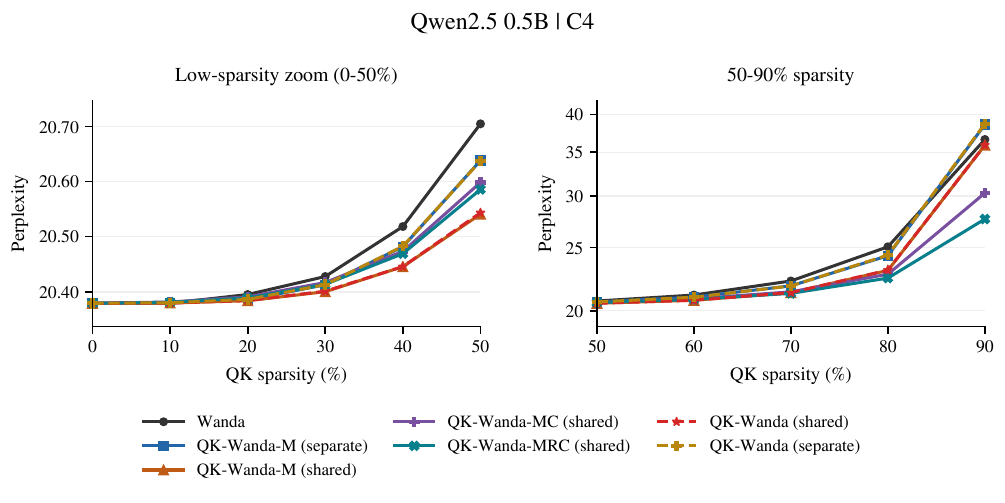}
\caption{Qwen2.5-0.5B: C4 PPL across QK sparsity.}
\label{fig:qk_budget_qwen25_0p5b_c4}
\end{figure}

}{\ivan{Import the complete Qwen2.5-0.5B WikiText-2 and C4 curves.}}

\IfFileExists{figures/qk_budget_qwen25_1p5b_wikitext2.pdf}{
\FloatBarrier
\begin{figure}[H]
\centering
\includegraphics[width=\textwidth]{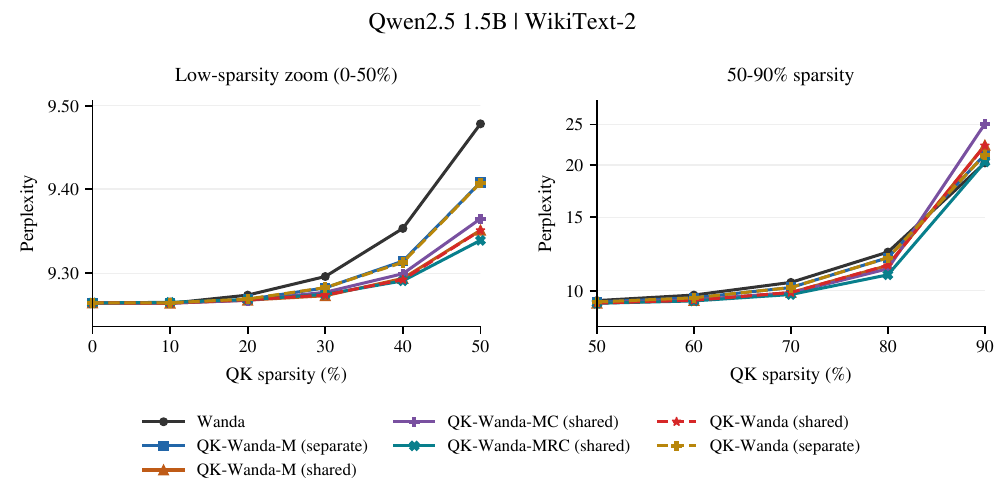}
\caption{Qwen2.5-1.5B: WikiText-2 PPL across QK sparsity.}
\label{fig:qk_budget_qwen25_1p5b_wikitext2}
\end{figure}

\begin{figure}[H]
\centering
\includegraphics[width=\textwidth]{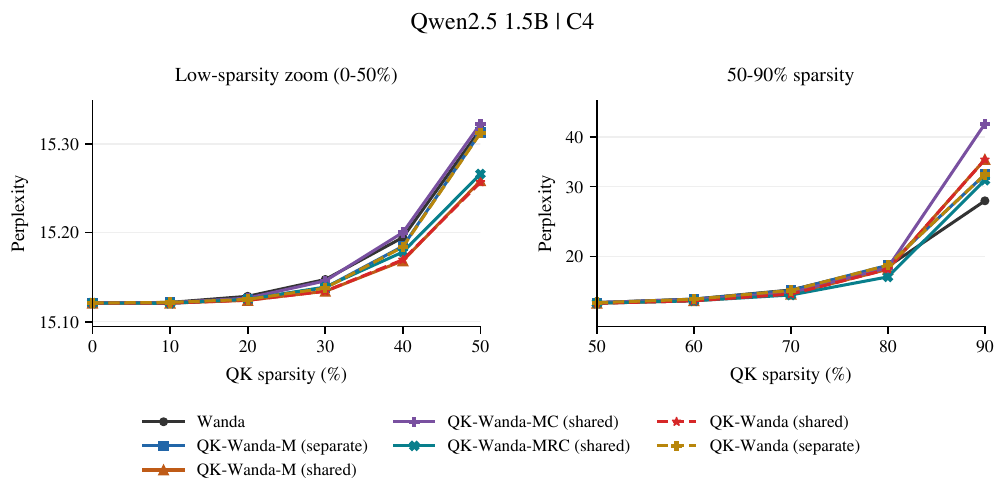}
\caption{Qwen2.5-1.5B: C4 PPL across QK sparsity.}
\label{fig:qk_budget_qwen25_1p5b_c4}
\end{figure}

}{\ivan{Import the complete Qwen2.5-1.5B WikiText-2 and C4 curves.}}

\IfFileExists{figures/qk_budget_qwen25_3b_wikitext2.pdf}{
\FloatBarrier
\begin{figure}[H]
\centering
\includegraphics[width=\textwidth]{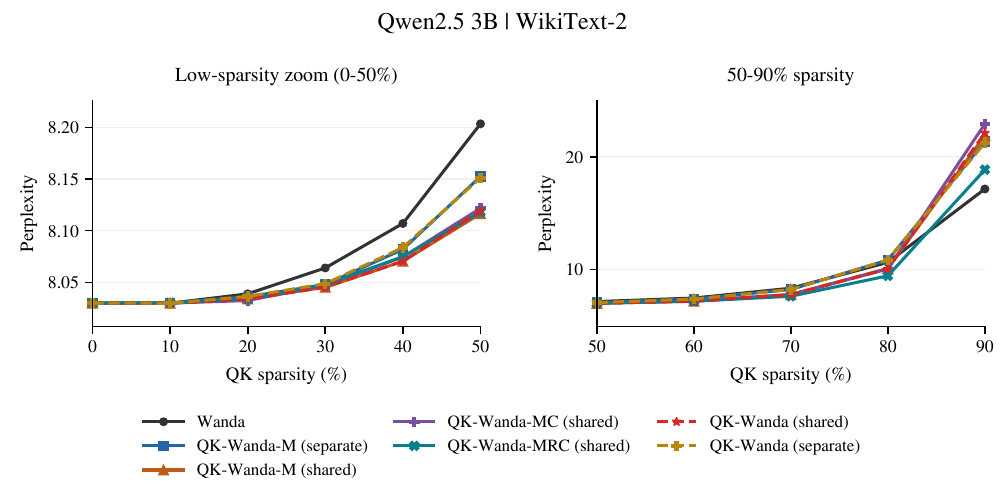}
\caption{Qwen2.5-3B: WikiText-2 PPL across QK sparsity.}
\label{fig:qk_budget_qwen25_3b_wikitext2}
\end{figure}

\begin{figure}[H]
\centering
\includegraphics[width=\textwidth]{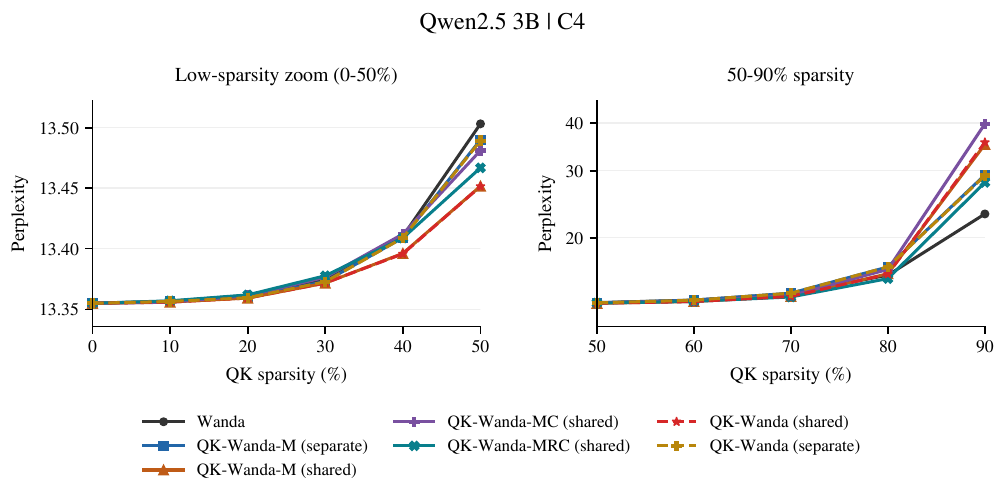}
\caption{Qwen2.5-3B: C4 PPL across QK sparsity.}
\label{fig:qk_budget_qwen25_3b_c4}
\end{figure}

}{\ivan{Import the complete Qwen2.5-3B WikiText-2 and C4 curves.}}

\IfFileExists{figures/qk_budget_qwen25_7b_wikitext2.pdf}{
\FloatBarrier
\begin{figure}[H]
\centering
\includegraphics[width=\textwidth]{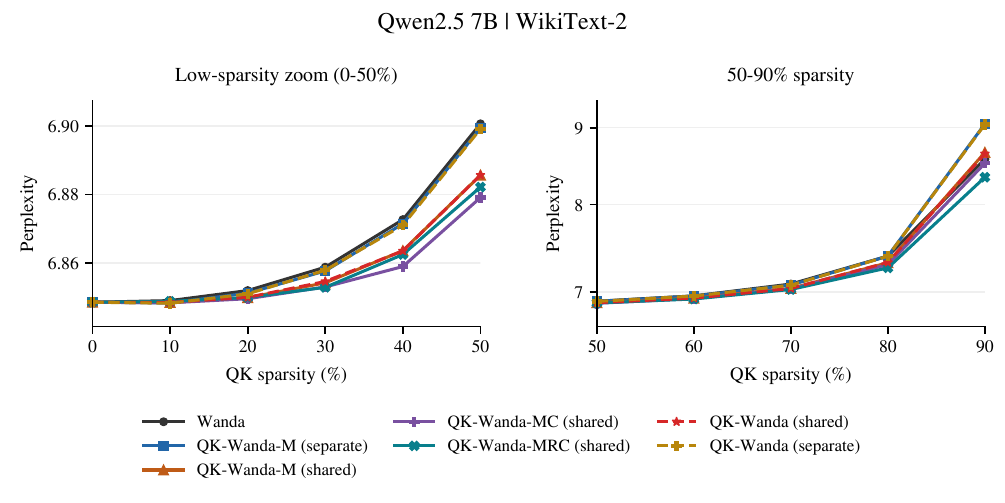}
\caption{Qwen2.5-7B: WikiText-2 PPL across QK sparsity.}
\label{fig:qk_budget_qwen25_7b_wikitext2}
\end{figure}

\begin{figure}[H]
\centering
\includegraphics[width=\textwidth]{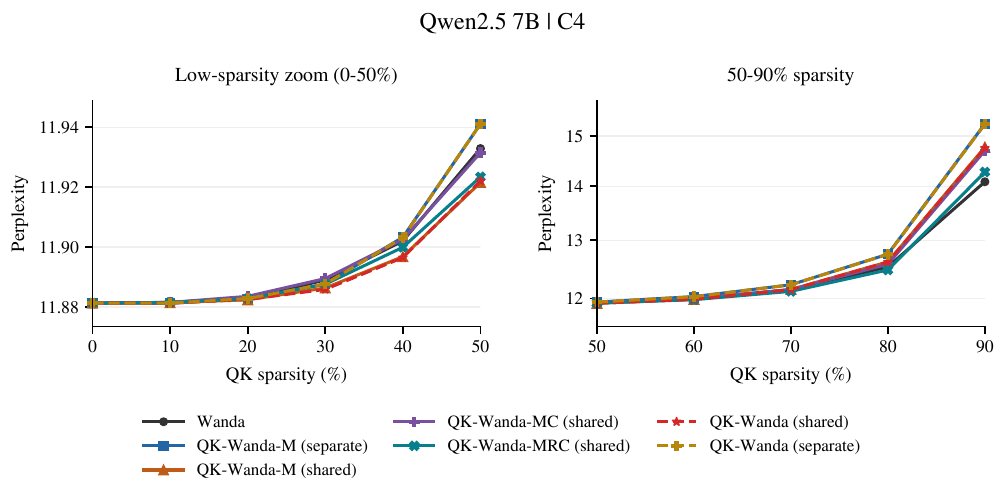}
\caption{Qwen2.5-7B: C4 PPL across QK sparsity.}
\label{fig:qk_budget_qwen25_7b_c4}
\end{figure}

}{\ivan{Import the complete Qwen2.5-7B WikiText-2 and C4 curves.}}

\IfFileExists{figures/qk_budget_qwen25_14b_wikitext2.pdf}{
\FloatBarrier
\begin{figure}[H]
\centering
\includegraphics[width=\textwidth]{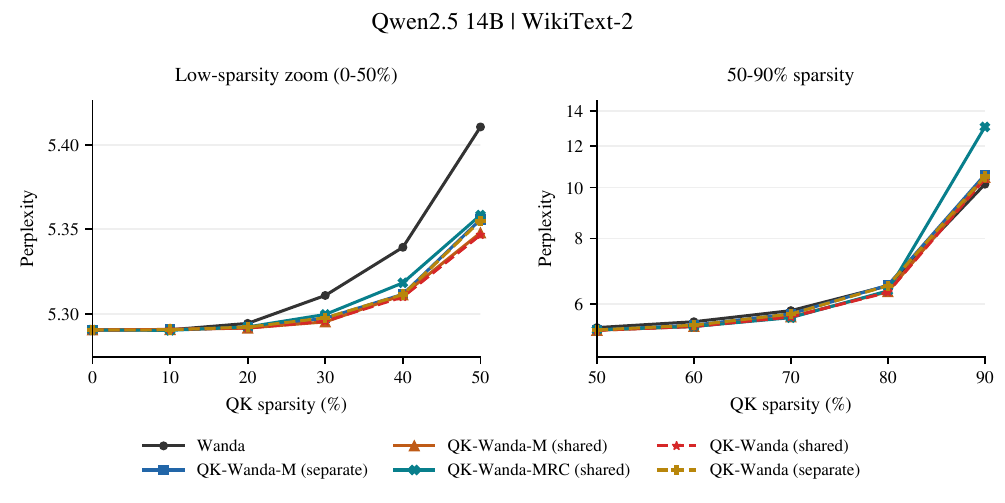}
\caption{Qwen2.5-14B: WikiText-2 PPL across QK sparsity.}
\label{fig:qk_budget_qwen25_14b_wikitext2}
\end{figure}

\begin{figure}[H]
\centering
\includegraphics[width=\textwidth]{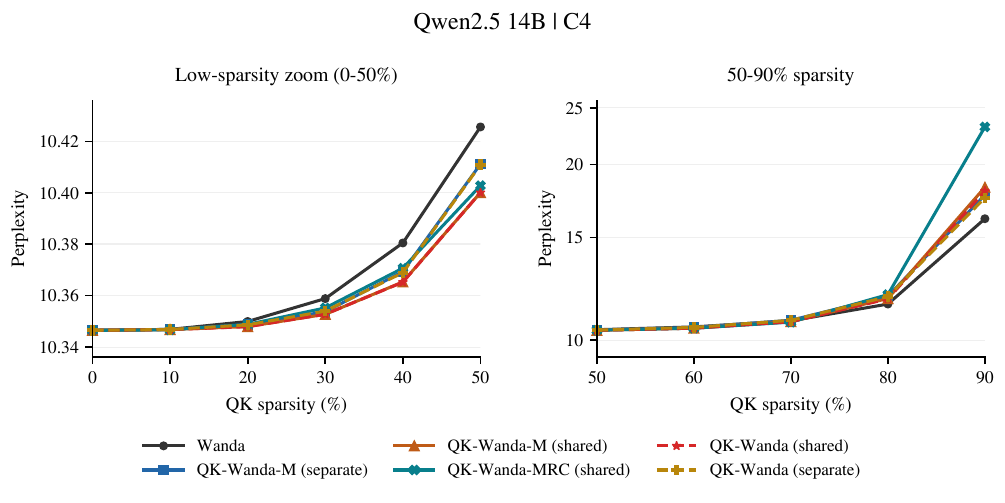}
\caption{Qwen2.5-14B: C4 PPL across QK sparsity.}
\label{fig:qk_budget_qwen25_14b_c4}
\end{figure}

}{\ivan{Import the complete Qwen2.5-14B WikiText-2 and C4 curves.}}

\IfFileExists{figures/qk_budget_qwen25_32b_wikitext2.pdf}{
\FloatBarrier
\begin{figure}[H]
\centering
\includegraphics[width=\textwidth]{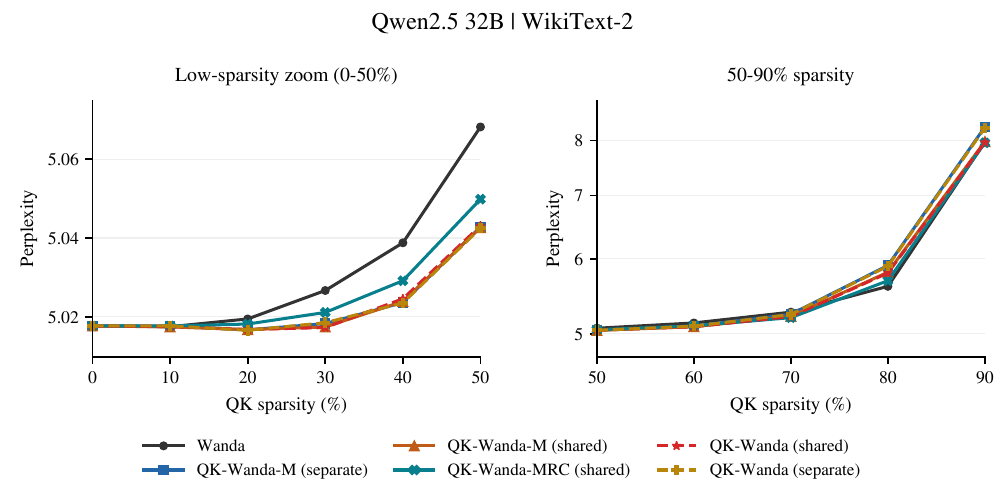}
\caption{Qwen2.5-32B: WikiText-2 PPL across QK sparsity.}
\label{fig:qk_budget_qwen25_32b_wikitext2}
\end{figure}

\begin{figure}[H]
\centering
\includegraphics[width=\textwidth]{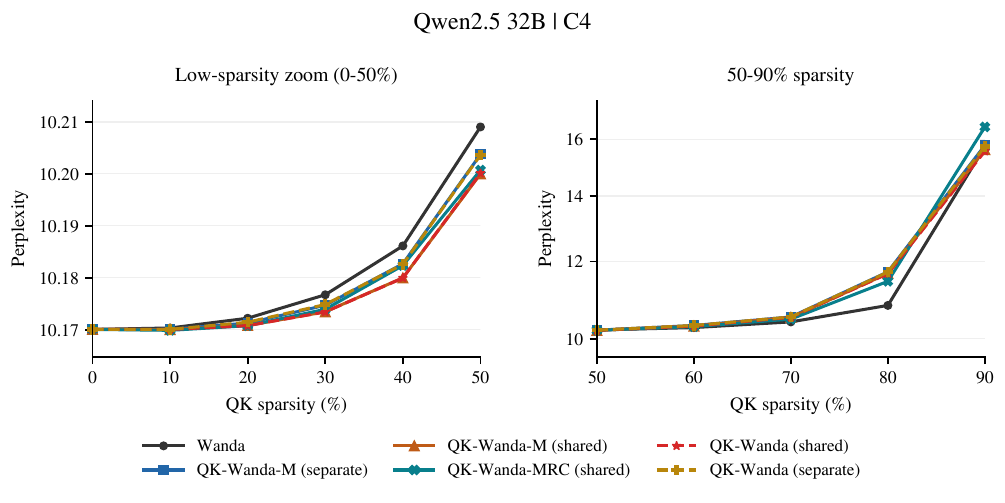}
\caption{Qwen2.5-32B: C4 PPL across QK sparsity.}
\label{fig:qk_budget_qwen25_32b_c4}
\end{figure}

}{\ivan{Import the complete Qwen2.5-32B WikiText-2 and C4 curves.}}

\FloatBarrier
\begin{figure}[H]
\centering
\includegraphics[width=\textwidth]{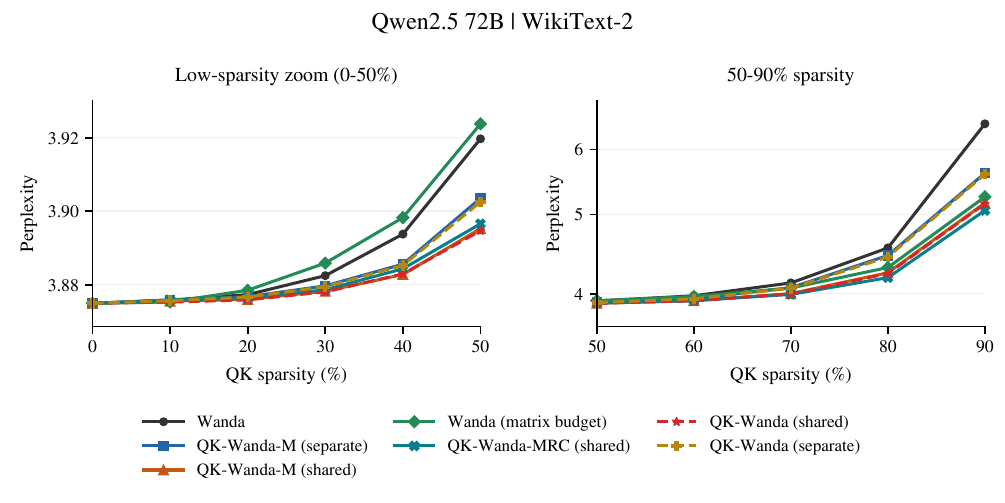}
\caption{Qwen2.5-72B: WikiText-2 PPL across QK sparsity.}
\label{fig:qk_budget_qwen25_72b_wikitext2}
\end{figure}

\begin{figure}[H]
\centering
\includegraphics[width=\textwidth]{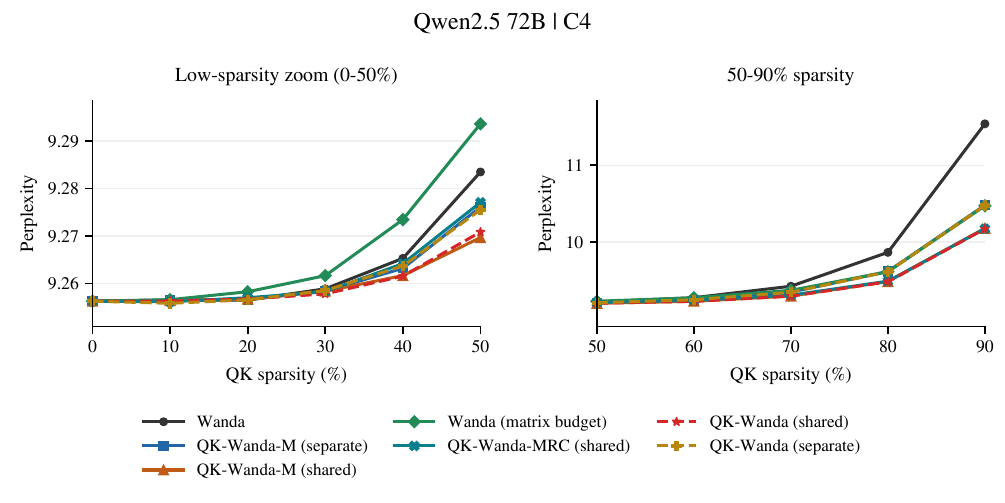}
\caption{Qwen2.5-72B: C4 PPL across QK sparsity.}
\label{fig:qk_budget_qwen25_72b_c4}
\end{figure}

\FloatBarrier
\subsection{Llama 2 70B}
\label{app:budget_llama2_70b}

\begin{figure}[H]
\centering
\includegraphics[width=\textwidth]{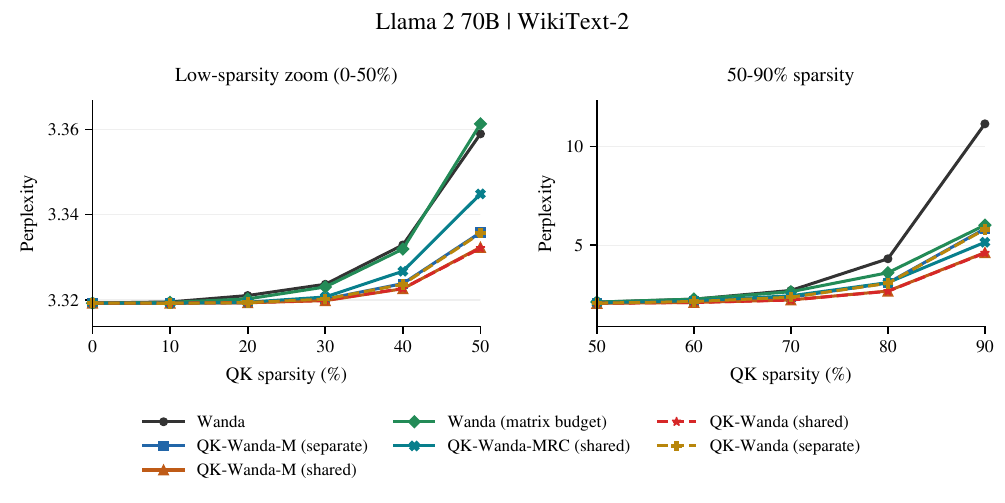}
\caption{Llama 2 70B: WikiText-2 PPL across QK sparsity.}
\label{fig:qk_budget_llama2_70b_wikitext2}
\end{figure}

\begin{figure}[H]
\centering
\includegraphics[width=\textwidth]{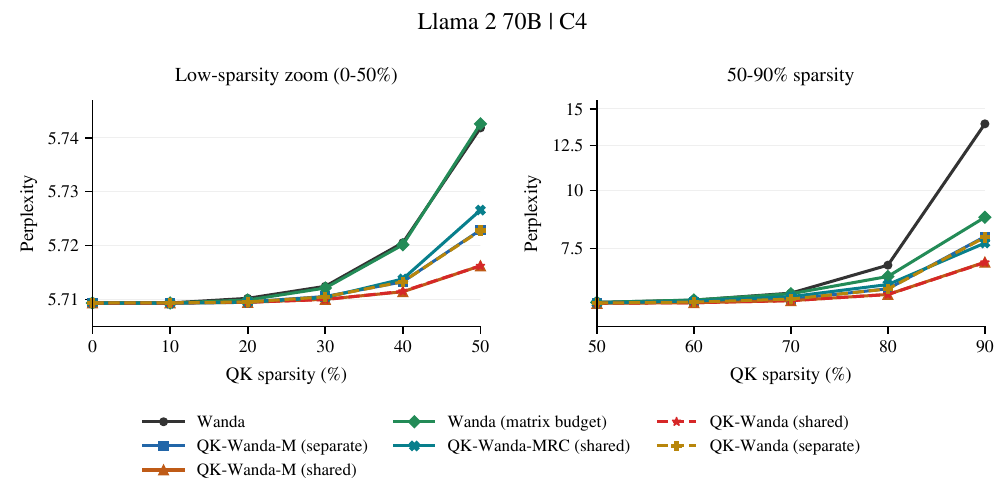}
\caption{Llama 2 70B: C4 PPL across QK sparsity.}
\label{fig:qk_budget_llama2_70b_c4}
\end{figure}

\FloatBarrier
\subsection{Llama 3.1 70B}
\label{app:budget_llama31_70b}

\begin{figure}[H]
\centering
\includegraphics[width=\textwidth]{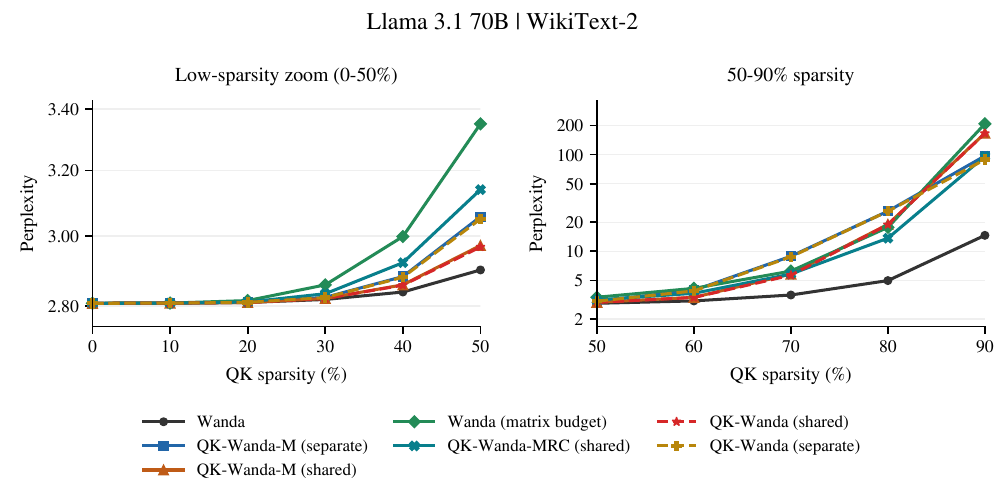}
\caption{Llama 3.1 70B: WikiText-2 PPL across QK sparsity.}
\label{fig:qk_budget_llama31_70b_wikitext2}
\end{figure}

\begin{figure}[H]
\centering
\includegraphics[width=\textwidth]{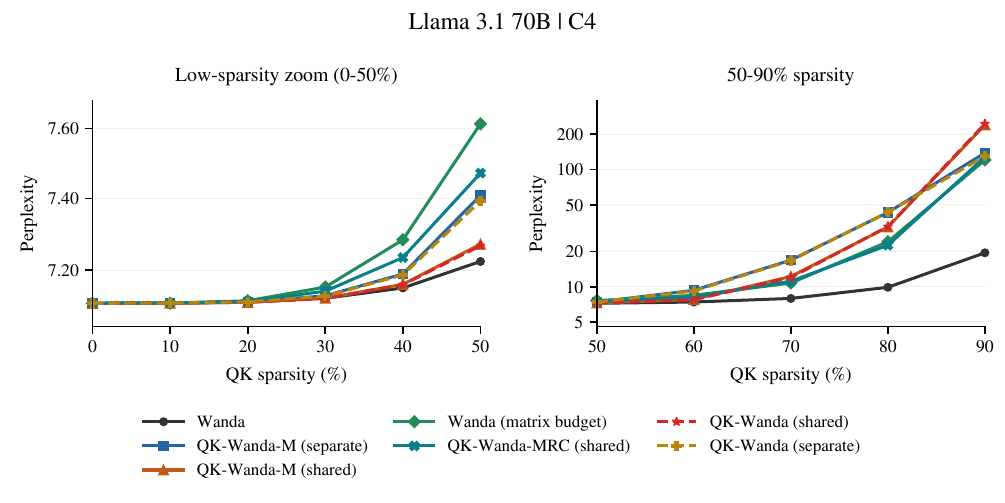}
\caption{Llama 3.1 70B: C4 PPL across QK sparsity.}
\label{fig:qk_budget_llama31_70b_c4}
\end{figure}

\FloatBarrier
\section{QK Reconstruction Across Sparsity Levels}

\label{app:qk_error_curves}

We measure QK reconstruction error (\cref{eq:relative_qk_error}) for all 15 checkpoints at 10\%, 20\%, \ldots, 90\% sparsity. Every method uses its PPL pruning mask and the same 16 held-out C4 sequences of length 512, with common dense block inputs. The diagnostic is the same for all scoring variants; causal and centered losses are evaluated separately in the labeled ablations.

Shared-budget QK-Wanda has lower error than Wanda in 133/135 comparisons, and lower error than separate-budget QK-Wanda in 122/135. Separate-budget QK-Wanda improves over Wanda in 124/135. The two shared-budget exceptions are Llama-3.2-3B at 80\% and 90\%. The corresponding counts for QK-Wanda-M are also 133, 122, and 124; the measured error values are not identical. Lower reconstruction error does not guarantee lower PPL or higher task accuracy.

Each panel uses a logarithmic vertical axis. Markers show measurements and lines join adjacent sparsities. QK-Wanda-MC and QK-Wanda-MRC use the same QK diagnostic as the other methods, rather than their centered scoring losses. Matrix-budget Wanda is included on the three largest checkpoints.

\subsection{Llama 2 and TinyLlama}

\begin{figure}[H]
\centering
\includegraphics[width=\textwidth]{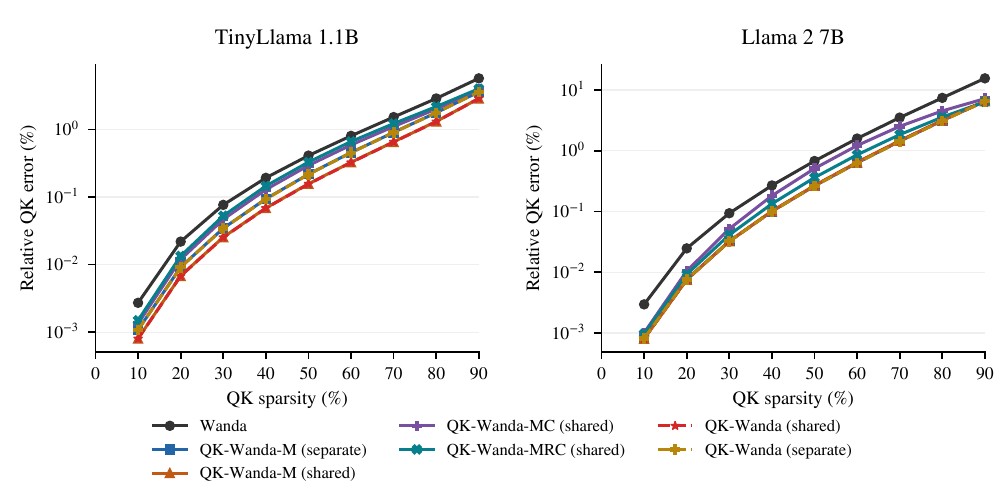}
\caption{TinyLlama 1.1B (left) and Llama 2 7B (right): QK reconstruction error under the common protocol in \cref{eq:relative_qk_error}.}
\label{fig:qk_error_tinyllama_1b}
\label{fig:qk_error_llama2_7b}
\end{figure}

\begin{figure}[H]
\centering
\includegraphics[width=\textwidth]{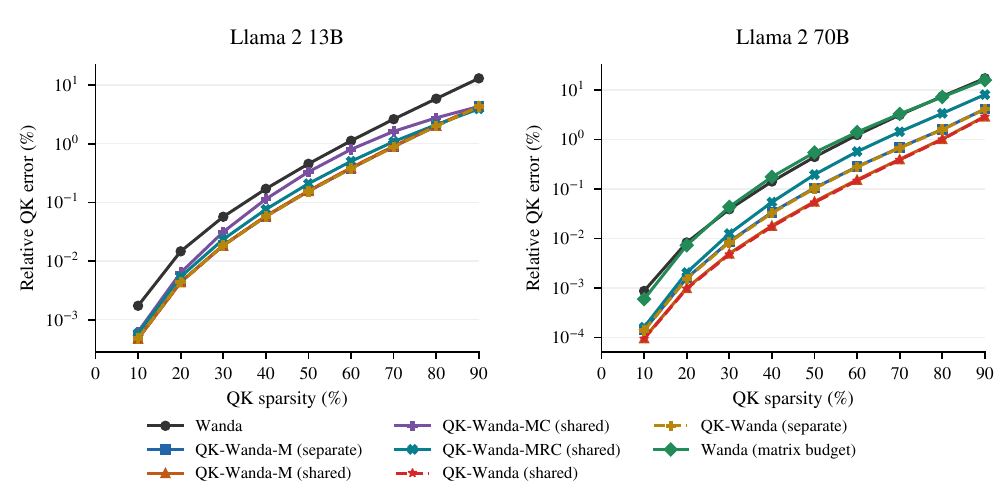}
\caption{Llama 2 13B (left) and Llama 2 70B (right): QK reconstruction error under the common protocol in \cref{eq:relative_qk_error}.}
\label{fig:qk_error_llama2_13b}
\label{fig:qk_error_llama2_70b}
\end{figure}

\FloatBarrier
\subsection{Llama 3}

\begin{figure}[H]
\centering
\includegraphics[width=\textwidth]{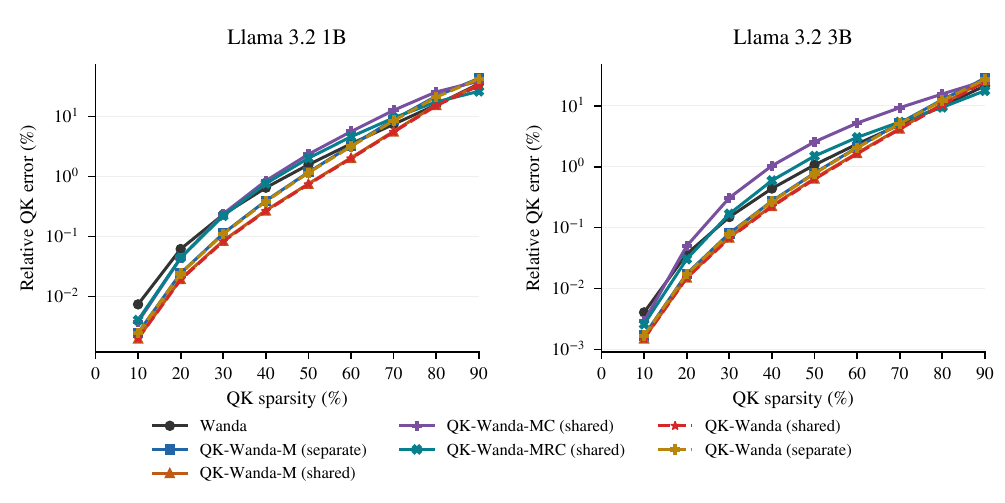}
\caption{Llama 3.2 1B (left) and Llama 3.2 3B (right): QK reconstruction error under the common protocol in \cref{eq:relative_qk_error}.}
\label{fig:qk_error_llama3_1b}
\label{fig:qk_error_llama3_3b}
\end{figure}

\begin{figure}[H]
\centering
\includegraphics[width=\textwidth]{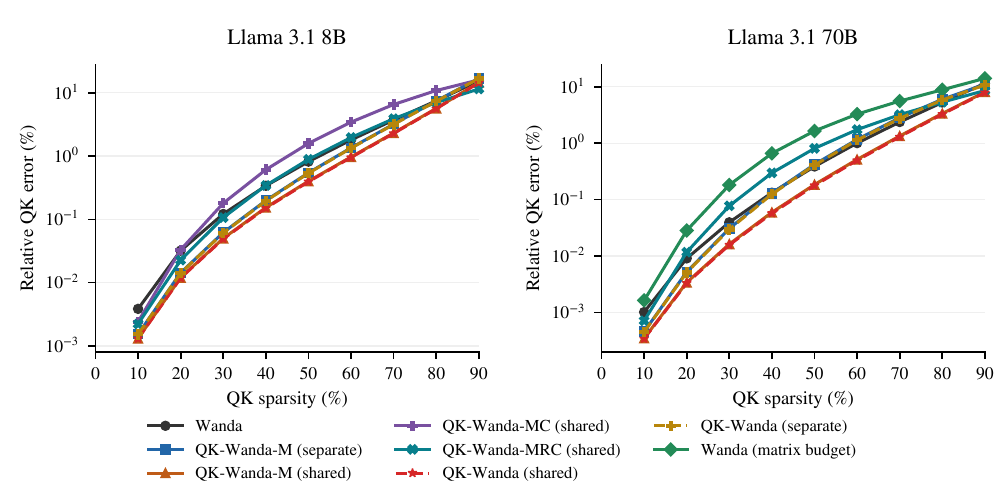}
\caption{Llama 3.1 8B (left) and Llama 3.1 70B (right): QK reconstruction error under the common protocol in \cref{eq:relative_qk_error}.}
\label{fig:qk_error_llama3_8b}
\label{fig:qk_error_llama3_70b}
\end{figure}

\FloatBarrier
\subsection{Qwen2.5}

\begin{figure}[H]
\centering
\includegraphics[width=\textwidth]{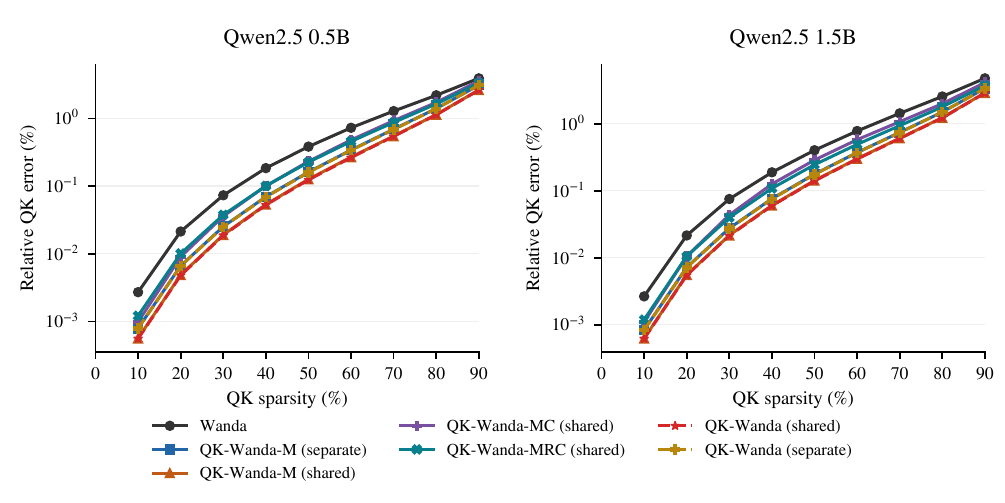}
\caption{Qwen2.5 0.5B (left) and Qwen2.5 1.5B (right): QK reconstruction error under the common protocol in \cref{eq:relative_qk_error}.}
\label{fig:qk_error_qwen25_0p5b}
\label{fig:qk_error_qwen25_1p5b}
\end{figure}

\begin{figure}[H]
\centering
\includegraphics[width=\textwidth]{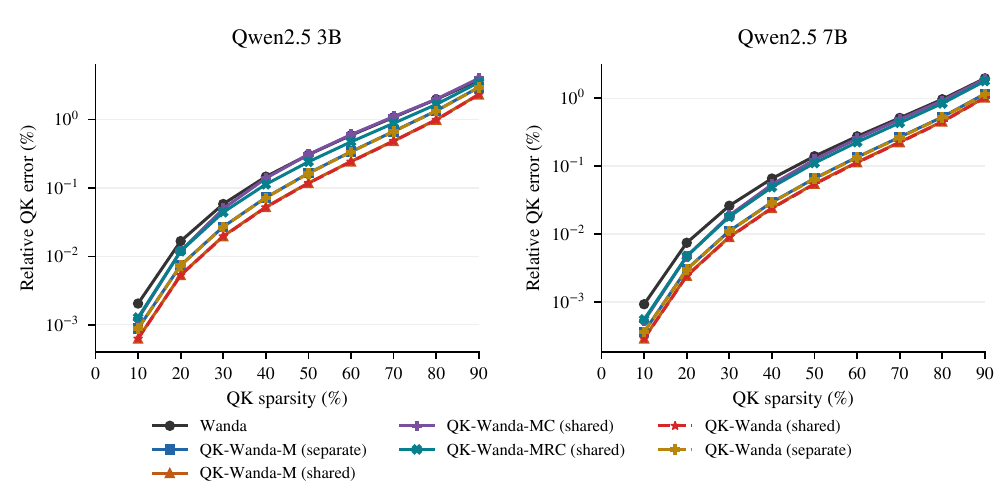}
\caption{Qwen2.5 3B (left) and Qwen2.5 7B (right): QK reconstruction error under the common protocol in \cref{eq:relative_qk_error}.}
\label{fig:qk_error_qwen25_3b}
\label{fig:qk_error_qwen25_7b}
\end{figure}

\FloatBarrier

\begin{figure}[H]
\centering
\includegraphics[width=\textwidth]{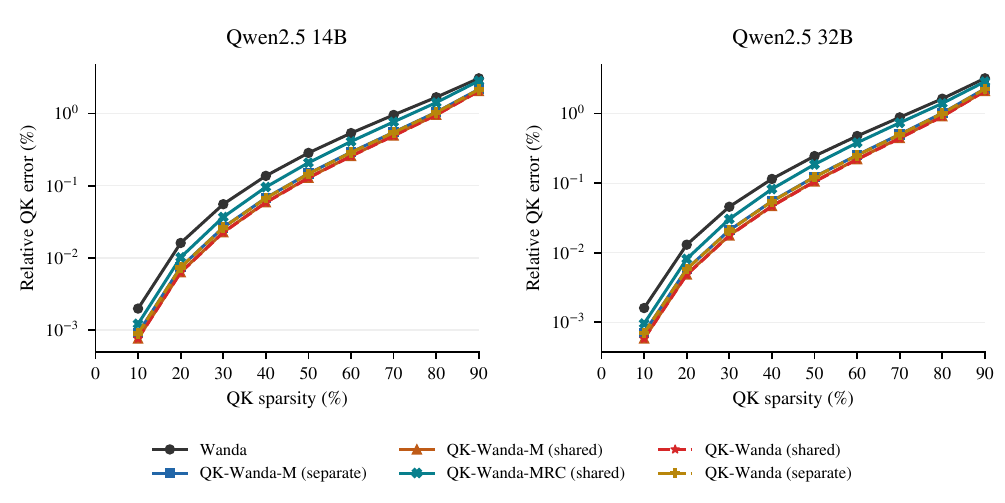}
\caption{Qwen2.5 14B (left) and Qwen2.5 32B (right): QK reconstruction error under the common protocol in \cref{eq:relative_qk_error}.}
\label{fig:qk_error_qwen25_14b}
\label{fig:qk_error_qwen25_32b}
\end{figure}

\begin{figure}[H]
\centering
\includegraphics[width=\textwidth]{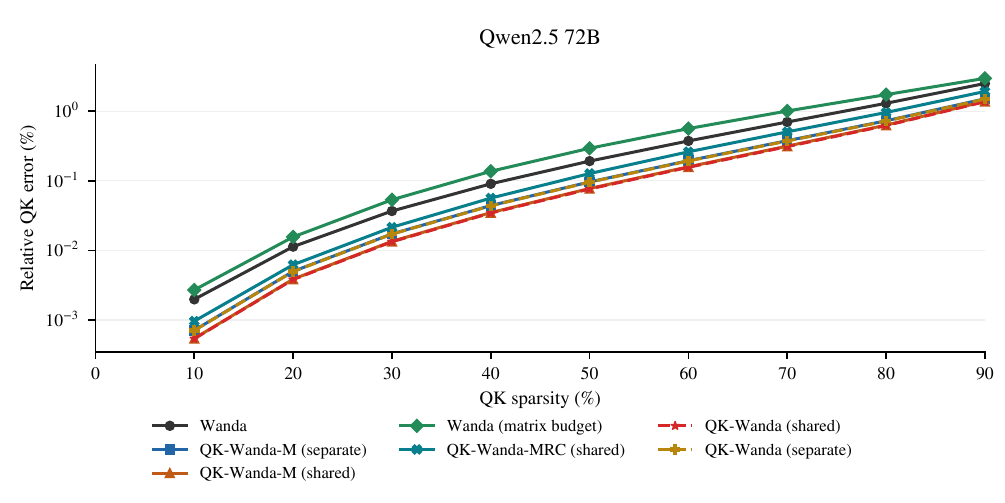}
\caption{Qwen2.5 72B: QK reconstruction error under the common protocol in \cref{eq:relative_qk_error}.}
\label{fig:qk_error_qwen25_72b}
\end{figure}

\FloatBarrier
\section{Progressive QK Re-Scoring}
\label{app:iterative}

We examine whether recomputing QK scores during progressive pruning improves on one-shot mask selection. We specify the tested schedules, explain how their local objective differs from a fixed dense-reference loss, and report the resulting perplexities.

\subsection{Motivation and the executed schedule}

The \algname{QK-Wanda-M} score of a query weight depends on keys, and a key score depends on queries. Partial pruning therefore changes the energies used by subsequent decisions. We tested whether refreshing those energies while gradually growing the masks improves the final model.

Let \(J\) be the number of steps and \(M=\lfloor0.5n\rfloor\) the final removal count in a group of \(n\) original weights. A group is one output row in the row-wise setting, or an entire projection matrix in the matrix-global setting. At step \(r=1,\ldots,J\), both \algname{QK-Wanda-M} score arrays are computed from the current sparse pair \((W^{Q,r-1},W^{K,r-1})\), where the second superscript entry counts completed pruning steps. With initial removal count \(M_0=0\), the cumulative target is
\begin{equation}
 M_r=\left\lfloor\frac{rM}{J}\right\rfloor.
 \label{eq:iterative_budget}
\end{equation}
We add the \(M_r-M_{r-1}\) lowest-scoring remaining weights, separately for Q and K. Previously deleted entries stay deleted. Both next masks are constructed before either is applied. Inputs to the current block remain fixed during these steps; its final sparse output feeds the next block.

This is simultaneous progressive mask growth, rather than an alternating Q-then-K update. For \(J=5\), cumulative sparsity is approximately 10\%, 20\%, 30\%, 40\%, 50\%; \(J=10\) uses approximately 5\% increments. \(J=1\) is one-shot \algname{QK-Wanda-M}. No retained weight is updated by compensation.

\subsection{Why re-scoring is not a greedy teacher-loss optimizer}

The scalar proof applies around the \emph{current} pair. It measures the squared local change caused by the next deletion, rather than its marginal effect on reconstruction of the original reference. Using the unscaled products \(B=Q^\top K\) from \cref{app:proof_c}, for each sequence and head let \(E=B_{\mathrm{current}}-B_{\mathrm{ref}}\) be the accumulated error, and let \(\Delta B=B_{\mathrm{next}}-B_{\mathrm{current}}\) be the next change. The new error is \(E+\Delta B\); we suppress sequence/head labels in these three expressions. For two collections of head matrices \(A=\{A^{(h,s)}\}\) and \(B=\{B^{(h,s)}\}\), define the masked sum of products
\begin{equation}
 \langle A,B\rangle_C=\frac1N\sum_{s=1}^{N}\sum_{h=1}^{H_Q}
 \sum_{t=1}^{T_s}\sum_{u=1}^{t} A^{(h,s)}_{tu}B^{(h,s)}_{tu}.
\end{equation}
The brackets \(\langle\cdot,\cdot\rangle_C\) are shorthand for this expression, and \(\norm{A}_C^2=\langle A,A\rangle_C\) is its associated squared masked error. On unrestricted matrices, \(\norm{\cdot}_C\) is a seminorm because masked-out entries do not contribute; it is a norm on collections restricted to the retained causal entries. Expanding the square gives the exact change in the causal loss \(\mathcal L_C\) against the fixed reference (the ``teacher''):
\begin{equation}
 \norm{E+\Delta B}_C^2-\norm{E}_C^2
 =2\langle E,\Delta B\rangle_C+\norm{\Delta B}_C^2.
 \label{eq:iterative_residual}
\end{equation}
The refreshed scalar score accounts for \(\norm{\Delta B}_C^2\) for one deletion; it omits the residual cross term. Simultaneous batches also contain the interactions in \cref{eq:joint_change}. Thus these experiments test refreshed scalar ranking, not exact greedy minimization, residual-aware selection, or compensated pruning.

\FloatBarrier
\subsection{Results and source provenance}

The ablation used Llama-3.2-1B, QK-only unstructured pruning to 50\%, and calibration seed 0. C4 calibration uses \(128\times2048\) or \(256\times32\) tokens; PPL uses length 2048. Times cover pruning on one A100 40GB, excluding evaluation.

\begin{table}[H]
\centering\small
\caption{Row-wise progressive pruning on Llama-3.2-1B. The final budget is 50\% in every output row. Each row is one seed-0 run; bold selects the lowest unrounded PPL for each calibration setting.}
\label{tab:iterative_row}
\begin{tabular}{llrrrr}
\toprule
Calib. \(N\times T\)&Score&Steps&WikiText-2 PPL&C4 PPL&Time (s)\\
\midrule
128\(\times\)2048&\algname{Wanda}&--&10.6474&15.7481&22.2\\
128\(\times\)2048&\algname{QK-Wanda-M}&1&10.5334&\best{15.5175}&26.2\\
128\(\times\)2048&\algname{QK-Wanda-M}&10&\best{10.5332}&15.5178&94.9\\
\addlinespace
256\(\times\)32&\algname{Wanda}&--&\best{10.3054}&15.0255&15.7\\
256\(\times\)32&\algname{QK-Wanda-M}&1&10.3072&\best{15.0187}&17.2\\
256\(\times\)32&\algname{QK-Wanda-M}&2&10.3077&15.0193&21.9\\
256\(\times\)32&\algname{QK-Wanda-M}&5&10.3066&15.0193&39.7\\
256\(\times\)32&\algname{QK-Wanda-M}&10&10.3067&15.0195&65.7\\
 
\bottomrule
\end{tabular}
\end{table}

\begin{table}[H]
\centering\small
\caption{Progressive QK-Wanda-M pruning with separate matrix budgets. Each Q and K matrix has its own 50\% budget. All other settings match \cref{tab:iterative_row}; this ablation used a separate single-seed calibration protocol. Bold marks the lowest unrounded PPL for each calibration setting.}
\label{tab:iterative_matrix}
\begin{tabular}{llrrrr}
\toprule
Calib. \(N\times T\)&Score&Steps&WikiText-2 PPL&C4 PPL&Time (s)\\
\midrule
128\(\times\)2048&\algname{QK-Wanda-M}&1&\best{11.4888}&\best{18.2040}&25.9\\
128\(\times\)2048&\algname{QK-Wanda-M}&10&12.5458&20.8593&94.9\\
\addlinespace
256\(\times\)32&\algname{QK-Wanda-M}&1&\best{11.1735}&\best{17.7755}&19.8\\
256\(\times\)32&\algname{QK-Wanda-M}&2&11.2330&17.9338&22.4\\
256\(\times\)32&\algname{QK-Wanda-M}&5&11.4152&18.5233&37.0\\
256\(\times\)32&\algname{QK-Wanda-M}&10&11.6609&19.1525&66.3\\
\bottomrule
\end{tabular}
\end{table}

The one-shot rows also compare scoring rules at fixed allocation granularity. With \(128\times2048\) calibration, causal QK-Wanda-M improves both PPL values over Wanda; with \(256\times32\), it slightly worsens WikiText-2 and improves C4. Causal weighting can change rankings within a row (Appendix~\ref{app:row_equivalence}), unlike the single-sequence unmasked row rescaling. These controls show a calibration-dependent scoring effect at one checkpoint and sparsity; they do not isolate how much of the main cross-model gains comes from scoring versus allocation.

Under row budgets, \algname{QK-Wanda-M} schedules differ by less than 0.0012 PPL within each calibration setting, while ten-step pruning takes 3.6--3.8 times as long as one-shot \algname{QK-Wanda-M}. These single-seed results provide no evidence of a useful quality gain.

Under matrix-global budgets, additional steps worsen both perplexities. At length 32, one to ten steps raises WikiText-2/C4 PPL by \(0.4873/1.3771\); at length 2048, by \(1.0570/2.6553\). The data favor one-shot pruning in this ablation. They do not identify whether irreversible mask allocation, omitted residual terms, or other interactions cause the degradation, and do not rule out other adaptive algorithms.

Source paths, SHA256 hashes, and all 16 rows are retained in the paper's
{\small\nolinkurl{results/iterative_archive.json}}.
The exporter checks settings, step coverage, and final sparsity. The single-seed ablation was evaluated separately from the three-seed comparisons.

\FloatBarrier
\section{Relation to Published QK Reconstruction Objectives}
\label{app:prior_geometry}

We relate scalar deletion scores to the QK reconstruction quadratics used in prior quantization work. We then distinguish deletion of an individual weight from deletion of a channel, which changes several coordinates jointly.

\subsection{From a reconstruction quadratic to scalar deletion}

Consider one query head and its associated key head, with the head labels suppressed. The matrix \(\Delta W^{Q}\) is a change to that query projection, \(\delta=\operatorname{vec}(\Delta W^{Q})\) stacks its columns, and \(G_U\) is the unmasked quadratic coefficient from \cref{eq:u_curvature}. With keys fixed, the loss is
\begin{equation}
 \begin{aligned}
 \mathcal L(\Delta W^{Q})
 &=\frac1N\sum_{s=1}^{N}\norm{(\Delta W^{Q}X^{(s)})^\top K^{(s)}}_F^2\\
 &=\frac1N\sum_{s=1}^{N}\norm{K^{(s)\top}\Delta W^{Q}X^{(s)}}_F^2
 =\delta^\top G_U\delta.
 \end{aligned}
 \label{eq:prior_reconstruction}
\end{equation}
Up to a common positive normalization factor, \algname{AESPA} gives the corresponding QK reconstruction targets and vectorized coefficient in Equations (15)--(16) and (19) \citep{kim2024aespa}. Its key orientation differs from ours; transposing the product preserves its Frobenius norm. Evaluating this published type of quadratic at a scalar deletion, with vectorized index \(p\) as in \cref{app:causal_quadratic}, gives our algebraic specialization:
\begin{equation}
 \delta=-W_p e_p
 \quad\Longrightarrow\quad
 \mathcal L=W_p^2[G_U]_{pp}.
\end{equation}
This evaluates the QK reconstruction quadratic for deletion of an original scalar weight. \algname{AESPA} also introduces a product-of-expectations approximation in its Equation (20); our sequence-local accumulators retain the sum of products in \cref{eq:u_curvature}.

\algname{BoA}'s Equations (11)--(12) use related QK Hessians, and (14)--(15) incorporate RoPE through averaged rotated factors \citep{kim2025boa}. That establishes a prior rotary-aware quantization construction. Our \algname{QK-Wanda-MR} proof evaluates the exact finite change of an original scalar coordinate and retains its token-specific rotation, input square, and causal pair restriction. \algname{QK-Wanda-M} uses the corresponding causal restriction without rotations. The resulting statistics retain the causal, token-specific interactions of the scalar perturbation.

\subsection{A channel deletion is a different perturbation}

For one sequence and one unmasked head, suppress sequence/head labels. Using the same unscaled convention, decompose the product as
\begin{equation}
 B=Q^\top K=\sum_i Q_{i:}^\top K_{i:}.
\end{equation}
Deleting a complete matched QK dimension \(i\) removes the summand \(Q_{i:}^\top K_{i:}\). Its uncompensated energy is
\(\norm{Q_{i:}}_2^2\norm{K_{i:}}_2^2\).
\algname{CORP} uses an expected channel-contribution energy to rank such dimensions, then reconstructs removed contributions through closed-form compensation \citep[Sections 3.2--3.4]{zhang2026corp}. In our weight layout, this changes whole matched output rows of Q and K. Deleting just \(W^{Q}_{ij}\) instead produces the perturbation \(-W^{Q}_{ij}X_{j:}^\top K_{i:}\), with a different score for each input coordinate \(j\). The difference is the pruning unit, not the use of the word ``row'' or ``column'' under transposed conventions.

The contribution is the compensation-free elementwise pruning specialization, its exact GQA-aware scalar statistics, and its controlled evaluation, including causal and rotary-aware extensions. The cited prior methods do not instantiate this full procedure. The quadratic connection above identifies which underlying ideas are shared and which compression decision and accumulators are studied here.

\FloatBarrier
\section{Cost of QK Pruning}
\label{app:score_cost}

We measure the full QK pruning computation on NVIDIA A100 PCIe 40GB and H200 GPUs across four Llama~3 model sizes. Tables~\ref{tab:method_cost_masked_history} and~\ref{tab:method_cost_h200_appendix} report medians of five runs for row-wise \algname{Wanda} and shared-budget \algname{QK-Wanda}. The unweighted means of the four checkpoint-specific time ratios are 1.01 on A100 and 1.03 on H200. Individual ratios range from 1.00 to 1.02 on A100 and from 1.01 to 1.08 on H200.

\paragraph{Measurement scope.}
Each timed block includes calibration forward passes, fresh statistics accumulation, score construction, stable GPU sorting and mask selection, weight zeroing, and forward propagation through the pruned block. Thus, both selection and application of the mask are included. Timers exclude checkpoint loading, tokenization and initial calibration-input preparation, initial model/input placement, benchmark-only restoration of the original weights, and evaluation. These measurements cover pruning computation, rather than end-to-end wall time including loading and transfers. The complexity column covers QK projections and scoring only; sorting the shared pool of \(P=d(D_Q+D_K)\) scores adds \(O(P\log P)\) comparison-sort complexity and is included in the measured times. Wanda sorts each output row separately.

\paragraph{Controlled protocol.}
Both GPUs use the same pinned Llama-3.2-1B/3B and Llama-3.1-8B/70B checkpoints in FP16, \(8192\times64=524{,}288\) C4 calibration tokens with seed~0, sequence batch size 32, FP32 statistics, and disabled TF32. These are the saved token manifests, sequence length and batching used in the main pruning experiments. Mask selection uses stable sorting on GPU. We prune only query and key weights at 50\% sparsity: Wanda uses row quotas, and QK-Wanda shares the budget across the two projections. Each method follows its own sequentially pruned calibration trajectory; the last measured output is propagated to its next block. They start from the same dense checkpoint and initial calibration inputs; later blocks receive the outputs of that method's pruned preceding blocks.

The checkpoint remains on CPU with one block moved to GPU. Initial calibration-input placement and block-weight transfers take place outside the measured intervals. Each actual block has two warm-ups and five timed repetitions per method, with method order alternating by block and CUDA synchronization around every measurement. Original query and key weights are restored outside the timer before each repetition. We sum the actual block times for each repetition before taking the median; no times are extrapolated from one layer. The average ratio is the unweighted mean of the four ratios of method-specific medians.

\paragraph{Reproducibility.}
An independent audit reaggregates all five sums over the complete block sequence, checks frozen source and checkpoint revisions, and verifies the saved calibration-token segments. Table~\ref{tab:score_cost} reports repetition ranges; five repeats quantify timing variability. Full repetition sums, calibration receipts, protocol and source hashes are retained in \nolinkurl{results/full_pruning_cost.json}; detailed per-block timing records are preserved in \nolinkurl{output/full_pruning_cost_gpu_20261001/}.

\begin{table}[H]
\centering\small
\setlength{\tabcolsep}{2pt}
\caption{Full QK pruning computation on one A100 PCIe 40GB, at 50\% QK sparsity with the main \(8192\times64=524{,}288\)-token C4 calibration, batch 32. Times include calibration forwards, statistics, scoring, GPU stable sorting and mask selection, weight zeroing and pruned propagation. Loading, setup transfers, initial preparation, benchmark resets are excluded. Medians of five sums over all blocks; ratios are relative to Wanda. Complexity covers QK projections and scoring per block, excluding sorting.}
\label{tab:method_cost_masked_history}
\renewcommand{\arraystretch}{1.15}
\begin{tabular*}{\linewidth}{@{\extracolsep{\fill}}lcccccc@{}}
\toprule
Method&Complexity&Llama-3.2~1B&Llama-3.2~3B&Llama-3.1~8B&Llama-3.1~70B&Avg.\ ratio\\
\midrule
\algname{Wanda}&\(O(NTd(D_Q+D_K))\)&16.88~s&47.92~s&101.98~s&833.06~s&1.00\\
\algname{QK-Wanda}&\(O(NTd(D_Q+D_K))\)&17.30~s&48.66~s&102.80~s&836.27~s&1.01\\
\bottomrule
\end{tabular*}
\end{table}

\begin{table}[H]
\centering\small
\setlength{\tabcolsep}{2pt}
\caption{Full QK pruning computation on one H200, at 50\% QK sparsity with the main \(8192\times64=524{,}288\)-token C4 calibration, batch 32. Times include calibration forwards, statistics, scoring, GPU stable sorting and mask selection, weight zeroing and pruned propagation. Loading, setup transfers, initial preparation, benchmark resets are excluded. Medians of five sums over all blocks; ratios are relative to Wanda. Complexity covers QK projections and scoring per block, excluding sorting.}
\label{tab:method_cost_h200_appendix}
\renewcommand{\arraystretch}{1.15}
\begin{tabular*}{\linewidth}{@{\extracolsep{\fill}}lcccccc@{}}
\toprule
Method&Complexity&Llama-3.2~1B&Llama-3.2~3B&Llama-3.1~8B&Llama-3.1~70B&Avg.\ ratio\\
\midrule
\algname{Wanda}&\(O(NTd(D_Q+D_K))\)&6.59~s&17.02~s&34.57~s&284.37~s&1.00\\
\algname{QK-Wanda}&\(O(NTd(D_Q+D_K))\)&7.12~s&17.45~s&34.98~s&285.98~s&1.03\\
\bottomrule
\end{tabular*}
\end{table}

\begin{table}[H]
\centering\small
\caption{Variability of full QK pruning times with the main calibration protocol, in seconds. Each interval is the minimum--maximum of five full-block sums; ratios use their medians.}
\label{tab:score_cost}
\begin{tabular}{llrrr}
\toprule
GPU&Model&Wanda range&QK-Wanda range&Ratio\\
\midrule
A100&Llama-3.2~1B&16.87--16.88&17.29--17.30&1.025\\
A100&Llama-3.2~3B&47.91--47.93&48.65--48.68&1.015\\
A100&Llama-3.1~8B&101.95--102.00&102.78--102.84&1.008\\
A100&Llama-3.1~70B&832.94--833.41&836.20--836.87&1.004\\
\midrule
H200&Llama-3.2~1B&6.58--6.59&7.11--7.13&1.081\\
H200&Llama-3.2~3B&17.01--17.04&17.43--17.47&1.025\\
H200&Llama-3.1~8B&34.56--34.64&34.97--35.05&1.012\\
H200&Llama-3.1~70B&284.21--284.41&285.82--286.25&1.006\\
\bottomrule
\end{tabular}
\end{table}

These timings describe the current implementation at the stated calibration size and sparsity. Sequence length, batching, sorting implementation, sparsity, and hardware can change the ratios. Earlier score-only, calibration-plus-scoring, smaller-calibration and CPU-sort full-pruning measurements are retained as historical source artifacts; they are not used as full-pruning times in these tables.

\FloatBarrier
\section{Wanda with Matrix-Wide Budgets}
\label{app:matrix_control}

This appendix retains the matched-budget comparison with QK-Wanda-M on the three 70B/72B checkpoints. Wanda uses a separate budget for each complete query and key projection. The corresponding QK-Wanda curves appear in Appendix~\ref{app:budget_curves}, and the main shared-budget comparison in \cref{tab:large_models}. Every projection matched the corresponding separate QK-Wanda-M run's integer deletion count. Checkpoints, tokenizers, seed 0, 524,288 calibration tokens segmented into length-64 sequences, and PPL context 2048 were fixed. H200 runs used FP16 for Llama and BF16 for Qwen. Biases and non-QK weights remained dense.

Pruning was sequential across blocks: sampled tokens matched, while later activations could differ because of earlier masks. QK error used common dense inputs and the original 16 held-out C4 sequences of length 512; each pass reproduced the published 50\% shared-mask anchor. \Cref{tab:matrix_matched_scores} compares the scores under identical budgets. QK-Wanda-M produced lower measured WikiText-2/C4 PPL in nine/nine of nine configurations on Llama~2, seven/four on Llama~3, and five/seven on Qwen; QK error was lower in all 27. At 80\% on Qwen, matrix-Wanda had lower PPL than separate QK-Wanda-M despite larger QK error; the C4 difference is hidden by rounding.

\begin{table}[H]
\centering\fontsize{8}{9}\selectfont\setlength{\tabcolsep}{5pt}
\caption{Scoring at matched matrix budgets: Wanda and causal pre-RoPE QK-Wanda-M with separate QK budgets and identical per-projection deletion counts. Lower is better. Bold marks lower displayed values within each pair, including ties. Errors are block-mean relative QK reconstruction errors (\%).}
\label{tab:matrix_matched_scores}
\begin{qktabular}{7}{cc>{\columncolor{qkrow}}cc>{\columncolor{qkrow}}cc>{\columncolor{qkrow}}c}
\toprule
 & \multicolumn{2}{c}{WikiText-2 PPL} & \multicolumn{2}{c}{C4 PPL} & \multicolumn{2}{c}{Rel. QK error (\%)}\\
\cmidrule(lr){2-3}\cmidrule(lr){4-5}\cmidrule(lr){6-7}
Sparsity & \shortstack{Wanda\\matrix} & \shortstack{QK-Wanda-M\\separate} & \shortstack{Wanda\\matrix} & \shortstack{QK-Wanda-M\\separate} & \shortstack{Wanda\\matrix} & \shortstack{QK-Wanda-M\\separate}\\
\midrule
\multicolumn{7}{c}{\textbf{Llama-2-70B}}\\
\midrule
10\% & \textbf{3.32} & \textbf{3.32} & \textbf{5.71} & \textbf{5.71} & \textbf{0.00} & \textbf{0.00}\\
20\% & \textbf{3.32} & \textbf{3.32} & \textbf{5.71} & \textbf{5.71} & 0.01 & \textbf{0.00}\\
30\% & \textbf{3.32} & \textbf{3.32} & \textbf{5.71} & \textbf{5.71} & 0.04 & \textbf{0.01}\\
40\% & 3.33 & \textbf{3.32} & 5.72 & \textbf{5.71} & 0.18 & \textbf{0.03}\\
50\% & 3.36 & \textbf{3.34} & 5.74 & \textbf{5.72} & 0.54 & \textbf{0.10}\\
60\% & 3.43 & \textbf{3.37} & 5.81 & \textbf{5.75} & 1.41 & \textbf{0.28}\\
70\% & 3.61 & \textbf{3.48} & 5.99 & \textbf{5.84} & 3.28 & \textbf{0.68}\\
80\% & 4.13 & \textbf{3.84} & 6.53 & \textbf{6.14} & 7.19 & \textbf{1.60}\\
90\% & 5.76 & \textbf{5.62} & 8.76 & \textbf{7.96} & 15.78 & \textbf{4.12}\\
\midrule
\multicolumn{7}{c}{\textbf{Llama-3.1-70B}}\\
\midrule
10\% & \textbf{2.81} & \textbf{2.81} & \textbf{7.11} & \textbf{7.11} & \textbf{0.00} & \textbf{0.00}\\
20\% & \textbf{2.81} & \textbf{2.81} & 7.12 & \textbf{7.11} & 0.03 & \textbf{0.01}\\
30\% & 2.86 & \textbf{2.82} & 7.15 & \textbf{7.13} & 0.18 & \textbf{0.03}\\
40\% & 3.00 & \textbf{2.88} & 7.28 & \textbf{7.19} & 0.66 & \textbf{0.13}\\
50\% & 3.35 & \textbf{3.06} & 7.61 & \textbf{7.41} & 1.65 & \textbf{0.42}\\
60\% & 4.14 & \textbf{3.95} & \textbf{8.46} & 9.33 & 3.28 & \textbf{1.17}\\
70\% & \textbf{6.25} & 8.89 & \textbf{10.76} & 16.90 & 5.58 & \textbf{2.83}\\
80\% & \textbf{17.69} & 26.00 & \textbf{24.21} & 43.09 & 8.85 & \textbf{6.00}\\
90\% & 208.08 & \textbf{97.00} & \textbf{120.85} & 138.71 & 14.12 & \textbf{11.05}\\
\midrule
\multicolumn{7}{c}{\textbf{Qwen2.5-72B}}\\
\midrule
10\% & \textbf{3.88} & \textbf{3.88} & \textbf{9.26} & \textbf{9.26} & \textbf{0.00} & \textbf{0.00}\\
20\% & \textbf{3.88} & \textbf{3.88} & \textbf{9.26} & \textbf{9.26} & 0.02 & \textbf{0.00}\\
30\% & 3.89 & \textbf{3.88} & \textbf{9.26} & \textbf{9.26} & 0.05 & \textbf{0.02}\\
40\% & 3.90 & \textbf{3.89} & 9.27 & \textbf{9.26} & 0.14 & \textbf{0.04}\\
50\% & 3.92 & \textbf{3.90} & 9.29 & \textbf{9.28} & 0.29 & \textbf{0.10}\\
60\% & 3.97 & \textbf{3.95} & 9.33 & \textbf{9.31} & 0.56 & \textbf{0.19}\\
70\% & \textbf{4.07} & \textbf{4.07} & 9.42 & \textbf{9.40} & 1.01 & \textbf{0.38}\\
80\% & \textbf{4.31} & 4.45 & \textbf{9.64} & \textbf{9.64} & 1.73 & \textbf{0.73}\\
90\% & \textbf{5.25} & 5.61 & \textbf{10.46} & 10.47 & 2.97 & \textbf{1.51}\\
\bottomrule
\end{qktabular}
\end{table}

\FloatBarrier
\subsection{Zero-shot results}

\Cref{tab:zero_shot_matrix_control} reports the seven full task splits using the exact saved masks at 30\%, 50\%, and 80\%. Evaluation documents and prompts matched the original comparison. Relative to row-wise Wanda, the matrix control improved the mean in three of nine configurations and worsened it in six. Shared QK-Wanda-M improved over the matrix control in six of nine, but lost on Llama~2 70B at 50\% and Qwen2.5-72B at 30\%/50\%. At 80\% on Llama-3.1-70B, the matrix control had lower PPL than shared QK-Wanda-M but much lower mean accuracy (57.74\% versus 66.45\%). Separate-budget QK-Wanda-M was not included in this task table, so the accuracy comparison does not isolate the scoring rule. All results use one calibration seed, and small differences do not establish statistical significance.

\begin{table}[H]
\centering\fontsize{8}{9}\selectfont\setlength{\tabcolsep}{2pt}
\caption{Zero-shot accuracy (\%) for the matrix-budget control and the original comparison. Wanda uses row budgets; Wanda (matrix) ranks weights within each projection; QK-Wanda-M uses shared QK budgets. Each section reuses the same dense reference and evaluation examples. Bold marks the highest displayed sparse score at each sparsity, including ties; Avg. is the unweighted seven-task mean.}
\label{tab:zero_shot_matrix_control}
\begin{qktabular}{10}{lccccccccc}
\toprule
Method & Sparsity & BoolQ & RTE & HellaSwag & WinoGrande & \shortstack{ARC-\\Easy} & \shortstack{ARC-\\Challenge} & \shortstack{OpenBook\\QA} & Avg.\\
\midrule
\multicolumn{10}{c}{\textbf{Llama-2-70B}}\\
\midrule
Dense & 0\% & 83.76 & 67.87 & 83.81 & 77.98 & 80.98 & 57.34 & 48.80 & 71.50\\
\midrule
\algname{Wanda} & 30\% & 83.43 & \textbf{68.95} & 83.72 & 77.82 & 80.72 & \textbf{57.42} & 48.40 & 71.50\\
\algname{Wanda} (matrix) & 30\% & \textbf{83.70} & 68.23 & 83.76 & \textbf{78.06} & 80.89 & 57.17 & \textbf{48.60} & 71.49\\
\algname{QK-Wanda-M} & 30\% & 83.64 & \textbf{68.95} & \textbf{83.79} & 77.90 & \textbf{81.02} & \textbf{57.42} & \textbf{48.60} & \textbf{71.62}\\
\addlinespace[2pt]
\algname{Wanda} & 50\% & 82.63 & 69.31 & 83.38 & \textbf{77.98} & 80.89 & 57.00 & 48.40 & 71.37\\
\algname{Wanda} (matrix) & 50\% & 83.49 & \textbf{71.12} & 83.36 & 77.66 & \textbf{81.06} & \textbf{57.51} & 48.40 & \textbf{71.80}\\
\algname{QK-Wanda-M} & 50\% & \textbf{83.94} & 68.95 & \textbf{83.79} & 77.90 & 81.02 & 57.17 & \textbf{48.60} & 71.62\\
\addlinespace[2pt]
\algname{Wanda} & 80\% & 76.51 & 65.70 & 76.66 & 68.03 & 78.11 & 50.34 & 45.60 & 65.85\\
\algname{Wanda} (matrix) & 80\% & 80.92 & \textbf{73.29} & 78.52 & \textbf{78.06} & 78.75 & 53.16 & 48.00 & 70.10\\
\algname{QK-Wanda-M} & 80\% & \textbf{84.04} & 70.40 & \textbf{82.54} & 77.27 & \textbf{81.44} & \textbf{56.66} & \textbf{50.00} & \textbf{71.76}\\
\midrule
\multicolumn{10}{c}{\textbf{Llama-3.1-70B}}\\
\midrule
Dense & 0\% & 85.44 & 69.68 & 85.03 & 79.95 & 86.66 & 64.93 & 47.80 & 74.21\\
\midrule
\algname{Wanda} & 30\% & 85.35 & 70.04 & \textbf{84.86} & 79.56 & 86.36 & \textbf{64.76} & \textbf{48.20} & 74.16\\
\algname{Wanda} (matrix) & 30\% & \textbf{85.50} & 70.40 & 84.60 & 79.40 & 86.41 & 64.16 & 47.20 & 73.95\\
\algname{QK-Wanda-M} & 30\% & 85.26 & \textbf{70.76} & 84.85 & \textbf{79.64} & \textbf{86.62} & 64.33 & 47.80 & \textbf{74.18}\\
\addlinespace[2pt]
\algname{Wanda} & 50\% & 84.74 & \textbf{70.04} & 84.32 & 79.01 & 85.94 & 62.97 & \textbf{48.20} & 73.60\\
\algname{Wanda} (matrix) & 50\% & 82.75 & 69.68 & 82.56 & 79.08 & 84.64 & 61.26 & 47.80 & 72.54\\
\algname{QK-Wanda-M} & 50\% & \textbf{85.11} & \textbf{70.04} & \textbf{84.54} & \textbf{79.24} & \textbf{86.20} & \textbf{63.82} & 47.60 & \textbf{73.79}\\
\addlinespace[2pt]
\algname{Wanda} & 80\% & 76.06 & 68.59 & \textbf{76.62} & \textbf{69.14} & 80.89 & 54.01 & \textbf{41.20} & \textbf{66.64}\\
\algname{Wanda} (matrix) & 80\% & 71.99 & 57.76 & 67.76 & 59.27 & 69.82 & 46.16 & 31.40 & 57.74\\
\algname{QK-Wanda-M} & 80\% & \textbf{77.80} & \textbf{71.12} & 73.85 & 64.88 & \textbf{81.61} & \textbf{55.72} & 40.20 & 66.45\\
\midrule
\multicolumn{10}{c}{\textbf{Qwen2.5-72B}}\\
\midrule
Dense & 0\% & 89.17 & 77.26 & 86.06 & 78.14 & 83.38 & 62.63 & 46.60 & 74.75\\
\midrule
\algname{Wanda} & 30\% & 89.27 & 77.26 & 86.13 & 77.51 & \textbf{83.33} & 62.46 & 46.80 & 74.68\\
\algname{Wanda} (matrix) & 30\% & \textbf{89.33} & \textbf{77.98} & \textbf{86.16} & \textbf{78.45} & 83.25 & \textbf{62.54} & \textbf{47.40} & \textbf{75.02}\\
\algname{QK-Wanda-M} & 30\% & 89.17 & 77.26 & 86.04 & 77.58 & 83.16 & 62.46 & 46.60 & 74.61\\
\addlinespace[2pt]
\algname{Wanda} & 50\% & 89.11 & \textbf{76.90} & 85.90 & \textbf{77.82} & 83.42 & \textbf{62.12} & \textbf{47.80} & \textbf{74.72}\\
\algname{Wanda} (matrix) & 50\% & 89.24 & \textbf{76.90} & \textbf{86.12} & 77.66 & 83.12 & 61.86 & 47.20 & 74.59\\
\algname{QK-Wanda-M} & 50\% & \textbf{89.42} & 76.53 & 85.96 & 77.58 & \textbf{83.84} & 61.86 & 46.60 & 74.54\\
\addlinespace[2pt]
\algname{Wanda} & 80\% & 87.55 & \textbf{77.26} & 83.90 & \textbf{76.40} & 83.25 & 60.15 & 45.60 & 73.44\\
\algname{Wanda} (matrix) & 80\% & 88.65 & 73.65 & 84.73 & 75.69 & 83.00 & 60.84 & \textbf{46.00} & 73.22\\
\algname{QK-Wanda-M} & 80\% & \textbf{89.27} & 75.09 & \textbf{85.06} & 76.24 & \textbf{85.23} & \textbf{62.29} & 44.60 & \textbf{73.97}\\
\bottomrule
\end{qktabular}
\end{table}

\FloatBarrier
\section{Centering the Causal Reconstruction Error}
\label{app:centered}

Softmax is unchanged when the same constant is added to all allowed logits of a query. This motivates testing whether the pruning objective benefits from ignoring such shifts. We retain the causal, pre-RoPE setting of \cref{eq:main_causal_loss} and denote the centered variant by \algname{QK-Wanda-MC}. Centering is performed separately for each query token, over its allowed key positions; future keys never enter the mean.

\subsection{Objective and single-deletion scores}

Fix one transformer block and one calibration sequence, with the notation of \cref{sec:causal_method}. For a query head \(h\in\mathcal H_g\), define the product error and its causal row mean by
\begin{equation}
 E^{(h)}_{tu}=\bigl(\widehat Q^{(h)\top}\widehat K^{(g)}-Q^{(h)\top}K^{(g)}\bigr)_{tu},
 \qquad \bar E^{(h)}_t=\frac1t\sum_{u=1}^{t}E^{(h)}_{tu}.
\end{equation}
Here \(E^{(h)}\in\R^{T\times T}\); rows index query positions and columns index key positions. The centered loss is
\begin{equation}
 \mathcal L_{\mathrm{cent}}
 =\sum_{g=1}^{H_K}\sum_{h\in\mathcal H_g}\sum_{t=1}^{T}\sum_{u=1}^{t}
 \bigl(E^{(h)}_{tu}-\bar E^{(h)}_t\bigr)^2.
 \label{eq:centered_loss}
\end{equation}
Equivalently, each row's squared error is minimized over an arbitrary constant offset on its allowed positions. For a token vector \(v=(v_1,\ldots,v_T)\), write
\begin{equation}
 \nu_t(v)=\sum_{u=1}^{t}\left(v_u-\frac1t\sum_{r=1}^{t}v_r\right)^2
 =\sum_{u=1}^{t}v_u^2-\frac1t\left(\sum_{u=1}^{t}v_u\right)^2
 \label{eq:centered_prefix}
\end{equation}
for its centered prefix energy. Deleting one scalar weight while keeping every other weight and bias fixed gives
\begin{align}
 \hat S^{Q^{(h)},\mathrm{cent}}_{ij}
 &=\bigl(W^{Q^{(h)}}_{ij}\bigr)^2
   \sum_{t=1}^{T}X_{jt}^2\,\nu_t\!\left(K^{(g)}_{i:}\right), \qquad h\in\mathcal H_g,
 \label{eq:centered_query_score}\\
 \hat S^{K^{(g)},\mathrm{cent}}_{ij}
 &=\bigl(W^{K^{(g)}}_{ij}\bigr)^2
   \sum_{t=1}^{T}\left[\sum_{h\in\mathcal H_g}\bigl(Q^{(h)}_{it}\bigr)^2\right]
   \nu_t\!\left(X_{j:}\right).
 \label{eq:centered_key_score}
\end{align}
For a query deletion, the affected error row is \(-W^{Q^{(h)}}_{ij}X_{jt}K^{(g)}_{i,1:t}\), so centering removes the mean of the key prefix. For a key deletion, its entries are \(-W^{K^{(g)}}_{ij}Q^{(h)}_{it}X_{ju}\), so centering removes the mean of the input prefix and the cost sums over all query heads sharing that key. These identities establish the exact single-deletion costs in \cref{eq:centered_query_score,eq:centered_key_score}.

We average the scores over independent calibration sequences and select both masks under the same shared QK budget as the main method. Prefix sums of \(v\) and \(v^2\) suffice to evaluate \(\nu_t\). The implementation first subtracts \(v_1\), which preserves every prefix variance and reduces cancellation, and clamps small negative numerical residuals to zero. As in the original implementation, both scores are divided by the common head width \(m\); this does not change their ordering. The causal prefix of length one has zero centered energy.

\subsection{Llama 3 results}

The comparison used the same checkpoints, calibration tokens, seed, sparsities, and evaluation data as the original experiments. Centered pruning and evaluation ran on NVIDIA H200 GPUs. \Cref{tab:main_results,tab:zero_shot_llama_all} add the centered results for Llama-3.2-1B/3B and Llama-3.1-8B. Their PPL and reconstruction curves appear in Appendices~\ref{app:budget_curves} and~\ref{app:qk_error_curves}. The reconstruction columns use the common QK diagnostic in \cref{eq:relative_qk_error}, rather than the centered scoring loss.

At 30\%, 50\%, and 80\% sparsity, centering increased both PPL metrics and the common full QK error on all three checkpoints relative to shared-budget \algname{QK-Wanda-M}. For example, at 50\% on 1B, WikiText-2/C4 PPL increased from 10.65/16.24 to 16.46/28.93. Zero-shot changes were mixed at 30\%: mean accuracy increased from 54.28\% to 54.43\% on 1B and from 69.40\% to 69.60\% on 8B, but decreased from 61.55\% to 61.35\% on 3B. At 50\% and 80\%, all three checkpoints had lower mean accuracy with centering.

Thus, removing row-constant error from this proxy did not improve the tabulated PPL results. Softmax invariance alone does not guarantee a better pruning criterion: the proxy remains pre-RoPE and describes individual deletions, while the final mask removes many weights jointly. Within this causal comparison, centering alone did not provide a PPL advantage on these Llama~3 checkpoints.

\subsection{TinyLlama, Llama 2, and Qwen2.5 results}

We repeated the centered comparison on TinyLlama-1.1B, Llama-2-7B/13B, and Qwen2.5-0.5B/1.5B/3B/7B, preserving each checkpoint's original calibration and evaluation protocol. \Cref{tab:llama2_results,tab:qwen25_results} include PPL, mean accuracy, and reconstruction error at 30\%, 50\%, and 80\% sparsity; \cref{tab:zero_shot_llama_all,tab:zero_shot_qwen25_all} give all seven task scores. The corresponding appendix curves cover 0--90\% sparsity, reusing the dense references. The reconstruction metric is the same as above.

Compared with shared-budget \algname{QK-Wanda-M}, centering increased both PPL metrics at every nonzero sparsity on TinyLlama and Llama-2-13B. Outcomes on Qwen2.5 were mixed. At 80\% on 0.5B, WikiText-2/C4 PPL improved from 15.07/23.07 to 14.73/22.76, and mean accuracy increased from 47.01\% to 47.37\%. On 7B, WikiText-2 PPL improved at all nine nonzero sparsities, often only slightly, whereas C4 improved at three. Across these seven checkpoints and nine sparsities, centering lowered WikiText-2 PPL in 19 of 63 comparisons and C4 PPL in 8 of 63.

Mean zero-shot accuracy improved in 9 of the 21 tabulated comparisons and declined in 12. For example, Llama-2-13B had slightly higher accuracy at all three tabulated sparsities despite higher PPL; Qwen2.5-3B at 80\% improved from 61.50\% to 62.76\% accuracy while C4 PPL worsened from 16.08 to 16.51. The common full QK error increased in 62 of 63 comparisons. Thus centering sometimes helped downstream metrics, but did not provide a consistent improvement across checkpoints, sparsities, or datasets. These are single-calibration-seed comparisons.

\FloatBarrier
\subsection{Centering after rotary position embeddings}
\label{app:rope_centered}

The pre-RoPE experiment does not center the logits that enter softmax. We therefore also evaluated \algname{QK-Wanda-MRC}, which centers the causal \emph{post-RoPE} product error. Fix one block and one calibration sequence, suppressing its sequence index. We use the rotary maps and token columns from Appendix~\ref{app:rope}: \(\widetilde q_t^{(h)}=R_tq_t^{(h)}\), \(\widetilde k_u^{(g)}=R_uk_u^{(g)}\), and \(Z_{tu}^{(h)}=\widetilde q_t^{(h)\top}\widetilde k_u^{(g)}\) for \(h\in\mathcal H_g\). Hatted products use the pruned weights with the same inputs, retained biases, and rotary maps. Define
\begin{equation}
 E^{(h)}_{tu,R}=\widehat Z^{(h)}_{tu}-Z^{(h)}_{tu},
 \qquad \bar E^{(h)}_{t,R}=\frac1t\sum_{u=1}^{t}E^{(h)}_{tu,R}.
\end{equation}
The loss for this sequence is
\begin{equation}
 \mathcal L_{R,\mathrm{cent}}
 =\sum_{g=1}^{H_K}\sum_{h\in\mathcal H_g}\sum_{t=1}^{T}\sum_{u=1}^{t}
 \bigl(E^{(h)}_{tu,R}-\bar E^{(h)}_{t,R}\bigr)^2.
 \label{eq:rope_centered_loss}
\end{equation}
Only allowed keys enter each mean. Unlike centering the pre-RoPE product, this objective removes row-constant changes from the actual rotary-transformed logits, up to the common attention scale.

For a sequence of column vectors \(v=(v_1,\ldots,v_T)\), \(v_u\in\R^{m\times1}\), define its unnormalized prefix covariance
\begin{equation}
 \bar v_t=\frac1t\sum_{u=1}^{t}v_u,
 \qquad
 \mathcal V_t(v)=\sum_{u=1}^{t}(v_u-\bar v_t)(v_u-\bar v_t)^\top
 \in\R^{m\times m}.
 \label{eq:rope_centered_cov}
\end{equation}
For a candidate weight at coordinates \((i,j)\), let \(e_i\) be the coordinate basis vector and set \(y_u=X_{ju}R_ue_i\). With all other weights and biases fixed, the exact single-deletion costs are
\begin{align}
 \hat S^{Q^{(h)},\mathrm{cent}}_{ij,R}
 &=\bigl(W^{Q^{(h)}}_{ij}\bigr)^2
   \sum_{t=1}^{T}X_{jt}^2(R_te_i)^\top
   \mathcal V_t\!\left(\widetilde k^{(g)}\right)(R_te_i),
 \qquad h\in\mathcal H_g,
 \label{eq:rope_centered_query}\\
 \hat S^{K^{(g)},\mathrm{cent}}_{ij,R}
 &=\bigl(W^{K^{(g)}}_{ij}\bigr)^2
   \sum_{t=1}^{T}\sum_{h\in\mathcal H_g}
   \widetilde q_t^{(h)\top}\mathcal V_t(y)\widetilde q_t^{(h)}.
 \label{eq:rope_centered_key}
\end{align}
Here \(\widetilde k^{(g)}\) denotes the sequence of rotated key columns; \(y\) depends on the candidate coordinates \((i,j)\).

To obtain \cref{eq:rope_centered_query}, a query deletion gives \(E^{(h)}_{tu,R}=-W^{Q^{(h)}}_{ij}X_{jt}(R_te_i)^\top\widetilde k_u^{(g)}\). For fixed \(t\), subtracting its mean replaces \(\widetilde k_u^{(g)}\) by its deviation from the prefix mean. Squaring and summing over \(u\le t\) gives the covariance contraction. A key deletion gives \(E^{(h)}_{tu,R}=-W^{K^{(g)}}_{ij}\widetilde q_t^{(h)\top}y_u\); the same argument centers the sequence \(y\), then sums over all query heads sharing the key. Centering rotated keys or inputs alone would not give this key-deletion cost, because the direction \(R_ue_i\) changes with position.

We average these costs over calibration sequences and apply the same shared QK budget, with the common \(1/m\) scaling used by the other implementations. Setting all rotary maps to the identity recovers \cref{eq:centered_query_score,eq:centered_key_score}. For RoPE, each \(R_te_i\) is supported on one rotary coordinate pair, so only \(2\times2\) prefix covariances are needed in these contractions. Prefix first and second moments compute them without forming a \(T\times T\) logit matrix. We use the model's actual rotary maps, including its frequency scaling, and the same stabilized prefix accumulation as above. Independent tests compare every scalar score against explicit deletion and centering of the full causal product error.

\FloatBarrier
\subsection{Post-RoPE centered results on Llama 3}

We evaluated \algname{QK-Wanda-MRC} on the same Llama-3.2-1B/3B and Llama-3.1-8B checkpoints, calibration sequences, sparsity grid, and evaluation data. \Cref{tab:main_results,tab:zero_shot_llama_all} report the summary and seven-task results; Appendices~\ref{app:budget_curves} and~\ref{app:qk_error_curves} include its full curves. The reconstruction columns again use the common QK diagnostic, rather than the new centered objective.

At 80\% sparsity, the combined RoPE-centered variant improved both PPL metrics and mean zero-shot accuracy over ordinary shared-budget \algname{QK-Wanda-M} on 3B and 8B. On 3B, WikiText-2/C4 PPL decreased from 42.70/44.30 to 23.18/33.50, and accuracy increased from 50.16\% to 52.88\%. On 8B, PPL decreased from 13.81/22.36 to 12.51/19.95, and accuracy increased from 60.40\% to 61.81\%. Both checkpoints still had higher PPL than original row-wise \algname{Wanda} at this sparsity. Compared with pre-RoPE centering, the 3B variant improved both PPL metrics at all nine nonzero sparsities.

The gains were not uniform. At 30\% and 50\%, RoPE-centered PPL was higher than ordinary \algname{QK-Wanda-M} on both datasets for all three checkpoints. The 1B regression at 50\% was substantial: WikiText-2/C4 PPL rose from 10.65/16.24 to 22.63/46.81, and mean accuracy fell from 53.51\% to 51.07\%. Across the three checkpoints and nine nonzero sparsities, RoPE-centering lowered WikiText-2 PPL in 8 of 27 comparisons, C4 PPL in 7 of 27, and the common full QK error in 4 of 27. Mean zero-shot accuracy improved in 4 of the 9 tabulated comparisons. Thus the combination recovered some high-sparsity quality on 3B/8B, but these results do not show a consistent advantage over QK-Wanda-M across the tested settings.

\subsection{Post-RoPE centering across all model families}
\label{app:rope_centered_all}

We extended the same post-RoPE centered score to all 15 checkpoints, including Llama-2-70B, Llama-3.1-70B, and Qwen2.5-72B. The existing appendix summary and task tables include these measurements, and the PPL and QK-error figures include the full sparsity sweeps. Checkpoint revisions, calibration tokens, precision, evaluation data, and shared QK budgets match the original controls. QK error follows the same diagnostic in \cref{eq:relative_qk_error}.

The downstream effect depends strongly on sparsity. Relative to the ordinary causal score, post-RoPE centering lowered WikiText-2/C4 PPL on 12/10 of the 15 checkpoints at 80\% sparsity, but on only 3/0 at 50\% and 2/1 at 30\%. Across all 135 nonzero model--sparsity pairs, the corresponding counts were 53 and 41. These counts use unrounded values and include small differences.

The largest models illustrate the mixed outcome. At 80\% sparsity, Llama-3.1-70B improved from 18.98/32.37 to 13.67/22.61 WikiText-2/C4 PPL. Llama-2-70B instead worsened from 3.63/5.97 to 3.85/6.27. Qwen2.5-72B improved WikiText-2 from 4.244 to 4.186 while C4 changed from 9.523 to 9.527. Thus the combined variant can recover high-sparsity quality, especially on Llama~3, but is not uniformly better. Appendix~\ref{app:factorial_selection} separates the contributions of masking, rotation, and centering and compares them with QK-Wanda.

\FloatBarrier
\section{Choice of scoring variant}
\label{app:factorial_selection}

We compare all eight combinations of three choices in the reconstruction objective: causal versus unrestricted query--key pairs, pre- versus post-RoPE products, and uncentered versus centered error. Centering subtracts the mean error over the eligible key positions separately for each query. With RoPE, this operation is applied after rotation. The switches are M (causal mask), R (RoPE), and C (centering), with the canonical method names listed in the tables. These choices change the pruning scores; the model's attention computation is unchanged.

Appendices~\ref{app:causal_details},~\ref{app:rope}, and~\ref{app:centered} derive the masked variants. Their unrestricted counterparts include all key positions: set the pair mask to one in the RoPE formulas, and replace each causal prefix by the full key sequence in the centered formulas. This gives R, C, and RC under the same individual-deletion principle as QK-Wanda.

The initial comparison uses Llama-2-7B, Llama-3.1-8B, and Qwen2.5-7B at 50\% and 80\% sparsity. All variants share the same QK budget, checkpoint revision, calibration tokens, numerical precision, and WikiText-2/C4 evaluation protocol. All eight variants are complete. \Cref{tab:factorial_ppl} reports PPL and seven-task mean accuracy.

\begin{table}[H]
\centering\small
\setlength{\tabcolsep}{3pt}
\caption{Eight scoring combinations on 7--8B models: causal mask (M), RoPE (R), and centering (C). QK variants share a QK budget; Wanda uses row-wise budgets. Dense (0\%) is repeated as a reference. Avg. is the mean seven-task zero-shot accuracy (\%). Bold marks the lowest available PPL among QK variants (displayed ties included). Attention itself is unchanged.}
\label{tab:factorial_ppl}
\begin{qktabular}{10}{lcccrrrrrr}
\toprule
& & & & \multicolumn{3}{c}{50\% sparsity} & \multicolumn{3}{c}{80\% sparsity}\\
\cmidrule(lr){5-7}\cmidrule(lr){8-10}
Method / reference & M & R & C & WikiText-2 & C4 & Avg. & WikiText-2 & C4 & Avg.\\
\midrule
\multicolumn{10}{l}{\textbf{Llama-2-7B}}\\
Dense & \multicolumn{3}{c}{---} & 5.472 & 7.264 & 64.38 & 5.472 & 7.264 & 64.38\\
Wanda & \multicolumn{3}{c}{---} & 5.618 & 7.419 & 63.43 & 12.540 & 13.940 & 52.56\\
\cmidrule(lr){1-10}
QK-Wanda & 0 & 0 & 0 & \textbf{5.529} & \textbf{7.357} & 63.98 & 8.313 & 11.197 & 57.84\\
QK-Wanda-M & 1 & 0 & 0 & 5.530 & 7.358 & 63.92 & 8.328 & 11.213 & 57.81\\
QK-Wanda-C & 0 & 0 & 1 & 5.679 & 7.478 & 63.95 & 9.710 & 12.985 & 57.81\\
QK-Wanda-MC & 1 & 0 & 1 & 5.636 & 7.438 & 63.98 & 9.722 & 13.007 & 57.88\\
QK-Wanda-R & 0 & 1 & 0 & 5.535 & 7.360 & 64.15 & 8.241 & 11.033 & 57.81\\
QK-Wanda-MR & 1 & 1 & 0 & 5.537 & 7.365 & 64.02 & 8.391 & 11.269 & 57.82\\
QK-Wanda-RC & 0 & 1 & 1 & 5.601 & 7.413 & 63.80 & \textbf{8.038} & \textbf{10.821} & 58.82\\
QK-Wanda-MRC & 1 & 1 & 1 & 5.579 & 7.393 & 64.00 & 8.254 & 11.112 & 58.81\\
\midrule
\multicolumn{10}{l}{\textbf{Llama-3.1-8B}}\\
Dense & \multicolumn{3}{c}{---} & 6.238 & 9.538 & 69.10 & 6.238 & 9.538 & 69.10\\
Wanda & \multicolumn{3}{c}{---} & 6.404 & 9.853 & 68.43 & 8.890 & 13.540 & 59.64\\
\cmidrule(lr){1-10}
QK-Wanda & 0 & 0 & 0 & \textbf{6.405} & \textbf{9.816} & 69.42 & 13.470 & 21.967 & 60.61\\
QK-Wanda-M & 1 & 0 & 0 & \textbf{6.405} & \textbf{9.816} & 69.39 & 13.814 & 22.361 & 60.40\\
QK-Wanda-C & 0 & 0 & 1 & 6.686 & 10.241 & 69.25 & 27.932 & 40.953 & 58.98\\
QK-Wanda-MC & 1 & 0 & 1 & 6.656 & 10.212 & 69.01 & 23.440 & 36.955 & 59.13\\
QK-Wanda-R & 0 & 1 & 0 & 6.431 & 9.858 & 69.45 & 16.533 & 27.372 & 59.77\\
QK-Wanda-MR & 1 & 1 & 0 & 6.433 & 9.864 & 69.27 & 15.938 & 26.736 & 59.85\\
QK-Wanda-RC & 0 & 1 & 1 & 6.580 & 10.124 & 69.33 & 13.327 & 21.250 & 62.23\\
QK-Wanda-MRC & 1 & 1 & 1 & 6.485 & 9.972 & 69.61 & \textbf{12.515} & \textbf{19.954} & 61.81\\
\midrule
\multicolumn{10}{l}{\textbf{Qwen2.5-7B}}\\
Dense & \multicolumn{3}{c}{---} & 6.849 & 11.881 & 70.56 & 6.849 & 11.881 & 70.56\\
Wanda & \multicolumn{3}{c}{---} & 6.901 & 11.933 & 70.03 & 7.395 & 12.531 & 67.36\\
\cmidrule(lr){1-10}
QK-Wanda & 0 & 0 & 0 & 6.886 & 11.922 & 70.07 & 7.323 & 12.617 & 67.98\\
QK-Wanda-M & 1 & 0 & 0 & 6.886 & 11.922 & 70.02 & 7.320 & 12.619 & 68.03\\
QK-Wanda-C & 0 & 0 & 1 & 6.881 & 11.936 & 69.78 & 7.316 & 12.611 & 67.84\\
QK-Wanda-MC & 1 & 0 & 1 & \textbf{6.879} & 11.931 & 69.78 & 7.298 & 12.579 & 67.96\\
QK-Wanda-R & 0 & 1 & 0 & 6.883 & \textbf{11.921} & 70.15 & 7.307 & 12.587 & 67.69\\
QK-Wanda-MR & 1 & 1 & 0 & 6.883 & 11.922 & 70.13 & 7.313 & 12.605 & 67.94\\
QK-Wanda-RC & 0 & 1 & 1 & 6.884 & 11.926 & 70.00 & 7.274 & 12.493 & 68.26\\
QK-Wanda-MRC & 1 & 1 & 1 & 6.882 & 11.923 & 70.00 & \textbf{7.262} & \textbf{12.480} & 68.46\\
\bottomrule
\end{qktabular}
\end{table}

At 80\% sparsity, \algname{QK-Wanda-RC}, post-RoPE centering without a causal mask, achieves the lowest PPL among QK variants on both datasets for Llama-2-7B (8.038/10.821). For Llama-3.1-8B and Qwen2.5-7B, the lowest values instead come from \algname{QK-Wanda-MRC}, post-RoPE centering with a causal mask (12.515/19.954 and 7.262/12.480, respectively). Thus the preferred masking choice depends on the checkpoint even when rotation and centering are fixed.

The ranking changes at 50\% sparsity: QK-Wanda gives the lowest PPL among QK variants on both datasets for Llama-2-7B, while the differences among several Qwen variants are small. Pre-RoPE centering substantially worsens the 80\% results on both Llama checkpoints. These comparisons show that the three choices interact, rather than providing independent, uniformly beneficial corrections.

% BEGIN SCALE UPDATE
\paragraph{Extension to model scale and zero-shot evaluation.}
The extended comparison adds Llama-2-70B, Llama-3.1-70B, and Qwen2.5-72B at the same two sparsities and evaluates all six checkpoints on the same seven zero-shot tasks. All 96 PPL configurations and 96 zero-shot configurations are independently validated, including compatible reused measurements. \Cref{tab:factorial_ppl,tab:factorial_large} report all results. The lowest-PPL and highest zero-shot variants differ across checkpoints and sparsity levels, so no single combination dominates this screen. We retain QK-Wanda as the main method; the extensions expose checkpoint-specific tradeoffs rather than a uniformly superior replacement.

\begin{table}[H]
\centering\small
\setlength{\tabcolsep}{3pt}
\caption{Eight scoring combinations on 70--72B models: causal mask (M), RoPE (R), and centering (C). QK variants share a QK budget; Wanda uses row-wise budgets. Dense (0\%) is repeated as a reference. Avg. is the mean seven-task zero-shot accuracy (\%). Bold marks the lowest available PPL among QK variants (displayed ties included). Attention itself is unchanged.}
\label{tab:factorial_large}
\begin{qktabular}{10}{lcccrrrrrr}
\toprule
& & & & \multicolumn{3}{c}{50\% sparsity} & \multicolumn{3}{c}{80\% sparsity}\\
\cmidrule(lr){5-7}\cmidrule(lr){8-10}
Method / reference & M & R & C & WikiText-2 & C4 & Avg. & WikiText-2 & C4 & Avg.\\
\midrule
\multicolumn{10}{l}{\textbf{Llama-2-70B}}\\
Dense & \multicolumn{3}{c}{---} & 3.319 & 5.709 & 71.50 & 3.319 & 5.709 & 71.50\\
Wanda & \multicolumn{3}{c}{---} & 3.359 & 5.742 & 71.37 & 4.550 & 6.906 & 65.85\\
\cmidrule(lr){1-10}
QK-Wanda & 0 & 0 & 0 & \textbf{3.332} & \textbf{5.716} & 71.60 & \textbf{3.625} & \textbf{5.971} & 71.79\\
QK-Wanda-M & 1 & 0 & 0 & \textbf{3.332} & \textbf{5.716} & 71.62 & 3.629 & 5.972 & 71.76\\
QK-Wanda-C & 0 & 0 & 1 & 3.349 & 5.730 & 72.20 & 4.067 & 6.509 & 69.85\\
QK-Wanda-MC & 1 & 0 & 1 & 3.343 & 5.725 & 71.98 & 3.918 & 6.326 & 71.07\\
QK-Wanda-R & 0 & 1 & 0 & \textbf{3.332} & \textbf{5.716} & 71.69 & 3.635 & 5.973 & 71.80\\
QK-Wanda-MR & 1 & 1 & 0 & 3.333 & 5.717 & 71.76 & 3.657 & 5.994 & 71.99\\
QK-Wanda-RC & 0 & 1 & 1 & 3.359 & 5.738 & 72.26 & 4.021 & 6.484 & 69.65\\
QK-Wanda-MRC & 1 & 1 & 1 & 3.345 & 5.727 & 71.99 & 3.851 & 6.273 & 70.84\\
\midrule
\multicolumn{10}{l}{\textbf{Llama-3.1-70B}}\\
Dense & \multicolumn{3}{c}{---} & 2.807 & 7.110 & 74.21 & 2.807 & 7.110 & 74.21\\
Wanda & \multicolumn{3}{c}{---} & 2.901 & 7.224 & 73.60 & 4.981 & 9.893 & 66.64\\
\cmidrule(lr){1-10}
QK-Wanda & 0 & 0 & 0 & \textbf{2.968} & \textbf{7.268} & 73.77 & 18.890 & 32.589 & 65.93\\
QK-Wanda-M & 1 & 0 & 0 & 2.972 & 7.272 & 73.79 & 18.984 & 32.372 & 66.45\\
QK-Wanda-C & 0 & 0 & 1 & 3.356 & 7.786 & 73.43 & 21.631 & 38.091 & 59.30\\
QK-Wanda-MC & 1 & 0 & 1 & 3.241 & 7.672 & 73.68 & 21.567 & 39.547 & 61.56\\
QK-Wanda-R & 0 & 1 & 0 & 2.980 & 7.286 & 73.81 & 20.670 & 36.702 & 65.92\\
QK-Wanda-MR & 1 & 1 & 0 & 2.987 & 7.301 & 73.96 & 22.056 & 38.278 & 65.54\\
QK-Wanda-RC & 0 & 1 & 1 & 3.283 & 7.595 & 72.92 & \textbf{12.333} & \textbf{19.618} & 58.36\\
QK-Wanda-MRC & 1 & 1 & 1 & 3.140 & 7.471 & 73.66 & 13.673 & 22.614 & 62.61\\
\midrule
\multicolumn{10}{l}{\textbf{Qwen2.5-72B}}\\
Dense & \multicolumn{3}{c}{---} & 3.875 & 9.256 & 74.75 & 3.875 & 9.256 & 74.75\\
Wanda & \multicolumn{3}{c}{---} & 3.920 & 9.283 & 74.72 & 4.550 & 9.873 & 73.44\\
\cmidrule(lr){1-10}
QK-Wanda & 0 & 0 & 0 & 3.895 & 9.271 & 74.44 & 4.234 & 9.520 & 74.00\\
QK-Wanda-M & 1 & 0 & 0 & 3.895 & \textbf{9.270} & 74.54 & 4.244 & 9.523 & 73.97\\
QK-Wanda-C & 0 & 0 & 1 & 3.903 & 9.281 & 74.63 & 4.262 & 9.593 & 74.02\\
QK-Wanda-MC & 1 & 0 & 1 & 3.901 & 9.279 & 74.49 & 4.275 & 9.586 & 74.17\\
QK-Wanda-R & 0 & 1 & 0 & \textbf{3.893} & \textbf{9.270} & 74.50 & 4.211 & \textbf{9.512} & 74.19\\
QK-Wanda-MR & 1 & 1 & 0 & 3.894 & \textbf{9.270} & 74.47 & 4.225 & 9.521 & 74.20\\
QK-Wanda-RC & 0 & 1 & 1 & 3.899 & 9.278 & 74.36 & 4.187 & 9.533 & 74.13\\
QK-Wanda-MRC & 1 & 1 & 1 & 3.897 & 9.277 & 74.53 & \textbf{4.186} & 9.527 & 74.08\\
\bottomrule
\end{qktabular}
\end{table}

\end{document}